\newtheorem{lemma}{Lemma}
\newtheorem{theorem}{Theorem}
\newtheorem{assumption}{Assumption}
\newtheorem{definition}{Definition}
\newtheorem{remark}{Remark}
\begin{document}

%

%

\title{Exploration in the Limit}
\author[1]{Brian M. Cho}
\author[1,2]{Nathan Kallus}
\affil[1]{Cornell University}
\affil[2]{Netflix}
\date{}

\maketitle


\begin{abstract}
In fixed-confidence best arm identification (BAI), the objective is to quickly identify the optimal option while controlling the probability of error below a desired threshold. Despite the plethora of BAI algorithms, existing methods typically fall short in practical settings, as stringent exact error control requires using loose tail inequalities and/or parametric restrictions. To overcome these limitations, we introduce a relaxed formulation that requires valid error control asymptotically with respect to a minimum sample size. This aligns with many real-world settings that often involve weak signals, high desired significance, and post-experiment inference requirements, all of which necessitate long horizons. This allows us to achieve tighter optimality, while better handling flexible nonparametric outcome distributions and fully leveraging individual-level contexts.
We develop a novel asymptotic anytime-valid confidence sequences over arm indices, and we use it to design a new BAI algorithm for our asymptotic framework. Our method flexibly incorporates covariates for variance reduction and ensures approximate error control in fully nonparametric settings. Under mild convergence assumptions, we provide asymptotic bounds on the sample complexity and show the worst-case sample complexity of our approach matches the best-case sample complexity of Gaussian BAI under exact error guarantees and known variances. Experiments suggest our approach reduces average sample complexities while maintaining error control.  
\end{abstract}

\section{Introduction}\label{sec:Intro}

In modern experiments, researchers often test multiple treatment options/arms with the goal of finding the best-performing option. In applications such as drug testing in clinical trials~\citep{bai_healthcare}, channel allocation for cellular networks~\citep{bai}, and ad optimization on online platforms~\citep{bhattacharjee2023bestarmidentificationrare}, analysts test multiple options within an experiment, hoping to deduce the most promising option among those tested. For this goal, it is natural to ask: How should a researcher allocate measurement efforts across options? When should a researcher deem an option as the best-performing option and stop the trial, given that they want a certain level of confidence?

To address such questions, researchers use best arm identification (BAI) approaches from the multi-armed bandit literature. In BAI, the researcher sequentially chooses options to measure and observes independent, noisy signals regarding their quality. The goal is to allocate samples effectively such that the best option can be identified confidently in the smallest number of measurements possible. Despite aligning with the aforementioned goals, current approaches to best arm identification often fail to model real-world experimentation scenarios. Current methods that obtain optimal sample complexities require response distributions to follow restrictive parametric assumptions (e.g., exponential family with known variances, \citealp{garivier2016optimalbestarmidentification, jedra2020optimalbestarmidentificationlinear}) and can only incorporate contextual information, such as individual attributes, in limited settings \citep{kato2024rolecontextualinformationbest}.

However, in practice, context and outcome distributions are often complex, and making strong restrictions that surely cannot hold exactly is at odds with the stringent requirement of exact type-I error control.
At the same time, experimenters often collect a substantial amount of data before terminating the trial due to small signals, stringent error control requirements, and/or post-experiment inference considerations. This gives rise to the opportunity of using asymptotic approximations fo arbitrary nonparametric distributions of outcomes, including conditional distributions with respect to contexts. While standard BAI methods provide error control and instance-optimal sample complexity under simple parametric models, they fail to provide such guarantees under realistic data generation processes.

In this work, we develop a best-arm identification method tailored to (i) long horizon experiments, (ii) unknown outcome distributions, and (iii) potentially complex, nonlinear relationships between individual contexts and outcomes. Distinct from existing definitions for BAI in the literature, our approach relies on a relaxation of the error constraint that ensures error is \textit{approximately} controlled beyond a minimum number of samples, which arises naturally in settings with small signals and/or stringent error probability guarantees. Under this relaxed guarantee, we propose a BAI framework based on a novel asymptotic anytime-valid confidence sequence over arm indices that contains the best arm with high probability. By minimizing the expected sample complexity of our framework, we provide mild conditions under which our approach has worst-case sample complexity \textit{no larger} than the optimal sample complexity for Gaussian BAI with known variances. Beyond theoretical guarantees, we conduct synthetic experiments under a simple set-up matching that of existing work. Our results show average sampling complexity reductions up to 33\% relative to existing methods, while still satisfying user-specified error probability constraints.


\paragraph{Contributions} Our work introduces (i) a novel relaxation of the standard PAC framework for bandit exploration, (ii) an asymptotic anytime-valid confidence sequence for determining the best arm, and (iii) an algorithm that leverages our confidence sequences for BAI. We expand on each contribution below.

\begin{itemize}
    \item \textit{Novel Problem Formulation}: Bandit exploration problems assume that an experiment can be stopped at \textit{any time} during the experiment. In contrast, our approach leverages a burn-in parameter $t_0$ that provides a minimum sample size for the experiment. Our methods ensure the desired level of error control as the parameter $t_0$ grows large, ensuring \textit{asymptotic} error control. 

    \item \textit{Confidence Sequences for the Best Arm}: To construct our BAI approach, we leverage a novel, asymptotic anytime-valid confidence sequence over arm indices to determine (i) when to stop the experiment and (ii) which arm to return as best. We construct our asymptotic confidence sequences by leveraging weighted sums of unbiased scoring functions, generalizing doubly robust estimators for the purposes of BAI. Our weighting procedure corresponds to maximizing the signal-to-noise (SNR) of our test processes and is constructed using a simple concave fractional program. In a simple setting with no contexts, we show that our weighting scheme implicitly corresponds to Kullback–Leibler (KL) projection.
    
    \item \textit{Sample Complexity Benefits}: To optimize our confidence sequence approach, we provide a sampling scheme based on projected subgradient descent that minimizes the asymptotic sample complexity of our method. Under convergence assumptions that allow for \textit{any} rate of convergence, we show that the \textit{worst-case} asymptotic sample complexity of our method is no worse than the \textit{best-case} complexity for Gaussian BAI with known arm variances. Our results demonstrate that under our relaxed error guarantees, (i) nonparametric BAI (without contexts) is no harder than Gaussian BAI with known variances and (ii) contextual information can yield sample complexities \textit{strictly} less than that of Gaussian BAI. We connect our approach to semi-parametric efficient estimation to show that our approach \textit{efficiently} leverages contextual information for sampling, stopping, and arm selection.  
\end{itemize}

\paragraph{Outline} Our work proceeds as follows. In the remainder of this section, we provide an overview of related work, focusing on existing works for best-arm identification and asymptotic anytime valid inference approaches. In Section \ref{sec:problem_statement}, we introduce our modeling assumptions and inference goals, focusing on our asymptotic relaxation of error control. Section \ref{sec:av_tests} introduces the framework of our BAI algorithm, which builds upon a novel, asymptotic anytime-valid confidence sequence over arm indices. We demonstrate how to construct our confidence sequences and provide both information-theoretic and testing-by-betting \citep{testing_as_betting} interpretations for our approach. In Section \ref{sec:sampling}, we propose a sampling scheme that minimizes the expected sample complexity of our confidence sequence-based approach via projected sub-gradient descent. We provide results on the asymptotic expected stopping time of our procedure and compare our results with known lower bounds for the standard BAI problem in common parametric models. Section \ref{sec:experiments} presents our experiments, and we provide our concluding remarks and future extensions in Section \ref{sec:conclusion}.

\subsection{Related Works}

We present a brief overview of existing works that closely relate to our proposed method. In particular, we focus on existing approaches for best arm identification, the design of their stopping rules, and asymptotic anytime-valid inference based on strong invariance principles.

\paragraph{Best Arm Identification.} The goal of identifying the option with largest mean response has been studied extensively in the pure exploration bandit literature. In the fixed budget setting~\citep{fixed_budget, bai}, the experimenter aims minimize the error of recommending a suboptimal arm, given a fixed budget of samples. In the fixed confidence setting~\citep{garivier2016optimalbestarmidentification, russo2018simplebayesianalgorithmsbest, bai_healthcare}, the experimenter aims to minimize the number of samples needed to recommend an arm as best, given an error level constraint. Our work builds upon the fixed confidence regime under a relaxed error level constraint. In contrast with our setting, existing best arm identification approaches require exact error control, often do not assume access to covariates, or allow complex, nonparametric distributions. For example, works such as \citet{garivier2016optimalbestarmidentification} focus on responses belonging to a known, exponential family and provide lower bounds on the expected stopping time. The closest works to our setting are \cite{kato2023bestarmidentificationcontextual} and \cite{kato2024rolecontextualinformationbest}. However, \cite{kato2023bestarmidentificationcontextual} studies the fixed-budget BAI problem with contextual information under an asymptotic regime, where the limit is with respect to the sampling budget. In contrast, our setting is the fixed-confidence regime, and our limits are with respect to the error tolerance rather than sampling budget. While \cite{kato2024rolecontextualinformationbest} study fixed-confidence BAI with contexts, their proposed approach focuses on the standard PAC guarantee, resulting in methods that only achieve their sample complexity bounds in limited parametric contexts: (i) two-armed BAI under responses and contexts that jointly follow a multivariate Gaussian distribution, and (ii) BAI with finite-cardinality contexts and responses generated under an exponential family. In contrast, our approaches are readily applicable to a wide variety of settings, including continuous contexts and nonparametric response distributions.

\paragraph{Anytime-Valid Inference.} For fixed confidence bandit exploration problems, the decision of when to stop the experiment are based on anytime-valid confidence sequences and sequential tests ~\citep{garivier2016optimalbestarmidentification,kaufmann2021mixturemartingalesrevisitedapplications,cho2024rewardmaximizationpureexploration, cho2024peekingpeaksequentialnonparametric, Howard_2021}. Anytime-valid inference approaches control error levels uniformly across repeated testing for all time points by leveraging the martingale maximal inequality of \citet{Ville1939}. This provides a natural approach for fixed confidence exploration, which tests for the best arm at each time point to determine when to stop. For example, the Track-and-Stop approaches by \cite{garivier2016optimalbestarmidentification} use composite sequential likelihood ratios as their stopping criteria for BAI, while \cite{cho2024rewardmaximizationpureexploration} leverages the generalized Bernoulli $e$-process for threshold tests. However, these methods are often hindered by two limitations in practice: (i) requiring knowledge or assuming bounds on the moment generating function (MGF) of the response distributions, and (ii) conservative performance when these MGF bounds are loose. Because analysts tend to specify larger bounds on distributions to maintain valid error control (such as the sub-gaussian factor of $\sigma^2 = 1/4$ for $[0,1]$-bounded random variables), these limitations hinder the practical performance of existing approaches for best arm identification under exact error control requirements. The conservative performance is well documented in works such as \citet{garivier2016optimalbestarmidentification} and \citet{cho2024rewardmaximizationpureexploration}, which use anytime valid inference approaches as their stopping criteria.

\paragraph{Asymptotic Anytime-Valid Inference.} To overcome the limitations of standard anytime-valid approaches, more recent works have proposed the notion of \textit{asymptotic} anytime valid inference~\citep{waudbysmith2024timeuniformcentrallimittheory, bibaut2024nearoptimalnonparametricsequentialtests} to calibrate anytime-valid testing procedures. In particular, our work leverages a stronger notion of an asymptotic confidence sequence and sequential test presented in \citet{bibaut2024nearoptimalnonparametricsequentialtests}, which ensures error control beyond a prespecified burn-in time. Like previous works, our approach builds upon semi-parametrically efficient scoring functions \citep{bickel1998efficient, chernozhukov2024doubledebiasedmachinelearningtreatment, cook2024semiparametricefficientinferenceadaptive, oprescu2025efficientadaptiveexperimentationnoncompliance} generalized for our sequential setting. In contrast to these works, however, our goal is not to provide valid inference on the \textit{value} of any given arm (or differences thereof), but to label it as suboptimal as quickly as possible. To this end, our approach combines the efficient scores used to estimate arm mean differences using a novel, sequential weighting scheme that maximizes the signal-to-noise ratio (SNR), tailored to the composite null hypothesis that a given arm is the best arm. Furthermore, we provide both (i) an analysis of our method's sample complexity and (ii) a sampling scheme that minimizes its upper bound. Among all existing asymptotic anytime-valid methods, only the work of \cite{bibaut2024nearoptimalnonparametricsequentialtests} characterizes the expected sample complexity of their procedure, and no previous work provides a corresponding sampling scheme to minimize the expected sample complexity for their testing procedure.

\section{Problem Formulation}\label{sec:problem_statement}

In this section, we first provide our modeling assumptions on the data-generating process. We then define the best arm identification problem as defined in the literature~\citep{garivier2016optimalbestarmidentification}, and provide our relaxation that provides a limiting notion of error control with respect to a sequence of BAI algorithms.

\subsection{Modeling Assumptions}

We define the set of all collected information up to time $T$ as $H_T=(X_t, A_t, Y_t)_{t=1}^{T}$, where $(X_t, A_t, Y_t)$ denote the context, arm, and outcome observed at time $t$. We set $H_0 = \{\emptyset\}$ as the empty set. We denote the canonical filtration at time $T$ as $\Fcal_{T} = \sigma\left((X_t, A_t, Y_t)_{t=1}^T\right)$, with $\Fcal_0$ as the trivial, empty sigma field.

Our data-generating process (DGP) proceeds in the following sequential manner.  At each time $t$, the learner observes a context $X_t \in \Xcal$, where $X_t$ is distributed according to a fixed, unknown distribution $P_X$. After observing the context $X_t$, the learner selects a treatment $A_t \in [K]$, where $[K] \equiv \{1,...,K\}$ denotes a discrete, finite set of $K$ arms. The choice of arm is specified by the policy $\pi: (H_{t-1},\Xcal) \rightarrow \Delta^K$, where $\Delta^K$ denotes the $K$-dimensional probability simplex. The learner then observes outcome $Y_t \in \RR$, where $Y_t$ is distributed according to a fixed, unknown distribution $P_{Y| A,X}$. Overloading notation, we denote $\pi_t(x, a) = P(A_t = a| X_t=x, H_{t-1} )$ as the conditional probability of selecting the option $a \in [K]$ given the current context $X_t$ and history $H_{t-1}$. We denote vectors in bold as $\bm{w}$, with the $i$-th component of vector $\bm{w}$ denoted as $w(i)$ and $\bm{w}$ with the $i$-th component removed as $\bm{w}(-i)$. We define the set $\Delta(a)\coloneqq \{\bm{w} \in \RR^K: w(a) = -1, \ \bm{w}(-a) \in \Delta^{K-1}\}$ as the set of all vectors with the $a$-th component equal to $-1$, and the remaining components lying in the $K-1$ simplex. We use vectors $\bm\mu  = [\mu(1),...,\mu(K)] \in \RR^K$ and $\bm{\sigma}^2 = \left[ \sigma^2(1), \dots, \sigma^2(K) \right] \in \RR^K$ to denote the vectors of arm means and variances, where the $a$-th component of each vector corresponds to 
\begin{align*}
    \mu(a) \coloneqq \EE_{P_X}\left[\EE_{P_{Y|A,X}}[Y|A=a, X] \right], \quad 
    \sigma^2(a) \coloneqq \EE_{P_X}\left[\EE_{P_{Y|A,X}}\left[ \left(Y - \mu(a)\right)^2 | A=a, X \right]\right].
\end{align*}
To denote conditional means and variances given an arm $A=a$ and context $X=x$, we define
\begin{align*}
    g(x,a) \coloneqq \EE_{P_{Y|A,X}}\left[Y|A=a, X=x\right], \quad v(x,a) \coloneqq  \EE_{P_{Y|A,X}}\left[\left(Y - g(x,a)\right)^2 | A=a, X=x \right] 
\end{align*}
as our conditional mean and variance functions. We define $\| f\|_{L_q(P)} \coloneqq \EE_{P}[|f|^q]^{1/q}$ as the $L_q$ norm with respect to the distribution $P$. Note that $P$ can be simple marginal distributions $P_X$, or more complex conditional distributions such as $P_{X|A=a, H_{t-1}}$. We make three standard assumptions on our DGP below.

\begin{assumption}[Unique Optimal Arm]\label{assump:unique_optimal_arm}
    There exists a unique treatment option $a^* = \argmax_{a \in [K]} \mu(a)$. 
\end{assumption}
Assumption \ref{assump:unique_optimal_arm} is a common assumption in the best arm identification literature to ensure that the problem is well-defined. Without Assumption \ref{assump:unique_optimal_arm}, existing approaches for BAI have infinite sample complexity \citep{bai, garivier2016optimalbestarmidentification}, and will not terminate in finite time. 

\begin{assumption}[Nondegenerate Variances]\label{assump:nonzero_variances}
    For all $x \in \Xcal$, $a \in [K]$, $v(x,a)$ is positive.
\end{assumption}

Assumption \ref{assump:nonzero_variances} ensures that our sample complexities do not degenerate towards zero and avoids trivial cases for BAI. This assumption is likely to hold in practice. 
Note that Assumption \ref{assump:nonzero_variances} ensures that marginal variances $\bm{\sigma}^2$ are also positive by the law of total variation.

\begin{assumption}[Boundedness of Outcomes]\label{assump:bounded_outcomes}
    There exists a constant $B$ such that $|Y_t| < B$ for all $t \in \NN$. 
\end{assumption}

In most common applications, Assumption \ref{assump:bounded_outcomes} is likely to hold, even if the maximum magnitude of the outcome variable is unknown in advance. We emphasize that this constant $B$ does not need to be known in advance, estimated, or assumed to be any certain value across our methods. It only plays a role in our theoretical guarantees, and is not an input to any component of our BAI algorithm. In contrast, existing pure exploration methods \citep{garivier2016optimalbestarmidentification, cho2024rewardmaximizationpureexploration} require as in input an upper bound on this constant $B$ (such as $[0,1]$-bounded outcomes) or moment bounds (e.g., $1$-sub-gaussian) in order to maintain valid error control. Other than the assumptions provided above, we make \textit{no further assumptions} on the DGP. Outcomes do not have to follow parametric modeling assumptions (e.g., exponential family such as \citealp{garivier2016optimalbestarmidentification}), and conditional regression functions are not assumed to follow simple parametric models (e.g., linear functions with a link function such as \citealp{kazerouni2019bestarmidentificationgeneralized}).

\subsection{Best Arm Identification}

A BAI algorithm $\Bcal = ( \pi, \xi, \hat{a})$ consists of (i) a sampling scheme $\pi: (H_{t-1},X_t) \rightarrow \Delta^K$ that determines the arm selection at each time $t$, (ii) a stopping rule $\xi: H_t \rightarrow \{0,1\}$, which returns the binary decision to stop at time $t$, and (iii) an answer $\hat{a} \in [K]$ that returns the arm index deemed to be the largest mean arm at termination (that is, $\hat a$ is measurable with respect to $H_{\inf\{t:\xi(H_t)=1\}}$). An algorithm $\Bcal$ is $\alpha$-correct if it terminates almost surely and returns the correct answer with probability at least $1-\alpha$.

\begin{definition}[$\alpha$-Correctness]\label{defn:alpha_correctness}
   An algorithm $\Bcal = (\pi, \xi, \hat{a})$ is $\alpha$-correct if (i) the algorithm $B$ terminates almost surely, i.e. $P\left(\exists t < \infty: \xi(H_t) = 1\right) = 1$, and (ii) the probability of returning the best arm $a^*$ is at least $1-\alpha$, i.e. the probability of returning a suboptimal arm satisfies $P(\hat{a} \neq a^*) \leq \alpha$.
\end{definition}

This definition of $\alpha$-correctness is the standard requirement for BAI across all existing work in the fixed confidence setting. In contrast, we propose a relaxation of error control for a sequence of BAI algorithms $\left(\Bcal_{t_0}\right)_{t_0 \in \NN_0}$, defined with respect to an index parameter $t_0$. 

\begin{definition}[Asymptotic $\alpha$-correctness]\label{defn:asymp_alpha_correctness}
A sequence of BAI algorithms $\left(\Bcal_{t_0}\right)_{t_0 \in \NN_0} = \left((\pi_{t_0}, \xi_{t_0}, \hat{a}_{t_0})\right)_{t_0 \in \NN_0}$ is asymptotically $\alpha$-correct if (i) for each fixed $t_0$, $B_{t_0}$ terminates almost surely, i.e. $P\left(\exists t < \infty: \xi_{t_0}(H_t) = 1\right)=1$, and (ii) the probability of returning the optimal arm $a^*$ converges to at least $1-\alpha$ as $t_0 \rightarrow \infty$, i.e. $\limsup_{t_0 \rightarrow \infty} P(\hat{a}_{t_0} \neq a^*) \leq \alpha$. 
\end{definition}

Definition \ref{defn:asymp_alpha_correctness} is a strict relaxation of the $\alpha$-correctness property in Definition \ref{defn:alpha_correctness} by only requiring the sequence of algorithms $\left(\Bcal_{t_0}\right)_{t_0 \in \NN_0}$ to satisfy error control as the index $t_0$ diverges to infinity. Any algorithm $\Bcal$ satisfying $\alpha$-correctness implicitly satisfies asymptotic $\alpha$-correctness by using the trivial sequence $\Bcal_{t_0} = \Bcal$ for all $t_0 \in \NN_0$.

In our work, the index parameter $t_0$ takes the role of a \textit{burn-in time}, where algorithm $\Bcal_{t_0}$ does not stop before any time $t < t_0$, i.e. $\xi_{t_0}(H_{t}) = 0$ for all $t < t_0$. Equivalently, the burn-in time parameter $t_0$ represents a \textit{minimum} sample size for the experiment. This choice aims to match common scenarios in practice: weak signal strength (i.e. small gaps between the best arm and its alternatives), stringent error requirements, and/or post-experiment inference considerations often result in long experiment horizons, corresponding to the setting with where $t_0$, the minimum sample size of an experiment, diverges towards infinity.

\begin{remark}[Choice of Index as a Burn-in Time]
    While Definition \ref{defn:asymp_alpha_correctness} does not require the index parameter $t_0$ to enforce a minimum sample complexity, we set the index parameter $t_0$ as a burn-in time to match the guarantees of asymptotic anytime valid inference, as defined in \cite{bibaut2024nearoptimalnonparametricsequentialtests} and Theorem 2.8 of \cite{waudbysmith2024timeuniformcentrallimittheory}. Our decision to parameterize an explicit minimum sample size $t_0$ plays a minimal role in our algorithm beyond controlling asymptotic error rates. In Section \ref{sec:av_tests}, we provide a choice of burn-in time $t_0(\alpha)$ with respect to the error tolerance $\alpha$ that ensures (i) the sequence of burn-in times $t_0(\alpha)$ satisfy $t_0(\alpha) \rightarrow \infty$ as $\alpha \rightarrow 0$ and (ii) asymptotic sample complexities (with respect to sequences $\left(\Bcal_{t_0(\alpha)}\right)_{\alpha \in (0,1)}$ as $\alpha \rightarrow 0$) match or outperform well-known existing sample complexities for BAI.  
\end{remark}



\section{Exploration with Confidence Sequences}\label{sec:av_tests}

To determine when to stop and which arm to declare best, our approach leverages confidence sequences $\left(C_t\right)_{t=1}^\infty$ over the arm indices $[K]$ that satisfy asymptotic anytime-valid error guarantees, i.e.
\begin{equation}\label{eq:conf_seq_requirement}
    \limsup_{t_0 \rightarrow \infty}P\left(\exists t \geq t_0: a^* \not\in  C_t(t_0, H_t, \alpha) \right) \leq \alpha.
\end{equation}
Confidence sequences $(C_t)_{t=1}^\infty$ satisfying Equation \eqref{eq:conf_seq_requirement} ensure that the best arm $a^*$ is uniformly contained in $C_t$ for all $t$ greater than the burn-in time $t_0$ with high probability. Naturally, this implies a simple strategy for our BAI procedure: whenever the confidence sequence $C_t$ contains a single arm at \textit{any} time step $t \geq t_0$, one can immediately conclude the experiment and return the remaining arm as best. 

To construct our asymptotic anytime-valid confidence sequences, we proceed in the following manner. For each arm $a$, we construct a test process $(\hat\psi_t(a))_{t \in \NN}$ adapted to the filtration $(\Fcal_t)_{t \in \NN}$. Each test process corresponds to the composite null $\Hcal_a: \mu(a) = \max_{b \in [K]} \mu(b)$, the set of distributions where arm $a$ is the best arm. When the null $\Hcal_a$ is true, its associated test process $\hat\psi_t(a)$ has non-positive drift at each time $t \in \NN$, which enables us to reject $\Hcal_a$ if the cumulative drift is deemed positive. Our confidence set sequentially removes the arms $a$ whose corresponding test process $\hat\psi_t(a)$ drift is deemed positive by an asymptotic anytime-valid test, resulting in a confidence sequence with our desired statistical guarantees.

\subsection{Constructing Test Processes}

To construct our arm-specific test processes $(\hat\psi_t(a))_{t \in \NN}$, we first begin with arm-specific score processes $(\phi_t(a))_{t=1}^\infty$ in Definition \ref{defn:score_process} that serve as unbiased estimates for the mean of arm $a$. 

\begin{definition}[Score Process]\label{defn:score_process}
    For each $a \in [K]$, let $\left(\phi_t(a)\right)_{t \in \NN}$ be a process adapted to the filtration $\left(\Fcal_t\right)_{t \in \NN}$. For each $t \in \NN$, let $\phi_t(a)$ denote the function $\phi_t(a) \coloneqq g_t(X_t, a) + \frac{\mathbf{1}[A_t = a](Y_t - g_t(X_t, a))}{\pi_t(X_t, a)}$, where $\pi_t(X_t, a) = P(A_t=a|X=X_t,H_{t-1})$, and $g_t: \Xcal\times [K] \rightarrow \RR$ is an $\Fcal_{t-1}$-measurable function. 
\end{definition}
The time-varying function $g_t(X,a)$ corresponds to the best estimate of the true conditional expectation function $g$, using the observations collected until time $t-1$. The function $g_t$ can be estimated with complex algorithmic regressors, such as random forests \citep{random_forests}, neural networks \citep{pmlr-v70-shalit17a}, and boosting algorithms \citep{K_nzel_2019}, under mild convergence conditions. Crucially, regardless of our choice of $g_t$, the score processes $\phi_t(a)$ acts as an \textit{conditionally unbiased estimator} for mean of arm $a$. Because functions $g_t$ and $\pi_t$ are $\Fcal_{t-1}$-measurable and therefore fixed conditional on history $H_{t-1}$, our score processes satisfy $\EE[\phi_{t}(a)|H_{t-1}] = \mu(a)$ for each $a \in [K]$ and $t \in \NN$, regardless of the choice of regression function $g_t$. 

Our confidence sequences build on the score processes of Definition \ref{defn:score_process} by constructing the test process $(\hat\psi_t(a))_{t \in \NN}$, a weighted combination of score processes $(\phi_t(a))_{t \in \NN}$, for each composite null hypothesis $\Hcal_a$. For the null hypothesis $\Hcal_a$, we define 
\begin{equation}
    \hat\psi_t(a) = \frac{1}{t}\sum_{b \in [K]} w_t^a(b) \phi_t(b)
\end{equation}
where $\bm{w}_t^a \in \Delta(a)$ is an $\Fcal_{t-1}$-measurable vector for all $t \in \NN$. The arm-specific test process $(\hat\psi_t(a))_{t \in \NN}$ corresponds to a normalized process with \textit{non-positive} drift under the null $\Hcal_a$. Specifically, due to the fact that $\bm{w}_t^a \in \Delta(a)$, $\bm{w}_t^a$ is $\Fcal_{t-1}$-measurable, and score processes $(\phi_t(a))_{t \in \NN}$ are conditionally unbiased, the non-normalized test process $t\hat\psi_t(a)$ satisfies the following for every distribution $P \in \Hcal_a$:

\begin{equation}
    \EE_P\left[t\hat\psi_t(a)- (t-1)\hat\psi_t(a) | H_{t-1}\right] = \sum_{b \in [K]} w_t^a(b)\mu(b) = \left(\sum_{b \neq a} w_t^a(b) \mu(b) \right) - \mu(a) \leq 0.
\end{equation}

\begin{figure*}[t]
    \includegraphics[width=\linewidth]{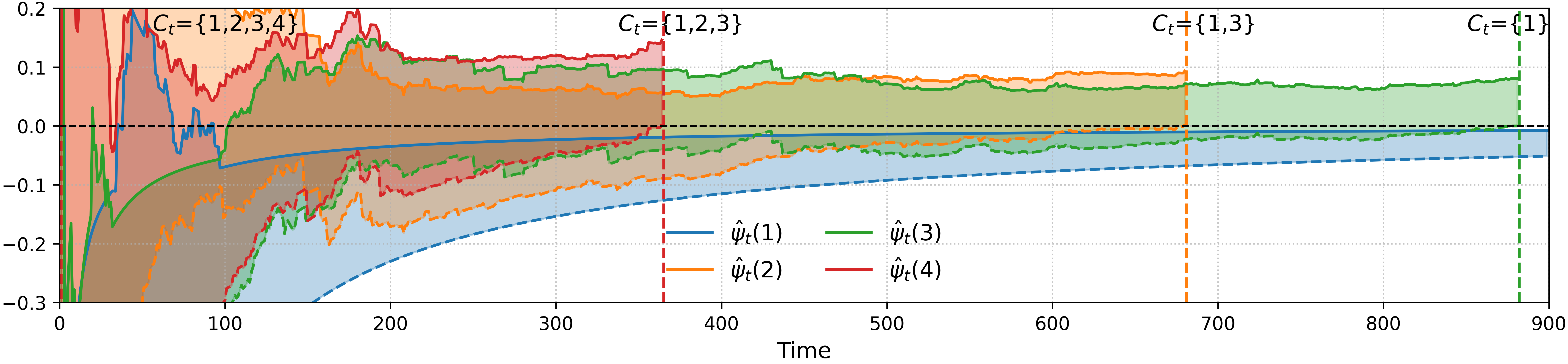}
    \caption{Visualization of Confidence Sequence Approach. Solid lines plot score process $\hat\psi_t(a)$, and dotted lines plot asymptotic anytime-valid lower bounds $L_t^a(H_t, \alpha, \rho)$. Arm $a$ is removed from $C_t$ when $L_t^a(H_t, \alpha, \rho) > 0$. }
    \label{fig:cs_visualization}
\end{figure*}
\begin{algorithm}[t]
\begin{algorithmic}[1]
\Procedure{BestArmId}{$\pi, \alpha, \rho, t_0$}
    \State Set $C_0(t_0,H_0, \alpha)\gets [K]$, $t\gets 0$, $H_0 \gets \{\emptyset\}$. 
    \While{$ | C_t(t_0,H_t, \alpha)| > 1$} 
        \State Increment time index $t \gets t+1$.
        \State Observe $X_t$, and sample $A_t \sim \pi_t(X_t, \cdot)$.
        \State Observe $Y_t$, set $H_t \gets H_{t-1} \cup (X_t, A_t, Y_t) $. 
        \State Update $C_t(t_0, H_t, \alpha)$ according to Equation \eqref{eq:ci_w_lower_bounds}.
    \EndWhile
    \State \textbf{return} arm $\hat{a} \in C_t(t_0,H_t, \alpha)$ if $|C_t(t_0, H_t, \alpha)|=1$, else $\hat{a} \in \argmin_{a \in [K]}{\hat\psi_t(a)}- {\hat\sigma_t(a)}\ell_{t,\alpha, \rho}\left(\hat\sigma_t(a)\right)$ .   
\EndProcedure
\end{algorithmic}
\caption{Asymp-BAI}\label{alg:bai_short}
\end{algorithm}

Thus, to determine whether arm $a$ can be removed from confidence sequence $C_t$ (i.e. arm $a$ is not the best arm), it suffices to test whether the drift of its test process $\hat\psi_t(a)$ is positive. To test the sign of $\hat\psi_t(a)$'s drift while maintaining the guarantees of Equation \eqref{eq:conf_seq_requirement}, we construct asymptotic, anytime-valid lower confidence bounds $L_t^a(H_t, \alpha, \rho)$ based on Gaussian mixture martingales:
\begin{align}\label{eq:lower_bounds}
    L_t^a(H_t, \alpha, \rho) = &\hat\psi_t(a) - \hat\sigma_t(a) \ell_{t,\alpha, \rho}(\hat\sigma_t(a)), \quad \hat\sigma_t^2(a)=\frac{1}{t}\sum_{i=1}^t\left(\sum_{b \in [K]} w_i^a(b)(\phi_i(b) - \hat\mu_i(b)) \right)^2,\\
    &\ell_{t,\alpha,\rho}(x) = t^{-1/2}\sqrt{\frac{2(\rho^2+1/tx^2)}{\rho^2}\log\left(1+\frac{\sqrt{t x^2\rho^2 + 1}}{2\alpha}\right)}
\end{align}
The term $\hat\sigma_t^2(a)$ denotes the estimated cumulative conditional variance of score process $(\hat\psi_i(a))_{i=1}^t$, and $\ell_{t,\alpha,\rho}(x)$ corresponds to an asymptotic anytime-valid bound based on strong invariance principles ~\citep{waudbysmith2024timeuniformcentrallimittheory} and Gaussian mixture martingales ~\citep{kaufmann2021mixturemartingalesrevisitedapplications}. The process $L_t^a(H_t, \alpha, \rho)$ serves as a time-uniform, high-probability lower bound for the running  drift of $\hat\psi_t(a)$. When $L_t^a(H_t, \alpha, \rho)$ crosses above zero at any $t \geq t_0$, the asymptotic anytime-valid guarantees for $L_t^a$ ensure that one can conclude that $\hat\psi_t(a)$ has positive drift with high probability. Our confidence sequences $(C_t)_{t=1}^\infty$ follow from this logic, where 
\begin{equation}\label{eq:ci_w_lower_bounds}
    C_t(t_0, H_t, \alpha) = \{a \in [K]: \sup_{t_0\leq i \leq t} L_i^a(H_i, \alpha, \rho) \leq 0 \}
\end{equation}
is simply the set of all arms $a$ such that  $L_i^a(H_i, \alpha, \rho)$ has never crossed above zero for any $i \leq t$.

We provide both pseudocode for our BAI approach in Algorithm \ref{alg:bai_short} and a simple visualization in Figure \ref{fig:cs_visualization}. Our confidence sequences $C_t(t_0,H_t, \alpha)$ determine (i) when to stop and (ii) which arm to return as best. At each time $t_0$, we construct our arm-specific test processes $\hat\psi_t(a)$ (shown in solid lines in Figure \ref{fig:cs_visualization}) and their corresponding lower bounds $L_t^a(H_t, \alpha, \rho)$ (shown in dotted lines in Figure \ref{fig:cs_visualization}). As soon as the lower bound $L_t^a(H_t,\alpha,\rho)$ lies above zero at any time $t \geq t_0$, we remove the arm $a$ from our confidence set $C_t(t_0, H_t, \alpha)$. When our confidence set $C_t(t_0, H_t, \alpha)$ contains at most one arm, our BAI algorithm terminates and returns last remaining arm in $C_t(t_0, H_t, \alpha)$. In the case where $|C_t(t_0, H_t,\alpha)| = 0$ (i.e. last remaining arms eliminated at the same time), Algorithm \ref{alg:bai_short} returns the arm $\hat{a}$ with the \textit{smallest} lower confidence bound for $\hat\psi_t(a)$.

\begin{remark}[Selection of $\rho$ Parameter]
    The lower bounds $L_t^a(H_t, \alpha, \rho)$ introduce a new parameter $\rho$, which governs where our lower bounds are the tightest across time with respect to an \textit{intrinsic} time $t_*$. In this work, we provide error guarantees and stopping time results for all fixed $\rho >0$ specified in advance of testing. We discuss selecting $\rho$ based on user preferences over the hardness of the BAI instance in Appendix \ref{app:hyperparameters}. 
\end{remark}

\subsection{Maximizing the Signal-to-Noise Ratio}
To ensure suboptimal arms $a \neq a^*$ are removed from $C_t(t_0, H_t, \alpha)$ over time (i.e. Algorithm \ref{alg:bai_short} terminates), we require their corresponding lower bounds in Equation \eqref{eq:lower_bounds} grow above zero. To do so, for each $ a\in[K]$, we select the sequence of $\Fcal_{t-1}$-measurable weight vectors $\bm{w}_t^a$ that maximizes the \textit{signal-to-noise ratio} (SNR) for each test process $(\hat\psi_t(a))_{t \in \NN}$. In Algorithm \ref{alg:snr_maximization}, we propose our weight construction scheme, which aims to maximize $\hat\psi_t(a)/\hat\sigma_t(a)$, the ratio of the test process drift and its cumulative conditional standard deviation. 

\begin{algorithm}[t]
\begin{algorithmic}[1]
\Procedure{SNRMax}{$a, H_{t-1}, \bm{w}_0^a$}
    \State Initialize the weight vector $\bm{w}_t^a = \bm{w}_0^a$.
    \State Compute $\hat\mu_{t-1}(b) = \frac{1}{t-1}\sum_{i=1}^{t-1} \phi_i(b)$ and $\tilde\sigma_t^2(b) = \frac{1}{t-1}\sum_{i=1}^{t-1} (\phi_i(b) - \hat\mu_{i}(b))^2$ for each $b \in [K]$. 
    \State Compute the set $\Acal_t^* = \argmax_{b \in [K]} \hat\mu_{t-1}(b)$. 
    \If{$a \not\in \Acal_t^*$ and $\min_{b \in [K]}\tilde\sigma_t^2(b) > 0$} 
        \State Set $\bm{w}_t^a$ as the weight vector $\bm{w} \in \Delta(a)$ that maximizes the estimated SNR: 
     $$\bm{w}_t^a \in \argmax_{\bm{w} \in \Delta(a)} \frac{\left(\sum_{b \in [K]} w(b) \hat\mu_{t-1}(b)\right)}{\hat\sigma_{t-1}(\bm{w})}, \quad \hat\sigma_{t-1}^2(\bm{w}) = \frac{1}{t-1} \sum_{i=1}^{t-1} \left( \sum_{b \in [K]} w(b)\left( \phi_i(b) - \hat\mu_{i}(b) \right)  \right)^2.$$\label{l:snr_maximization_problem}
    \EndIf
    \State \textbf{return} arm-specific weight vector $\bm{w}_t^a$.   
\EndProcedure
\end{algorithmic}
\caption{Signal-to-Noise Ratio (SNR) Maximization}\label{alg:snr_maximization}
\end{algorithm}

Our weight selection procedure in Algorithm \ref{alg:snr_maximization} provide a simple approach for selecting the weight vector $\bm{w}_t^a$ for each $a \in [K]$ across all $t \in \NN$. For arms $a$ that appear suboptimal at time $t$ (i.e. $a\not\in \argmax_{b \in [K]}\hat\mu_{t-1}(b)$), our approach solves for the SNR-Maximizing weight vector $\bm{w}_t^a$ in hindsight, using previous observations $H_{t-1}$. When arm $a$ appears optimal at time $t$ (i.e. $a \in \argmax_{b \in [K]} \hat\mu_{t-1}(b)$), our weight scheme defaults to a weight vector $\bm{w}_0^a \in \Delta(a)$ specified before observing any data. 
To avoid infinite objective function values in the maximization problem, our procedure also defaults to $\bm{w}_t^a = \bm{w}_0^a$ when there exists estimated arm variances $\tilde\sigma_t(b)$ equal to zero. Note that the procedure in Algorithm \ref{alg:snr_maximization} is run for each $a \in [K]$ at time $t$ in order to construct the corresponding weight sequences $(\bm{w}_t^a)_{t \in \NN}$ for each test process $\hat\psi_t(a)$. 

Our choice of weight sequences follow from  the structure of our confidence bounds $L_t^a(H_t, \alpha, \rho)$. Recall that we reject $\Hcal_a$ and remove $a$ from $C_t$ whenever $L_t^a(H_t, \alpha, \rho) > 0$ for any $t \geq t_0$. Rearranging $L_t^a(H_t, \alpha, \rho)$, we obtain that $a$ is removed from $C_t$ when $\hat\psi_t(a)/\hat\sigma_t(a) \geq \ell_{t, \alpha, \rho}(\hat\sigma_t(a))$ for any $t \geq t_0$, i.e.
\begin{equation}
    \frac{\hat\psi_t(a)}{\hat\sigma_t(a)} \geq t^{-1/2}\sqrt{\frac{2\left(\rho^2+1/t\hat\sigma^2_t(a)\right)}{\rho^2}\log\left(1+\frac{\sqrt{t \hat\sigma^2_t(a)\rho^2 + 1}}{2\alpha}\right)} 
\end{equation}
Ignoring logarithmic terms, $\ell_{t,\alpha,\rho}(\hat\sigma_t(a))$ is a term converging to zero at the rate $\tilde{O}(1/\sqrt{t})$ for any fixed $\rho > 0$. Thus, weights that maximize the ratio $\hat\psi_t(a)/\hat\sigma_t(a)$ roughly correspond to minimizing the time $t$ at which $\Hcal_a$ can be rejected and arm $a$ can be removed from our confidence sequence $C_t(t_0,H_t, \alpha)$. 

\subsubsection{Information-Theoretic Interpretation}

Beyond the particular structure of our confidence sequence, our SNR-maximizing weighting scheme also has a direct information-theoretic interpretation. For each $a \neq a^*$, the maximized SNR corresponds to the Gaussian KL-projection of the true mean vector $\bm{\mu}$ onto the distributional set $\Hcal_a$. We formalize this result below in Lemma \ref{lem:kl_projection}, focusing on the classical multi-armed bandit setup with no contexts.

\begin{lemma}[SNR Maximization as KL-Projection]\label{lem:kl_projection}
    Assume that the context set $\Xcal$ is empty and $a \neq a^*$. Let $\pi \in \Delta^K$ denote a vector on the $K$-dimensional probability simplex bounded away from zero. Let $d_{\sigma}(x,y) = \frac{(x-y)^2}{2\sigma^2}$ denote the KL divergence function between two Gaussian distributions with equal variances $\sigma^2$. Let $\bm{w}_*^a$ denote a solution to the oracle SNR-maximization problem with true arm means $\bm{\mu}$ and variances $\bm\sigma^2$, i.e. $\bm{w}_*^a = \argmax_{\bm{w} \in \Delta(a)} \frac{\sum_{b \in [K]}w(b) \mu(b)}{\sqrt{\sum_{b \in [K]}w(b)^2 \sigma^2(b)/\pi(b)}}$. Then, the squared SNR achieved by $\bm{w}_*^a$ is half of the minimum KL divergence between the composite null $\Hcal_a$ and the true mean vector $\bm{\mu} \not\in \Hcal_a$, i.e. 
    \begin{equation}
        \frac{1}{2}\left( \frac{\sum_{b \in [K]}w_*^a(b) \mu(b)}{\sqrt{\sum_{b \in [K]} w_*^a(b)^2\sigma^2(b)/\pi(b)}}\right)^2 = \inf_{\tilde{\bm{\mu}} \in \Hcal_a} \sum_{b \in [K]}\pi(b)  d_{\sigma(b)}\left(\mu(b), \tilde{\mu}(b)\right). 
    \end{equation}
\end{lemma}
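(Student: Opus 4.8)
The plan is to exhibit both sides of the claimed identity as the common optimal value of a pair of dual convex programs, so that the equality follows from strong duality. Throughout I write $\Delta_b \coloneqq \mu(b) - \mu(a)$ for the mean gaps and $s_b \coloneqq \sigma^2(b)/\pi(b)$ for the per-arm oracle variances; since $a \neq a^*$ we have $\Delta_{a^*} > 0$, which keeps every objective below nondegenerate. First I would use the geometry of $\Delta(a)$ to simplify the oracle SNR problem. Because $w(a) = -1$, $w(b) \geq 0$ for $b \neq a$, and $\sum_{b \neq a} w(b) = 1$, the numerator telescopes to $\sum_{b \in [K]} w(b)\mu(b) = \sum_{b \neq a} w(b)\Delta_b$, while the denominator becomes $s_a + \sum_{b \neq a} w(b)^2 s_b$. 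The squared maximal SNR is therefore $\max_{\bm w}\,(\sum_{b\neq a} w(b)\Delta_b)^2 /(s_a + \sum_{b\neq a} s_b\, w(b)^2)$, the maximum taken over nonnegative $\bm{w}(-a)$ summing to one.

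Next I would recast the KL projection as a convex quadratic program. Substituting $\delta(b) \coloneqq \mu(b) - \tilde\mu(b)$, the projection objective $\sum_b \pi(b)\, d_{\sigma(b)}(\mu(b),\tilde\mu(b))$ equals the convex quadratic $\sum_b \delta(b)^2/(2 s_b)$, and the defining constraint $\tilde\mu(a) \geq \tilde\mu(b)$ of $\Hcal_a$ becomes the linear constraint $\delta(b) - \delta(a) \geq \Delta_b$ for each $b \neq a$. I would then form the Lagrangian with multipliers $\lambda_b \geq 0$, solve the strictly convex inner minimization in closed form — giving $\delta(b) = s_b \lambda_b$ and $\delta(a) = -s_a \Lambda$ with $\Lambda \coloneqq \sum_{b\neq a}\lambda_b$ — and substitute back to obtain the dual
\begin{equation*}
  \max_{\bm\lambda \geq 0}\ \Big[\sum_{b\neq a}\lambda_b \Delta_b - \tfrac12\big(s_a \Lambda^2 + \textstyle\sum_{b\neq a} s_b \lambda_b^2\big)\Big].
\end{equation*}

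The key step is to match the two programs by optimizing over scale. Any $\bm\lambda \geq 0$ with $\Lambda > 0$ factors as $\bm\lambda = \Lambda\, \bm w$ with $\bm{w}(-a)$ nonnegative and summing to one, so the dual objective becomes $\Lambda N(\bm w) - \tfrac12 \Lambda^2 D(\bm w)$ with $N(\bm w) = \sum_{b\neq a} w(b)\Delta_b$ and $D(\bm w) = s_a + \sum_{b\neq a} s_b w(b)^2$. Maximizing this scalar quadratic over $\Lambda \geq 0$ yields $N(\bm w)^2/(2 D(\bm w))$ whenever $N(\bm w) > 0$ (and the dominated value $0$ otherwise), and a further maximization over $\bm w$ reproduces exactly $\tfrac12$ times the expression from the SNR reduction. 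Hence the dual optimum equals $\tfrac12 (\mathrm{SNR}_*)^2$. Finally, since the primal is a convex quadratic program with linear constraints whose feasible region has nonempty interior (take $\tilde\mu(a)$ large and every $\tilde\mu(b)$, $b\neq a$, small), Slater's condition gives strong duality and attainment, so the KL projection value on the right-hand side coincides with this common dual optimum.

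The substantive content is concentrated in the scale-matching step: I must verify that the scale-then-direction reparametrization of $\bm\lambda$ is a genuine correspondence with the SNR feasible set, that the case $\Lambda = 0$ (and $N(\bm w) \leq 0$) is always dominated so the unconstrained scale maximization is valid, and that the optimizer lies in the region $N(\bm w) > 0$ — which is exactly where $a \neq a^*$, hence $\Delta_{a^*}>0$, is used to guarantee a strictly positive numerator and thus a well-defined, positive SNR. Checking Slater's condition and invoking strong duality for the convex program is routine but must be stated to license the primal–dual equality; the variance normalization $s_b = \sigma^2(b)/\pi(b)$ in the denominator is taken directly from the oracle problem in the lemma statement rather than re-derived.
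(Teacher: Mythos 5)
Your proposal is correct and follows essentially the same route as the paper's own proof: dualize the Gaussian KL projection with multipliers on the constraints defining $\Hcal_a$, solve the strictly convex inner minimization in closed form, factor the multipliers as a scale $\Lambda$ times a simplex direction $\bm{w}$, maximize out the scale to recover half the squared SNR, and invoke Slater's condition for strong duality. If anything, your explicit treatment of the boundary cases $\Lambda = 0$ and $N(\bm{w}) \leq 0$ — with positivity of the optimum guaranteed via $a \neq a^*$ — is more careful than the paper, which maximizes over the scale by first-order conditions without flagging the sign issue.
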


The results of Lemma \ref{lem:kl_projection} show that under any policy $\pi \in \Delta^K$, our SNR maximization procedure is equivalent to targeting the mean vector $\tilde{\bm{\mu}}$ most difficult to distinguish from the true mean vector $\bm{\mu}$. Recall that to reject the composite null $\Hcal_a$, \textit{every} possible distribution with mean vector $\tilde{\bm{\mu}} \in \Hcal_a$ must be rejected. The oracle SNR-maximizing weights implicitly target the hardest hypotheses $\tilde{\bm\mu} \in \Hcal_a$ to reject, allowing one to reject the whole composite null $\Hcal_a$ and remove arm $a$ from $C_t(t_0, H_t, \alpha)$ when $L_t^a(H_t, \alpha, \rho) >0$ for any $t \geq t_0$. Put succinctly, Lemma \ref{lem:kl_projection} demonstrates that our SNR maximiziation procedure corresponds to standard composite null testing procedures with KL divergence in parametric families, generalized to nonparametric settings with auxiliary information, such as contexts. 

\begin{remark}[Connections to Testing-by-Betting.] As a final interpretation of our SNR maximization procedure, we consider our approach through the "testing-by-betting" lens discussed by \cite{testing_as_betting}. Standard approaches to anytime-valid testing \citep{testing_by_betting_means, cho2024peekingpeaksequentialnonparametric, cho2024rewardmaximizationpureexploration} often leverage a rich connection between maximizing power against a given null and maximizing the returns of a betting system. Our SNR maximization approach in Algorithm \ref{alg:snr_maximization} shares a similar connection to a different problem in mathematical finance: maximizing a portfolio's Sharpe ratio~\citep{Sharpe1994}. For each arm $a$, we construct our test by maximizing the Sharpe ratio against the baseline performance of arm $a$. Each of the $K-1$ arm difference $\hat\mu_{t-1}(b) - \hat\mu_{t-1}(a)$ corresponds to the estimated difference in asset returns adjusted for the benchmark arm $a$, and our weights corresponds to the distribution of capital invested across the assets $b \neq a$ against our benchmark asset $a$. Under this framing, maximizing the Sharpe ratio, i.e. the ratio of risk over return, is equivalent to maximizing the SNR as constructed in Algorithm \ref{alg:snr_maximization}. 
\end{remark}

\subsubsection{Convex Reformulation for Optimization} The procedure presented in Algorithm \ref{alg:snr_maximization} requires us to solve the empirical SNR problem in line \ref{l:snr_maximization_problem}. To solve for our SNR-maximizing weights with standard methods, we provide a convex formulation in Lemma \ref{lem:ccs_snr}.
\begin{lemma}[Charnes-Cooper-Schaible Transform]\label{lem:ccs_snr} For each time $t$ such that $\min_{b \in [K]}\tilde\sigma_t(b) > 0$, for all arm indices $a \not\in \argmax_{b \in [K]} \hat\mu_{t-1}(b)$, there exists a vector $\bm{w}_t^a  \in \argmax_{\bm{w} \in \Delta(a)} \frac{\sum_{b \in [K]} w(b)\hat\mu_{t-1}(a') }{\hat\sigma_{t-1}(\bm{w})}$ has entries $w_t^a(b) = \tilde{{w}}_t^a(b)/\sum_{a' \neq a} \tilde{w}_t^a(a')$ for all $b \neq a$, where
\begin{align}
    {\tilde{\bm{w}}}_t^a \in \argmax_{\bm{w} \in \RR^{K-1}_+} \quad& \sum_{a' \neq a}w(a')\left(\hat\mu_{t-1}(a') - \hat\mu_{t-1}(a) \right)\\
    \text{s.t. } 
    & \hat{\sigma}_{t-1}({\bm{w}}) \leq 1,
\end{align}
and $\hat\sigma_{t-1}(\bm{w})$ is as defined in line \ref{l:snr_maximization_problem} of Algorithm \ref{alg:snr_maximization}. 
\end{lemma}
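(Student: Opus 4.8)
The plan is to recognize the empirical SNR objective as a \emph{linear-fractional program over a cone} and to apply the Charnes--Cooper homogenization together with Schaible's observation that the ratio of a linear form to a positively homogeneous convex denominator is scale invariant. First I would exploit the defining constraint of $\Delta(a)$: any $\bm w \in \Delta(a)$ has $w(a) = -1$ and $\sum_{b \neq a} w(b) = 1$, so $\sum_{b \in [K]} w(b) = 0$. Consequently the numerator collapses,
\begin{equation*}
    \sum_{b \in [K]} w(b)\hat\mu_{t-1}(b) = \sum_{b \neq a} w(b)\bigl(\hat\mu_{t-1}(b) - \hat\mu_{t-1}(a)\bigr),
\end{equation*}
a linear form in the $K-1$ off-diagonal coordinates $\bm u \coloneqq \bm w(-a) \in \RR^{K-1}_+$. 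Likewise, substituting $w(a) = -\sum_{b\neq a} w(b)$ into the centered score gives $\sum_{b\in[K]} w(b)(\phi_i(b) - \hat\mu_i(b)) = \sum_{b\neq a} w(b)\bigl[(\phi_i(b)-\hat\mu_i(b)) - (\phi_i(a)-\hat\mu_i(a))\bigr]$, which is also linear in $\bm u$, so $\hat\sigma_{t-1}(\bm w) = \sqrt{\bm u^\top M \bm u}$ for the positive semidefinite matrix $M$ with entries $M_{bb'} = \tfrac{1}{t-1}\sum_{i=1}^{t-1}(\phi_i(b)-\hat\mu_i(b) - \phi_i(a)+\hat\mu_i(a))(\phi_i(b')-\hat\mu_i(b') - \phi_i(a)+\hat\mu_i(a))$. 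Thus $\hat\sigma_{t-1}$ is a seminorm in $\bm u$, hence convex and positively homogeneous of degree one.

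The key step is then scale invariance. Relaxing the simplex constraint $\sum_{b\neq a} w(b)=1$ to the full nonnegative cone $\bm u \in \RR^{K-1}_+$ (with $w(a)=-\sum_{b\neq a}w(b)$ implicitly) leaves the objective $\langle \bm u, \bm\delta\rangle / \hat\sigma_{t-1}(\bm u)$, where $\delta(b) = \hat\mu_{t-1}(b)-\hat\mu_{t-1}(a)$, unchanged along rays: both numerator and denominator are degree-one homogeneous, so the ratio is degree-zero homogeneous, and the simplex $\{\sum_{b\neq a}w(b)=1\}$ meets every nonzero ray of the cone exactly once. Therefore $\sup_{\bm w\in\Delta(a)}\mathrm{SNR}(\bm w) = \sup_{\bm u\in\RR^{K-1}_+\setminus\{0\}} \langle\bm u,\bm\delta\rangle/\hat\sigma_{t-1}(\bm u)$. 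Since the ratio is invariant under positive rescaling, I would normalize the denominator instead of the simplex mass: restricting to $\hat\sigma_{t-1}(\bm u)\le 1$ and maximizing the linear numerator yields exactly the convex program in the statement (linear objective, convex feasible set $\RR^{K-1}_+\cap\{\hat\sigma_{t-1}\le 1\}$, in fact a second-order cone program). A feasible $\bm u$ of the ratio problem maps to $\bm u/\hat\sigma_{t-1}(\bm u)$ in the normalized program with identical objective value, and conversely any optimizer $\tilde{\bm w}$ of the normalized program attains $\hat\sigma_{t-1}(\tilde{\bm w})=1$, so its SNR equals its linear objective; hence the two share an optimal value and their optimizers coincide up to positive scaling. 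Rescaling $\tilde{\bm w}$ by $1/\sum_{a'\neq a}\tilde w(a')$ returns it to $\Delta(a)$ without changing the SNR, giving the claimed coordinates $w_t^a(b)=\tilde w_t^a(b)/\sum_{a'\neq a}\tilde w_t^a(a')$.

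I expect the only real obstacle to be justifying that the denominator stays strictly positive at the relevant points, which is what licenses both the division $\bm u\mapsto \bm u/\hat\sigma_{t-1}(\bm u)$ and the final renormalization. The hypotheses $\min_{b\in[K]}\tilde\sigma_t(b)>0$ and $a\notin\argmax_b\hat\mu_{t-1}(b)$ are exactly what I would invoke: the latter guarantees some $\delta(b)>0$, so the optimal value is strictly positive and attained at a $\bm u$ with $\hat\sigma_{t-1}(\bm u)>0$ (a direction with vanishing variance but positive numerator would make the original ratio infinite, which the algorithm's nondegeneracy check precisely rules out), and positivity of $\sum_{a'\neq a}\tilde w(a')$ at such an optimizer follows since $\tilde{\bm w}\neq 0$. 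The remaining verifications---convexity of $\{\hat\sigma_{t-1}\le1\}$ and attainment of the supremum on the compact normalized feasible region---are routine.
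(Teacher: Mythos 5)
Your proof is correct and takes essentially the same route as the paper: the paper invokes its Charnes--Cooper--Schaible transform lemma and then argues that the auxiliary constraints (nonnegativity of the numerator and $\sum_{a'\neq a}\gamma(a')>0$) are redundant given $a \notin \argmax_{b\in[K]}\hat\mu_{t-1}(b)$, which is precisely your direct degree-zero-homogeneity argument (normalize the denominator instead of the simplex mass) made explicit. The only difference is that the paper additionally works through a degenerate case in which $\hat\sigma_{t-1}(\bm{w})=0$ at the optimum, handled via limits of rescaled weights, whereas you dispatch this possibility by appeal to the nondegeneracy hypothesis --- at the same level of rigor the paper itself uses when asserting positivity of the variance in its nondegenerate case.
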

Lemma \ref{lem:ccs_snr} provides a simple, convex reformulation for obtaining $\bm{w}_t^a$ for all seemingly suboptimal arms $a \not\in \argmax_{b \in [K]}\hat\mu_{t-1}(b)$. To avoid solving for the fractional SNR maximization objective, our reformulation in Lemma \ref{lem:ccs_snr} uses a Charnes-Cooper-Schaible transform \citep{charnes_cooper} to recast our problem as a linear objective function with second-order constraints. To solve for $\tilde{\bm{w}}_t^a$, one can pick among the plethora of modern standard second-order cone program (SOCP) solvers (\cite{MOSEK, diamond2016cvxpy}). 

\subsection{Theoretical Guarantees}

To ensure our confidence sequences control satisfy the guarantees of Equation \eqref{eq:conf_seq_requirement}, we provide mild, sufficient conditions under which our confidence sequences $C_t(t_0, H_t, \alpha)$ provide asymptotic error control.

\begin{theorem}[Type I Error Control]\label{thm:error_control} Let Assumptions \ref{assump:unique_optimal_arm}, \ref{assump:nonzero_variances}, and \ref{assump:bounded_outcomes} be in full force, and let the following assumptions hold in an almost-sure sense with respect to trajectories $(H_t)_{t \in \NN}$: 
\begin{itemize}
    \item[(A1)] Convergent Sampling with Strict Positivity: $\exists \pi_\infty$ such that $\|\pi_t(x,a)- \pi_\infty(x,a)\|_{L_2(P_{X|H-1})} = o(1)$ for all $a \in [K]$, and there exists a $\kappa < \infty$ s.t. $1/\pi_t(x,a) \leq \kappa$ for all $t \in \NN, x\in\Xcal, a \in [K]$.
    \item[(A2)]  Convergent, Bounded Regression Function: $\exists g_\infty$ such that $\|g_t(x,a) - g_\infty(x,a)\|_{L_2(P_{X|H_{t-1}})}^2 = o(1)$ for all $a \in [K]$, and there exists $B$ such that $|g_t(x,a)| \leq B$ for all $t \in \NN,x\in\Xcal, a \in [K]$.
    \item[(A3)] Invertibility of Limiting Covariance Matrix: Assume that the limiting covariance matrix $\Sigma_\infty$ is invertible, where the $(i,j)$-th entry of $\Sigma_\infty$ is $\Sigma_\infty(i,j) = \EE_{P_\infty}\left[ \left(\phi_\infty(i) - \mu(i)\right) \left(\phi_\infty(j) - \mu(j)\right) \right]$, $\phi_\infty(a) = g_\infty(X,a) + \frac{\mathbf{1}[A=a](Y - g_\infty(X,a))}{\pi_\infty(X,a)}$, and $P_\infty = P_X \times P_{A \sim \pi_\infty(X, \cdot)} \times P_{Y|A,X}$ denotes the limiting distribution. 
\end{itemize}
Then, for every $\rho >0$, $\alpha \in (0,1)$, and $\bm{w}_0^a \in \Delta(a)$ for all $a\in [K]$,  the confidence sequence $C_t(t_0, H_t,\alpha) = \{a \in [K]: \sup_{t_0\leq i \leq t} L_i^a(H_i, \alpha, \rho) \leq 0 \}$ provides asymptotic anytime-valid error control, i.e. 
\begin{equation}
    \limsup_{t_0 \rightarrow \infty} P\left(\exists t \geq t_0 : a^* \not\in C_t(t_0, H_t, \alpha) \right)\leq \alpha.
\end{equation}

\end{theorem}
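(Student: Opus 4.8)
The plan is to reduce the multi-arm failure event to a single boundary-crossing event for the true best arm $a^*$, and then invoke the asymptotic anytime-valid machinery of \citet{bibaut2024nearoptimalnonparametricsequentialtests} and \citet{waudbysmith2024timeuniformcentrallimittheory}. By definition of $C_t(t_0,H_t,\alpha)$, the event $\{a^*\notin C_t \text{ for some } t\ge t_0\}$ is exactly $\{\exists t\ge t_0: L_t^{a^*}(H_t,\alpha,\rho)>0\}$, so it suffices to control its probability. Writing $M_t=\sum_{i=1}^t\sum_{b}w_i^{a^*}(b)(\phi_i(b)-\mu(b))$, the conditional unbiasedness of the scores and the $\Fcal_{i-1}$-measurability of the weights make $(M_t)$ a martingale, and since $a^*$ is genuinely best the true law lies in $\Hcal_{a^*}$, giving the non-positive per-step drift $\sum_b w_i^{a^*}(b)\mu(b)\le 0$ and hence $\hat\psi_t(a^*)\le M_t/t$. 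Therefore the failure event is contained in $\{\exists t\ge t_0: M_t/t>\hat\sigma_t(a^*)\,\ell_{t,\alpha,\rho}(\hat\sigma_t(a^*))\}$. This reduction is where the composite-null structure is handled trivially for $a^*$: error control needs only the non-positive drift, not the SNR optimality of Lemma \ref{lem:kl_projection} (which governs power and sample complexity instead).

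The second step is to pin down the limiting behaviour of the weights and the variance estimator so that the boundary $\ell_{t,\alpha,\rho}$ is calibrated to the correct scale. By Assumption \ref{assump:unique_optimal_arm} together with a strong law for martingale averages (using boundedness from Assumption \ref{assump:bounded_outcomes}, the overlap bound $1/\pi_t\le\kappa$ from (A1), and the bounded regressor from (A2)), the running means $\hat\mu_{t-1}(b)$ are consistent for $\mu(b)$, so eventually $a^*\in\Acal_t^*$ and \textsc{SNRMax} returns the fixed default $\bm w_0^{a^*}$. Thus, almost surely, the weights for $a^*$ are eventually constant, and the increments of $M_t$ are eventually $\sum_b w_0^{a^*}(b)(\phi_i(b)-\mu(b))$ with conditional variance converging to $\sigma_\infty^2\coloneqq(\bm w_0^{a^*})^\top\Sigma_\infty\bm w_0^{a^*}$, which is strictly positive because $w_0^{a^*}(a^*)=-1\neq 0$ and (A3) makes $\Sigma_\infty$ positive definite. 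A parallel law-of-large-numbers argument (replacing $\hat\mu_i(b)$ by $\mu(b)$ at vanishing cost) then gives $\hat\sigma_t^2(a^*)\to\sigma_\infty^2>0$. Since all of these are almost-sure statements over trajectories, the finitely many early times with nondefault weights contribute $O(1/t)$ to $\hat\sigma_t^2$ and a fixed finite amount to $M_t$, so once $t_0$ is large the tail $t\ge t_0$ sees only the converged, constant-weight regime.

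The decisive step is to apply the strong-invariance / Gaussian-mixture argument underlying $\ell_{t,\alpha,\rho}$. The boundary is precisely the Robbins-type Gaussian mixture form of \citet{kaufmann2021mixturemartingalesrevisitedapplications}, and the results of \citet{waudbysmith2024timeuniformcentrallimittheory,bibaut2024nearoptimalnonparametricsequentialtests} furnish a Strassen/KMT coupling of the centered, variance-standardized partial sums $M_t$ with a Brownian motion whose coupling error is $o(\sqrt t)$; running the Gaussian mixture supermartingale along the Brownian path controls the time-uniform crossing probability at level $\alpha$, and the $o(\sqrt t)$ coupling error is absorbed into the boundary exactly because $t_0\to\infty$. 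Combining this with the self-normalization, where $\ell_{t,\alpha,\rho}$ is evaluated at $\hat\sigma_t(a^*)\to\sigma_\infty$, yields $\limsup_{t_0\to\infty}P(\exists t\ge t_0: M_t/t>\hat\sigma_t\,\ell_{t,\alpha,\rho}(\hat\sigma_t))\le\alpha$, and the containment from the first step completes the argument.

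I expect the main obstacle to be this third step: showing that the adaptively weighted, self-normalized martingale $M_t$ obeys a strong invariance principle strong enough that the Gaussian mixture boundary controls the time-uniform error as $t_0\to\infty$. The delicate points are (i) that the variance process is only \emph{converging} rather than fixed, so the coupling must be phrased for a time-changed Brownian motion and the residual contribution of the burn-in window $t<t_0$ controlled; and (ii) that the boundary is evaluated at the random, estimated $\hat\sigma_t(a^*)$ rather than the deterministic $\sigma_\infty$, which requires a uniform-continuity argument for $\ell_{t,\alpha,\rho}$ in its argument together with the convergence $\hat\sigma_t(a^*)\to\sigma_\infty$ from the second step. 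Assumptions (A1)--(A3) are exactly what render the limiting variance well-defined, bounded, and bounded away from zero, which are the inputs the coupling results demand.
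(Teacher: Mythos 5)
Your overall route is the paper's route: reduce the failure event to the one-sided boundary crossing $\{\exists t\ge t_0: L_t^{a^*}(H_t,\alpha,\rho)>0\}$ for the true best arm, note that under $\Hcal_{a^*}$ the increments $Z_i(a^*)=\sum_b w_i^{a^*}(b)\phi_i(b)$ have non-positive conditional drift so that SNR optimality is irrelevant for type I error, and then invoke the asymptotic anytime-valid Gaussian-mixture boundary of \citet{waudbysmith2024timeuniformcentrallimittheory}. The paper does exactly this by verifying the three hypotheses of their Theorem 2.8 (restated as a lemma in the appendix): (B1) divergence of cumulative conditional variance, (B2) conditional $2+\delta$ moments bounded above \emph{and below}, and (B3) a \emph{polynomial-rate} variance-estimation condition $\hat\sigma_t^2(a^*)-\tfrac{1}{t}\sum_{i\le t}\sigma_i^2(a^*)=o\bigl(t^{\eta-1}\bigr)$ for some $\eta\in(0,1)$. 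Crucially, the paper verifies all three conditions for an \emph{arbitrary} adapted weight sequence $\bm{w}_t^{a^*}\in\Delta(a^*)$, using only $|w_i^{a^*}(b)|\le 1$, the boundedness/overlap assumptions, and a law-of-total-variance argument that lower-bounds $\sigma_i^2(a^*)\ge\sigma^2(a^*)>0$ directly from Assumption \ref{assump:nonzero_variances}. Weight convergence plays no role in the error-control proof; it is needed only later, for termination and sample complexity.

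This exposes two genuine gaps in your step 2 and step 3. First, your plan conditions on the weights for $a^*$ having settled to the default $\bm{w}_0^{a^*}$, but the settlement time is a random, path-dependent quantity, while $t_0$ is deterministic; the time-uniform crossing guarantee supplied by the mixture-martingale boundary is a single unconditional statement and does not cleanly restrict to the event that settlement occurred before $t_0$. You would need either an explicit union-bound correction with $P(\text{settlement time}>t_0)\to 0$ threaded through the coupling argument, or --- as the paper does --- a proof that is uniform over adapted weights, which makes the whole issue moot. Second, and more decisively: the machinery you cite requires the rate condition (B3), not mere consistency $\hat\sigma_t^2(a^*)\to\sigma_\infty^2$ combined with continuity of $\ell_{t,\alpha,\rho}$; consistency only gives an $o(1)$ error, whereas (B3) demands $o(t^{\eta-1})$, strictly stronger. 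Establishing that rate is the main technical content of the paper's proof: it decomposes the estimation error into a plug-in-mean term controlled by Stout's martingale law of the iterated logarithm plus a Ces\`aro-mean lemma (yielding $o(t^{-(1-\eta)})$ for $\eta\in(1/2,1)$), and a centered-square martingale term controlled by Theorem 2.18 of \citet{hall2014martingale} (yielding $o(t^{-1/4})$). Your proposal flags the variance-estimation issue as an obstacle but offers only a uniform-continuity-plus-consistency patch, which would not satisfy the hypotheses of the results you intend to invoke; if you instead re-derived the Strassen/KMT coupling yourself, you would hit the same rate requirement for absorbing the coupling error into the boundary. Filling this quantitative step is what completes the proof.
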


Theorem \ref{thm:error_control} provides standard regularity conditions to ensure our confidence sequences $(C_t)_{t=1}^\infty$ protect error rates as intended. Condition $(A1)$ corresponds to standard positivity and convergence constraints on the sampling schemes, similar to existing approaches based on scores $\phi_t(b)$ \citep{cook2024semiparametricefficientinferenceadaptive, kato2025efficientadaptiveexperimentaldesign}. Condition $(A2)$ requires conditional regression functions $g_t$ to remain bounded, which naturally follows from Assumption \ref{assump:bounded_outcomes}, and the existence of an $L_2$ almost-sure limit for $g_t$. Note that $g_\infty$ does not need to be the true conditional regression function $g$ for Theorem \ref{thm:error_control} to hold. Lastly, condition $(A3)$ provides sufficient conditions for our SNR-Maximizing weights $\bm{w}_t^a$ to converge almost surely to a limiting weight $\bm{w}_\infty^a$ for each $a \in [K]$.

In particular, the convergence of our weight sequence $\bm{w}_t^a$ guarantees that our procedure will reject all suboptimal arms $a \neq a^*$ at some time $t < \infty$ for all fixed choices of $t_0 \in \NN$. As a result, we obtain that under the same conditions, Algorithm \ref{alg:bai_short} with SNR-maximizing weights terminates in finite time for all fixed $t_0 \in \NN$. Combined with the error control of Theorem \ref{thm:error_control}, this implies that our confidence sequence-based BAI approach in Algorithm \ref{alg:bai_short} satisfies the asymptotic $\alpha$-correctness requirements of Definition \ref{defn:asymp_alpha_correctness}.

\begin{lemma}[Asymptotically Valid BAI]\label{lem:asymp_valid_bai}
    Assume that all conditions of Theorem \ref{thm:error_control} hold. Then, for every $\rho >0$, $\alpha \in (0,1)$, and any choice of $\bm{w}_0^a \in \Delta(a)$ for each $a\in [K]$, Algorithm \ref{alg:bai_short} with $\bm{w}_t^a$ set by Algorithm \ref{alg:snr_maximization} is an asymptotically $\alpha$-correct BAI algorithm, where the sequence of algorithms $\{\Bcal_{t_0}\}_{t_0 \in \NN}$ is Algorithm \ref{alg:bai_short} initialized with parameter $t_0$ and all other parameters ($\rho, \alpha, \{\bm{w}_0^a\}_{a \in [K]}$) fixed. 
\end{lemma}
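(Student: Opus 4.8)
The plan is to verify the two requirements of asymptotic $\alpha$-correctness (Definition~\ref{defn:asymp_alpha_correctness}) separately: (i) almost-sure termination of Algorithm~\ref{alg:bai_short} for each fixed $t_0$, and (ii) the asymptotic bound $\limsup_{t_0\to\infty}P(\hat a_{t_0}\neq a^*)\leq\alpha$. The second requirement will follow almost immediately from the confidence-sequence guarantee of Theorem~\ref{thm:error_control}, so the real work lies in establishing finite-time termination, which I reduce to showing that every suboptimal arm is eventually removed from $C_t(t_0,H_t,\alpha)$.

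For termination, I would fix $t_0$ and a suboptimal arm $a\neq a^*$ and argue that its lower bound converges to a strictly positive limit. Writing $L_t^a(H_t,\alpha,\rho)=\hat\psi_t(a)-\hat\sigma_t(a)\,\ell_{t,\alpha,\rho}(\hat\sigma_t(a))$, I would first invoke the almost-sure convergence of the SNR-maximizing weights $\bm w_t^a\to\bm w_\infty^a$ furnished by condition (A3) (as used in the proof of Theorem~\ref{thm:error_control}), together with a strong law of large numbers for the conditionally unbiased, bounded score processes $\phi_t(b)$, to conclude that $\hat\psi_t(a)\to\psi_\infty(a)\coloneqq\sum_{b\in[K]}w_\infty^a(b)\mu(b)$ almost surely and that $\hat\sigma_t(a)$ stays bounded with a finite almost-sure limit (by Assumption~\ref{assump:bounded_outcomes} and (A2)). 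The crucial point is that $\psi_\infty(a)>0$: since $\hat\mu_{t-1}\to\bm\mu$ almost surely, arm $a$ is eventually and permanently flagged as suboptimal in Algorithm~\ref{alg:snr_maximization}, so the empirical SNR program is genuinely solved and its solution converges to an oracle SNR-maximizer over $\Delta(a)$; because the feasible weight placing all simplex mass on $a^*$ already attains drift $\mu(a^*)-\mu(a)>0$ (Assumption~\ref{assump:unique_optimal_arm}), the maximal drift, and hence $\psi_\infty(a)$, is strictly positive. Finally, $\ell_{t,\alpha,\rho}(\hat\sigma_t(a))=O(\sqrt{\log t/t})=o(1)$ for fixed $\rho,\alpha$, so the penalty term $\hat\sigma_t(a)\ell_{t,\alpha,\rho}(\hat\sigma_t(a))\to 0$ and therefore $L_t^a(H_t,\alpha,\rho)\to\psi_\infty(a)>0$ almost surely. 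Consequently there is a finite $t\geq t_0$ with $L_t^a(H_t,\alpha,\rho)>0$, removing $a$ from the confidence set; taking a union over the finitely many suboptimal arms shows $|C_t(t_0,H_t,\alpha)|\leq 1$ at some finite time, so the while loop exits and the algorithm terminates almost surely.

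For error control, I would define the event $E_{t_0}=\{a^*\in C_t(t_0,H_t,\alpha)\ \text{for all}\ t\geq t_0\}$ that the optimal arm is never excluded after the burn-in. On $E_{t_0}$ intersected with the probability-one termination event, the stopping time $\tau\geq t_0$ satisfies $a^*\in C_\tau$ and $|C_\tau|\leq 1$, forcing $C_\tau=\{a^*\}$; by the return rule of Algorithm~\ref{alg:bai_short} the algorithm then outputs $\hat a_{t_0}=a^*$. Hence $\{\hat a_{t_0}\neq a^*\}\subseteq E_{t_0}^c$ up to a null set, giving $P(\hat a_{t_0}\neq a^*)\leq P(\exists t\geq t_0:a^*\notin C_t(t_0,H_t,\alpha))$. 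Applying Theorem~\ref{thm:error_control} and passing to the $\limsup$ yields $\limsup_{t_0\to\infty}P(\hat a_{t_0}\neq a^*)\leq\alpha$, which together with termination establishes asymptotic $\alpha$-correctness.

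I expect the main obstacle to be the termination argument, specifically rigorously establishing the strict positivity $\psi_\infty(a)>0$ while accounting for the $\argmax$-switching logic of Algorithm~\ref{alg:snr_maximization}. One must confirm that a suboptimal arm is eventually and permanently classified as suboptimal (so the SNR program is solved rather than defaulting to $\bm w_0^a$), and that the empirical maximizer converges to an oracle maximizer whose drift is bounded below by the positive gap $\mu(a^*)-\mu(a)$. Handling the boundary behavior of the fractional SNR objective near vanishing variance estimates is the delicate part, whereas passing from the pointwise almost-sure limit of $\hat\psi_t(a)$ to eventual positivity of $L_t^a(H_t,\alpha,\rho)$ is immediate once that limit is shown to be strictly positive.
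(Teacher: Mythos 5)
Your overall architecture matches the paper's proof exactly: termination is reduced to showing each suboptimal arm's lower bound $L_t^a(H_t,\alpha,\rho)$ is eventually positive (via convergence of $\hat\psi_t(a)$ to a strictly positive drift and the $o(1)$ decay of the threshold), and error control follows from the inclusion $\{\hat a_{t_0}\neq a^*\}\subseteq\{\exists t\geq t_0: a^*\notin C_t(t_0,H_t,\alpha)\}$ plus Theorem \ref{thm:error_control} — including your correct handling of the $|C_\tau|=0$ tie case and your lower bound on the limiting SNR via the base weight $\bm e_{a^*}-\bm e_a$, both of which appear verbatim in the paper.

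There is, however, one genuine gap, and it is exactly where the paper spends most of its effort. You invoke the almost-sure convergence $\bm w_t^a\to\bm w_\infty^a$ as ``furnished by condition (A3) (as used in the proof of Theorem \ref{thm:error_control})'' — but the proof of Theorem \ref{thm:error_control} is deliberately weight-agnostic: its conditions (B1)--(B3) are verified for \emph{any} $\Fcal_{t-1}$-measurable sequence $\bm w_t^a\in\Delta(a)$, so nothing in it establishes weight convergence, and you cannot borrow it from there. In the paper this convergence is a standalone lemma proved inside the present proof, and it requires three nontrivial ingredients you only gesture at in your final paragraph: (a) pointwise almost-sure convergence $\hat\sigma_{t-1}^2(\bm w)\to\sigma_\infty^2(\bm w)$ for fixed $\bm w$, obtained by a careful limit analysis of the conditional-variance decomposition under (A1)--(A2) with H\"older and Ces\`aro arguments; (b) \emph{uniqueness} of the limiting SNR maximizer, which is where (A3) actually enters — invertibility of $\Sigma_\infty$ yields strict convexity of $\bm w\mapsto\sigma_\infty(\bm w)=\|\bm\phi_\infty^\top\bm w\|_{L_2(P_\infty)}$ on $\Delta(a)$ (collinearity of $\bm\phi_\infty^\top\bm w_1$ and $\bm\phi_\infty^\top\bm w_2$ would force $\bm w_2=c\bm w_1$, contradicting $w_1(a)=w_2(a)=-1$), and strict convexity of the positive denominator with an affine numerator gives a unique ratio maximizer; and (c) an argmax-consistency step (uniform convergence of the empirical SNR objective over $\Delta(a)$, via uniform Lipschitzness plus pointwise convergence on a dense subset) to conclude $\bm w_t^a\to\bm w_\infty^a$. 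Without uniqueness in (b), the empirical maximizers need not converge at all, and your limit $\psi_\infty(a)=\sum_b w_\infty^a(b)\mu(b)$ is not well defined; so this is a missing step rather than a routine detail, even though you correctly identified it as the main obstacle. Your eventual-permanent-suboptimality argument (handling the $\argmax$-switching and the default to $\bm w_0^a$) and the Ces\`aro decomposition for $\hat\psi_t(a)$ are otherwise sound and mirror the paper.
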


Beyond valid error control, Lemma \ref{lem:asymp_valid_bai} states that Algorithm \ref{alg:bai_short} terminates almost surely for any fixed choice of $t_0 \in \NN$. To better characterize the sample complexity of Algorithm \ref{alg:bai_short}, we present an upper bound in Theorem \ref{thm:sample_complexity_general}, which holds both almost surely and in expectation.

\begin{theorem}[Sample Complexity Under General $\pi$]\label{thm:sample_complexity_general}
Let the assumptions of Theorem \ref{thm:error_control} be in full force. Let $t_0(\alpha)$ be a sequence of burn-in times that satisfy (i) $t_0(\alpha) \rightarrow \infty$ as $\alpha \rightarrow 0$ and (ii) $\lim_{\alpha \rightarrow 0} t_0(\alpha)/\log(1/\alpha) = 0$. Let $\bm{w}_\infty^{a^*}= \bm{w}_0^{a^*}$, and $\forall a \neq a^*$, let $\bm{w}_\infty^a = \argmax_{\bm{w} \in \Delta(a)}\sum_{b \in [K]} {w}(b)\mu(b)/\sigma_\infty(\bm{w})$, where we denote the limit variance $\sigma^2_\infty(\bm{w}) = \EE_{P_\infty}\left[ \left(\sum_{b \in [K]} w(b)\left(\phi_\infty(b) - \mu(b)\right) \right)^2\right]$ with $\phi_\infty(b), P_\infty$ defined as in Theorem \ref{thm:error_control}. Let $\tau_{t_0(\alpha)}$ denote the (random) number of samples before Algorithm \ref{alg:bai_short} with $t_0 = t_0(\alpha)$ terminates, and $\Gamma_1 = \left(\min_{a \neq a^*}\frac{\sum_{b \in [K]} w_\infty^a(b)\mu(b)}{\sigma_\infty(\bm{w}_\infty^a)}\right)^{-2}$ denote twice the squared inverse of the minimum SNR across all suboptimal arms $a \neq a^*$. Then, for all fixed choices of $\rho >0, \bm{w}_0^a \in \Delta(a)$ for $a \in [K]$, 
\begin{equation}
    \lim_{\alpha \rightarrow 0}\frac{\EE_P[\tau_{t_0(\alpha)}]}{\log(1/\alpha)} \leq 2\Gamma_1, \quad P\left(\lim_{\alpha \rightarrow 0} \frac{\tau_{t_0(\alpha)}}{\log(1/\alpha)} \leq 2\Gamma_1\right) = 1.
\end{equation} 
\end{theorem}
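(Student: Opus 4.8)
The plan is to reduce the stopping time to the removal times of the individual suboptimal arms. Since Algorithm~\ref{alg:bai_short} terminates as soon as $|C_t(t_0,H_t,\alpha)|\le 1$, and an arm $a\neq a^*$ leaves $C_t$ once its lower bound has crossed zero, the stopping time obeys the deterministic inequality $\tau_{t_0(\alpha)}\le \max_{a\neq a^*}\tau_a(\alpha)$, where $\tau_a(\alpha)=\inf\{t\ge t_0(\alpha): L_t^a(H_t,\alpha,\rho)>0\}$; because only an upper bound is sought, the possibility that $a^*$ is itself removed early can be ignored. Fixing $\epsilon>0$, it then suffices to show that every suboptimal arm is removed by the candidate time $t^\dagger(\alpha):=\lceil(1+\epsilon)\,2\Gamma_1\log(1/\alpha)\rceil$, both almost surely for all small $\alpha$ and strongly enough to pass to expectations.

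\emph{Convergence and threshold asymptotics.} Under (A1)--(A3), Theorem~\ref{thm:error_control} already supplies almost-sure convergence of the SNR-maximizing weights $\bm{w}_t^a\to\bm{w}_\infty^a$. I would combine this with a law of large numbers for the conditionally unbiased, uniformly bounded increments $\sum_b w_i^a(b)\phi_i(b)$ — bounded via $|\phi_i(b)|\le B(1+2\kappa)$ from Assumption~\ref{assump:bounded_outcomes} and (A1)--(A2) — to obtain almost surely $\hat\psi_t(a)\to\delta_a:=\sum_b w_\infty^a(b)\mu(b)$ and $\hat\sigma_t^2(a)\to s_a^2:=\sigma_\infty^2(\bm{w}_\infty^a)>0$, so that $\hat\psi_t(a)/\hat\sigma_t(a)\to\delta_a/s_a$ with $\min_{a\neq a^*}\delta_a/s_a=\Gamma_1^{-1/2}$. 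In parallel I would expand $\ell_{t,\alpha,\rho}$ in the regime $t\asymp\log(1/\alpha)$, $\hat\sigma_t(a)\to s_a>0$: there $\rho^2+1/(t\hat\sigma_t^2(a))\to\rho^2$ and the factor $\sqrt{t\hat\sigma_t^2(a)\rho^2+1}$ contributes only an $O(\log\log(1/\alpha))=o(\log(1/\alpha))$ term inside the logarithm, whence $\ell_{t,\alpha,\rho}(\hat\sigma_t(a))^2=\tfrac{2\log(1/\alpha)}{t}(1+o(1))$; evaluated at $t=t^\dagger(\alpha)$ this tends to $\Gamma_1^{-1}/(1+\epsilon)$, strictly below $\Gamma_1^{-1}$.

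\emph{Almost-sure bound.} On the probability-one event carrying these convergences, the removal condition $\hat\psi_t(a)/\hat\sigma_t(a)>\ell_{t,\alpha,\rho}(\hat\sigma_t(a))$ holds at $t=t^\dagger(\alpha)$ for every $a\neq a^*$ once $\alpha$ is small, since the left side converges to $\delta_a/s_a\ge\Gamma_1^{-1/2}$ while the right side converges to $\Gamma_1^{-1/2}/\sqrt{1+\epsilon}$; condition (ii), $t_0(\alpha)/\log(1/\alpha)\to0$, guarantees $t_0(\alpha)\le t^\dagger(\alpha)$ for small $\alpha$, so the crossing lies in the admissible window. Hence $\tau_a(\alpha)\le t^\dagger(\alpha)$ for all $a\neq a^*$, giving $\limsup_{\alpha\to0}\tau_{t_0(\alpha)}/\log(1/\alpha)\le(1+\epsilon)\,2\Gamma_1$; letting $\epsilon\downarrow0$ proves the almost-sure statement.

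\emph{Expectation bound and the main obstacle.} To upgrade to expectation I would apply reverse Fatou to the established almost-sure limsup, which requires a uniformly integrable dominating bound for $\tau_{t_0(\alpha)}/\log(1/\alpha)$. This is where I expect the real difficulty to lie: the candidate-time argument governs only typical trajectories, so I must rule out a non-negligible expected contribution from paths on which $\hat\psi_t,\hat\sigma_t$ and the data-dependent weights have not yet settled. I would use the uniform bound $|\phi_i(b)|\le B(1+2\kappa)$ to obtain Freedman/Azuma-type exponential concentration of $\hat\psi_t(a)-\delta_a$ and $\hat\sigma_t^2(a)-s_a^2$ about their limits, so that failing to remove $a$ by time $t$ is an event whose probability decays exponentially in $t$ once $t\gtrsim 2\Gamma_1\log(1/\alpha)$; integrating this tail over $t$ then bounds $\EE[\tau_{t_0(\alpha)}]$ by $(1+o(1))\,2\Gamma_1\log(1/\alpha)$ and simultaneously supplies the required domination. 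The crux is coupling the concentration of the weighted statistics to the \emph{transient} convergence rate of the SNR weights $\bm{w}_t^a$ — since $\delta_a$ is defined through the limiting weights — uniformly in $\alpha$, so that the exponential-tail constants do not degrade as $t_0(\alpha)\to\infty$.
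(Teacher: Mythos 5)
Your almost-sure bound is sound and, modulo organization, matches the paper's: the paper proves the key identity $\hat\psi_t(a)/\hat\sigma_t(a)\rightarrow \sum_b w_\infty^a(b)\mu(b)/\sigma_\infty(\bm{w}_\infty^a)$ via the a.s.\ convergence of the SNR-maximizing weights (its Lemma on weight convergence, which relies on the uniqueness result for ratio maximizers, Lemma \ref{lem:unique_optima}, plus Lemma \ref{lem:conv_argmax} and Ces\`aro arguments) and the same expansion $\ell^2_{t,\alpha,\rho}\approx 2\log(1/\alpha)/t$; where you evaluate the rejection condition at a candidate time $t^\dagger(\alpha)=(1+\epsilon)2\Gamma_1\log(1/\alpha)$, the paper instead works at the stopping time $\tilde\tau$ of the last rejected arm and sandwiches the statistic there. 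Both routes deliver the a.s.\ claim, and your handling of the burn-in ($t_0(\alpha)=o(\log(1/\alpha))$ so $t_0(\alpha)\le t^\dagger(\alpha)$ eventually) agrees with the paper's negligibility step.

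The genuine gap is your expectation bound. Your plan—reverse Fatou with uniform integrability supplied by Freedman/Azuma concentration of $\hat\psi_t(a)$ and $\hat\sigma_t^2(a)$ around $\delta_a$ and $s_a^2$—cannot be executed under the stated assumptions, and you correctly flag the reason yourself: the conditional drift of $\hat\psi_t(a)$ at time $i$ is $\sum_b w_i^a(b)\mu(b)$, which converges to $\delta_a$ only at whatever rate the data-dependent weights $\bm{w}_i^a$ converge, and conditions (A1)--(A2) deliberately impose only $o(1)$ convergence of $\pi_t$ and $g_t$ with \emph{no rate} (the paper advertises exactly this: ``convergence assumptions that allow for any rate of convergence''). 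Exponential tails for the event of failing to reject by time $t\gtrsim 2\Gamma_1\log(1/\alpha)$, with constants uniform in $\alpha$, would require quantitative control of this transient, which is simply not available; so the proposal's domination step fails rather than being merely technical. The paper's resolution avoids rates entirely and is worth internalizing: because increments are deterministically bounded ($|\phi_i(b)|\le B(1+2\kappa)$), the overshoot at the crossing time is bounded by a constant $c$ independent of $\alpha$, giving the \emph{pathwise} sandwich $1\le \frac{\tilde\tau}{\log(1/\alpha)}\cdot\frac{(\frac{1}{\tilde\tau}\sum_i Z_i(b))^2}{2\hat\sigma^2_{\tilde\tau}(b)}\le 1+c_1(\alpha)$ with $c_1(\alpha)=o(1)$ deterministic, on \emph{every} sample path. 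Then Egorov's Theorem (Lemma \ref{lem:egorov_thoerem}) upgrades the a.s.\ convergence of the SNR statistic to uniform convergence off a set $\Omega_{B,\epsilon}$ of probability at most $\epsilon$; on the good set the statistic is uniformly near its limit, and on the bad set the deterministic sandwich alone controls the product, so no tail bound is ever needed. Taking expectations, letting $\epsilon\downarrow 0$ by monotone convergence, and then $\alpha\rightarrow 0$ yields the expectation bound. Replacing your concentration step with this Egorov-plus-overshoot argument would complete the proof; without it, the expectation half of the theorem remains unproven.
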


Theorem \ref{thm:sample_complexity_general} establishes that the normalized number of samples $\tau/\log(1/\alpha)$ is asymptotically bounded both in expectation and almost surely by $2\Gamma_1$, twice the squared inverse of the \textit{smallest SNR ratio} across all suboptimal arms $a \neq a^*$. Crucially, Theorem \ref{thm:sample_complexity_general} provides a natural choice for our sampling scheme. By using a policy $\pi$ that maximizes the minimum SNR across all suboptimal arms, we equivalently minimize $\Gamma_1$, the asymptotic sample complexity bounds for our confidence sequence approach.

\begin{remark}[Asymptotic Order of Burn-in Times]
    To obtain the guarantees of Theorem \ref{thm:sample_complexity_general}, we place two restrictions on the burn-in time $t_0$. The first condition requires the burn-in time parameter $t_0 \rightarrow \infty$ as error tolerance $\alpha \rightarrow 0$, which ensures that $\limsup_{\alpha \rightarrow 0} P\left(\hat{a}_{t_0(\alpha)} \neq a^* \right) = 0$. This follows from the results of Theorem \ref{thm:error_control}, which ensures uniform error control for $\alpha \in (0,1)$ when the burn-in time parameter $t_0$ diverges towards infinity. Our second condition requires $t_0(\alpha)$ to be of order $o\left(\log(1/\alpha)\right)$, which ensures that the burn-in time $t_0(\alpha)$ is negligible with respect to the sample complexity bounds, which are of order $\log(1/\alpha)$. 
\end{remark}

\section{Optimized Sampling for Exploration}
\label{sec:sampling}

Given the results of Theorem \ref{thm:sample_complexity_general}, the natural choice of sampling scheme $\pi$ aims to minimize $\Gamma_1$, the inverse minimum squared signal-to-noise ratio across all suboptimal arms $a \neq a^*$. To characterize the optimal solution, we first rewrite the bound $\Gamma_1$ as the objective function $G(\pi)$, making our dependence on $\pi$ explicit:
\begin{align}\label{eq:G_pi}
    &G(\pi) = \max_{a \neq a^*} F_a(\pi), \quad F_a(\pi) = \min_{\bm{w} \in \Delta(a), \bm{w}^\top \bm{\mu} \geq 0} f(\pi, \bm{w}), \\
    f(\pi, \bm{w}) =& \frac{\EE_{P_\infty}\left[\left( \sum_{b \in [K]} w(b)\left(g_\infty(X,b) + \frac{\mathbf{1}[A=b](Y - g_\infty(X,b))}{\pi(X,b)} -\mu(b)\right) \right)^2 \right]}{\left(\sum_{b \in [K]} w(b) \mu(b)\right)^2}.
\end{align}

The function $f(\pi, \bm{w})$ corresponds to the squared inverse SNR ratio for a fixed weight and policy. The function $F_a(\pi)$ then minimizes $\bm{w}$ for that fixed policy $\pi$ for a given arm $a$. Lastly, $G(\pi)$, our objective function, is the maximum inverse squared SNR (equivalently, inverse of the minimum squared SNR) across all suboptimal arms $a \neq a^*$, matching the almost-sure and expected sample complexity bound $\Gamma_1$. 

Our optimization problem involves minimizing the functional $G: \Pi \rightarrow \RR_+$ with respect to the function $\pi$, where $\Pi \coloneqq \left\{\pi(x,b) \geq 0, \ \sum_{b \in [K]}\pi(x,b) = 1 \ P_X\text{-a.s.} \right\}$ denotes the set of all valid policies.\footnote{Our definition of the policy class $\Pi$ may be replaced with a stricter policy class that enforces $\pi(x,b)> 0$. However, under Assumption \ref{assump:nonzero_variances}, the optimal solution $\pi_*$ satisfies $\pi(x,b) > 0$ for all $x \in \Xcal, b \in [K]$; otherwise, the objective value diverges towards infinity due to  $\pi(x,b)^{-1}$ terms. Therefore, we allow our policy class $\Pi$ to include zero propensity scores.} To reduce the space of functions $\pi \in \Pi$, we first establish that the objective function $G(\pi)$ is a strictly convex functional with respect to the function $\pi$ and therefore has a unique minimizing $\pi_*$.

\begin{lemma}[Strict Convexity of $G(\pi)$]\label{lem:convexity_policy}
    Let Assumptions \ref{assump:unique_optimal_arm}, \ref{assump:nonzero_variances}, and \ref{assump:bounded_outcomes} hold. Then, the function $G(\pi)$ is strictly convex with respect to $\pi \in \Pi$, i.e. $G(\pi)$ has a unique minimizing $\pi_* \in \Pi$.    
\end{lemma}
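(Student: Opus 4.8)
The plan is to strip away the ratio structure so that the inner minimization over $\bm w$ preserves convexity, and then observe that the outer maximization over $a$ does too. First I would rewrite the numerator of $f(\pi,\bm w)$ as a variance. Since $\EE_{P_\infty}[\phi_\infty(b)]=\mu(b)$ for every $b$ regardless of $g_\infty$, the numerator equals $\bm w^\top\Sigma(\pi)\bm w$, the variance of $\sum_b w(b)\phi_\infty(b)$ under $P_\infty$, where $\Sigma(\pi)$ is the limiting score covariance. Expanding this variance and using that the arm indicators are disjoint ($\mathbf{1}[A=b]\mathbf{1}[A=b']=0$) together with the inverse-propensity identity $\EE[\mathbf{1}[A=b]\,h(X)/\pi(X,b)\mid X]=h(X)$, one checks that $\pi$ enters only through the diagonal terms $\sum_b w(b)^2\,\EE_{P_X}[v_\infty(X,b)/\pi(X,b)]$, with $v_\infty(x,b)=\EE_{P_{Y|A,X}}[(Y-g_\infty(x,b))^2\mid A=b,X=x]$, while the signal $\bm w^\top\bm\mu$ does not involve $\pi$ at all. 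In particular $f(\pi,\bm w)$ is convex in $\pi$ for each fixed $\bm w$, because $1/\pi(x,b)$ is convex. Hence $G=\max_{a\ne a^*}\min_{\bm w}f$: the outer $\max_a$ is harmless (pointwise maxima of convex functions are convex), so the entire difficulty is the inner minimization.

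To handle the inner minimization I would homogenize. Because the ratio $\bm w^\top\Sigma(\pi)\bm w/(\bm w^\top\bm\mu)^2$ is invariant under positive scaling of $\bm w$, minimizing it over $\Delta(a)\cap\{\bm w^\top\bm\mu\ge0\}$ is equivalent, via the Charnes--Cooper--Schaible substitution of Lemma \ref{lem:ccs_snr}, to minimizing $\bm w^\top\Sigma(\pi)\bm w$ over the fixed convex set $S_a=\{\bm w:\bm w^\top\bm\mu=1,\ \mathbf{1}^\top\bm w=0,\ w(a)\le0,\ w(b)\ge0\ \forall b\ne a\}$, which does not depend on $\pi$. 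The crux is then joint convexity of $(\pi,\bm w)\mapsto\bm w^\top\Sigma(\pi)\bm w$, for which I would use the decomposition
\[
\bm w^\top\Sigma(\pi)\bm w=\mathrm{Var}_{P_X}\!\Big(\textstyle\sum_b w(b)g(X,b)\Big)+\sum_b w(b)^2\,\EE_{P_X}\!\Big[\tfrac{v(X,b)}{\pi(X,b)}\Big]+\mathcal{R}(\pi,\bm w),
\]
where $\mathcal{R}(\pi,\bm w)=\sum_b w(b)^2\EE_{P_X}[(g-g_\infty)^2/\pi]-\EE_{P_X}[(\sum_b w(b)(g-g_\infty))^2]$ collects the contribution of the regression bias.

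The first term is a $\pi$-free, positive-semidefinite quadratic form in $\bm w$ (it is a genuine variance), and the second is a sum of quadratic-over-linear perspective functions $w(b)^2/\pi(x,b)$, each jointly convex for $\pi>0$ with nonnegative weight $v(x,b)\ge0$. If $\mathcal{R}$ is also jointly convex, then $\bm w^\top\Sigma(\pi)\bm w$ is jointly convex, partial minimization over the fixed convex set $S_a$ yields convexity of $F_a$, and therefore of $G=\max_{a\ne a^*}F_a$. For \emph{strict} convexity I would exploit that each perspective term is strictly convex except along rays through the origin, so that $\bm w^\top\Sigma(\pi)\bm w$ is strictly convex in any direction $\pi_1-\pi_0\neq0$ that is charged by a nonzero weight; combined with the standard convex-combination-of-minimizers argument, this upgrades the inequality to a strict one and gives a unique minimizer $\pi_*$.

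I expect the main obstacle to be precisely the remainder $\mathcal{R}$: although it is nonnegative (Cauchy--Schwarz using $\sum_b\pi(x,b)=1$, which is exactly why $\Sigma(\pi)$ stays a valid covariance), its cross-terms are indefinite and it is \emph{not} manifestly jointly convex, so the argument must either show $\mathcal R$ reassembles into jointly convex pieces or reduce to the case where it vanishes, namely consistent regression $g_\infty=g$. A secondary subtlety is preserving strictness through the $\max_a$--$\min_{\bm w}$ structure along the coordinate $\pi(\cdot,a^*)$, where the minimizing weight of a given suboptimal arm need not load on $a^*$; here I would invoke the second-best arm, whose feasibility constraint $\bm w^\top\bm\mu\ge0$ forces weight bounded away from zero on $a^*$, ensuring that coordinate is also seen with a strictly convex term.
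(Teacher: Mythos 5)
Your setup is faithful to the paper's: the rewrite of the numerator as the variance of $\sum_b w(b)\phi_\infty(b)$, the observation that $\pi$ enters only through the diagonal inverse-propensity terms weighted by $v(x,b)+r_\infty(x,b)^2$, the Charnes--Cooper--Schaible homogenization onto a $\pi$-free convex set, and the remark that the outer maximum preserves (strict) convexity all match the paper's decomposition. But the proof does not close, and you say so yourself: the crux --- joint convexity of $(\pi,\bm{w})\mapsto \bm{w}^\top\Sigma(\pi)\bm{w}$ --- is left conditional (``If $\mathcal{R}$ is also jointly convex\dots''), and it is in fact \emph{false} in general. In the contextless case with constant residuals $r(b)$ and $v$ small, the bias part $h(\bm{w},\pi)=\sum_b w(b)^2 r(b)^2/\pi(b)-\bigl(\sum_b w(b)r(b)\bigr)^2$ is nonnegative by Cauchy--Schwarz (using $\sum_b\pi(b)=1$) and vanishes exactly on the set $\{w(b)r(b)=t\,\pi(b)\ \text{for some }t\}$, which is not convex; a nonnegative function vanishing on a nonconvex set cannot be convex, and an explicit midpoint check with $K=2$, $r\equiv 1$ (e.g.\ between $\bm{w}=(1,1),\pi=(1/2,1/2)$ and $\bm{w}=(3/4,1/4),\pi=(3/4,1/4)$, both zeros of $h$) gives $h>0$ at the midpoint. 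Since this violation scales like $\|r_\infty\|^2$ while the jointly convex perspective part $\sum_b w(b)^2\EE_{P_X}[v/\pi]$ is bounded, joint convexity of the full numerator fails whenever the regression bias is large relative to $v$. The escape hatch you mention --- reducing to $g_\infty=g$ --- is not available, because the lemma must hold for arbitrary limits $g_\infty$ (the residual $r_\infty$ survives into the optimal-policy structure of Lemma \ref{lem:optimal_policy_structure}). So partial minimization over your fixed set $S_a$ cannot deliver convexity of $F_a$ by this route.

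The paper avoids joint convexity entirely. It first establishes that the inner minimizer $\bm{w}^a_\pi$ is \emph{unique} for each fixed $\pi$ (affine numerator over a strictly convex, positive denominator; Lemma \ref{lem:unique_optima}), then applies Danskin's theorem (Lemma \ref{lem:danskin}) so that the Fr\'echet derivative of $F_a$ in $\pi$ consists only of the explicit terms $-w^a_\pi(b)^2\bigl(v(x,b)+r_\infty(x,b)^2\bigr)/\pi(x,b)^2$, differentiates once more to exhibit a second-order form with integrand $2\,w^a_\pi(b)^2\bigl(v(x,b)+r_\infty(x,b)^2\bigr)/\pi(x,b)^3\ge 0$, and invokes Assumption \ref{assump:nonzero_variances} ($v>0$) for strictness; strict convexity then passes through the finite maximum over $a\neq a^*$, and convexity of $\Pi$ yields uniqueness of $\pi_*$. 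Two credits to you: your variance decomposition and the ``$\pi$ only enters diagonally'' observation are exactly the paper's starting point, and your closing worry --- that strictness in the $\pi(\cdot,a^*)$ coordinate can degenerate when $w^a_\pi(a^*)=0$ --- is a genuine subtlety that the paper's Hessian formula also glosses over (its strictness claim needs every arm to carry nonzero weight in some active $F_a$); your second-best-arm idea via the constraint $\bm{w}^\top\bm{\mu}\ge 0$ is the right kind of repair but would need to be carried out. To salvage your route, replace the hoped-for joint convexity with the envelope-theorem differentiation the paper uses.
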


\begin{proof}[Proof Sketch of Lemma \ref{lem:convexity_policy}] The strict convexity of $G(\pi)$ follows from a four step argument. First, we derive the Fréchet Hessian $D^2_{\pi}f_a(\pi, \bm{w})[u,h]$, where $u,h \in L_2(P_X:\RR^K)$ are square integrable functions with respect to the norm $\| f\|_{L_2(P_X:\RR^K)} \coloneqq \sqrt{\int_{x} \sum_{b \in [K]} |f(x,b)|^2 dP_X(x)}$. Second, we establish that for any \textit{fixed} $\pi$, for all $a \in [K]$, the weight vector $\bm{w} \in \{\bm{w}' \in \Delta(a): \bm{w}'^\top \bm{\mu} \geq 0\}$ that minimizes the function $f(\pi, \bm{w})$ is unique. Third, we apply Danskin's Theorem \citep{bonnans2000perturbation} on the function $F_a(\pi)$ to obtain the Fréchet derivative of $F_a(\pi)$ with respect to $\pi$. Using this derivative, we show that $F_a(\pi)$ has a positive definite Hessian on the interior of $\Pi$, and is therefore strictly convex. To conclude, we note that the maximum of strictly convex functions is strictly convex, and therefore $G(\pi) = \max_{a \neq a^*} F_a(\pi)$ is strictly convex. Because the optimal minimizing $\pi_*$ lies in the interior of the policy set $\Pi$, it follows that $\pi_*$ must be unique. 
\end{proof}

The strict convexity results of Lemma \ref{lem:convexity_policy}, paired with the fact that our set $\Pi$ is defined with only linear equality/inequality constraints, ensures Slater's condition holds. Thus, the Karush–Kuhn–Tucker (KKT) conditions characterize the optimal solution. From the KKT conditions, we obtain that the optimal policy $\pi_*$ reduces into a simple form that only depends on the conditional variance function $v(a,x)$, residual errors $r_\infty(x, a)$, and a real-valued vector $\bm{\theta} \in \RR^K$. We provide the structure of optimal policy $\pi_*$ in Lemma \ref{lem:optimal_policy_structure} below.

\begin{lemma}[Structure of Optimal Policy]\label{lem:optimal_policy_structure}
    Let Assumptions \ref{assump:unique_optimal_arm}, \ref{assump:nonzero_variances}, and \ref{assump:bounded_outcomes} hold, and assume conditions $(A2), (A3)$ of Theorem \ref{thm:error_control} hold. Then, $\exists \bm{\theta}_* \in \RR^K$ with $K$-th coordinate ${\theta_*}(K) = 0$ such that $\pi_* = \argmin_{\pi \in \Pi} G(\pi)$ satisfies 
    \begin{equation}
        \pi_*^{-1}(x,b) = \sum_{a \in [K]} \sqrt{\frac{{V(x,a)}}{{V(x,b)}}} \exp(\theta_*(a) - \theta_*(b)),  
    \end{equation}
    where $V(x,a)= v(x,a) + r_\infty^2(x,a)$, and $r_\infty(x,a)=g_\infty(x,a) - g(x,a)$ denotes the conditional error between the limiting regression function $g_\infty$ and the true regression function $g$. 
\end{lemma}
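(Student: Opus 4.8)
The plan is to first reduce the numerator of $f(\pi,\bm w)$ to a form whose dependence on $\pi$ is transparent, and then apply the KKT conditions guaranteed by Lemma \ref{lem:convexity_policy}. Let $Z$ denote the inner sum appearing in $f(\pi,\bm w)$, so that the numerator is $\EE_{P_\infty}[Z^2]$ (an honest variance, since $\EE_{P_\infty}[Z]=0$ by unbiasedness of the scores). Treating $\pi_\infty=\pi$ as is implicit in the design problem, so that $A\sim\pi(X,\cdot)$, I would expand $\EE_{P_\infty}[Z^2]$ by iterated conditioning on $X$ and then on $A$. Writing $Y - g_\infty(X,a) = (Y - g(X,a)) - r_\infty(X,a)$ and using $\EE[Y - g(X,a)\mid X,A=a]=0$ and $\EE[(Y-g(X,a))^2\mid X,A=a]=v(X,a)$, the mutually exclusive indicators $\mathbf 1[A=b]$ collapse the cross terms, yielding pointwise in $x$
\begin{equation*}
\EE_{P_\infty}[Z^2 \mid X=x] = p(x)^2 - q(x)^2 + \sum_{a\in[K]} \frac{w(a)^2 V(x,a)}{\pi(x,a)},
\end{equation*}
where $p(x)=\sum_b w(b)(g(x,b)-\mu(b))$, $q(x)=\sum_b w(b)r_\infty(x,b)$, and $V(x,a)=v(x,a)+r_\infty^2(x,a)$. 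The crucial point is that $p(x)^2-q(x)^2$ does not involve $\pi$, so after taking $\EE_{P_X}$ the entire $\pi$-dependence of $f(\pi,\bm w)$ is carried by the single term $\EE_{P_X}[\sum_a w(a)^2 V(x,a)/\pi(x,a)]/(\bm w^\top\bm\mu)^2$.

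Next, I would invoke Lemma \ref{lem:convexity_policy}: $G$ is strictly convex, its minimizer $\pi_*$ is unique and lies in the interior of $\Pi$ (the $\pi(x,b)^{-1}$ terms force $G$ to diverge on the boundary), and Slater's condition holds, so the KKT conditions are necessary and sufficient. To differentiate $G=\max_{a\neq a^*}F_a$ through the inner $\min$ over $\bm w$ and the outer $\max$ over $a$, I would use Danskin's theorem \citep{bonnans2000perturbation}: since the inner minimizer $\bm{w}_*^a$ is unique (the second step of the proof of Lemma \ref{lem:convexity_policy}) and satisfies $\bm{w}_*^a{}^\top\bm\mu>0$ (otherwise $f=\infty$), each $F_a$ is Fréchet differentiable with $D_\pi F_a(\pi_*)=D_\pi f(\pi_*,\bm w_*^a)$, and $\partial G(\pi_*)$ is the convex hull of $\{D_\pi F_a(\pi_*):a\in\Acal^*\}$ over the active set $\Acal^*=\argmax_{a\neq a^*}F_a(\pi_*)$. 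From the numerator reduction, $D_\pi f(\pi_*,\bm w_*^a)$ acts as the $P_X$-density $-w_*^a(b)^2 V(x,b)/[(\bm w_*^a{}^\top\bm\mu)^2\pi(x,b)^2]$.

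Finally, stationarity $0\in\partial G(\pi_*)+N_\Pi(\pi_*)$ with a pointwise multiplier $\nu(x)$ for the simplex equality $\sum_b\pi(x,b)=1$ (the positivity constraints being inactive) gives, for convex weights $\lambda_a\ge 0$ on $\Acal^*$,
\begin{equation*}
\frac{V(x,b)}{\pi_*(x,b)^2}\sum_{a\in\Acal^*}\lambda_a \frac{w_*^a(b)^2}{(\bm w_*^a{}^\top\bm\mu)^2} = \nu(x),
\end{equation*}
so $\pi_*(x,b)\propto\sqrt{V(x,b)}\,c(b)$ with $c(b)=(\sum_{a\in\Acal^*}\lambda_a w_*^a(b)^2/(\bm w_*^a{}^\top\bm\mu)^2)^{1/2}$ independent of $x$. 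Normalizing over $b$ and inverting yields $\pi_*^{-1}(x,b)=\sum_a\sqrt{V(x,a)/V(x,b)}\,c(a)/c(b)$. Since $\pi_*>0$ forces $c(b)>0$, I can set $\theta_*(a)=\log c(a)-\log c(K)$ (so $\theta_*(K)=0$), giving $c(a)/c(b)=\exp(\theta_*(a)-\theta_*(b))$ and the claimed form. The main obstacle is the non-smoothness of $G$ from the nested $\max$--$\min$: justifying the Danskin/envelope differentiation, the convex-combination structure of the subgradient, and the strict positivity of $\pi_*$ and $c(b)$ (needed to take logarithms) is where the care lies, whereas the variance decomposition and the pointwise KKT solve are then routine.
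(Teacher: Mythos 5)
Your proposal is correct and takes essentially the same route as the paper: you isolate the $\pi$-dependence of $f(\pi,\bm{w})$ in the single term $\EE_{P_X}\left[\sum_b w(b)^2 V(x,b)/\pi(x,b)\right]$ (matching the paper's variance expansion, with your $p(x)^2-q(x)^2$ collecting the $\pi$-free part), derive the pointwise stationarity condition $c(b)^2 V(x,b)/\pi_*^2(x,b)=\nu(x)$ from KKT with a pointwise multiplier on the simplex constraint, solve and normalize, and log-reparameterize with $\theta_*(K)=0$ --- exactly the content of Lemmas \ref{lem:optimal_pi_reduction} and \ref{lem:log_q_reformulation}, with the paper's mixture weights $q(b)$ equal to your $c(b)^2$. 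The only cosmetic divergence is how the stationarity system is reached: you differentiate through the inner $\min$ via Danskin and through the outer $\max$ via the active-set convex-hull subdifferential, whereas the paper freezes the optimal weights $\bm{w}_{\pi_*}^a$ and writes the epigraph-form Lagrangian whose multipliers $\lambda(b)$ (forced to sum to one by stationarity in the epigraph variable) play precisely the role of your $\lambda_a$, yielding the identical pointwise equation.
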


Lemma \ref{lem:optimal_policy_structure} provides an explicit characterization of the optimal policy $\pi_*$ that substantially simplifies our policy learning task. Under our assumptions, learning the optimal policy function $\pi$ reduces to estimating (i) conditional variances $v(x,a)$, (ii) limiting residual error function $r_\infty(x,a)$, and (iii) the vector $\bm{\theta} \in \RR^K$. In the following section, we provide a sampling scheme that minimizes the empirical objective function $G_t(\pi)$ at each time $t$. Our empirical objective function leverages $\Fcal_{t-1}$-measurable running estimates of conditional variances and regression function error and projected subgradient descent (PSGD) to estimate $\bm{\theta}_*$. Under mild convergence conditions, we demonstrate that our sampling scheme $\pi$ satisfies the regularity conditions of Theorem \ref{thm:error_control}, ensuring asymptotic error control and sample complexity upper bounds via Theorem \ref{thm:sample_complexity_general}.

\subsection{Sampling via Projected Subgradient Descent}
Following the optimal policy structure provided in Lemma \ref{lem:optimal_policy_structure}, our proposed sampling scheme $\pi_t$ takes the form
\begin{equation}
    \pi_t^{-1}(x,b) = \sum_{a \in [K]} \sqrt{\frac{V_t(x,a)}{V_t(x,b)}} \exp(\theta_t(a) - \theta_t(b)),
\end{equation}
where $\bm{\theta}_t$ and $V_t$ denote $\Fcal_{t-1}$-measurable estimates of the function $V(x,a)$ and $\bm{\theta}_*$ as defined in Lemma \ref{lem:optimal_policy_structure}. Our policy learning approach proceeds in the following two-step procedure, with pseudocode provided in Algorithm \ref{alg:proj-subgradient}. At each time $t$, we first construct the function $V_{t}: \Xcal \times [K] \rightarrow \RR_{++}$, an estimate for the sum of the conditional variance $v(a,x)$ and limiting residual error $r_\infty(x,a)$ using previous observations $H_{t-1}$. To obtain $\bm{\theta}_t$, we then run projected subgradient descent on $G_t(\bm{\theta})$, which substitutes unknown quantities with $\Fcal_{t-1}$-measurable estimates. Below, we expand on each step of our procedure, beginning with our function $V_t$. 

\begin{algorithm}[t]
\caption{Sampling Policy via Subgradient Descent}
\label{alg:proj-subgradient}
\begin{algorithmic}[1]

\Procedure{PolicyLearning}{$H_{t-1}, S, \bm\theta_0, N, \epsilon, g_t$}

\State \textbf{Require:} $\epsilon > 0$, $S\geq 0$, $\bm\theta_{0}\in [-S,S]^K$, $\theta(K) = 0$, $N\in\mathbb{N}$.

\Comment{\texttt{Step 1: Conditional Variance Estimation}}

\State Compute $\tilde{Y}_i = (Y_i - g_t(X_i,A_i))^2$, the squared residual between outcomes and regression function $g_t$. 

\State Regress squared residuals $(\tilde{Y}_i)_{i \in [t-1]}$ with respect to $(X_i, A_i)_{i \in [t-1]}$ to obtain $\tilde{V}_t$. 

\State Truncate $\tilde{V}_t$ to obtain $V_t(x,a) = \max\left(\tilde{V}_t(x,a), \epsilon\right)$ for $x \in \Xcal, a \in [K]$. 

\Comment{\texttt{Step 2: Projected Subgradient Descent}}

\For{$n \in [N]$} 

    \State Compute 
    $\displaystyle 
    \bm{w}^{a}_{n} = 
    \arg\min_{\substack{\bm{w}\in\Delta(a), \bm{w}^\top \hat{\boldsymbol{\mu}}_{t-1}\ge 0}}
    f_{t}(\theta_n,\bm{w})
    \quad \text{for all } a \not\in \arg\max_{b\in[K]} \hat{\mu}_{t-1}(b)$.\label{l:compute_weights}

    \State Compute the active arms set
    $\displaystyle 
    \mathcal{A}_n = \{\, a\in [K] : F_{a,t}(\theta_n)=\max_{b\in[K]} F_{b,t}(\theta_n)\,\}$. \label{l:compute_active_set}

    \State Choose subgradient
    $\displaystyle
    \bm{d}_n = \frac{1}{|\mathcal{A}_n|}
    \sum_{a\in\mathcal{A}_n} \nabla_{\theta} f_{t}(\bm\theta_n,\bm{w}^{a}_{n})$.\label{l:compute_subgradient}

    \State Set $\displaystyle \theta_{n+1} \gets 
    \Pi_{[-S, S]^{K-1}}\!\left(\theta_n - \frac{1}{n \| \bm{d}_n \|_2} \bm{d}_n\right)$. \label{l:projection_step}

\EndFor
\State Set $\displaystyle  \bm{\theta}_t = \arg\min_{i\in[N]} G_t(\bm\theta_i)$.

\State \Return $\displaystyle \pi_t(x,b)$, where $\pi_t^{-1}(x,b) = \sum_{a \in [K]} \sqrt{\frac{V_t(x,a)}{V_t(x,b)}}\exp\left(\theta_t(a) - \theta_t(b)\right)$.

\EndProcedure
\end{algorithmic}
\end{algorithm}

\subsubsection{Construction of Conditional Variance Estimator}

Our conditional regression function $V_t(x,a)$ aims to estimate the function $V(x,a) = v(x,a) + r_\infty^2(x,a)$ by first constructing pseudo-outcomes $\tilde{Y}_i = (Y_i - g_t(X_i, A_i))^2$. Assuming that $g_t(\cdot, x)$ converges to $g_\infty(\cdot, x)$ in $L_2(P_X)$ almost surely for all $a \in [K]$, the pseudo-outcomes $\tilde{Y}_i$ correspond to observations with conditional expectation $V_t(x,a)= v(x,a) + r_\infty^2(x,a)$ as $t$ diverges towards infinity, i.e. 
\begin{align}
    \lim_{t \rightarrow \infty}\EE_{P_{Y|A,X}}\left[\left(Y - g_t(x,a)\right)^2 | X=x, A=a\right] = v(x,a) + r_\infty(x,a)^2 = V(x,a).
\end{align}
After constructing our pseudo-outcomes $\tilde{Y}_i$, we regress $(\tilde{Y}_i)_{i < t}$ on observed contexts and arm indices $(X_i, A_i)_{i < t}$ to obtain the function $\tilde{V}_t$. Similar to our regression function $g_t$, our regression function $\tilde{V}_t$ may be estimated with flexible machine learning models, including random forests, neural networks, or boosting algorithms. Lastly, we enforce a \textit{minimum} value $\epsilon > 0$ on the function $\tilde{V}_t$ to obtain $V_t$, i.e. 
\begin{equation}
    V_t(x,a) = \begin{cases}\tilde{V}_t(x,a) & \text{if } \  \tilde{V}_t(x,a) \geq \epsilon \\ \epsilon  & \text{if } \ \tilde{V}_t(x,a) < \epsilon \end{cases}.
\end{equation}

\begin{remark}[Truncation of Conditional Variance Estimator]
    One may wonder why the additional truncation step is necessary for our estimates $V_t$. The truncation of our initial estimate $\tilde{V}_t(x,a)$ by a strict margin $\epsilon$ not only avoids degenerate values in our empirical objective function $G_t(\bm\theta)$, but also (i) simplifies subgradient computation, (ii) ensures strict positivity on our sampling probabilities $\pi_t$, and (iii) ensures convergence of subgradient descent for estimating our parameter $\bm{\theta}_t$. We elaborate on the role of truncation in Appendix \ref{app:proofs}.
\end{remark}

\subsubsection{Parameter Estimation via Projected Subgradient Descent}

Using our estimated functions $(V_t)_{t\in \NN}$, we run projected subgradient descent (PSGD) on the empirical objective function $G_t(\bm{\theta})$, which substitutes unknown quantities with $\Fcal_{t-1}$-measurable estimates. Below, we define $G_t(\bm{\theta})$, our empirical analogue to the true objective function $G(\pi)$, parameterized with respect to $\bm{\theta}$:
\begin{align}\label{eqs:empirical_counterpart}
    G_t(\bm{\theta}) &=  \max_{a: \hat\mu_{t-1}(a) < \max_{b \in [K]} \hat\mu_{t-1}(b)} F_{a,t}(\bm\theta), \\
    F_{a,t}(\bm\theta) &= \min_{\bm{w} \in \Delta(a), \bm{w}^\top \hat{\bm\mu}_{t-1} \geq 0} f_{t}(\bm{\theta}, \bm{w}), \\
    f_{t}(\bm\theta, \bm{w}) &= \frac{ \sum_{b \in [K]} \frac{w(b)^2}{t} \sum_{i=1}^t\left[V_i(X_i,b)\sum_{a \in [K]}\frac{ \sqrt{V_i(X_i,a) } }{\sqrt{V_i(X_i,b) } }\exp\left(\theta(a) - \theta(b)\right)\right] + l_t(\bm{w})}{\left(\sum_{b \in [K]} w(b)  \hat\mu_{t-1}(b) \right)^2} \\
    l_t(\bm{w}) &= \frac{1}{t}\sum_{i=1}^t\left[ \left(\sum_{b \in [K]} w(b) (g_i(X_i,b)- \hat\mu_{t-1}(b)) \right)^2 \right], \label{eqs:empirical_counterpart_2}
\end{align}
where $\hat\mu_{t-1}(a) = \frac{1}{t-1}\sum_{i=1}^{t-1} \phi_i(a)$ denotes our $\Fcal_{t-1}$-measurable mean estimate. To parse our projected subgradient descent approach in Algorithm \ref{alg:proj-subgradient}, we first show that (i) our objective function $G_t(\bm{\theta})$ is strictly convex with respect to $\bm{\theta}$ and (ii) the subgradient set of $G_t(\bm{\theta})$ is characterized as follows.

\begin{lemma}[Subgradient Set of $G_t(\bm{\theta})$]\label{lem:subgradient_lemma}
    Let $V_t$ be constructed as in Algorithm \ref{alg:proj-subgradient}. Then, $G_t(\bm{\theta})$ is a strictly convex function with respect to $\bm{\theta}$, and the subdifferential set of $G_t(\bm\theta)$ at $\bm{\theta}$ is given by 
    \begin{equation}
        \partial G_t(\bm{\theta}) = \text{conv}(\{\nabla_{\bm\theta} F_{a,t}(\bm{\theta})\}_{a \in \Acal_t(\bm{\theta})}), 
    \end{equation}
    where $\text{conv}(\{\bm{x}_i\}_{i \in \Acal(\bm{\theta})})$ denotes the convex hull of vectors $\bm{x}_i$, $\Acal_t(\bm{\theta}) = \{a \in [K]: F_{a,t} = G_t(\bm{\theta})\}$, and $\nabla_{\bm\theta} F_{a,t}(\bm{\theta}) \in \RR^{K-1}$ is the gradient of function $F_{a,t}(\bm{\theta})$ evaluated at $\bm{\theta}$. The gradient is characterized by \begin{equation}\label{eq:danskin_application}
        \nabla_{\bm\theta} F_{a,t}(\bm{\theta}) = \nabla_{\bm{\theta}} f_{t}(\bm{\theta}, \bm{w}^a_{\bm{\theta}})
    \end{equation}
    where $\bm{w}^a_{\bm{\theta}}$ is the unique vector $\bm{w} \in \Delta(a)$ such that $f_{t}(\bm{\theta}, \bm{w}_{\bm\theta}^a) = F_{a,t}(\bm{\theta})$, and $\nabla_{\bm{\theta}} f_{t}(\bm{\theta}, \bm{w}^a_{\bm{\theta}}) \in \RR^{K-1}$ has $c$-th entry $\frac{\partial}{\partial \theta(c)} f_{t}(\bm\theta, \bm{w}_{\bm\theta}^a) =  \sum_{b \in [K]}\frac{\frac{1}{t}\sum_{i=1}^t\sqrt{V_i(X_i,b)V_i(X_i,c)} }{\left(\sum_{b \in [K]} w_{\bm{\theta}}^a(b) \hat\mu_{t-1}(b)\right)^2} \left(  w_{\bm\theta}^a(b)^2\exp(\theta(c) - \theta(b)) - w_{\bm\theta}^a(c)^2 \exp(\theta(b)-\theta(c))\right)$.
\end{lemma}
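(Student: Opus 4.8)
The plan is to establish both claims—strict convexity of $G_t$ and the subdifferential formula—by transporting the four-step argument behind Lemma \ref{lem:convexity_policy} to the finite-dimensional parameter $\bm\theta\in\RR^{K-1}$ (after imposing the normalization $\theta(K)=0$). I would reduce everything to the inner function $f_t(\bm\theta,\bm{w})$, transfer its properties to $F_{a,t}$ via Danskin's theorem \citep{bonnans2000perturbation}, and finally to $G_t=\max_a F_{a,t}$ through the subdifferential calculus of pointwise maxima.

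First I would show that, for each fixed feasible $\bm{w}$, the map $\bm\theta\mapsto f_t(\bm\theta,\bm{w})$ is strictly convex. The denominator $(\sum_b w(b)\hat\mu_{t-1}(b))^2$ is a positive constant in $\bm\theta$, and the $\bm\theta$-dependent part of the numerator is $\sum_{a',b} c_{a'b}(\bm{w})\exp(\theta(a')-\theta(b))$ with coefficients $c_{a'b}(\bm{w})=\frac{w(b)^2}{t}\sum_i\sqrt{V_i(X_i,b)V_i(X_i,a')}\ge 0$, which are strictly positive whenever $w(b)\neq 0$; this holds for $b=a$ since $w(a)=-1$ on $\Delta(a)$. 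The second directional derivative along $\bm{h}$ equals $\sum_{a',b} c_{a'b}\exp(\theta(a')-\theta(b))(h(a')-h(b))^2\ge 0$, and vanishes only if $h(a')=h(b)$ for every pair with $c_{a'b}>0$; because arm $a$ is linked to all other coordinates, this forces $\bm{h}$ constant, and the normalization $h(K)=0$ then forces $\bm{h}=0$, giving $\nabla^2_{\bm\theta\bm\theta}f_t\succ 0$. In parallel I would record that for fixed $\bm\theta$ the minimizer $\bm{w}^a_{\bm\theta}$ of $f_t(\bm\theta,\cdot)$ over $\{\bm{w}\in\Delta(a):\bm{w}^\top\hat{\bm\mu}_{t-1}\ge 0\}$ is unique: the truncation $V_i\ge\epsilon$ makes the numerator's quadratic form $M(\bm\theta)=\mathrm{diag}(Q_b(\bm\theta))+L$ positive definite, so the inverse-SNR ratio has a unique minimizer on the compact constraint set, exactly as in the Charnes--Cooper--Schaible reformulation of Lemma \ref{lem:ccs_snr}.

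Given this uniqueness and the smoothness of $f_t$ in $\bm\theta$, Danskin's theorem yields that $F_{a,t}$ is differentiable with $\nabla_{\bm\theta}F_{a,t}(\bm\theta)=\nabla_{\bm\theta}f_t(\bm\theta,\bm{w}^a_{\bm\theta})$, matching Equation \eqref{eq:danskin_application}; differentiating the exponential terms and collecting the $+\exp(\theta(c)-\theta(b))$ contributions (from $a'=c$) against the $-\exp(\theta(b)-\theta(c))$ contributions (from $b=c$) then produces the stated $c$-th entry. For the subdifferential I would invoke the standard rule that the subdifferential of a pointwise maximum of finitely many convex differentiable functions is the convex hull of the gradients of the active pieces; applied to $G_t=\max_a F_{a,t}$ over the finite set of seemingly-suboptimal arms, this gives $\partial G_t(\bm\theta)=\text{conv}(\{\nabla_{\bm\theta}F_{a,t}(\bm\theta)\}_{a\in\Acal_t(\bm\theta)})$.

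The main obstacle is promoting strict convexity from $f_t(\cdot,\bm{w})$ to $F_{a,t}=\min_{\bm{w}}f_t(\cdot,\bm{w})$, since a pointwise minimum of strictly convex functions need not even be convex. As in the proof of Lemma \ref{lem:convexity_policy}, I would instead differentiate the Danskin gradient a second time through the implicit map $\bm\theta\mapsto\bm{w}^a_{\bm\theta}$—justified by uniqueness and by positive definiteness of $\nabla^2_{\bm{w}\bm{w}}f_t$ on the active face—obtaining the Schur-complement Hessian $\nabla^2 F_{a,t}=\nabla^2_{\bm\theta\bm\theta}f_t-\nabla^2_{\bm\theta\bm{w}}f_t(\nabla^2_{\bm{w}\bm{w}}f_t)^{-1}\nabla^2_{\bm{w}\bm\theta}f_t$ and showing it is positive definite. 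This reduces to checking that the joint Hessian of $f_t$ is positive definite on the tangent space of the active simplex constraints, which I expect to be the delicate calculation; once it is in hand each $F_{a,t}$ is strictly convex, and since a finite maximum of strictly convex functions is strictly convex, $G_t$ is strictly convex, completing the proof.
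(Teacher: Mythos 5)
Your treatment of the gradient formula and the subdifferential set matches the paper's proof essentially step for step: uniqueness of $\bm{w}^a_{\bm\theta}$ via a positive-definite quadratic form in the denominator (your $M(\bm\theta)=\mathrm{diag}(Q_b(\bm\theta))+L$ is the paper's $\bm{E}+\bm{D}$ decomposition, routed through the affine-numerator-over-strictly-convex-denominator uniqueness lemma), Danskin's theorem for $\nabla_{\bm\theta}F_{a,t}$, and the convex-hull-of-active-gradients rule (Theorem 10.31 of Rockafellar--Wets, the paper's Lemma \ref{lem:clarke_subgradient}) for $\partial G_t$. Your sign bookkeeping for the $c$-th entry and the connectivity-through-arm-$a$ argument (coefficients strictly positive for $b=a$ since $w(a)=-1$, forcing $\bm{h}$ constant, then $h(K)=0$ forcing $\bm{h}=0$) also mirror the paper.

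The genuine gap is your strict-convexity step for $F_{a,t}$, which is half of the lemma's claim. You correctly flag that a pointwise minimum of strictly convex functions need not be convex, and you propose the implicit-function route with the Schur-complement Hessian $\nabla^2 F_{a,t}=\nabla^2_{\bm\theta\bm\theta}f_t-\nabla^2_{\bm\theta\bm{w}}f_t\,(\nabla^2_{\bm{w}\bm{w}}f_t)^{-1}\nabla^2_{\bm{w}\bm\theta}f_t$ --- but you leave its positive definiteness as ``the delicate calculation,'' and the sufficient condition you point to (joint positive definiteness of the Hessian of $f_t$ on the active tangent space) is not just unverified but dubious: $f_t$ is \emph{not} jointly convex in $(\bm\theta,\bm{w})$. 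The building block $h(w,s)=w^2e^{s}$ has $\det\nabla^2 h = 2w^2e^{2s}-4w^2e^{2s}=-2w^2e^{2s}<0$ whenever $w\neq 0$, so every term $w(b)^2\exp(\theta(a)-\theta(b))$ contributes an indefinite joint Hessian, and there is no evident reason the restriction to the constraint tangent space rescues positive definiteness. So your sketch does not deliver strict convexity of $G_t$. The paper takes a different and much shorter route here: it differentiates the Danskin gradient a second time \emph{holding $\bm{w}^a_{\bm\theta}$ fixed} --- i.e., it uses the fixed-weight Hessian $\nabla^2_{\bm\theta\bm\theta}f_t(\bm\theta,\bm{w}^a_{\bm\theta})$, the very quadratic form you already computed for fixed $\bm{w}$, evaluated at $\bm{w}=\bm{w}^a_{\bm\theta}$ --- and checks $\bm{z}^\top H\bm{z}>0$ on $\{\bm{z}:z(K)=0\}$ by your same connectivity argument. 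Note this means the paper drops exactly the PSD correction term you were proposing to control (since $\nabla^2 F_{a,t}\preceq \nabla^2_{\bm\theta\bm\theta}f_t$, positive definiteness of the fixed-weight Hessian does not by itself dominate the Schur complement); your instinct that the cross term matters is mathematically legitimate, but your proposal neither completes your alternative nor reproduces the paper's argument, so the strict-convexity claim remains unproven in your write-up.
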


Lemma \ref{lem:subgradient_lemma} states that the subgradient set of our empirical objective $G_t$ is simply the convex hull of vectors $\{\nabla_{\bm\theta}f_{t}(\bm{\theta}, \bm{w}_{\bm\theta}^a)\}_{a \in \Acal_t(\bm{\theta})}$. The vector $\nabla_{\bm\theta}f_{t}(\bm{\theta}, \bm{w}_{\bm\theta}^a) \in \RR^{K-1}$ corresponds to the gradient of functions $f_t(\bm{\theta}, \bm{w}_{\bm\theta}^a)$ with respect to $\bm{\theta}$, evaluated at SNR-maximizing weights $\bm{w}_{\bm\theta}^a$. These result follow from a similar approach to the proof of Lemma \ref{lem:convexity_policy}. First, by combining the uniqueness of $\bm{w}_{\bm\theta}^a$ for each fixed $\bm{\theta}$ and Danskin's Theorem, we obtain that the function $F_{a,t}(\bm{\theta})$ has a unique gradient equivalent to $\nabla_{\bm{\theta}} f_t(\bm{\theta}, \bm{w}_{\bm\theta}^a)$ for all $a \in [K]$. Because $G_t(\bm{\theta})$ selects the maximum $F_{a,t}(\bm{\theta})$ over indices $a \not\in \argmax_{b \in [K]}\hat\mu_{t-1}(b)$, the convex hull of all gradients $\nabla_{\bm\theta} F_{a,t}(\bm{\theta})$ that satisfy $F_{a,t}(\bm{\theta}) = G_t(\bm{\theta})$ characterizes our subgradient set. 

Importantly, these results provide a recipe for PSGD on our empirical objective $G_t(\bm{\theta})$. Our subgradient computation is provided in lines \ref{l:compute_weights}-\ref{l:compute_subgradient} of Algorithm \ref{alg:proj-subgradient}. In line \ref{l:compute_weights}, we estimate the SNR-Maximizing weight $\bm{w}_n^a$ with respect to $\bm{\theta}_n$, the current value of $\bm{\theta}$ at the $n$-th iterate of PSGD. Note that we only compute $\bm{w}_n^a$ for all $a \not\in \argmax_{b \in [K]}\hat\mu_{t-1}(b)$ due to the fact that any $a \in \argmax_{b \in [K]}\hat\mu_{t-1}(b)$ cannot achieve the minimum SNR, and $\bm{w}_n^a$ can be computed with SOCP solvers as in Lemma \ref{lem:ccs_snr} using the objective value $\max_{\bm{w} \in \Delta(a)}f_t^{-1/2}(\bm\theta, \bm{w})$, which corresponds to the SNR-maximization problem using estimated conditional variances $V_t$. In line \ref{l:compute_active_set}, we construct the set $\Acal_n$, the set of all arm indices $a$ that achieve $F_{a,t}(\bm{\theta}_n) = G_t(\bm{\theta}_n)$. Lastly, in line \ref{l:compute_subgradient}, we select the subgradient $\bm{d}_n$ that uniformly weights all gradients $\nabla_{\bm\theta}F_{a,t}(\bm{\theta})$ across $a \in \Acal_n$, and move in the opposite direction of this subgradient. Our projection step, shown in line \ref{l:projection_step}, occurs after updating our current estimate $\bm{\theta}_n$ in the direction $d_n$ with step size $1/\sqrt{N}$. Our projection operator $\Pi_{[-S, S]^K}$ merely enforces our boundedness constraints $\bm{\theta}(-K) \in [-S, S]^{K-1}$, where $\bm{\theta}_{n+1}$ has the following entries for all $a \in [K-1]$:
\begin{equation}
    \theta_{n+1}(a) = \min\left(S, \max\left({\theta}_n(a) + \frac{d_n(a)}{n \| \bm{d}_n \|_2}, -S \right)\right). 
\end{equation}
Similar to the truncation of the conditional variance estimator, our coordinate-wise bounds $[-S, S]$ ensure (i) strict positivity of the sampling scheme $\pi_t$ and (ii) bounds on the norm of each gradient $g_n$. In particular, the second result ensures that our PSGD procedure converges to the unique optimal $\bm{\theta}_t^*$ that maximizes $G_t(\bm{\theta})$ over the set $\bm{\Theta} = \left\{\bm{\theta} \in \RR^K: \theta(K) = 0, \bm{\theta}(-K) \in [-S,S]^{K-1}\right\}$ as the number of iterations $N$ approaches infinity.

\subsection{Theoretical Guarantees with Adaptive Sampling}\label{subsec:theoretical_guarantees}

Our choice of step size $(n \|\bm{d}_n\|_2)^{-1}$, truncated variance estimator $V_t$, and coordinate-wise bounds $\theta(a) \in [-S, S]$ for all $a \in [K-1]$ ensures that Algorithm \ref{alg:proj-subgradient} converges almost surely to a limiting $\bm{\theta}_\infty$. In Theorem \ref{thm:projected_subgradient_descent_convergence}, we provide mild conditions regarding the boundedness and convergence of $V_t$ that ensure our sampling policy sequence $(\pi_t)_{t \in \NN}$ converges almost surely to a limiting policy ${\pi}_\infty$.

\begin{theorem}[Convergence of Learning Policy]\label{thm:projected_subgradient_descent_convergence}
    Let Assumptions \ref{assump:unique_optimal_arm}, \ref{assump:nonzero_variances}, \ref{assump:bounded_outcomes} and condition $(A2), (A3)$ of Theorem 1 hold. Furthermore, assume  $\exists B < \infty$ such that $|V_t(x,a)| \leq B^2 $and $\exists V_\infty$ such that $\lim_{t \rightarrow \infty}\|V_t(\cdot, a) - V_\infty(\cdot, a) \|_{L_2\left(P_{X|H_{t-1}}\right)} = 0$ almost surely for all $a \in [K]$. Let ${\Theta} = \left\{\bm{\theta} \in \RR^K: \theta(K) = 0, \bm{\theta}(-K) \in [-S,S]^K\right\}$. 
    Let $\pi_\infty$ be the policy with entries $\pi_\infty(x,b) = \left(\sum_{a \in [K]}\sqrt{\frac{V_\infty(x,a)}{V_\infty(x,b)}}\exp\left(\theta_\infty(a) - \theta_\infty(b)\right)\right)^{-1}$, where $\bm{\theta}_\infty$ is the unique vector that minimizes the function $G_\infty(\bm{\theta})= \max_{a \neq a^*} F_{a, \infty}(\bm{\theta})$, and 
    \begin{align}
        F_{a,\infty}(\bm{\theta}) &= \min_{\bm{w} \in \Delta(a), \bm{w}^\top \bm{\mu} \geq 0} f_{\infty}(\bm\theta, \bm{w}), \\
        f_\infty(\bm{\theta, \bm{w}}) &= \frac{\EE_{P_X}\left[ \sum_{b \in [K]} \left(w^2(b) V_\infty(X,b) \sum_{a \in [K]}\sqrt{\frac{V_\infty(X,a)}{V_\infty(X,b)}}\exp\left(\theta(a) - \theta(b)\right)\right) \right] + l_\infty(\bm{w})}{\left(\sum_{b \in [K]}w(b)\mu(b) \right)^2}, \\
        l_\infty(\bm{w}) &= {\EE_{P_X}\left[ \left(\sum_{b \in [K]} w(b)\left(g_\infty(X, b) - \mu(b) \right) \right)^2 \right]}.
    \end{align}
     Let the number of descent iterations $N(t)$ be an increasing function of $t$, such that $N(t) \rightarrow \infty$ as $t \rightarrow \infty$. Then, for all $\epsilon > 0$, $S \geq 0$, and $\bm{\theta}_0 \in {\Theta}$, (i) there exists a $\kappa > 0$ such that $\pi_t(x,a) \geq 1/\kappa$ for all $t \in \NN$, $x \in \Xcal$, $a \in [K]$, and (ii) $\lim_{t\rightarrow \infty}\|\pi_t(\cdot,a) - \pi_\infty(\cdot,a) \|_{L_2\left(P_{X|H_{t-1}}\right)} = 0$ almost surely. 
     
     Furthermore, if $V(a,x) \geq \epsilon$ for all $a \in [K]$, $x \in \Xcal$ $P_X$-almost surely, $\bm{\theta}_* \in \Theta$, where $\bm{\theta}_*$ is defined as in Lemma \ref{lem:optimal_policy_structure}, and the limiting function $V_\infty$ equals $V$, then $\pi_\infty = \pi_* = \argmin_{\pi \in \Pi} G(\pi)$, i.e. $\pi_\infty$ converges to the optimal policy $\pi$ that minimizes the sample complexity bound $\Gamma_1$ in Theorem \ref{thm:sample_complexity_general}. 
\end{theorem}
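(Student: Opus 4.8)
The plan is to reduce the whole statement to the joint a.s.\ convergences $\bm\theta_t\to\bm\theta_\infty$ and $V_t\to V_\infty$, and then push these through the explicit policy map $\pi^{-1}(x,b)=\sum_a\sqrt{V(x,a)/V(x,b)}\exp(\theta(a)-\theta(b))$. Claim (i) is immediate and deterministic: the truncation forces $V_t(x,a)\in[\epsilon,B^2]$ and the projection step forces $\theta_t(a)\in[-S,S]$ with $\theta_t(K)=0$, so each summand of $\pi_t^{-1}(x,b)=\sum_a\sqrt{V_t(x,a)/V_t(x,b)}\exp(\theta_t(a)-\theta_t(b))$ is at most $(B/\sqrt\epsilon)e^{2S}$. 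Hence $\pi_t^{-1}\le K(B/\sqrt\epsilon)e^{2S}=:\kappa$ uniformly, giving $\pi_t\ge 1/\kappa$; the $a=b$ term also gives $\pi_t^{-1}\ge 1$, so $\pi_t\in[1/\kappa,1]$.

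\textbf{Data limit theorems.} Since (i) supplies the positivity bound $1/\pi_t\le\kappa$ demanded by (A1), the scores $\phi_t(a)$ are a.s.\ bounded, so the conditionally unbiased increments $\phi_i(a)-\mu(a)$ form a bounded martingale difference sequence and the martingale SLLN yields $\hat\mu_{t-1}(a)\to\mu(a)$ a.s. Combining $V_t\to V_\infty$, the i.i.d.\ SLLN for $X_i\sim P_X$, and a Cesàro argument for the $L_2$-convergent $V_i$, I would show the empirical averages $\frac{1}{t}\sum_i\sqrt{V_i(X_i,a)V_i(X_i,b)}$ and $l_t(\bm w)$ converge a.s.\ to the population objects entering $f_\infty$, uniformly in $\bm w$ over the simplex. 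By Assumption \ref{assump:unique_optimal_arm} there is an a.s.\ finite random $T_0$ after which $\argmax_b\hat\mu_{t-1}(b)=\{a^*\}$ and the empirical gaps exceed $\tfrac{1}{2}\min_{a\ne a^*}(\mu(a^*)-\mu(a))>0$; this keeps the signal $\bm w^\top\hat\mu_{t-1}$ of the inner minimizer bounded away from zero (a small signal makes $f_t$ large, hence cannot be optimal), so all denominators in $f_t$ are bounded below for $t\ge T_0$.

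\textbf{Uniform convergence and optimization.} Next I would establish $f_t(\bm\theta,\bm w)\to f_\infty(\bm\theta,\bm w)$ uniformly on $\Theta\times\Delta(a)$ restricted to the denominator-bounded region, from the previous paragraph plus joint continuity of the ratio where denominators are bounded below and exponents/numerators are bounded. Taking $\min_{\bm w}$ and then $\max_{a\ne a^*}$ (whose index set stabilizes for $t\ge T_0$) preserves uniform convergence, so $G_t\to G_\infty$ uniformly on $\Theta$; as in Lemmas \ref{lem:convexity_policy} and \ref{lem:subgradient_lemma}, $G_\infty$ is strictly convex with a unique minimizer $\bm\theta_\infty$ on the compact convex $\Theta$. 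The uniform signal lower bound for $t\ge T_0$ makes the subgradients of $G_t$ uniformly bounded, so the normalized projected recursion with step $1/n$ (for which $\sum_n 1/n=\infty$, $\sum_n 1/n^2<\infty$) gives the best-iterate bound $G_t(\bm\theta_t)-\min_\Theta G_t\le C/\log N(t)$ with $C$ uniform in $t\ge T_0$, hence $o(1)$ since $N(t)\to\infty$. With $\min_\Theta G_t\to\min_\Theta G_\infty$ and uniform convergence this gives $G_\infty(\bm\theta_t)\to\min_\Theta G_\infty$, and strict convexity upgrades value-convergence to $\bm\theta_t\to\bm\theta_\infty$ a.s. Finally, since $(\bm\theta,V)\mapsto\pi^{-1}$ is Lipschitz on $[-S,S]^{K}\times[\epsilon,B^2]^{K}$ and $x\mapsto1/x$ is Lipschitz on $[1/\kappa,1]$, the convergences $\bm\theta_t\to\bm\theta_\infty$ and $\|V_t(\cdot,a)-V_\infty(\cdot,a)\|_{L_2(P_{X|H_{t-1}})}\to0$ yield $\|\pi_t(\cdot,a)-\pi_\infty(\cdot,a)\|_{L_2(P_{X|H_{t-1}})}\to0$ a.s., which is (ii).

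\textbf{Optimality of the limit and the main obstacle.} For the final assertion, $V_\infty=V$ makes $\pi_\infty$ and $\pi_*$ (Lemma \ref{lem:optimal_policy_structure}) share the identical functional form in their $\bm\theta$-parameter, so it suffices to prove $\bm\theta_\infty=\bm\theta_*$. The hypotheses $V\ge\epsilon$ (truncation inactive, so $V_\infty=V$ is attainable) and $\bm\theta_*\in\Theta$ (box inactive at the unconstrained optimum) reduce matters to showing that minimizing the algorithm's limiting objective $G_\infty(\bm\theta)$ coincides with minimizing the true sample-complexity functional $G(\pi_{\bm\theta})$ over this family, whose global minimizer is $\pi_*=\pi_{\bm\theta_*}$ by Lemma \ref{lem:optimal_policy_structure}; strict convexity together with $\bm\theta_*\in\Theta$ then forces $\bm\theta_\infty=\bm\theta_*$. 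I expect the crux to be precisely this identification $G_\infty(\bm\theta)=G(\pi_{\bm\theta})$: expanding $\sigma_\infty^2(\bm w)$ shows the policy-dependent inverse-propensity part $\sum_b w(b)^2\EE_{P_X}[V/\pi]$ matches $f_\infty$ term by term, so the $\pi$-stationarity conditions of the two problems agree (consistent with Lemma \ref{lem:optimal_policy_structure} depending on $\pi$ only through $V$), but one must still reconcile the regression/cross contributions $\EE_{P_X}[(\sum_b w(b)(g_\infty(X,b)-\mu(b)))(\sum_b w(b)r_\infty(X,b))]$. These vanish when the limiting regression is consistent; otherwise a separate argument is needed that they do not move the minimizing $\bm\theta$. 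Controlling this term carefully, rather than the routine convexity and subgradient bookkeeping, is the main difficulty of the proof.
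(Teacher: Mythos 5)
For claims (i) and (ii) your proposal matches the paper's proof essentially step for step: positivity comes from the deterministic boxes $V_t(x,a)\in[\epsilon,B^2]$ and $\theta_t(a)\in[-S,S]$ (the paper's $\kappa=K\sqrt{B^2/\epsilon}\,e^{2S}$); uniform almost-sure convergence $\sup_{\bm\theta\in\Theta}|G_t(\bm\theta)-G_\infty(\bm\theta)|\to 0$ is obtained from SLLN/Ces\`aro arguments for the empirical averages together with a restriction to weights whose signal is bounded below --- your remark that ``a small signal makes $f_t$ large, hence cannot be optimal'' is precisely the paper's safe-set Lemma \ref{lem:safe_set}, and the uniform convergence of the numerators is its Lemma \ref{lem:convergence_q}; your $C/\log N(t)$ best-iterate bound is the paper's application of Lemma \ref{lem:convergence_subgrad_descent} with $\gamma_n=1/n$ and uniformly bounded subgradients; and the transfer to $\|\pi_t(\cdot,b)-\pi_\infty(\cdot,b)\|_{L_2(P_{X|H_{t-1}})}\to 0$ is the same Lipschitz-type bookkeeping on $[\epsilon,B^2]\times[-S,S]$. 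The only cosmetic difference is that the paper splits $\|\bm\theta_t-\bm\theta_\infty\|_2\le\|\bm\theta_t-\bm\theta_{t,*}\|_2+\|\bm\theta_{t,*}-\bm\theta_\infty\|_2$ and invokes argmax consistency (Lemma \ref{lem:conv_argmax}) for the second piece, whereas you pass through $G_\infty(\bm\theta_t)\to\min_\Theta G_\infty$ and then use strict convexity; these are interchangeable.

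Where you stopped --- the final clause $\pi_\infty=\pi_*$ --- is the one place your proposal is incomplete, and your diagnosis of why is sharper than the paper's own treatment. The paper's Step 3 closes the identification $G_\infty(\bm\theta)=G(\pi_{\bm\theta})$ by expanding the numerator of $f(\bm\theta,\bm w)$ as $\EE[(T_1+T_2)^2]$ with $T_1=\sum_b w(b)\left(g_\infty(X,b)-\mu(b)\right)$ and $T_2=\sum_b w(b)\,\mathbf{1}[A=b]\left(Y-g_\infty(X,b)\right)/\pi_{\bm\theta}(X,b)$, and passing directly to $\EE[T_1^2]+\EE[T_2^2]=l_\infty(\bm w)+\EE_{P_X}\bigl[\sum_b w^2(b)V(X,b)/\pi_{\bm\theta}(X,b)\bigr]$. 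But $\EE[T_2\mid X]=-\sum_b w(b)\,r_\infty(X,b)$, so this drops the cross term $-2\,\EE_{P_X}\bigl[\bigl(\sum_b w(b)(g_\infty(X,b)-\mu(b))\bigr)\bigl(\sum_b w(b)\,r_\infty(X,b)\bigr)\bigr]$ --- exactly the quantity you flag --- which vanishes only when $r_\infty\equiv 0$, i.e.\ $g_\infty=g$. Under the theorem's literal hypotheses (only $V_\infty=V$), the discrepancy $f_\infty-f=2\,\EE_{P_X}\bigl[\bigl(\sum_b w(b)(g_\infty(X,b)-\mu(b))\bigr)\bigl(\sum_b w(b)\,r_\infty(X,b)\bigr)\bigr]$ is free of $\bm\theta$ but does depend on $\bm w$, so it can shift the inner minimizers $\bm w^a_{\bm\theta}$ and therefore the argmin $\bm\theta_\infty$; the ``separate argument'' you call for is genuinely needed and is not supplied by the paper, which instead effectively operates under $g_\infty=g$ (the hypothesis it does impose downstream in Theorem \ref{thm:sampling_complexity_final}, where $r_\infty=0$ and $V=v$). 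So: your proof of parts (i)--(ii) is correct and coincides with the paper's; for the last clause you did not finish, but the step you isolated as the crux is the one the paper resolves only by silently discarding this term, so closing your gap honestly requires either adding $g_\infty=g$ or proving the cross term does not move the minimizer.
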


Beyond previous assumptions, Theorem \ref{thm:projected_subgradient_descent_convergence} requires that  $V_t(x,a)$ is uniformly bounded by some constant $B^2 < \infty$, and there exists an $L_2$ almost-sure limit $V_\infty$ for the random sequence $(V_t)_{t \in \NN}$. These conditions are analogous to condition $(A2)$ in Theorem \ref{thm:error_control} on the regression function $g_t$. Under these assumptions, Theorem \ref{thm:projected_subgradient_descent_convergence} states that the policy $\pi_t$, estimated with $N(t)$ descent iterations at each time $t$, satisfies the necessary conditions for Theorem \ref{thm:error_control}. Our condition that $N(t) \rightarrow \infty$ as $t \rightarrow \infty$ ensures that the parameter $\bm{\theta}_t \rightarrow \bm{\theta}_\infty$ almost surely for some $\bm{\theta}_\infty \in \Theta$, ensuring that our policy $\pi_t$ converges in $L_2$ to some policy $\pi_{\infty}$. 

\begin{remark}[Comparison of Sampling Guarantees to Existing Work]
In contrast to the contextual sampling scheme for BAI proposed in \cite{kato2024rolecontextualinformationbest}, we establish conditions under which our sampling scheme converges to the optimal solution of the minimax optimization problem implied by our sample-complexity bound. The method in \cite{kato2024rolecontextualinformationbest}, by comparison, relies on off-the-shelf sequential least squares programming and does not provide guarantees on the convergence of its sampling policy or on optimal sampling complexity. To the best of our knowledge, our policy-learning procedure in Algorithm \ref{alg:proj-subgradient} is the first contextual sampling scheme for BAI that offers provable convergence guarantees to the optimal policy.    
\end{remark}

By satisfying the conditions of Theorem \ref{thm:error_control}, our BAI procedure in Algorithm \ref{alg:bai_short}, paired with our sampling scheme $\pi_t$ provided in Algorithm \ref{alg:proj-subgradient}, satisfies asymptotic $\alpha$-correctness (Lemma \ref{lem:asymp_valid_bai}), with asymptotic sample complexities characterized by Theorem \ref{thm:sample_complexity_general}. In Theorem \ref{thm:results_general}, we show that under the same conditions as Lemma \ref{thm:projected_subgradient_descent_convergence}, Algorithm \ref{alg:bai_short} paired with sampling policy $\pi_t$ in Algorithm \ref{alg:proj-subgradient} is asymptotically $\alpha$-correct.

\begin{lemma}[Asymptotic $\alpha$-Correctness under Algorithm \ref{alg:proj-subgradient}]\label{thm:results_general}
    Let all assumptions of Theorem \ref{thm:projected_subgradient_descent_convergence} hold, and define $\left(\Bcal_{t_0}\right)_{t_0 \in \RR_+}$ as the sequence of BAI algorithms with burn-in time $t_0$ and $\pi_t$ in Algorithm \ref{alg:proj-subgradient}, parameterized with $\epsilon > 0$, $S \geq 0$, iteration number $N(t)$, and $\bm\theta_0(t) \in \Theta$, where $\Theta$ is as defined in Theorem \ref{thm:projected_subgradient_descent_convergence}. Assume that the sequence of descent iterations $N(t) \rightarrow \infty$ as $t \rightarrow \infty$. Then, for all fixed $\rho > 0$, $\alpha \in (0,1)$, $\epsilon > 0$, $S \geq 0$, and initialization sequence $\{\bm{\theta}_0(t)\}_{t \in \NN}$, the sequence $(\Bcal_{t_0})_{t_0 \in \NN}$ is asymptotically $\alpha$-correct.
\end{lemma}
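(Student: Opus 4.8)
The plan is to reduce the claim to a single verification: that the adaptively generated sampling policy $\pi_t$ of Algorithm \ref{alg:proj-subgradient} satisfies the three regularity conditions of Theorem \ref{thm:error_control} along the trajectories it generates. Once this is established, asymptotic $\alpha$-correctness follows immediately from Lemma \ref{lem:asymp_valid_bai}. Recall that asymptotic $\alpha$-correctness (Definition \ref{defn:asymp_alpha_correctness}) has two parts: almost-sure termination for each fixed $t_0$, and $\limsup_{t_0 \to \infty} P(\hat a_{t_0} \neq a^*) \leq \alpha$. Both are delivered by Lemma \ref{lem:asymp_valid_bai} as soon as conditions (A1)--(A3) of Theorem \ref{thm:error_control} are shown to hold under the policy $\pi_t$.

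First I would observe that conditions (A2) and (A3) require no further work, since they are already among the standing hypotheses: the statement assumes all assumptions of Theorem \ref{thm:projected_subgradient_descent_convergence}, which in turn imposes (A2) (convergent, bounded regression function $g_t$) and (A3) (invertibility of the limiting covariance $\Sigma_\infty$). The only condition not assumed outright is (A1), which demands (a) a uniform positivity bound $1/\pi_t(x,a) \leq \kappa$ over all $t, x, a$, and (b) $L_2(P_{X|H_{t-1}})$-convergence of $\pi_t$ to a limiting policy $\pi_\infty$, both almost surely.

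The core of the argument is that these two requirements are exactly the two conclusions of Theorem \ref{thm:projected_subgradient_descent_convergence}. Its hypotheses---Assumptions \ref{assump:unique_optimal_arm}--\ref{assump:bounded_outcomes}, conditions (A2)/(A3), uniform boundedness of $V_t$, almost-sure $L_2$ convergence of $V_t$ to $V_\infty$, and $N(t) \to \infty$---coincide precisely with what this lemma assumes. Conclusion (i) yields the uniform positivity $\pi_t(x,a) \geq 1/\kappa$, and conclusion (ii) yields $\|\pi_t(\cdot,a) - \pi_\infty(\cdot,a)\|_{L_2(P_{X|H_{t-1}})} \to 0$ almost surely for each $a$; together these establish (A1). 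With (A1)--(A3) in hand, Theorem \ref{thm:error_control} gives $\limsup_{t_0 \to \infty} P(\exists t \geq t_0 : a^* \notin C_t) \leq \alpha$, while the convergence of the SNR-maximizing weights (which underlies the almost-sure rejection of every suboptimal arm in Lemma \ref{lem:asymp_valid_bai}) delivers almost-sure termination for each fixed $t_0$. Invoking Lemma \ref{lem:asymp_valid_bai} then concludes the proof.

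The main obstacle is the measurability and consistency bookkeeping rather than any new estimate. Because $\pi_t$ is itself computed from $H_{t-1}$, the ``almost-sure along the trajectory'' clauses in (A1) must be shown to refer to the same trajectory law under which Theorem \ref{thm:error_control} is applied. I would address this by noting that both theorems are stated under the identical data-generating process driven by the single adaptive policy $(\pi_t)_{t \in \NN}$, so the underlying probability measure is common to both, and that $\pi_t$ is $\Fcal_{t-1}$-measurable (depending on the past only through $V_t$ and the PSGD iterates). This measurability is what keeps the score processes conditionally unbiased in the confidence-sequence construction, closing the apparent circularity between the data-dependent sampling and the conditions it must satisfy. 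Beyond this, the argument is a direct chaining of the already-established results.
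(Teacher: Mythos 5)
Your proposal is correct and takes essentially the same route as the paper: the paper's proof likewise notes that the hypotheses of Theorem \ref{thm:projected_subgradient_descent_convergence} already contain (A2)/(A3), uses its two conclusions (uniform positivity $\pi_t(x,b)\geq 1/\kappa$ and almost-sure $L_2(P_{X|H_{t-1}})$ convergence of $\pi_t$ to $\pi_\infty$) to supply condition (A1), and then concludes by a direct application of Lemma \ref{lem:asymp_valid_bai}. Your closing paragraph on the $\Fcal_{t-1}$-measurability of $\pi_t$ and the common trajectory law is a sensible clarification the paper leaves implicit, but it introduces no new content beyond the paper's one-step chaining argument.
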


The conditions of Lemma \ref{thm:projected_subgradient_descent_convergence} also ensure that the results of Theorem \ref{thm:sample_complexity_general} hold, allowing for an explicit characterization of asymptotic sample complexities under our proposed sampling scheme using the limiting sampling policy $\pi_\infty$. To connect our results to (i) existing BAI sample complexity bounds and (ii) semi-parametric efficiency in average treatment effect estimation, we provide results under additional assumptions. 

\paragraph{Connections with Existing BAI Bounds}

Under stronger assumptions that assume the limiting functions $g_\infty = g$ and $V_\infty = v$, we provide minimax results that demonstrate the \textit{worst-case} sampling complexity of our approach is no larger than the \textit{best-case} sample complexity of canonical Gaussian BAI.

\begin{theorem}[Minimax Sample Complexities under Algorithm \ref{alg:proj-subgradient}]\label{thm:sampling_complexity_final}
Let all assumptions of Theorem \ref{thm:projected_subgradient_descent_convergence} hold, and assume that $g_\infty = g$, and ${V}_\infty = v$. Let $\left(\Bcal_{t_0(\alpha)}\right)_{\alpha \in (0,1)}$ be the sequence of algorithms $\Bcal_{t_0(\alpha)}$, with $\Bcal_{t_0(\alpha)}$ as defined in Lemma \ref{thm:results_general}. Let $t_0(\alpha)$ denote a sequence of burn-in times such that $t_0(\alpha) \rightarrow \infty$ and $t_0(\alpha) = o(\log(1/\alpha))$ as $\alpha \rightarrow 0$. Let $\bm{\theta}_*$ be defined as in Lemma \ref{lem:optimal_policy_structure}. Let $\Pcal(\bm{\mu}, \bm{\sigma}^2)$ denote the set of all arm distributions with means $\bm{\mu}$ and arm variances $\bm{\sigma}^2$ satisfying our assumptions, and $\Gamma_2(\bm{\mu}, \bm{\sigma}^2)$ denote
\begin{equation}\label{eq:gamma_2}
    \Gamma_2(\bm\mu, \bm\sigma^2) = \left(\sup_{\pi \in \Delta^K} \inf_{\tilde\mu \not\in \Hcal_{a^*}} \sum_{a \in [K]} \pi(a) d_{N(\cdot, \sigma^2(a))}\left(\mu(a), \tilde\mu(a)\right) \right)^{-1}.
\end{equation}
where $d_{N(\cdot, z)}$ denotes the Gaussian KL divergence function as defined in Lemma \ref{lem:kl_projection}. Let $\tau_{t_0(\alpha)}$ denote the (random) number of samples before Algorithm $\Bcal_{t_0(\alpha)}$ terminates. Then, for all $\epsilon >0$ such that $\epsilon \leq \min_{x \in \Xcal, b \in [K]}v(x,b)$ $P_X$-a.s., all $S \geq 0$ such that $\max_{b \in [K]} |\theta_*(b)| \leq S$, and $\rho >0$, we obtain   
\begin{equation}\label{eq:stopping_time_empirical_bound}
    \lim_{\alpha \rightarrow 0} \frac{\EE[\tau_{t(\alpha)}]}{\log(1/\alpha)} \leq \Gamma_2(\bm\mu, \bm\sigma^2), \quad P\left(\lim_{\alpha \rightarrow 0}\frac{\tau_{t(\alpha)}}{\log(1/\alpha)} \leq \Gamma_2(\bm{\mu}, \bm\sigma^2)\right) = 1.
\end{equation}
for any $P \in \Pcal(\bm{\mu}, \bm{\sigma}^2)$. 
Furthermore, for any $P \in \Pcal(\bm{\mu}, \bm{\sigma}^2)$ where there exists $a,b \in [K]$ and $\widetilde{\Xcal} \subseteq \Xcal$ with $P_X(X \in \widetilde\Xcal) > 0$ such that $(g(x,a)-\mu(a))(g(x,b)-\mu(b)) < 0$ for $x \in \widetilde{\Xcal}$,
\begin{equation}
    \lim_{\alpha \rightarrow 0} \frac{\EE[\tau_{t_0(\alpha)}]}{\log(1/\alpha)} < \Gamma_2(\bm\mu, \bm\sigma^2), \quad P\left(\lim_{\alpha \rightarrow 0}\frac{\tau_{t_0(\alpha)}}{\log(1/\alpha)} < \Gamma_2(\bm{\mu}, \bm\sigma^2)\right) = 1.
\end{equation}
\end{theorem}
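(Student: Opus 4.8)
The plan is to reduce the statement to a purely deterministic comparison of two optimized objective values and then read off the stopping-time bounds from earlier results. Under the present hypotheses ($g_\infty=g$, $V_\infty=v$, $\epsilon\le\min_{x,b}v(x,b)$, $S\ge\max_b|\theta_*(b)|$), the final part of Theorem~\ref{thm:projected_subgradient_descent_convergence} gives $\pi_\infty=\pi_*=\argmin_{\pi\in\Pi}G(\pi)$, so the limiting weights entering Theorem~\ref{thm:sample_complexity_general} satisfy $\Gamma_1=G(\pi_\infty)=\min_{\pi\in\Pi}G(\pi)$. Theorem~\ref{thm:sample_complexity_general} then already delivers $\lim_{\alpha\to0}\EE[\tau_{t_0(\alpha)}]/\log(1/\alpha)\le 2\Gamma_1$ and its almost-sure analogue; since $2\Gamma_1$ is deterministic, both displays in \eqref{eq:stopping_time_empirical_bound} follow once I prove the deterministic inequality $2\min_{\pi\in\Pi}G(\pi)\le\Gamma_2(\bm\mu,\bm\sigma^2)$, and the strict versions follow once this is upgraded to a strict inequality under the stated sign condition.

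First I would rewrite $\Gamma_2$ as an SNR optimum over the associated \emph{marginal} (no-context) Gaussian problem. Writing the alternative as $\{\tilde{\bm\mu}\notin\Hcal_{a^*}\}=\bigcup_{a\neq a^*}\Hcal_a$, the inner infimum splits as $\inf_{\tilde{\bm\mu}\notin\Hcal_{a^*}}=\min_{a\neq a^*}\inf_{\tilde{\bm\mu}\in\Hcal_a}$. Applying Lemma~\ref{lem:kl_projection} to the Gaussian problem with means $\bm\mu$ and variances $\bm\sigma^2$ gives $\inf_{\tilde{\bm\mu}\in\Hcal_a}\sum_b\pi(b)\,d_{\sigma(b)}(\mu(b),\tilde\mu(b))=\tfrac{1}{2}(\mathrm{SNR}^a_0(\pi))^2=\tfrac{1}{2}F^0_a(\pi)^{-1}$, where $F^0_a(\pi)=\min_{\bm w\in\Delta(a)}f_0(\pi,\bm w)$ and $f_0$ is the inverse-squared SNR built from the marginal variance $\sum_b w(b)^2\sigma^2(b)/\pi(b)$. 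Taking $\sup_\pi\min_{a\neq a^*}$ yields $\Gamma_2=2\min_{\pi\in\Delta^K}G_0(\pi)$ with $G_0(\pi)=\max_{a\neq a^*}F^0_a(\pi)$, so the target reduces to $\min_{\pi\in\Pi}G(\pi)\le\min_{\pi\in\Delta^K}G_0(\pi)$.

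The crux is a pointwise variance comparison. I would restrict the contextual minimization to constant policies $\pi(x,\cdot)\equiv\pi_0(\cdot)\in\Delta^K$ (which lie in $\Pi$) and show $f(\pi_0,\bm w)\le f_0(\pi_0,\bm w)$ for every $\bm w\in\Delta(a)$; as the numerators $(\bm w^\top\bm\mu)^2$ agree, this reduces to comparing the two variance functionals. Expanding the efficient (AIPW) variance with $g_\infty=g$ and using that the treatment indicators are disjoint gives $\mathrm{Var}_{\mathrm{ctx}}(\bm w)=\mathrm{Var}_X\!\big(\sum_b w(b)g(X,b)\big)+\sum_b w(b)^2\EE_X[v(X,b)]/\pi_0(b)$, while the law of total variance $\sigma^2(b)=\EE_X[v(X,b)]+\mathrm{Var}_X(g(X,b))$ turns the marginal variance into $\sum_b w(b)^2\big(\EE_X[v(X,b)]+\mathrm{Var}_X(g(X,b))\big)/\pi_0(b)$. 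Their difference, $\sum_b \tfrac{w(b)^2}{\pi_0(b)}\mathrm{Var}_X(g(X,b))-\mathrm{Var}_X(\sum_b w(b)g(X,b))$, is nonnegative by a conditional Jensen argument: for each $x$, the importance-weighted variable $Z_x=\sum_b w(b)\tfrac{\mathbf{1}[A=b]}{\pi_0(b)}(g(x,b)-\mu(b))$ with $A\sim\pi_0$ has mean $\sum_b w(b)(g(x,b)-\mu(b))$ and second moment $\sum_b \tfrac{w(b)^2}{\pi_0(b)}(g(x,b)-\mu(b))^2$, so $\mathrm{Var}_A(Z_x)\ge0$, and integrating over $X$ recovers exactly the difference. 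Hence $F_a(\pi_0)\le F^0_a(\pi_0)$, $G(\pi_0)\le G_0(\pi_0)$, and minimizing gives $\min_\Pi G\le\min_{\Delta^K}G_0$, i.e. $2\Gamma_1\le\Gamma_2$.

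For the strict statement I would sharpen the conditional Jensen step: the gap $\EE_X[\mathrm{Var}_A(Z_X)]$ is strictly positive whenever $Z_x$ is $A$-nondegenerate on a positive-probability set, i.e. the coordinates $w(b)(g(x,b)-\mu(b))/\pi_0(b)$ are not all equal. The hypothesis $(g(x,a_0)-\mu(a_0))(g(x,b_0)-\mu(b_0))<0$ on $\widetilde\Xcal$ with $P_X(\widetilde\Xcal)>0$ forces, for any weight loading positive mass on both $a_0$ and $b_0$, two coordinates of opposite sign, so $Z_x$ takes distinct values and the gap is strict. \emph{The main obstacle is propagating this strictness through the $\min_{\bm w}$ and $\max_a$ operations to the optimized value}, since I must ensure the binding arm's optimal weight actually loads on the pair $(a_0,b_0)$, and that ties among maximizing arms do not wash out the improvement. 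I would handle this by showing the strict reduction holds for the weight and arm active at $G_0(\pi_0^*)$, and by invoking the strict convexity and uniqueness of the minimizer of $G$ from Lemma~\ref{lem:convexity_policy}: since $G\le G_0$ everywhere with strict inequality on a heterogeneity-induced neighborhood, the contextual optimum $\min_\Pi G$ lies strictly below $\min_{\Delta^K}G_0=\tfrac{1}{2}\Gamma_2$, yielding $2\Gamma_1<\Gamma_2$ and hence the strict bounds. Making this last propagation rigorous — rather than the pointwise variance inequality, which is routine — is where the real work lies.
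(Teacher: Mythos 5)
For the two displays in \eqref{eq:stopping_time_empirical_bound}, your route is the paper's route. The paper likewise reduces to the deterministic comparison $2\min_{\pi\in\Pi}G(\pi)\le\Gamma_2(\bm\mu,\bm\sigma^2)$ via Theorems \ref{thm:projected_subgradient_descent_convergence} and \ref{thm:sample_complexity_general} (it writes the left side as $\Gamma_2'(\bm\mu,\bm\sigma^2)$), rewrites $\Gamma_2$ through Lemma \ref{lem:kl_projection} exactly as you do, restricts to context-independent policies $\Pi^{\text{MAB}}\subset\Pi$, and compares the two variance functionals using the law of total variance $\sigma^2(b)=\EE_{P_X}[v(X,b)]+\EE_{P_X}[(g(X,b)-\mu(b))^2]$. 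Your ``conditional Jensen'' step with the importance-weighted variable $Z_x$ is the paper's Cauchy--Schwarz inequality $\left(\sum_b \sqrt{\pi(b)}\cdot\frac{w(b)(g(x,b)-\mu(b))}{\sqrt{\pi(b)}}\right)^2\le\sum_b \frac{w(b)^2(g(x,b)-\mu(b))^2}{\pi(b)}$ restated probabilistically (second moment of the randomized arm draw versus its squared mean), so the non-strict bound in your proposal is complete and identical in substance.

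On the strict part, where you explicitly stall, the paper uses a more direct device than your strict-convexity plan: it reruns the same contradiction at the pair $(\pi,\bm w)$ achieving $\Gamma_2$ and argues that the Cauchy--Schwarz \emph{equality} condition $\pi(b)=c(x)\,w(b)(g(x,b)-\mu(b))$ for all $b$ is unsatisfiable on $\widetilde\Xcal$, because positivity of $\pi$ would force $c(x)$ to take opposite signs across the hypothesized pair of arms. This sidesteps your min--max propagation worry by establishing strictness at the Gaussian optimizer itself rather than transporting it through the optimization. However, your instinct that ``the real work lies'' here is vindicated: the paper's sign argument implicitly treats both relevant weight coordinates as positive and handles neither coordinates with $w(b)=0$ nor the null-arm coordinate $w(a)=-1$, where the product $w(b)(g(x,b)-\mu(b))$ flips sign and a consistent negative $c(x)$ can exist. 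Concretely, with $K=2$ the weight $\bm w=(1,-1)$ is forced, and if $g(x,2)-\mu(2)=-(g(x,1)-\mu(1))$ with equal constant conditional variances, the equality condition holds at the Neyman-optimal $\pi=(1/2,1/2)$ and one checks directly that $\Gamma_2'=\Gamma_2=2(\sigma(1)+\sigma(2))^2/(\mu(1)-\mu(2))^2$, even though the theorem's sign hypothesis is satisfied; so the paper's proof of strictness does not go through in this configuration. Note also that your fallback — $G\le G_0$ everywhere with strict inequality on a heterogeneity-induced set, plus strict convexity from Lemma \ref{lem:convexity_policy} — cannot close this on its own: strictness away from the $G_0$-minimizer does not separate the minima. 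Any complete argument must verify strict Cauchy--Schwarz at the active arm and its optimal weight at the Gaussian-optimal policy, with a case analysis over the signs and supports of the weight coordinates.
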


Theorem \ref{thm:sampling_complexity_final} characterizes the worst-case sample complexity of our approach over all distributions with mean $\bm{\mu}$ and arm variances $\bm{\sigma}^2$ under the assumption that $g_t$ and $V_t$ converge to the true conditional mean and variance functions $g$ and $v$. Our condition for strict inequality corresponds to the $X$-specific heterogeneity of conditional means $g(x,a)$ relative to the marginal mean $\mu(a)$. In particular, if there exists some set $\widetilde\Xcal \subseteq \Xcal$ with positive measure where two arms achieve larger and smaller average outcomes relative to their population mean, our condition is satisfied, and our stopping time is strictly smaller than the upper bound $\Gamma_2(\bm\mu,\bm\sigma^2)$. We note that when contextual information $X$ is (i) uninformative of outcomes $Y$ or (ii) unavailable, as in the standard multi-armed bandit (MAB) setting, our strict inequality condition fails, resulting in equality in Equation \eqref{eq:stopping_time_empirical_bound}. Importantly, our strict inequality demonstrates that when conditional outcomes are heterogeneous relative to the population average, our approach strictly improves upon the best possible performance bound for standard Gaussian BAI, even with known variances.  

\begin{remark}[Connections with Existing Sample Complexities]\cite{garivier2016optimalbestarmidentification} show that the upper bound $\Gamma_1(\bm{\mu}, \bm\sigma^2)$ corresponds to the \textit{best possible sampling complexity} for $\alpha$-correct BAI (as in Definition \ref{defn:alpha_correctness}) in the setting where (i) the conditional distribution $P(Y|A=a)$ is equivalent to $N(\mu(a), \sigma^2(a))$ and (ii) arm-specific variances $\sigma^2(a)$ are known for each arm $a \in [K]$. The results of Theorem \ref{thm:sampling_complexity_final} demonstrate the benefits of our relaxed notion of error control for BAI. By relaxing the error control requirement from $\alpha$-correctness to asymptotic $\alpha$-level correctness, Theorem \ref{thm:sampling_complexity_final} demonstrates that even without contexts, best-arm identification (BAI) under the bounded outcome assumption—with unknown bounds and variances—is no more difficult than exact $\delta$-correct Gaussian BAI with known arm variances. Our conditions for strict inequality highlight the role of contextual information. In heterogeneous settings, where conditional means $g(x,a)$ differ from marginal arm means $\mu(a)$, our contextual information enables our approach to achieve strictly smaller expected sample complexities than the best possible sample complexity for Gaussian BAI without contexts.
\end{remark}

\paragraph{Connections with Adaptive Treatment Effect Estimation} Under the same assumptions as Theorem \ref{thm:error_control}, our procedure is analogous to semi-parametric efficient inference for treatment effect estimation \citep{cook2024semiparametricefficientinferenceadaptive} in the two-armed case. We demonstrate this connection in Lemma \ref{lem:two_armed} by providing closed-form expressions for the limiting sampling policy $\pi_\infty$ and the asymptotic sample complexity.

\begin{lemma}[Closed-Form Limits in the Two-Armed Case]\label{lem:two_armed}
    Let all assumptions of Theorem \ref{thm:sampling_complexity_final} hold, and let $K=2$. Let $\left(\Bcal_{t_0(\alpha)}\right)_{\alpha \in (0,1)}$ be defined as in Lemma \ref{thm:results_general}, and let the sequence $t_0(\alpha)$ satisfy $t_0(\alpha) \rightarrow \infty$ and $t_0(\alpha) = o(\log(1/\alpha))$ as $\alpha \rightarrow 0$. Then, for all $\epsilon >0$ such that $\epsilon \leq \min_{x \in \Xcal, b \in [K]}v(x,b)$ $P_X$-a.s., all $S \geq 0$ such that $\max_{b \in [K]} |\theta_*(b)| \leq S$, and $\rho >0$, the limiting sampling policy $\pi_{\infty}$ corresponds to the function 
    \begin{equation}
        \pi_\infty(x,a) = \frac{\sqrt{v(x,a)}}{\sqrt{v(x,1)} + \sqrt{v(x,2)}},
    \end{equation}
    and the asymptotic sample complexity of our approach satisfies 
    \begin{equation}
        \lim_{\alpha \rightarrow 0} \frac{\EE[\tau_{t(\alpha)}]} {\log(1/\alpha)} \leq \Gamma_2, \quad P\left(\lim_{\alpha \rightarrow 0} \frac{\tau_{t(\alpha)}} {\log(1/\alpha)} \leq \Gamma_2 \right) = 1,
    \end{equation}
    where $\tau_{t(\alpha)}$ is as defined in Theorem \ref{thm:sampling_complexity_final}, and $\Gamma_2$ is defined as 
    \begin{equation}
        \Gamma_2 = 2\left(\frac{\EE_{P_X}\left[ \left( \sqrt{v(X,1)} + \sqrt{v(X,2)} \right)^2 \right] + \EE_{P_X}\left[ \left( \ g(X,1) - \mu(1)) - (g(X, 2)- \mu(2)) \ \right)^2 \right]}{\left(\mu(1) - \mu(2)\right)^2}\right).
    \end{equation}
\end{lemma}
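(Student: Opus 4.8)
The plan is to specialize the general machinery of Theorems \ref{thm:projected_subgradient_descent_convergence} and \ref{thm:sampling_complexity_final} to $K=2$, where almost all of the optimization collapses. Without loss of generality take $a^* = 1$, so the only suboptimal arm is $a = 2$. The key simplification is that for $K=2$ the admissible weight sets are singletons, $\Delta(1) = \{(-1,1)\}$ and $\Delta(2) = \{(1,-1)\}$, so there is no inner minimization over $\bm{w}$. Since $\bm{w} = (1,-1)$ satisfies $\bm{w}^\top\bm{\mu} = \mu(1)-\mu(2) > 0$, the sign constraint is automatically respected, and $G_\infty(\bm{\theta}) = F_{2,\infty}(\bm{\theta}) = f_\infty(\bm{\theta}, (1,-1))$ reduces to a scalar optimization in $\theta(1)$ (recall $\theta(2)=0$).

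Next I would obtain the closed form of $\pi_\infty$. Because $g_\infty = g$, the limiting residual $r_\infty \equiv 0$, so $V(x,a) = v(x,a)$, and by hypothesis $V_\infty = v$ as well. Substituting $\bm{w} = (1,-1)$ with $w^2(1)=w^2(2)=1$ into the numerator of $f_\infty$ and using $\pi^{-1}(x,b) = \sum_{a}\sqrt{v(x,a)/v(x,b)}\exp(\theta(a)-\theta(b))$, the $\bm{\theta}$-dependent part of the numerator equals $\EE_{P_X}[v(X,1)/\pi(X,1) + v(X,2)/\pi(X,2)]$, while the $l_\infty$ term contributes the $\bm{\theta}$-independent heterogeneity quantity $\EE_{P_X}[((g(X,1)-\mu(1))-(g(X,2)-\mu(2)))^2]$. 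Since the heterogeneity term is constant, minimizing $G_\infty$ amounts to minimizing the separable objective $\EE_{P_X}[v(X,1)/\pi(X,1)+v(X,2)/\pi(X,2)]$ over $\pi(x,1)+\pi(x,2)=1$. A pointwise first-order condition (equivalently Cauchy--Schwarz) gives the minimizer $\pi(x,a) \propto \sqrt{v(x,a)}$, i.e. the claimed $\pi_\infty$; matching against the parameterized family shows this corresponds to $\bm{\theta}_* = \mathbf{0}$. Since $\epsilon \leq \min_{x,b} v(x,b)$ and $S \geq 0 = \max_b|\theta_*(b)|$, the hypotheses of the final part of Theorem \ref{thm:projected_subgradient_descent_convergence} hold, so $\pi_\infty = \pi_* = \argmin_{\pi\in\Pi}G(\pi)$, consistent with Lemma \ref{lem:optimal_policy_structure}.

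Finally I would read off $\Gamma_2$. Evaluating the minimized objective at $\pi_\infty(x,a) = \sqrt{v(x,a)}/(\sqrt{v(x,1)}+\sqrt{v(x,2)})$ collapses $v(x,1)/\pi(x,1)+v(x,2)/\pi(x,2)$ to $(\sqrt{v(x,1)}+\sqrt{v(x,2)})^2$, so that $G(\pi_\infty)$ equals the bracketed fraction in the statement. Because $K=2$ leaves a single suboptimal arm with a unique weight, $\Gamma_1 = G(\pi_\infty)$ is precisely the quantity appearing in Theorem \ref{thm:sample_complexity_general}, whose stopping-time bound is $2\Gamma_1$; thus $\Gamma_2 = 2\Gamma_1$ matches the displayed formula, and the in-expectation and almost-sure bounds follow directly from Theorem \ref{thm:sampling_complexity_final} applied with the limiting policy $\pi_\infty$.

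The main obstacle is not conceptual but lies in two bookkeeping steps: (i) correctly separating the numerator of $f_\infty$ into its $\pi$-dependent inverse-propensity variance piece and its $\pi$-free heterogeneity piece (equivalently, verifying via the law of total variance that the AIPW difference score has conditional variance $v(X,1)/\pi(X,1)+v(X,2)/\pi(X,2)$ plus the between-context variance of $g(X,1)-g(X,2)$), and (ii) confirming that the pointwise-optimal $\sqrt{v}$-proportional allocation is representable within the single-parameter exponential family induced by $\bm{\theta}$, so that $\bm{\theta}_*=\mathbf{0}$ genuinely yields the global optimum with matching normalization rather than merely a critical point of the restricted family.
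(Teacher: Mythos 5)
Your proposal is correct and takes essentially the same route as the paper's proof: for $K=2$ the set $\Delta(a)$ is a singleton so the inner weight minimization disappears, the bound reduces to $2\inf_{\pi \in \Pi}\left(\EE_{P_X}\left[\tfrac{v(X,1)}{\pi(X,1)}+\tfrac{v(X,2)}{\pi(X,2)}\right]+\EE_{P_X}\left[\left((g(X,1)-\mu(1))-(g(X,2)-\mu(2))\right)^2\right]\right)/(\mu(1)-\mu(2))^2$, and the pointwise $\sqrt{v}$-proportional (Neyman) allocation is substituted to collapse the variance term to $\EE_{P_X}[(\sqrt{v(X,1)}+\sqrt{v(X,2)})^2]$. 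The only difference is presentational: the paper cites Section 2.2 of \cite{cook2024semiparametricefficientinferenceadaptive} for the optimal allocation, whereas you derive it directly via the first-order/Cauchy--Schwarz argument and additionally verify that $\bm{\theta}_*=\mathbf{0}$ realizes this policy within the parameterized family (so $S\geq 0$ and $\epsilon\leq\min_{x,b}v(x,b)$ trigger the final clause of Theorem \ref{thm:projected_subgradient_descent_convergence}) --- a step the paper's proof leaves implicit.
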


Lemma \ref{lem:two_armed} demonstrates that in the two-armed case, our procedure closely corresponds to adaptive estimation for semi-parametric efficient inference on the difference between arm means $\mu(1) - \mu(2)$, referred to as the treatment effect in the causal inference literature. Our limit policy $\pi_\infty$ in Lemma \ref{lem:two_armed} corresponds to the \textit{optimal} sampling policy for semi-parametric efficient inference of the treatment effect, shown by \cite{han}. The numerator of $\Gamma_2$, our asymptotic sample complexity bound, corresponds to the \textit{minimum possible variance} for a treatment effect estimator with data-dependent sampling, as shown by \cite{cook2024semiparametricefficientinferenceadaptive}.\footnote{To be precise, the numerator of $\Gamma_2$ corresponds the minimum possible variance over (i) all possible sampling policies $\pi$ and (ii) the class of regular and asymptotically linear (RAL) estimators for the treatment effect. We refer to \cite{vanderVaart1998} for a more detailed discussion on the class of RAL estimators.}

The results of Lemma \ref{lem:two_armed} shed light on how our BAI approach exploits contexts to achieve better sample complexity. Recall that our general sample complexity bound, $\Gamma_1$, is inversely proportional to the squared minimum signal-to-noise ratio (SNR) of the test processes $\hat\psi_t(a)$ for all suboptimal arms $a \neq a^*$. Thus, reducing the variances of these test processes directly improves the sample complexity of BAI. Under the regularity conditions stated above, Lemma \ref{lem:two_armed} shows that in the two-armed setting, our method minimizes these variances to the \textit{lowest possible value} permitted by our nonparametric statistical model. From this perspective, our BAI framework can be seen as a generalization of adaptive sampling techniques used for efficient treatment effect estimation, with the goal of identifying the highest mean arm instead of improving the precision of treatment effect estimates.

\begin{remark}[Additional Assumptions in Theorem \ref{thm:sampling_complexity_final} and Lemma \ref{lem:two_armed}]
    Beyond our assumptions that our limit functions satisfy $g_\infty = g$ and $V_\infty = v$, both Theorem \ref{thm:sampling_complexity_final} and Lemma \ref{lem:two_armed} require that (i) the truncation parameter $\epsilon$ is strictly smaller than the minimum conditional variance $v(a,x)$ and (ii) the optimal $\bm{\theta}_*$ has coordinates $\theta(a) \in [-S,S]$ for all $a \in [K-1]$. Note that due to Assumption \ref{assump:nonzero_variances}, there exists both an $\epsilon_* >0$ and $S_* < \infty$ that satisfies these conditions. The existence of $\epsilon_* > 0$ follows directly from Assumption \ref{assump:nonzero_variances}, and the existence of $S_* < \infty$ follows from $\pi(x,K) \rightarrow 0$ for all $x \in \Xcal$ as $\max_{a \in [K]} \bm{\theta}(a) \rightarrow \infty$, leading to an infinite value for our sample complexity. 
\end{remark}

In conclusion, for bounded outcome bandit models, our theoretical results suggest that our BAI approach provides a robust, efficient procedure for nonparametric BAI. Theorem \ref{thm:sampling_complexity_final} demonstrates that even without contexts, knowledge of outcome bounds, and arm-specific variances, asymptotic $\alpha$-correct BAI is no harder than Gaussian BAI under exact $\alpha$-correct constraints and known variances. In settings with $X$-specific heterogeneity across outcomes, our results demonstrate that asymptotic $\alpha$-correct BAI is strictly easier than Gaussian BAI with exact $\delta$-correct constraints and known variances. Lemma \ref{lem:two_armed} provides valuable insight on how our approach achieves reduced sample complexities. By leveraging contexts and adaptive sampling to achieve the \textit{smallest} possible variance on our test processes, our method generalizes semi-parametric efficient adaptive designs in causal effect estimation to the setting of BAI, resulting in \textit{efficient} sample complexities that make full use of the available contexts.

\section{Experiments}\label{sec:experiments}

To highlight the benefits of our approach, we compare our approach both with and without contexts to existing BAI approaches. In our first experiment, we compare our approach under differing mean vectors where baselines are \textit{known} to be asymptotically optimal for the given DGP. In our second experiment, we consider the case where the underlying distribution is unknown, and demonstrate that our approaches naturally adapt to the difficulty of the instance. For all experiments, we track (i) the average number of samples $\tau$ collected before declaring an arm as best and (ii) the empirical probability that the returned arm is suboptimal. 

\subsection{Experiment Setup}

\paragraph{Choice of Hyperparameters/Solvers} We set our $\bm{\theta}$ bounds as $S = 100$, the variance estimate truncation constant as $\epsilon = 0.01$, the descent iterations as $N(t) = 10 + \log(t + 1)$ for each $t \in \NN$, and the burn-in time $t_0 = 100$. For all conditional mean and variance estimates, we use probit regression as implemented in \cite{seabold2010statsmodels}. To solve the convex optimization problem necessary to obtain $\bm{w}_t^a$ for both our test processes and subgradient calculations, we use SOCP solvers \texttt{CLARABEL} \citep{Clarabel_2024}, \texttt{ECOS} \citep{bib:Domahidi2013ecos}, and \texttt{SCS} \citep{odonoghue:21} at each $t$, and take the best solution as our weight. We set $\rho = 0.06$. For all methods, we set $\alpha = 0.1$.

\paragraph{Baselines}
As baselines for our approach, we compare existing fixed-confidence BAI methods. For non-contextual methods (i.e. methods that do not leverage contexts for stopping and sampling), we test algorithms Track-and-Stop (T\&S) \citep{garivier2016optimalbestarmidentification} with $D$-tracking, Chernoff stopping with top-two sampling (ChernBC) \citep{pmlr-v30-Kaufmann13}, Chernoff Racing \citep{garivier2016optimalbestarmidentification}, and ChernT3C \citep{3tc}. For contextual methods, we test contextual Track-and-Stop (CT\&S) \citep{kato2024rolecontextualinformationbest}, which provides nonasymptotic $\alpha$-correct guarantees under the assumptions of known arm variances (or upper bounds), parametric arm distributions, and finite, discrete contexts. To apply CTaS to our setting, we discretize our context space into 4 bins $\widetilde{\Xcal}= [4]$  with equal probability.\footnote{Our choice of bins is due to the relative instability of the CT\&S algorithm when the cardinality of the context set is large. Because the CT\&S algorithm estimates conditional means and variances for each context-arm pair, a large number of contexts degrades the performance of the approach significantly.} To learn the policy, we use the estimation approach used in \cite{kato2024rolecontextualinformationbest}, where the policy is estimated with sequential least squares programming (\texttt{SLSQP}) as implemented by \cite{kraft1988software}. For all methods, we test the variant corresponding to Bernoulli outcomes across all simulations, as the stopping methods for Bernoulli outcomes offer error control for the $[0,1]$-bounded outcome setting.

\paragraph{Synthetic Data Generating Processes} We test synthetic data-generating processes that vary (i) arm distributions, (ii) access to covariates, and (iii) choice of arm means. For all experiments, we use a 4-dimensional context vector $X \in \mathbb{R}^4$, with the marginal context distribution $P_X$ set as the standard multivariate normal distribution $N(0, I_4)$. Matching the experimental set-up of \cite{garivier2016optimalbestarmidentification}, we test the arm mean vectors $\bm{\mu}_1 = [0.5, 0.45, 0.43, 0.4]$ and $\bm{\mu}_2 = [0.3, 0.21, 0.2, 0.19, 0.18]$. For our conditional distributions $P_{Y|A,X}$, we consider both Bernoulli and mixture-Beta outcomes, with three distinct conditional distributions for each distribution type. Our Bernoulli and mixture-Beta outcomes denote the high and low variance settings respectively. For our Bernoulli and mixture-Beta settings, we set $P_{Y|A,X}$ as
\begin{align}
    P_{Y|A,X} &= \text{Bern}\left(\Phi\left(c(A) + \sum_{i=1}^4 X(i)\right)  \right), \\  P_{Y|A,X} &= \text{Beta}\left( \Phi\left(c(A) + \sum_{i=1}^4 X(i)\right), 1-\Phi\left(c(A) + \sum_{i=1}^4 X(i)\right)\right),
\end{align}
respectively, where $\bm{c}_1 = [0, -0.28, -0.39, -0.57]$ and $\bm{c}_2 = [-1.17, -1.80, -1.88, -1.96, -2.05]$ correspond to mean vectors $\bm{\mu}_1$ and $\bm{\mu}_2$ and $\Phi(\cdot)$ denotes the CDF of the standard normal distribution. To assess the value of covariates in contextual BAI approach, we run our method both with and without contexts, allowing for fair comparison across our contextual and non-contextual baselines respectively. 

\subsection{Discussion of Results}

In Figure \ref{fig:exp_1_bernoulli}, we provide the average number of samples for each method for mean vectors $\bm{\mu}_1$ and $\bm{\mu}_2$ under the Bernoulli and Beta setting, with standard deviations of our estimates shown in the error bar. Across all methods and distributions, the realized error rate reached a maximum of 0.02, well below the nominal level $\alpha = 0.1$, including our asymptotic approaches with burn-in time of $t_0=100$. These results suggest that even with relatively small burn-in times, the realized error rate remains far below the nominal level. 

\begin{figure}[h]
    \centering
    \includegraphics[width=\linewidth]{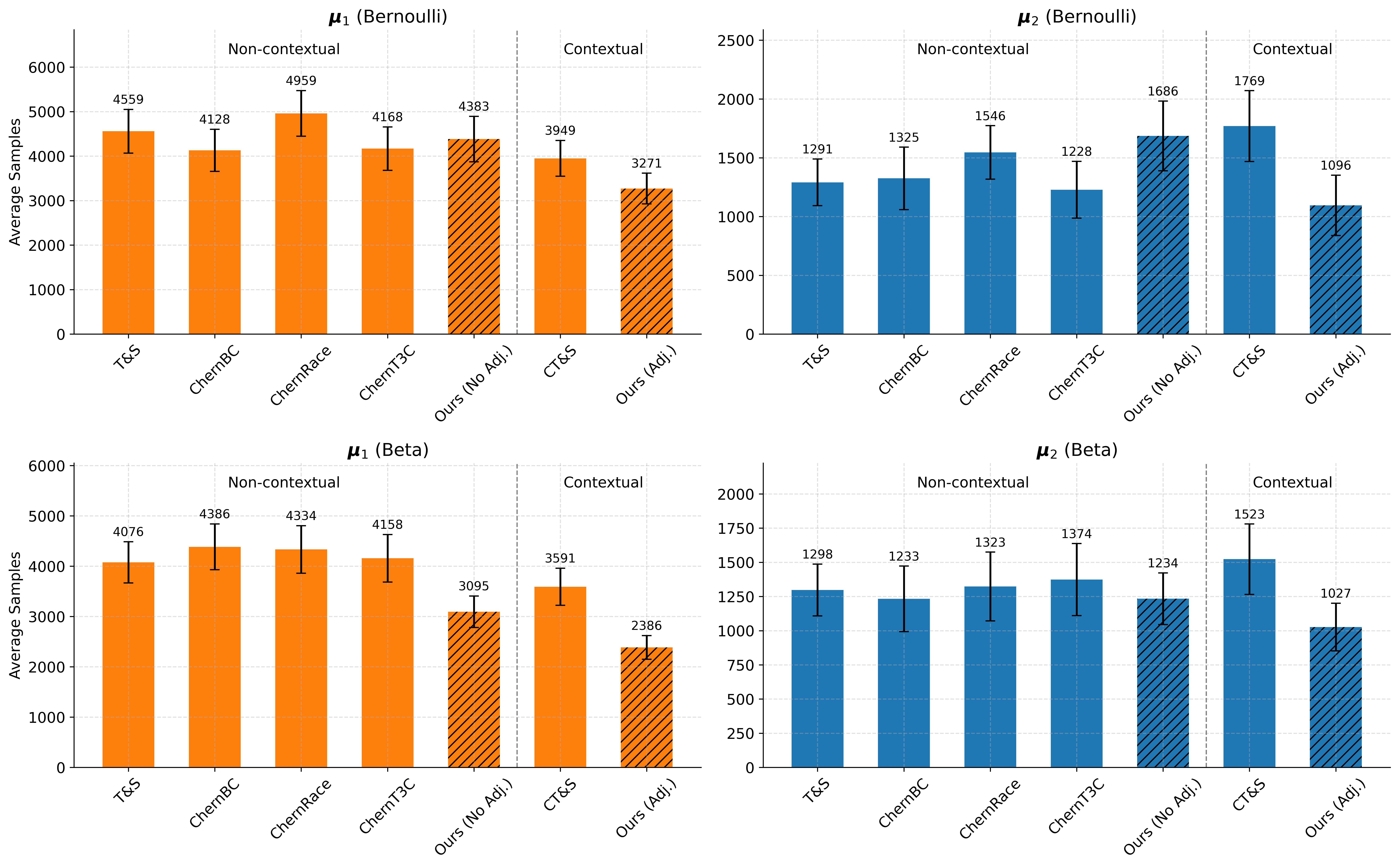}
    \caption{Average number of samples under Bernoulli and Beta conditional outcome distributions. Error bars are $\pm1$ standard deviation for estimated average sample complexity over 100 simulations.}
    \label{fig:exp_1_bernoulli}
\end{figure}

\paragraph{Comparison with Existing Optimal Approaches.}
We use our Bernoulli outcome results to test our approach against asymptotically optimal BAI approaches. Note that in this Bernoulli setting, the T\&S and CT\&S baselines obtain asymptotically optimal sample complexity for non-asymptotic BAI without contexts and finite context set $\widetilde\Xcal$ respectively. The results in the top row of Figure \ref{fig:exp_1_bernoulli} demonstrate that our asymptotic approaches provide comparable (if not better) sample complexities to existing asymptotically optimal methods, with larger reductions in average samples under more difficult arm instances. 

For both experiments, our approach without contexts performs similarly to the best non-contextual methods, including the asymptotically optimal T\&S approach. Our approach with contexts achieve the smallest average sample complexity across both mean vectors and all methods. For $\bm{\mu}_1$ and $\bm{\mu}_2$, our approach reduces sample complexity by roughly 20\% and 10\% relative to the next best method respectively. This result suggests that our approach offers the most benefits for \textit{harder} instances. Relative to $\bm{\mu}_2$, the mean vector $\bm{\mu}_1$ has both higher arm variances and smaller arm gaps, resulting in long horizons that allow our nuisances (e.g. weights, sampling policy, conditional mean/variance estimates) to stabilize over time.

\paragraph{Adapting to the Underlying Distribution.} A key benefit of our asymptotic approach is its ability to \textit{adapt} to the underlying arm distributions. To demonstrate the benefits of relaxed error guarantees, we test our methods under Mixture-Beta arm distributions. For our baselines, we assume that the experimenter knows outcomes are bounded between $[0,1]$, ensuring the validity of our baselines using Bernoulli stopping rules. For our asymptotic approaches, our approaches do not depend on knowledge of outcome bounds/moments, and leverage running estimates of arm variances for the sampling and stopping rules. 

Our results presented in the second row of Figure \ref{fig:exp_1_bernoulli} demonstrate that our asymptotic approaches naturally adapt to the difficulty of the instance. Compared to the Bernoulli instances, note that the conditional Beta distributions have reduced variance, resulting in a smaller sample complexity lower bound. Among non-contextual methods, our non-contextual approach achieves the smallest sample complexity, with a reduction of up to 25\% in sample complexity relative to the best baseline. Note that this reduction is achieved solely by our asymptotic error relaxation, which enables learned variances. In contrast, non-asymptotic methods assume worst-case variance bounds to ensure valid error control (specified by outcome/moment bounds). As a result, non-asymptotic BAI approaches require larger sample complexities than necessary when the underlying distribution is not worst-case. By leveraging contexts, our approach achieves the smallest sample complexity across all tested approaches. Compared to all non-contextual baselines (excluding our approach), our approach with contexts provides up to a 50\% reduction in samples; compared to CT\&S, our approach provides up to a 33\% reduction in samples. 

Similar to our Bernoulli experiments, we observe the largest improvements with $\bm{\mu}_1$, demonstrating that our approach offers the most practical benefit when the underlying instance is more difficult. Our non-contextual and contextual approaches provide significant sample complexity reductions for $\bm{\mu_1}$, resulting in 33\% smaller sample complexities compared to the best baseline. In contrast, our non-contextual approach achieves similar performance to the non-contextual baselines for $\bm{\mu}_2$, while our contextual approach achieves a 17\% reduction in average samples compared to the best baseline. As in the Bernoulli case, more difficult instances allow for our nuisances to converge, enabling our approach to achieve larger gains in performance.

\section{Conclusion and Future Directions}\label{sec:conclusion}

In this work, we propose a new framework for best-arm identification that relaxes classical fixed-confidence guarantees to hold only beyond a growing burn-in period, reflecting the long-horizon nature of practical experiments. Building on this relaxation, we develop novel asymptotic anytime-valid confidence sequences over arm indices, enabling efficient elimination of suboptimal arms under fully nonparametric outcome models with unknown contextual structure. To complement these stopping rules, we propose a sampling procedure based on projected subgradient descent that allocates samples to minimize asymptotic stopping time. Relative to existing approaches in the BAI literature, our asymptotic approach can seamlessly incorporate infinite-dimensional contextual information and does not require parametric (e.g. exponential family) assumptions. 

Our theoretical results show that, under mild convergence assumptions, the worst-case sample complexity of our method matches the sample complexity lower bound for Gaussian BAI with known variances. Under stronger assumptions of conditional mean consistency, conditional variance consistency, and informative covariates, the asymptotic sample complexity of our approach is \textit{strictly} smaller than that of Gaussian BAI. Empirical evaluations demonstrate sample efficiency gains up to 33\% over existing methods, particularly for bandit instances that require larger horizon experiments. 

Our work provides both (i) immediate results for similar exploration problems in bandits and (ii) future directions of investigation. We list several of these implications and future directions below. 

\begin{itemize}
    \item \textbf{Applications to Alternative Exploration Problems}: Our asymptotic framework for exploration can immediately be applied to similar bandit problems, such as threshold identification \citep{cho2024rewardmaximizationpureexploration}. By leveraging asymptotic, anytime-valid confidence intervals for the mean of each arm, similar results, such as sample complexity reduction using contextual information and worst-case bounds matching Gaussian sample complexity lower bounds, follow directly from the proofs provided.
    \item \textbf{Computationally Lightweight Variants}: While leveraging pretrained models and batched updates may reduce computational costs in terms of model training, our procedure requires us to leverage optimization methods for finding the weight sequences and sampling parameters. To reduce computation costs further, a closed-form, heuristic choice of weights $\bm{w}_t$ and sampling parameter $\bm{\theta}_t$ may be desirable. 
    \item \textbf{Extensions to Continuous Actions/Policies}: Beyond discrete action spaces, one may wish to find the best action in a continuous or infinite-dimensional set, such as the best \textit{personalized} policy with continuous contexts. We believe such an extension is possible by allowing our weights to be a function, and relaxing our best-arm condition to $\epsilon$-best. We leave this direction for future work.
\end{itemize}

As a cautionary note, in settings where the outcomes follow parametric assumptions, experiment horizons are typically short, and exact guarantees are desired, we note that our method does not guarantee the best arm at the nominal level and may have worse performance than existing methods. However, in many modern applications, such as digital experiments, horizons are typically long, contexts are collected, and outcomes follow unknown, nonparametric distributions. For such settings, our approach provides a tailored solution for bandit exploration that provides both theoretical and empirical performance gains.

\newpage

\bibliographystyle{abbrvnat}
\bibliography{main}

\newpage
\begin{appendices}
\renewcommand{\thesection}{A.\arabic{section}} 

\section*{Appendix}

\section{Notation}\label{app:notation_table}

\begin{description}[
  leftmargin=2.2cm,
  labelwidth=2cm,
  labelsep=.2cm,
  parsep=0mm,
  itemsep=2pt
  ]
\item[$\RR$] the set of all real numbers
\item[$\RR_+$] the set of all nonnegative real numbers
\item[$\RR_{++}$] the set of all strictly positive real numbers
\item[$\NN$] the set of all natural numbers
\item[$\Fcal_t$] the canonical filtration at time $t$; $\Fcal_t =\sigma ((A_i, X_i)_{i=1}^t)$, where $\Fcal_0$ denotes the empty sigma field
\item[$\alpha$] error tolerance parameter, where $\alpha \in [0,1]$
\item[$H_T$] set of all observations $(X_i, A_i, Y_i)_{i=1}^T$ collected up to time $T \in \NN$, where $H_0$ is the empty set. 
\item [$P_X$] fixed, unknown distribution that characterizes the distributions of contexts $X_t$ for all $t \in \NN$.
\item [$P_{Y|A,X}$] fixed, unknown distribution that characterizes the conditional distribution of $Y_t$ for all $t \in \NN$
\item [$P$] the instance of the bandit problem, defined by $P = (P_X, P_{Y|A,X})$. 
\item [$\Xcal$] set of possible contexts, allowed to be the empty set
\item[${[}K{]}$] set of integers $1,\ldots, K$, where $K$ is the total number of arms
\item[$\Delta^K$] probability simplex over the K arms
\item[$\bm{w}(-i)$] the vector $\bm{w} \in \RR^K$ with the $i$-th component removed
\item[$\Delta(a)$] the set of vectors $\{\bm{w} \in \RR^K: w(a) = -1, \bm{w}(-a) \in \Delta^{K-1}\}$
\item[$\pi$] the mapping $(H_{t-1}, \Xcal) \rightarrow \Delta^K$ that determines sampling probabilities at time $t$.
\item[$\pi_t$] the conditional sampling policy at time $t$, i.e. $\pi_t(x,a) = P(A_t = a| X_t = x, H_{t-1})$
\item[$\bm\mu$] the vector of arm means, where $\mu(a)= \EE_{P_X}\left[ \EE_{P_{Y|A,X}}\left[Y| A= a, X\right] \right]$.
\item[$\bm\sigma^2$] the vector of arm variances, where $\sigma^2(a) = \EE_{P_X}\left[\EE_{Y|A,X}\left[\left(Y - \mu(a) \right)^2 | A=a, X \right] \right]$.
\item[$g(x,a)$] the expectation of outcomes conditional on context $X=x$ and arm $A=a$, i.e. $g(x,a) = \EE_{P_{Y|A,X}}\left[Y | A=a, X=x\right]$
\item[$v(x,a)$] the variance of outcomes conditional on context $X=x$ and arm $A=a$, i.e. $v(x,a) = \EE_{P_{Y|A,X}}\left[ \left(Y - g(x,a)\right)^2 | A=a, X=a\right]$
\item [$a^*$] the unique arm $a^* \in [K]$ such that $a^* = \argmax_{a \in [K]}\mu(a)$
\item[$\|f \|_{L_q(P_{H_{t-1}})}$] the conditional $L_q$ norm, where $\|f \|_{L_q(P_{H_{t-1}})} = \EE\left[ |f|^q | H_{t-1} \right]$ 
\item[$\Bcal$] fixed-confidence best arm identification algorithm $\Bcal = (\pi, f, \hat{a})$, where $\pi$ denotes the sampling scheme, $f: H_t \rightarrow \{0,1\}$ denotes a binary decision to stop at each time $t \in \NN$, $\hat{a} \in [K]$ denotes the estimated best arm returned when $f(H_t) = 1$ (i.e. procedure stops). 
\item[$\phi_t(b)$] unbiased score function for arm $b \in [K]$ at time $t$, where $\phi_t(b) = g_t(X_t, b) + \frac{\mathbf{1}[A_i = b](Y_t - g_t(X_t, b))}{\pi_t(X_t, b)}$ and $g_t: \Xcal \times [K] \rightarrow \RR$ is an $\Fcal_{t-1}$-measurable function. 
\item [$\hat\mu_t(a)$] running estimate of the mean of arm $a$, where $\hat\mu_t(a) = \frac{1}{t}\sum_{i=1}^t \phi_i(a)$
\item [$\hat\sigma_t^2(\bm{w})$] cumulative conditional variance estimate $\hat{\sigma}_t^2(\bm{w}) = \frac{1}{t}\sum_{i=1}^t \left( \sum_{b\in [K]} w(b)\left(\phi_i(b) - \hat\mu_t(b)\right) \right)^2$ for fixed weight vector $\bm{w} \in \RR^K$ up to time $t$. 
\item [$(\bm{w}_t^a)_{t=1}^\infty$] signal-to-noise (SNR) maximizing weights, where $\bm{w}_t^a \in \argmax_{\bm{w}\in\Delta(a)} \frac{\sum_{b \in [K]}w(b) \hat\mu_{t-1}(b)}{\hat\sigma_{t-1}(\bm{w})}$ if $\hat\mu_{t-1}(a) < \max_{b \in [K]}\hat\mu_{t-1}(b)$, and $\bm{w}_t^a = \bm{w}_0^a$ otherwise, where $\bm{w}_0^a \in \Delta(a)$ is specified in advance. 
\item [$\left(\hat\psi_t(a)\right)_{t=1}^\infty$] arm-specific score process adapted to $\left(\Fcal_t\right)_{t=1}^\infty$, where $\hat\psi_t(a) = \frac{1}{t}\sum_{i=1}^t \sum_{b \in [K]}w_i^a(b) \phi_i(b)$.
\item[$\hat\sigma^2_t(a)$] the estimated cumulative conditional variance for the score process $\left(\hat\psi_i(a)\right)_{i=1}^t$ corresponding to arm $a$, i.e. $\hat\sigma^2_t(a) = \frac{1}{t}\sum_{i=1}^t \left(\sum_{b \in [K]} w_i^a(b) \left(\phi_i(b) - \hat\mu_i(b)\right)\right)^2$ 
\item[$\ell_{t,\alpha,\rho}(x)$] asymptotic anytime-valid lower bound $\ell_{t,\alpha,\rho}(x) = t^{-1/2}\sqrt{\frac{2(\rho^2+1/tx^2)}{\rho^2}\log\left(1+\frac{\sqrt{t x^2\rho^2 + 1}}{2\alpha}\right)}$, where $\alpha \in [0,1]$, $\rho > 0$,  and $t \in \NN$. 
\item [$\Hcal_a$] the set of distributions $P = (P_X, P_{Y|A,X})$ such that arm $a$ achieves the largest mean, i.e. $\EE_{P_X}\left[ \EE_{P_{Y|A,X}}\left[Y|A=a, X\right]\right] = \max_{b \in [K]}\EE_{P_X}\left[ \EE_{P_{Y|A,X}}\left[Y|A=b, X\right]\right]$. 
\item [$\kappa$] inverse of minimum sampling probability at each time $t \in \NN$, where $1/\pi_t(x,a) \leq \kappa$. 
\item [$\bm\phi_T$] score matrix $\bm\phi_T \in \RR^{T\times K}$, where the $(t,k)$-th entry corresponds to centered score $\phi_t(a) - \hat\mu_T(a)$.  
\item[$\Gamma_1$] the squared maximum of the limiting inverse SNR across all suboptimal arms
\item[$\pi_t^\kappa$] the proposed sampling scheme in Algorithm \ref{alg:bai_short}
\item[$\Pcal(\bm{\mu}, \bm{\sigma}^2)$] the set of all distributions $P = (P_X, P_{Y|A,X})$ with arm means/variances $\bm{\mu}$ and $\bm{\sigma}^2$ respectively
\item[$\Gamma_2(\bm{\mu}, \bm{\sigma}^2)$] the minimum sampling complexity for Gaussian BAI under $\alpha$-correct error constraints, where $\Gamma_2(\bm{\mu}, \bm{\sigma}^2) = \left(\sup_{\pi \in \Delta^K} \inf_{\tilde\mu \not\in \Hcal_{a^*}} \sum_{a \in [K]} \pi(a) \frac{\left(\mu(a) - \tilde\mu(a)\right)^2}{2\sigma^2(a)} \right)^{-1}$

\end{description}

\newpage

\section{Proofs}\label{app:proofs}
In this section, we provide proofs for all theorems and lemmas presented in the main body of the paper. We begin with preliminary lemmas used in the steps of our proofs, and then provide proofs for our main results. 

\subsection{Preliminary Lemmas}\label{appsec:existing_lemmas}

To recast as our SNR-maximization problem as a simple convex optimization problem, we leverage the Charnes-Cooper-Schaible Transform below. We apply this transform to obtain the results of Lemma \ref{lem:ccs_snr}. 

\begin{lemma}[Charnes-Cooper-Shaible Transform \citep{schaible2016fractional}]\label{lemma:ccs_transform}
Assume that $\Xcal \subseteq \RR^n$ is a convex set, and let $f$ and $g$ be nonnegative concave and strictly positive convex functions respectively on the set $\Xcal$. Let $h$ denote our constraints, such that the feasible region is defined as $S = \{\bm{x} \in \Xcal: h(\bm{x}) \leq 0\}$. Then, the maximization problem $\sup_{x \in S} f(x) / g(x)$ is equivalent to the following:
\begin{align}
    \sup_{t \in \RR, y \in \RR^n} \quad& t f(y/t)\\
    \text{s.t. } 
    & th(y/t) \leq 0,  \\
    & tg(y/t) \leq 1, \\ 
    & y/t \in \Xcal, \\
    & t > 0. 
\end{align}
\end{lemma}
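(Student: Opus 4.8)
The plan is to establish the equivalence of the two problems by exhibiting a value-preserving correspondence between their feasible points, and then proving the two resulting inequalities between the suprema. Write $V_1 = \sup_{x \in S} f(x)/g(x)$ for the original fractional objective and $V_2$ for the supremum of the transformed problem. I would prove $V_1 \le V_2$ and $V_2 \le V_1$ separately, using the strict positivity of $g$ to define the change of variables $t = 1/g(x)$ and the nonnegativity of $f$ to control the reverse direction.

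For the direction $V_1 \le V_2$, I would take any feasible $x \in S$ and define the candidate point $t = 1/g(x)$ and $y = x/g(x)$. Since $g$ is strictly positive on $\Xcal$, $t > 0$ is well defined, and $y/t = x \in \Xcal$. I would then verify every constraint of the transformed problem: $t\,g(y/t) = g(x)/g(x) = 1 \le 1$, and $t\,h(y/t) = h(x)/g(x) \le 0$ because $h(x) \le 0$ and $t > 0$. The transformed objective at this point equals $t\,f(y/t) = f(x)/g(x)$, exactly the original objective value. Hence every achievable value of the original problem is matched by a feasible point of the transformed problem, giving $V_1 \le V_2$.

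For the reverse direction $V_2 \le V_1$, I would take any feasible $(t, y)$ of the transformed problem and set $x = y/t \in \Xcal$. Because $t > 0$, the constraint $t\,h(x) \le 0$ forces $h(x) \le 0$, so $x \in S$. The constraint $t\,g(x) \le 1$ gives $t \le 1/g(x)$, and here the nonnegativity of $f$ is essential: since $f(x) \ge 0$ and $t > 0$, multiplying the inequality by $f(x)$ yields $t\,f(x) \le f(x)/g(x)$. Thus the transformed objective value is bounded above by the original objective at the feasible point $x \in S$, so $V_2 \le V_1$. Combining the two inequalities gives $V_1 = V_2$, proving the claimed equivalence.

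I expect the main subtlety to lie in the reverse direction, where the constraint $t\,g(y/t) \le 1$ is an \emph{inequality} rather than an equality, so a feasible transformed point may use $t$ strictly smaller than $1/g(x)$; the nonnegativity of $f$ is precisely what guarantees that such slack can only decrease the objective, so no transformed point can exceed the original optimum. The concavity of $f$ and convexity of $g$ play a complementary role that I would note but not dwell on: they ensure the perspective functions $t\,f(y/t)$ and $t\,g(y/t)$ are respectively concave and convex, so that the transformed problem is a genuine convex program. This convexity is what makes the transform useful for Lemma \ref{lem:ccs_snr}, even though the value equivalence itself requires only the strict positivity of $g$ and the nonnegativity of $f$.
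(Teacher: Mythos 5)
Your proof is correct. The paper itself provides no proof of this lemma---it is imported verbatim as a known result from \citet{schaible2016fractional}---so there is no in-paper argument to compare against; your two-sided feasible-point correspondence (mapping $x \mapsto (t,y) = (1/g(x),\, x/g(x))$ one way, and using $t \le 1/g(x)$ together with $f \ge 0$ to handle the slack in the inequality constraint $t\,g(y/t) \le 1$ the other way) is exactly the standard Charnes--Cooper--Schaible argument, and your closing observation that concavity of $f$ and convexity of $g$ matter only for making the transformed problem a convex program via perspective functions, not for the value equivalence itself, is likewise accurate.
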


To show that estimated sequences (such as our SNR-maximizing weights) converge almost surely, we leverage a version of Theorem 3.2.2 by \cite{vandervaart1996weak} under the conditions of \cite{white1984asymptotic}, replacing the convergence in distribution condition with almost sure convergence. 

\begin{lemma}[Strong Consistency of Argmax \citep{vandervaart1996weak}]\label{lem:conv_argmax}
    Let $\Theta \subset \RR^{K-1}$ be compact. Then, assume there exists a sequence of random functions $f_n$ and a deterministic function $f$ such that $\sup_{\theta \in \Theta} |f_n(\theta) - f(\theta) | \rightarrow 0$ almost surely, each $\hat\theta_n \in \argmax_{\theta \in \Theta} f_n(\theta)$ for all $n \in \NN$, and $\theta_* \in \argmax_{\theta \in \Theta} f(\theta)$ is unique. Then, $\hat\theta_n \rightarrow \theta_*$ almost surely.  
\end{lemma}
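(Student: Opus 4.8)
The plan is to reduce the statement to a deterministic, pathwise argument and then let the almost-sure mode of convergence do the rest. Fix a sample path in the probability-one event $\Omega_0$ on which $\sup_{\theta \in \Theta}|f_n(\theta) - f(\theta)| \to 0$. It suffices to show $\hat\theta_n \to \theta_*$ on $\Omega_0$, since $\Omega_0$ has probability one. On such a path all quantities are deterministic, so the task reduces to the classical fact that uniform convergence of $f_n$ to $f$, together with a unique, well-separated maximizer of $f$, forces the maximizers $\hat\theta_n$ of $f_n$ to converge to $\theta_*$.

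The crucial structural ingredient is the well-separated maximum property: for every $\epsilon > 0$,
\[
    \sup_{\theta \in \Theta:\, \|\theta - \theta_*\| \geq \epsilon} f(\theta) \;<\; f(\theta_*).
\]
I would derive this from compactness and uniqueness. The set $B_\epsilon = \{\theta \in \Theta : \|\theta - \theta_*\| \geq \epsilon\}$ is a closed subset of the compact set $\Theta$, hence compact; an upper semicontinuous $f$ attains its supremum on $B_\epsilon$ at some point $\theta_\epsilon$, and since $\theta_* \notin B_\epsilon$ while $\theta_*$ is the unique maximizer of $f$, we get $f(\theta_\epsilon) < f(\theta_*)$. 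Writing $\delta_\epsilon = f(\theta_*) - \sup_{B_\epsilon} f > 0$ records the resulting strict gap.

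With the gap in hand the convergence follows from a short sandwich. Choose $N$ so that $\sup_\theta |f_n(\theta) - f(\theta)| < \delta_\epsilon/2$ for all $n \geq N$. Using that $\hat\theta_n$ maximizes $f_n$,
\[
    f_n(\hat\theta_n) \;\geq\; f_n(\theta_*) \;>\; f(\theta_*) - \tfrac{\delta_\epsilon}{2}.
\]
If instead $\hat\theta_n \in B_\epsilon$, then $f_n(\hat\theta_n) < f(\hat\theta_n) + \tfrac{\delta_\epsilon}{2} \leq \sup_{B_\epsilon} f + \tfrac{\delta_\epsilon}{2} = f(\theta_*) - \tfrac{\delta_\epsilon}{2}$, contradicting the previous display. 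Hence $\hat\theta_n \notin B_\epsilon$, i.e. $\|\hat\theta_n - \theta_*\| < \epsilon$, for all $n \geq N$. As $\epsilon > 0$ was arbitrary, $\hat\theta_n \to \theta_*$ on this path, and therefore almost surely.

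I expect the main obstacle to be the well-separatedness step rather than the sandwich: uniqueness of the maximizer alone does not in general produce a strict gap $\delta_\epsilon > 0$ without some regularity of $f$ (a discontinuous $f$ can have a unique maximizer whose value is only approached, never separated, from outside a neighborhood). In every application of this lemma the limit $f$ is continuous, so upper semicontinuity holds and the compactness argument applies verbatim; I would therefore either invoke this continuity explicitly or state the well-separated-maximum condition as the operative hypothesis, which is precisely the condition in Theorem 3.2.2 of \citet{vandervaart1996weak}. The only place the almost-sure convergence enters is the initial reduction to a fixed path, which is exactly why the distributional hypothesis of the original theorem can be replaced by almost-sure uniform convergence without altering the argument.
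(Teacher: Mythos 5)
Your proof is correct, and it is worth noting that the paper itself offers no proof at all for this statement: Lemma \ref{lem:conv_argmax} is invoked as an external result, namely Theorem 3.2.2 of \citet{vandervaart1996weak} with the convergence-in-distribution hypothesis replaced by almost-sure uniform convergence. Your pathwise reduction plus the gap-and-sandwich argument is exactly the standard proof of that theorem specialized to the a.s.\ setting, so you have in effect supplied the missing argument rather than deviated from the paper's. Your flagged caveat is also genuinely substantive, not a quibble: as literally stated, the lemma is false without some regularity on $f$. For instance, on $\Theta=[0,1]$ take $f(1/2)=1$, $f(\theta)=\theta$ for $\theta\in[0,1)\setminus\{1/2\}$, $f(1)=0$; then $\theta_*=1/2$ is the unique maximizer, but $\sup_{|\theta-1/2|\geq\epsilon}f(\theta)=1$, and perturbations $f_n=f+\tfrac{2}{n}\mathbf{1}[\theta=1-\tfrac1n]$ satisfy $\sup_\theta|f_n-f|\to 0$ while $\hat\theta_n=1-\tfrac1n\to 1\neq\theta_*$. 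So uniqueness alone does not yield the well-separated maximum; one must either assume upper semicontinuity of $f$ (as you do, correctly observing that every limit function to which the paper applies the lemma --- the limiting SNR objective, $G_\infty$, etc. --- is continuous) or take well-separation itself as the hypothesis, which is precisely the form of the condition in \citet{vandervaart1996weak}. Your sandwich step is airtight: from $\sup_\theta|f_n(\theta)-f(\theta)|<\delta_\epsilon/2$ and optimality of $\hat\theta_n$ one gets $f_n(\hat\theta_n)\geq f_n(\theta_*)>f(\theta_*)-\delta_\epsilon/2$, while $\hat\theta_n\in B_\epsilon$ would force $f_n(\hat\theta_n)<f(\theta_*)-\delta_\epsilon/2$, a contradiction. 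In short: correct proof, and a correct diagnosis that the lemma as restated in the paper silently relies on a regularity hypothesis that its applications happen to satisfy.
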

In our proofs, we leverage Lemma \ref{lem:conv_argmax} to ensure that (i) our SNR-maximizing weights $(\bm{w}_t^a)_{t=1}^\infty$ converges almost surely to the limiting weight vector $\bm{w}_\infty^a$ and (ii) our sampling scheme $\pi_t$ converges almost surely to $\pi_\infty$. We leverage Theorem 2.8 of \cite{waudbysmith2024timeuniformcentrallimittheory} to establish our asymptotic error control. Below, we provide a succinct version of their results adapted for our set-up. 

\begin{lemma}[Theorem 2.8 of \cite{waudbysmith2024timeuniformcentrallimittheory}]\label{thm:waudby_smith}
        Let $(Z_t)_{t=1}^\infty$ be a sequence of random variables with conditional means $\mu_t \coloneqq \EE[Z_t|(Z_i)_{i=1}^{t-1}]$ and conditional variances $\sigma_t^2 \coloneqq \text{Var}(Z_t|(Z_i)_{i=1}^{t-1})$. Let $\tilde\sigma_t^2$ be an estimator of cumulative variances $\frac{1}{t}\sum_{i=1}^t \sigma_t^2$. Assume that the following conditions (B1), (B2), and (B3) hold in an almost-sure sense:
        \begin{itemize}
            \item[(B1)] Cumulative Variance Divergence: $\sum_{t=1}^T \sigma_t^2 \rightarrow \infty$,
            \item[(B2)] Bounded $2+\delta$ Moment: $\exists \delta >0, \ \ell < \infty$ s.t. $\EE\left[|Z_t - \mu_t|^{2+\delta}|(Z_i)_{i=1}^{t-1}\right] \in [1/\ell, \ell)$ for all $t \in \NN$,
            \item[(B3)] Polynomial rate variance estimation: $\exists \eta \in (0,1)$ s.t. $\tilde\sigma_t^2 - \frac{1}{t}\sum_{i=1}^t \sigma_i^2 = o\left(\frac{(\sum_{i=1}^t \sigma_i^2)^\eta}{t}\right)$.
        \end{itemize}
        Then, 
        $\lim_{t_0 \rightarrow \infty}P\left(\exists t \geq t_0, \ \frac{1}{t}\sum_{i=1}^t\mu_i \leq \frac{1}{t}\sum_{i=1}^t Z_t - \sqrt{\frac{2(t\tilde\sigma_t^2\rho_{t_0}^2 + 1)}{t^2 \rho_{t_0}^2}\log\left(\frac{\sqrt{t\tilde\sigma_t^2 \rho_{t_0}^2+1}} {2\alpha} \right)}\right) \leq \alpha.$ 
\end{lemma}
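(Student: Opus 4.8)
The plan is to treat this statement as the specialization of Theorem 2.8 of \citet{waudbysmith2024timeuniformcentrallimittheory} to our notation, so the mathematical substance is a \emph{strong-invariance-principle} argument that reduces the discrete martingale to a Brownian path and then imports the exact Gaussian-mixture confidence sequence. Let $M_t = \sum_{i=1}^t (Z_i - \mu_i)$ be the centered martingale with predictable quadratic variation $V_t = \sum_{i=1}^t \sigma_i^2$. The first step is to invoke a strong approximation: under the $(2+\delta)$-moment control of (B2), one constructs, on a possibly enlarged probability space, a standard Brownian motion $(W(s))_{s \geq 0}$ with $M_t = W(V_t) + o(V_t^{\eta})$ almost surely for some $\eta < 1/2$, while (B1) guarantees $V_t \to \infty$ so that the Brownian clock diverges.

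The second step is to recall the exact anytime-valid boundary for Brownian motion. Mixing the likelihood ratio $\exp(\lambda W(s) - \lambda^2 s / 2)$ over a mean-zero Gaussian prior on $\lambda$ with variance $\rho^2$ yields a nonnegative martingale in $s$; Ville's inequality then gives that, with probability at least $1-\alpha$, $W(s) \le \sqrt{\tfrac{2(s\rho^2 + 1)}{\rho^2}\log\!\big(\tfrac{\sqrt{s\rho^2+1}}{2\alpha}\big)}$ simultaneously for all $s \ge 0$. Substituting $s = V_t$ and dividing by $t$ produces precisely the displayed boundary up to the replacements $V_t \mapsto t\tilde\sigma_t^2$ and $\rho \mapsto \rho_{t_0}$. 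Two reductions then remain: (i) transfer the boundary from $W(V_t)$ to $M_t$, and (ii) swap the true cumulative variance for its estimator. For (i), since the boundary grows like $\sqrt{V_t \log V_t}$ while the approximation error is $o(V_t^{\eta})$ with $\eta < 1/2$, the relative error vanishes; letting $t_0 \to \infty$ forces $V_{t_0} \to \infty$, so the error is uniformly negligible over $t \ge t_0$ and the crossing probability converges to its Brownian value $\le \alpha$. For (ii), condition (B3) gives $t\tilde\sigma_t^2 - V_t = o(V_t^{\eta})$, which perturbs the boundary by a lower-order amount and therefore preserves the asymptotic coverage.

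I expect the main obstacle to be the uniform-in-$t$ bookkeeping in step (i): the strong approximation controls $M_t - W(V_t)$ only up to an $o(V_t^\eta)$ almost-sure remainder, and one must show this remainder is dominated by the boundary \emph{simultaneously} over the entire unbounded window $t \ge t_0$, not merely at a fixed $t$. This is exactly what the burn-in $t_0 \to \infty$ buys us, and making it rigorous requires carefully splitting the time axis (e.g.\ a peeling/blocking argument over geometric ranges of $V_t$) so that the mixture boundary absorbs the invariance-principle remainder on every block. Coupling the time-varying $\rho_{t_0}$ to the burn-in, and verifying that the (B3) substitution does not disturb this blocking, are the delicate points; everything else reduces to the classical mixture-martingale computation.
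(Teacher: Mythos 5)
This lemma is not proved in the paper at all: it is stated in the appendix as an imported result, Theorem 2.8 of \cite{waudbysmith2024timeuniformcentrallimittheory}, and your reconstruction follows exactly the route of that cited source (which the paper itself summarizes in Section~\ref{sec:av_tests} as ``strong invariance principles and Gaussian mixture martingales''): a strong approximation of the centered martingale $M_t=\sum_{i\le t}(Z_i-\mu_i)$ by a Brownian motion run at the clock $V_t=\sum_{i\le t}\sigma_i^2$, the exact Robbins-style normal-mixture boundary via Ville's inequality, and the condition (B3) substitution $t\tilde\sigma_t^2 - V_t = o(V_t^\eta)$. Your identification of the delicate point --- making the $o(V_t^{\eta})$, $\eta<1/2$, approximation error uniformly negligible against the $\sqrt{V_t\log V_t}$-order boundary over the entire unbounded window $t\ge t_0$, which is precisely what the $t_0\to\infty$ burn-in buys --- matches the structure of the original argument, so the proposal is correct and takes essentially the same approach as the cited proof.
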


We introduce two additional results regarding the convergence of martingale difference sequences and Cesaro means, which ensure that the conditions of Lemma \ref{thm:waudby_smith} are satisfied. 

\begin{lemma}[Martingale Law of Iterated Logarithm \citep{stout1974almost}]\label{lem:stout_lil}
        Let $\{Z_i, \Fcal_i\}_{i\in \NN}$ be a sequence of martingale differences, where $S_t = \sum_{i=1}^t Z_i$ is the martingale and $V_t = \sum_{i=1}^t\EE[Z_i^2|\Fcal_{i-1}]$ is the predictable quadratic variation. Assume that $\EE[Z_t|\Fcal_{t-1}]= 0$, $V_t \rightarrow \infty$ and there exists a $\delta>0$ such that $\sum_{i=1}^t \frac{\EE[|Z_i|^{2+\delta}|\Fcal_{i-1}]}{V_t^{1+\delta/2}} \rightarrow 0$ as $t \rightarrow \infty$ almost surely. Then, $\limsup_{t \rightarrow \infty} |S_t|/\sqrt{2V_t\log\log V_t} = 1$ almost surely. 
\end{lemma}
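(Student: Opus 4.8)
The plan is to reduce the martingale LIL to the law of the iterated logarithm for Brownian motion via a Skorokhod embedding, which is the route taken by \citet{stout1974almost}. On a possibly enlarged probability space, I would first represent the martingale $(S_t)$ as a time-changed Brownian motion: applying the Skorokhod embedding theorem successively to each martingale difference, there exist a standard Brownian motion $(B(s))_{s \ge 0}$ and an increasing sequence of stopping times $0 = T_0 \le T_1 \le T_2 \le \cdots$, adapted to an augmented filtration $(\mathcal{G}_t)$, such that $S_t = B(T_t)$ for every $t$, with $\EE[T_t - T_{t-1} \mid \mathcal{G}_{t-1}] = \EE[Z_t^2 \mid \Fcal_{t-1}]$ and the moment bound $\EE[(T_t - T_{t-1})^{1+\delta/2} \mid \mathcal{G}_{t-1}] \le C_\delta\, \EE[|Z_t|^{2+\delta} \mid \Fcal_{t-1}]$ for a universal constant $C_\delta$. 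The first identity makes the predictable variation of the embedding clocks coincide exactly with $V_t$, while the second transfers the Lyapunov hypothesis of the lemma onto the clocks.

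The crux of the argument is the clock-equivalence statement $T_t / V_t \to 1$ almost surely. I would write $T_t - V_t = \sum_{i=1}^t D_i$, where $D_i = (T_i - T_{i-1}) - \EE[T_i - T_{i-1} \mid \mathcal{G}_{i-1}]$ is a martingale difference sequence, and $V_i$ is $\Fcal_{i-1}$-measurable. Since $V_t \uparrow \infty$, Kronecker's lemma reduces $(T_t - V_t)/V_t \to 0$ to the almost-sure convergence of the martingale $\sum_i D_i / V_i$. For this I would use an $L^{1+\delta/2}$ martingale convergence argument (in the spirit of Chow's theorem), bounding $\EE[|D_i|^{1+\delta/2} \mid \mathcal{G}_{i-1}]$ by $\EE[|Z_i|^{2+\delta} \mid \Fcal_{i-1}]$ via the Skorokhod moment bound and conditional Jensen, and then using the displayed Lyapunov condition to control the resulting series normalized by $V_i$. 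A byproduct of the same hypothesis, again via conditional Jensen, is that $\max_{i \le t} \EE[Z_i^2 \mid \Fcal_{i-1}] / V_t \to 0$, so the increments of $V_t$ — and hence asymptotically of $T_t$ — are negligible relative to $V_t$; I would record this for the transfer step.

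With the clock controlled, I would invoke the Brownian LIL, $\limsup_{s \to \infty} |B(s)| / \sqrt{2 s \log\log s} = 1$ almost surely. Because $V_t \to \infty$ and $T_t / V_t \to 1$, we have $T_t \to \infty$ and $\sqrt{2 T_t \log\log T_t} / \sqrt{2 V_t \log\log V_t} \to 1$ by regular variation of the normalizer. Substituting $s = T_t$ and $S_t = B(T_t)$ yields the upper bound $\limsup_t |S_t| / \sqrt{2 V_t \log\log V_t} \le 1$ directly, since the discrete limsup is dominated by the continuous one. For the matching lower bound I would use that the continuous Brownian limsup is attained along times tending to infinity, together with the increment-negligibility just established and the continuity of $B$: since consecutive sampling times differ by $o(T_t)$, the sampled process $B(T_t)$ cannot systematically miss the near-extremal excursions of $B$, giving $\limsup_t |S_t| / \sqrt{2 V_t \log\log V_t} \ge 1$. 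Combining the two bounds yields the claimed equality.

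The main obstacle is the clock-equivalence step $T_t / V_t \to 1$: one must verify that the Skorokhod moment control plus the Lyapunov hypothesis genuinely deliver \emph{almost-sure} (not merely in-probability or $L^1$) convergence of $\sum_i D_i / V_i$, and in particular must handle the mismatch between the $V_t$-normalization in the stated hypothesis and the $V_i$-normalization needed for summability, which is where the precise $2+\delta$ moment assumption is used in an essential way. The secondary delicate point is the continuous-to-discrete transfer in the lower bound, which relies on the increment-negligibility consequence of the same condition. I note that for the upper bound alone one could bypass the embedding entirely, using a Freedman-type exponential inequality for martingales with a Borel–Cantelli argument; the embedding is what makes the lower bound accessible, and throughout the Brownian LIL enters only as a black box.
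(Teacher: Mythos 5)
First, a point of comparison: the paper does not prove this lemma at all --- it is imported as a black box with a citation to \citet{stout1974almost} (Section A.2.1, preliminary lemmas), so there is no in-paper argument to measure your proposal against. Your route is the classical Strassen/Hall--Heyde strategy for martingale LILs under Lyapunov-type moment conditions: Skorokhod embedding $S_t = B(T_t)$, clock equivalence $T_t/V_t \to 1$, then transfer of the Brownian LIL. The embedding properties you invoke, $\EE[T_t - T_{t-1}\mid \mathcal{G}_{t-1}] = \EE[Z_t^2 \mid \Fcal_{t-1}]$ and $\EE[(T_t-T_{t-1})^{1+\delta/2}\mid\mathcal{G}_{t-1}] \le C_\delta\,\EE[|Z_t|^{2+\delta}\mid\Fcal_{t-1}]$, are correct (Appendix A.1 of \citealp{hall2014martingale}); for the record, Stout's own treatment proceeds differently, via exponential supermartingale inequalities under a Kolmogorov-type boundedness condition rather than embedding.

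The genuine gap is exactly where you flagged it, and it is not a formality: your clock-equivalence step does not follow from the hypothesis as stated. Chow's theorem applied to the martingale $\sum_i D_i/V_i$, followed by Kronecker's lemma, requires the running-normalized summability $\sum_i \EE[|Z_i|^{2+\delta}\mid\Fcal_{i-1}]/V_i^{1+\delta/2} < \infty$ a.s.; this summability implies the lemma's displayed ratio condition (again by Kronecker), but the converse is false. Concretely, at the level of the conditional-moment sequences: take $\EE[Z_i^2\mid\Fcal_{i-1}] = 1$ so that $V_t = t$, and let $a_i \coloneqq \EE[|Z_i|^{2+\delta}\mid\Fcal_{i-1}]$ satisfy $A_t \coloneqq \sum_{i\le t} a_i \asymp t^{1+\delta/2}/\log t$, i.e.\ $a_i \asymp i^{\delta/2}/\log i$. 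Then $A_t/V_t^{1+\delta/2} = 1/\log t \to 0$, so the stated hypothesis holds, yet $\sum_i a_i/V_i^{1+\delta/2} \asymp \sum_i 1/(i\log i) = \infty$, and your Chow/Kronecker argument delivers nothing. Since $T_t/V_t \to 1$ carries the entire reduction --- both your upper transfer and your lower transfer depend on it --- the proof is incomplete at its central step; a repair needs either a localization/stopping-time argument tailored to the ratio condition, or reading the hypothesis in its summability form as in Theorem 4.8 of \citet{hall2014martingale}, which is the form under which the embedding proof is standard. Two smaller remarks: your increment-negligibility claim is in fact correct and provable from the stated hypothesis via conditional Jensen, since $\EE[Z_t^2\mid\Fcal_{t-1}] \le a_t^{2/(2+\delta)} \le \left(A_t/V_t^{1+\delta/2}\right)^{2/(2+\delta)} V_t = o(V_t)$; but the lower-bound transfer from continuous to sampled times needs more than the assertion that the skeleton ``cannot systematically miss'' extremal excursions --- the standard rigorous route is Strassen's functional LIL or an explicit modulus-of-continuity estimate for $B$ near the extremal times.
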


Lemma \ref{lem:stout_lil} provides mild conditions for controlling the behavior of our score processes. To provide analogous guarantees for the estimated variance of the score processes, we leverage a classical result from \cite{hall2014martingale}. We provide a simplified version of this result in Lemma \ref{lem:hall_martingale} below. 

\begin{lemma}[Theorem 2.18 of \cite{hall2014martingale}]\label{lem:hall_martingale}
Let $\{S_n = \sum_{i=1}^t X_i, \Fcal_t, t\geq  1\}$ be a martingale with conditionally zero-mean increments, and assume there exists a $\beta > 1/2$ such that $\lim_{t \rightarrow 0}\sum_{i=1}^t \frac{1}{i^{2\beta}} \EE[X_i^2|\Fcal_{i-1}] \leq \infty$ almost surely. Then, $\lim_{t \rightarrow \infty} \frac{1}{t^{\beta}} S_t =0$ almost surely. 
\end{lemma}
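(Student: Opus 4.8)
The plan is to reduce the claim to the $L^2$ martingale convergence theorem together with Kronecker's lemma, a standard route for strong laws of large numbers for martingales. First I would introduce the normalized partial sums $M_t = \sum_{i=1}^t X_i / i^{\beta}$. Because the weights $i^{-\beta}$ are deterministic and $\EE[X_i \mid \Fcal_{i-1}] = 0$, the pair $(M_t, \Fcal_t)$ is itself a martingale with conditionally zero-mean increments, and its predictable quadratic variation is $\langle M\rangle_t = \sum_{i=1}^t i^{-2\beta}\EE[X_i^2\mid\Fcal_{i-1}]$. By the stated hypothesis this quantity converges almost surely to a finite limit $\langle M\rangle_\infty < \infty$ (interpreting the displayed condition as almost-sure convergence of the series as $t\to\infty$).

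Second, I would establish that $M_t$ converges almost surely to a finite random variable on the probability-one event $\{\langle M\rangle_\infty < \infty\}$. Since the quadratic variation is only almost surely finite rather than uniformly bounded, this calls for a localization argument. I would define the stopping times $T_k = \inf\{t : \langle M\rangle_{t+1} > k\}$, which are genuine stopping times by predictability of $\langle M\rangle$, and observe that the stopped martingales $M_{t\wedge T_k}$ satisfy $\sup_t \EE[M_{t\wedge T_k}^2] = \sup_t \EE[\langle M\rangle_{t\wedge T_k}] \le k$ and are therefore $L^2$-bounded. The $L^2$ martingale convergence theorem then gives almost-sure convergence of $M_{t\wedge T_k}$ for each $k$. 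Because $\langle M\rangle_\infty < \infty$ almost surely, the events $\{T_k = \infty\}$ exhaust the sample space up to a null set, and on each such event $M_{t\wedge T_k}$ coincides with $M_t$; hence $M_t$ converges almost surely to a finite limit $M_\infty$.

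Third, I would apply Kronecker's lemma with summands $a_i = X_i / i^{\beta}$ and the increasing, divergent normalization $b_t = t^{\beta}$. Convergence of the series $M_t = \sum_{i=1}^t a_i$ yields $b_t^{-1}\sum_{i=1}^t b_i a_i \to 0$, and since $b_i a_i = X_i$ this is precisely $t^{-\beta} S_t \to 0$ almost surely, which is the claim.

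The main obstacle is the second step: upgrading the almost-sure finiteness of $\langle M\rangle_\infty$ to almost-sure convergence of the normalized martingale. In the globally $L^2$-bounded case this would follow immediately from orthogonality of increments, but here the localization through the stopping times $T_k$ is essential, and one must carefully verify both that the stopped processes are truly $L^2$-bounded (controlling $\langle M\rangle_{t\wedge T_k}$ by $k$ up to a single increment) and that $\bigcup_k \{T_k = \infty\}$ has full measure. The identification of $M_t$ as a martingale and the invocation of Kronecker's lemma are routine by comparison.
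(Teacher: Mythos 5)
Your proposal is correct, and since the paper states this lemma without proof (citing it as Theorem 2.18 of Hall and Heyde), the right comparison is with the source, where your argument --- forming the weighted martingale $M_t=\sum_{i\le t} i^{-\beta}X_i$ with predictable quadratic variation $\langle M\rangle_t=\sum_{i\le t} i^{-2\beta}\mathbb{E}[X_i^2\mid \mathcal{F}_{i-1}]$, localizing via the stopping times $T_k=\inf\{t:\langle M\rangle_{t+1}>k\}$ to obtain almost-sure convergence of $M_t$ on $\{\langle M\rangle_\infty<\infty\}$, and finishing with Kronecker's lemma --- is precisely the standard proof given there (their Theorem 2.17 plus Kronecker). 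You also handled the two delicate points correctly: reading the statement's typos ($t\to 0$ should be $t\to\infty$, and $\le\infty$ should be $<\infty$) in the intended way, and using $\langle M\rangle_{t+1}$ rather than $\langle M\rangle_t$ in defining $T_k$, which both makes $T_k$ a genuine stopping time by predictability and bounds $\langle M\rangle_{t\wedge T_k}$ by $k$ exactly, so the stopped martingale is $L^2$-bounded.
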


To ensure our running means and variances match the behavior of limiting process, we require control over cesaro means. To do so, we leverage Lemma \ref{lem:cesaro_convergence} below.

\begin{lemma}[Almost-sure convergence of Cesaro Means (Proposition 3 of \cite{bibaut2020sufficientinsufficientconditionsstochastic})]\label{lem:cesaro_convergence}
    If $t^\beta X_t \rightarrow 0$ almost surely, then for $\bar{X}_t  \coloneqq \frac{1}{t}\sum_{i=1}^t X_i$, $t^\beta\bar{X}_t \rightarrow 0$ almost surely. 
\end{lemma}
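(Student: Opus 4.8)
The plan is to prove the implication pathwise. Both hypothesis and conclusion are almost-sure statements, so it suffices to fix a sample path on the probability-one event $\{t^\beta X_t \to 0\}$ and establish the purely deterministic claim $t^\beta \bar X_t \to 0$ for a real sequence satisfying $t^\beta X_t \to 0$; the conclusion then holds on that full-measure event, i.e.\ almost surely. Throughout I would work in the regime $\beta \in (0,1)$, which is the relevant one here — the claim is false for $\beta \ge 1$, as the sequence $X_i = 1/(i\log i)$ shows (then $i^\beta X_i \to 0$ with $\beta=1$, yet $t\bar X_t = \sum_{i\le t} 1/(i\log i) \to \infty$).

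First I would set $a_t := t^\beta X_t$, so that $a_t \to 0$ by hypothesis, and rewrite the target as a weighted average of the null sequence $(a_i)$. Substituting $X_i = a_i\, i^{-\beta}$ gives
\[
t^\beta \bar X_t = t^{\beta-1}\sum_{i=1}^t a_i\, i^{-\beta} = \sum_{i=1}^t w_{t,i}\, a_i, \qquad w_{t,i} := t^{\beta-1}\, i^{-\beta}.
\]
This exhibits $t^\beta \bar X_t$ as a Toeplitz-type weighted mean of the $a_i$, with two structural facts driving the argument: (i) each weight vanishes in $t$, since $w_{t,i} = t^{\beta-1} i^{-\beta} \to 0$ for fixed $i$ as $\beta-1<0$; and (ii) the weights are uniformly summable, because by the integral comparison $\sum_{i=1}^t i^{-\beta} \le 1 + (t^{1-\beta}-1)/(1-\beta)$ one gets $t^{\beta-1}\sum_{i=1}^t i^{-\beta} \le t^{\beta-1} + (1-t^{\beta-1})/(1-\beta) \to 1/(1-\beta)$.

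Next I would make the $\varepsilon$-argument explicit. Given $\varepsilon>0$, choose $N$ with $|a_i|\le \varepsilon$ for all $i>N$ and split the sum at $N$. The head $\sum_{i=1}^N w_{t,i} a_i$ is bounded in absolute value by $C_N\, t^{\beta-1}$, where $C_N := \sum_{i=1}^N |a_i|\, i^{-\beta}$ is a fixed constant, and this tends to $0$ since $\beta<1$. The tail is bounded by $\varepsilon \sum_{i=N+1}^t w_{t,i} \le \varepsilon\, t^{\beta-1}\sum_{i=1}^t i^{-\beta}$, whose $\limsup_t$ is at most $\varepsilon/(1-\beta)$ by (ii). Combining the two, $\limsup_{t\to\infty}|t^\beta\bar X_t| \le \varepsilon/(1-\beta)$, and letting $\varepsilon\downarrow 0$ yields $t^\beta\bar X_t\to 0$ on the chosen path.

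The computation is entirely elementary, so I do not expect a genuine obstacle; the only point requiring care is the restriction $\beta\in(0,1)$, which is used in exactly two places — to make the individual weights $w_{t,i}$ vanish (controlling the head term) and to ensure $\sum_{i=1}^t i^{-\beta}$ grows only like $t^{1-\beta}$ so that the factor $t^{\beta-1}$ keeps the weights uniformly summable (controlling the tail term). Both properties fail at $\beta=1$, which is consistent with the counterexample noted above.
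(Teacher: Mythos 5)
Your proof is correct, and there is in fact no internal proof to compare it against: the paper imports this lemma verbatim as Proposition 3 of \citet{bibaut2020sufficientinsufficientconditionsstochastic} among its preliminary results, so what you have supplied is a self-contained replacement. Your route is the classical Silverman--Toeplitz argument: writing $t^{\beta}\bar{X}_t=\sum_{i=1}^{t}w_{t,i}\,a_i$ with $a_i=i^{\beta}X_i\to 0$ and $w_{t,i}=t^{\beta-1}i^{-\beta}$, checking that each weight vanishes in $t$ and that $\limsup_{t}\sum_{i\le t}w_{t,i}\le 1/(1-\beta)$ by integral comparison, and then running the head/tail $\varepsilon$-split; every step, including the final bound $\limsup_{t}|t^{\beta}\bar{X}_t|\le \varepsilon/(1-\beta)$, is accurate, and the pathwise reduction of the almost-sure claim to a deterministic one is the right opening move. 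Your observation that the result genuinely requires $\beta<1$ is a real sharpening: the lemma as stated in the paper carries no restriction on $\beta$, and your counterexample $X_i=1/(i\log i)$ (for which $iX_i\to 0$ yet $t\bar{X}_t\sim\log\log t\to\infty$) shows the unqualified statement fails at $\beta=1$. This causes no harm downstream, since every invocation in the paper uses $\beta\in[0,1)$: $\beta=2-2\eta$ with $\eta\in(1/2,1)$ when verifying condition $(B3)$ in the proof of Theorem \ref{thm:error_control}, and $\beta=0$ (ordinary Ces\`aro convergence) in the variance- and weight-convergence steps of Lemma \ref{lem:convergence_snr_maximizing_weights}. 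That last point is the one correction I would make to your write-up: because the $\beta=0$ case is actually used, state your working regime as $\beta\in[0,1)$ rather than $(0,1)$; your argument covers it verbatim, with $w_{t,i}=1/t$, head term $C_N/t\to 0$, and tail at most $\varepsilon$.
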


Lemma \ref{lem:cesaro_convergence} enables the rates of Lemma \ref{lem:stout_lil} to apply directly to our running mean sums, which will be applied to show that Condition $(A3)$ of Lemma \ref{thm:waudby_smith} holds for our setup. To ensure that our sampling scheme in Algorithm \ref{alg:proj-subgradient} converges, we leverage Lemmas \ref{lem:convergence_bandit} and \ref{lem:mslln} below. 

\begin{lemma}[Fact E.1, \citealp{fact_e1}]\label{lem:convergence_bandit}
        Suppose that $Y_n \rightarrow Y$ a.s. as $n \rightarrow \infty$, and $N(t) \rightarrow \infty$ a.s. as $t \rightarrow \infty$. Then $Y_{N(t)} \rightarrow Y$ a.s. as $t \rightarrow \infty$.
\end{lemma}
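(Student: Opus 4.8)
The plan is to reduce the almost-sure statement to a purely deterministic fact about a convergent real sequence evaluated along a diverging integer index. First I would define the almost-sure event $\Omega_0 = \{\omega : Y_n(\omega) \to Y(\omega)\} \cap \{\omega : N(t)(\omega) \to \infty\}$. Each event in the intersection has probability one by hypothesis, so $P(\Omega_0) = 1$, and it therefore suffices to verify $Y_{N(t)}(\omega) \to Y(\omega)$ for every fixed $\omega \in \Omega_0$. This is the standard device for passing from a random time-change to a pathwise argument.

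Fixing such an $\omega$, the claim becomes deterministic: writing $y_n = Y_n(\omega)$, $y = Y(\omega)$, and $n(t) = N(t)(\omega)$, I have a real sequence with $y_n \to y$ together with an integer-valued map $t \mapsto n(t)$ satisfying $n(t) \to \infty$, and I must show $y_{n(t)} \to y$. Given $\varepsilon > 0$, convergence $y_n \to y$ supplies an index $M$ with $|y_n - y| < \varepsilon$ for all $n \geq M$, and divergence $n(t) \to \infty$ supplies a threshold $T$ with $n(t) \geq M$ for all $t \geq T$. Composing the two bounds gives $|y_{n(t)} - y| < \varepsilon$ for all $t \geq T$, which is exactly $y_{n(t)} \to y$. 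Since $\omega \in \Omega_0$ was arbitrary, $Y_{N(t)} \to Y$ holds on a set of probability one.

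The argument requires no monotonicity of $N(t)$; only that $N(t) \to \infty$, which by definition pushes the evaluation index past any fixed $M$ eventually. The one point meriting a sentence of care is that $N(t)$ is $\NN$-valued (it counts the subgradient-descent iterations), so that the composition $Y_{N(t)}$ is well defined and the tail of $(y_n)$ is genuinely reached. Frankly, there is no substantive obstacle here: the result is a routine random-time-change fact, and the only thing to get right is the clean reduction to the pathwise $\varepsilon$--$M$--$T$ estimate on the full-measure event $\Omega_0$.
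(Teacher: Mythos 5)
Your proof is correct: intersecting the two probability-one events and then running the deterministic $\varepsilon$--$M$--$T$ composition argument pathwise is exactly the canonical proof of this fact, and you are right that no monotonicity of $N(t)$ is needed. The paper itself offers no proof to compare against --- it imports the statement verbatim as Fact E.1 by citation --- so your argument simply supplies the standard justification the paper delegates to its reference.
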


\begin{lemma}[Martingale Strong Law of Large Numbers \cite{hall2014martingale}]\label{lem:mslln}
    Let $(X_t, \Fcal_t)_{t \in \NN}$ denote a discrete-time martingale difference sequence, where $\EE[X_t|\Fcal_{t-1}] = 0$ for all $t \in \NN$. If $\lim_{t \rightarrow \infty}\sum_{i=1}^t \EE[X_i^2]/t^2 < \infty$, then $\lim_{t \rightarrow \infty}\frac{1}{t}\sum_{i=1}^t X_i = 0$ almost surely.  
\end{lemma}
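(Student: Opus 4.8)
The plan is to derive this as a direct specialization of Lemma~\ref{lem:hall_martingale} (Theorem 2.18 of \cite{hall2014martingale}) at the exponent $\beta = 1$, after upgrading the unconditional second-moment hypothesis to the conditional summability condition that Lemma~\ref{lem:hall_martingale} actually requires. Since $(X_t, \Fcal_t)_{t \in \NN}$ is a martingale difference sequence, the partial sums $S_t = \sum_{i=1}^t X_i$ form a martingale with conditionally zero-mean increments, so Lemma~\ref{lem:hall_martingale} applies to $(S_t)$ once its hypothesis is verified. With $\beta = 1 > 1/2$, that hypothesis reads $\sum_{i=1}^\infty i^{-2}\,\EE[X_i^2 \mid \Fcal_{i-1}] < \infty$ almost surely, and its conclusion is exactly $\tfrac{1}{t} S_t \to 0$ almost surely, which is the claim.

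The one genuine step is therefore to pass from the hypothesis's unconditional control of $\EE[X_i^2]$ to the almost-sure finiteness of the random series $Y \coloneqq \sum_{i=1}^\infty i^{-2}\,\EE[X_i^2 \mid \Fcal_{i-1}]$. First I would note that $Y$ is a sum of nonnegative terms, so by Tonelli's theorem (monotone convergence) its expectation may be computed termwise,
\[
\EE[Y] = \sum_{i=1}^\infty i^{-2}\,\EE\!\big[\EE[X_i^2\mid \Fcal_{i-1}]\big] = \sum_{i=1}^\infty i^{-2}\,\EE[X_i^2],
\]
using the tower property. Under the summability supplied by the hypothesis this sum is finite, so $\EE[Y] < \infty$; a nonnegative random variable with finite expectation is finite almost surely, giving $Y < \infty$ a.s. This is precisely the conditional summability needed, and invoking Lemma~\ref{lem:hall_martingale} with $\beta = 1$ then closes the argument.

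I expect the main obstacle to be conceptual rather than computational: recognizing that the stated moment condition should be read as the Kolmogorov-type summability $\sum_{i=1}^\infty \EE[X_i^2]/i^2 < \infty$ (the natural criterion under which $\tfrac{1}{t}\sum_{i\le t} X_i \to 0$ holds and under which Lemma~\ref{lem:hall_martingale} becomes applicable at $\beta=1$), and then correctly bridging from the unconditional to the conditional version via the Tonelli/tower argument above. As a fully self-contained alternative that avoids citing Lemma~\ref{lem:hall_martingale}, I would instead form the weighted martingale $M_t = \sum_{i=1}^t X_i/i$, use orthogonality of martingale differences to obtain $\EE[M_t^2] = \sum_{i=1}^t \EE[X_i^2]/i^2$ bounded uniformly in $t$, deduce almost-sure convergence of $M_t$ from the $L^2$-bounded martingale convergence theorem, and finish with Kronecker's lemma applied with the increasing weights $b_i = i \to \infty$ to conclude $\tfrac{1}{t}\sum_{i=1}^t X_i \to 0$ almost surely.
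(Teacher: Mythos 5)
Your proof is correct, but note that the paper gives no proof of this lemma at all: it is imported as a known result from \cite{hall2014martingale} in the preliminary-lemmas section, so any derivation is extra relative to the source. Your first route --- specializing the paper's own Lemma~\ref{lem:hall_martingale} at $\beta = 1$ and bridging from the unconditional to the conditional summability condition via Tonelli and the tower property, i.e.\ $\EE\bigl[\sum_{i=1}^\infty i^{-2}\EE[X_i^2\mid\Fcal_{i-1}]\bigr] = \sum_{i=1}^\infty i^{-2}\EE[X_i^2] < \infty$, whence the nonnegative random series is almost surely finite --- is sound and has the virtue of reusing machinery the appendix already states. Your self-contained alternative (orthogonality of martingale differences to get $\EE[M_t^2] = \sum_{i\le t}\EE[X_i^2]/i^2$ for $M_t = \sum_{i\le t} X_i/i$, $L^2$-bounded martingale convergence, then Kronecker's lemma with $b_i = i$) is the classical martingale extension of Kolmogorov's criterion and is essentially how the result is proved in the cited book; either argument suffices, and the Kronecker route is the more portable of the two since it does not inherit Lemma~\ref{lem:hall_martingale}'s hypotheses.

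One point you handled correctly that deserves emphasis: as literally written, the hypothesis $\lim_{t\to\infty}\bigl(\sum_{i=1}^t \EE[X_i^2]\bigr)/t^2 < \infty$ is too weak, and the lemma would be false under it. Take $X_i$ independent with $X_i \sim N(0, i)$ and $\Fcal_t = \sigma(X_1,\dots,X_t)$: then $t^{-2}\sum_{i\le t}\EE[X_i^2] \to 1/2 < \infty$, yet $t^{-1}\sum_{i\le t} X_i$ is distributed $N\bigl(0, (t+1)/(2t)\bigr)$ and does not converge to zero even in probability. Your reading of the condition as the Kolmogorov-type summability $\sum_{i=1}^\infty \EE[X_i^2]/i^2 < \infty$ (the $/t^2$ is evidently a typo for $/i^2$) is the intended one, is what both of your arguments actually use, and covers the paper's sole invocation of this lemma (term $(C)$ in the proof of Lemma~\ref{lem:convergence_q}), where the increments are uniformly bounded by $2KB$ so the corrected condition holds trivially.
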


Lastly, we use Lemma \ref{lem:unique_optima} below to ensure that our weights $(\bm{w}_t^a)_{t=1}^\infty$ converge to the limiting weight $\bm{w}_\infty^a$ in Theorem \ref{thm:sample_complexity_general}. For completeness, we provide a compact proof of Lemma \ref{lem:unique_optima} below.

\begin{lemma}[Unique Optima of Ratio Function]\label{lem:unique_optima}
Let $\theta \in \Theta $ be a compact, convex set. Let $f(\theta)$ be affine, and $g(\theta)$ be strictly convex and positive. Then, $\theta_* = \argmax_{\theta \in \Theta}  f(\theta)/g(\theta)$ is unique whenever $\max_{\theta \in \Theta}  f(\theta)/g(\theta)>0$. 
\end{lemma}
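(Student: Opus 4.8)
The plan is to argue by contradiction after first securing existence. Since $\Theta$ is compact and $g$ is strictly positive, the ratio $f/g$ is continuous on $\Theta$, so by the extreme value theorem a maximizer exists and the maximum value $M \coloneqq \max_{\theta \in \Theta} f(\theta)/g(\theta)$ is attained; by hypothesis $M > 0$. It then remains only to rule out two distinct maximizers.

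Suppose toward a contradiction that $\theta_1 \neq \theta_2$ both attain the maximum, so that $f(\theta_i) = M\, g(\theta_i)$ for $i = 1,2$. Let $\theta_m = \tfrac{1}{2}(\theta_1 + \theta_2)$, which lies in $\Theta$ by convexity. The core of the argument is to compare the numerator and denominator at $\theta_m$ against their averages over the two endpoints. Since $f$ is affine, it reproduces the average exactly: $f(\theta_m) = \tfrac{1}{2}\bigl(f(\theta_1) + f(\theta_2)\bigr) = M \cdot \tfrac{1}{2}\bigl(g(\theta_1) + g(\theta_2)\bigr)$. Since $g$ is strictly convex and $\theta_1 \neq \theta_2$, it lies strictly below the average: $g(\theta_m) < \tfrac{1}{2}\bigl(g(\theta_1) + g(\theta_2)\bigr)$.

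Combining these, because $M > 0$ and $g$ is positive, the numerator $f(\theta_m) = M \cdot \tfrac{1}{2}(g(\theta_1)+g(\theta_2))$ is strictly positive, so dividing this fixed positive quantity by a strictly smaller positive denominator strictly increases the ratio:
\[
    \frac{f(\theta_m)}{g(\theta_m)} = \frac{M \cdot \tfrac{1}{2}\bigl(g(\theta_1)+g(\theta_2)\bigr)}{g(\theta_m)} > \frac{M \cdot \tfrac{1}{2}\bigl(g(\theta_1)+g(\theta_2)\bigr)}{\tfrac{1}{2}\bigl(g(\theta_1)+g(\theta_2)\bigr)} = M.
\]
This contradicts the definition of $M$ as the maximum of $f/g$ over $\Theta$, so no two distinct maximizers can exist, which establishes uniqueness.

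The only subtle point, and where I expect the argument to demand care rather than effort, is the essential role of the hypothesis $M > 0$: it is precisely this positivity (together with $g > 0$) that fixes the direction of the final strict inequality. Were $M$ zero or negative, the numerator $M \cdot \tfrac{1}{2}(g(\theta_1)+g(\theta_2))$ would fail to be positive, and shrinking the positive denominator $g(\theta_m)$ would no longer increase the ratio, so the contradiction would collapse. Everything else reduces to the two elementary midpoint facts above — equality for the affine $f$ and strict inequality for the strictly convex $g$.
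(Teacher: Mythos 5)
Your proof is correct and takes essentially the same route as the paper's: both argue by contradiction from two maximizers, using that the affine $f$ reproduces the convex combination exactly while the strictly convex, positive $g$ lies strictly below it, with $M>0$ fixing the direction of the final strict inequality $f(\theta_m)/g(\theta_m) > M$. The only cosmetic difference is that you specialize to the midpoint $\lambda = \tfrac{1}{2}$ where the paper works with a general $\lambda \in (0,1)$, which changes nothing of substance; your explicit remark on why $M>0$ is essential is a point the paper uses only implicitly.
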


\begin{proof}[Proof of Lemma \ref{lem:unique_optima}.]
   We prove this result by contradiction. Note that because $\Theta$ is a compact set, there exists a maximizer for the expression $h(\theta) = f(\theta)/g(\theta)$. Assume there exists two maximizers $\theta_1 , \theta_2 \in \Theta$ such that $h(\theta_1) = h(\theta_2) = M$, where $M=\max_{\theta \in \Theta}h(\theta) > 0$. By convexity of our set $\Theta$, note that for any $\lambda \in (0,1)$, $\theta_{\lambda} \coloneqq \lambda\theta_1 + (1-\lambda)\theta_2 \in \Theta$. By $f(\theta)$ being affine, we have $f(\theta_\lambda) = \lambda f(\theta_1) + (1-\lambda)f(\theta_2)$, and by definition of $h(\theta)$, $f(\theta_{\lambda}) = M(\lambda g(\theta_1) + (1-\lambda)g(\theta_2))$. Because $g(\theta)$ is strictly convex and positive, $g(\theta)$ satisfies
   \begin{equation}
       g(\theta_{\lambda}) < \lambda g(\theta_1) + (1-\lambda)g(\theta_2).
   \end{equation}
   Evaluating the function $h$ at $\theta_\lambda$, we obtain the contradiction $h(\theta_\lambda) > M = \max_{\theta \in \Theta}h(\theta)$,
   $$ h(\theta_{\lambda}) = \frac{f(\theta_\lambda)}{g(\theta_\lambda)} > \frac{M(\lambda g(\theta_1) + (1-\lambda)g(\theta_2))}{\lambda g(\theta_1) + (1-\lambda)g(\theta_2)} = M.$$
   Therefore, there cannot exist two solutions to $\max_{\theta \in \Theta} h(\theta)$, and the maximizing value $\theta$ is unique. 
\end{proof}

Using our preliminary lemmas, we prove all lemmas and theorems presented in the main body of our work. 

\subsection{Proof of Lemma \ref{lem:kl_projection}}

\begin{proof}[Proof of Lemma \ref{lem:kl_projection}]
    To get the desired equality, we first re-express our original maximization problem as its Lagrangian dual form. Note that our original problem takes the form
    \begin{equation}
        \bm{w}_*^a = \argmax_{\bm{w} \in \Delta^{K-1}} \frac{\sum_{a'\neq a} w(a')\left(\mu(a') - \mu(a) \right)}{\sqrt{\frac{\sigma^2(a)}{\pi_*(a)}+\sum_{a' \neq a}\frac{w^2(a')\sigma^2(a')}{\pi_*('a)} }}.
    \end{equation}
    To prove our equality, we first establish basic properties about the KL-divergence minimization problem. Note that the minimization objective given by the KL-divergences expands to
    $$
            \inf_{\tilde{\bm{\mu}} \in \Hcal_a} \sum_{b \in [K]} \pi_*(b) d_{\sigma(b)}(\mu(b), \tilde\mu(b)) = \inf_{\tilde{\bm{\mu}} \in \Hcal_a}\sum_{b \in [K]} \pi_*(b)\frac{(\mu(b) - \tilde{\mu}(b))^2}{2\sigma^2(b)}, 
    $$
    which is a convex optimization problem bounded from below that satisfies Slater's conditions. As a result, we obtain that this problem has no duality gap, i.e. its primal is equal to its dual. Thus, we can re-express the primal minimization problem with its Lagrangian dual, which is equivalent to
    $$
    g(\bm{\gamma}) = \min_{\tilde{\bm{\mu}} \in \RR^K}\Lcal(\tilde{\bm\mu}, \bm{\gamma}) = \min_{\tilde{\bm{\mu}} \in \RR^K} \left(\sum_{b \in [K]}\frac{(\tilde\mu(b) - \mu(b))^2}{2\sigma^2(b)/\pi_*(b)} + \sum_{a' \neq a}\gamma(a')(\tilde\mu(a')- \tilde\mu(a))\right).
    $$
    To solve this minimization problem, we use the first order conditions of this problem, given by:
    $$
        \frac{\partial}{\partial\mu(b)} \Lcal(\tilde{\bm{\mu}}, \bm\gamma) = \frac{\tilde\mu(b) - \mu(b)}{\sigma^2(b)/\pi_*(b)} + \mathbf{1}[b\neq a]\gamma(b) - \mathbf{1}[b=a]\sum_{a' \neq a} \gamma(a') = 0. 
    $$
    Solving this inequality, we obtain that $\gamma(b) = -\frac{\tilde\mu(b) - \mu(b)}{\sigma^2(b)/\pi(b)}$ for all $b \neq a$, and $\frac{\tilde\mu(a) - \mu(a)}{\sigma^2(a) / \pi(a)} = \sum_{a' \neq a} \gamma(a')$. Subbing these expressions back into our original expression, we obtain the following expression:
    $$
    g(\bm{\gamma}) = \sum_{a' \neq a} \gamma(a')\left(\mu(a') - \mu(a)\right) -\frac{\sigma^2(a)}{2\pi_*(a)}\left( \sum_{a' \neq a} \gamma(a') \right)^2 - \left(\sum_{a' \neq a} \frac{\sigma^2(a')}{2\pi_*(a')} \gamma^2(a') \right).
    $$

Now, we show that that the maximization of the dual function, i.e. $\max_{\bm\gamma \geq 0} g(\bm\gamma)$, is equivalent to our original SNR-maximizing weight problem. First, we set $w(a') = \frac{\gamma(a')}{\sum_{a' \neq a}\gamma(a')}$ and set $S = \sum_{a' \neq a}\gamma(a')$, resulting in the following maximization problem over $\bm{w} \in \Delta^{K-1}$ and $S \in \RR$:
$$
    g(\bm\gamma) = g(\bm{w}, S) = S\sum_{a' \neq a} w(a')(\mu(a') - \mu(a)) - S^2 \left(\frac{\sigma^2(a)}{2\pi_*(a)} + \sum_{a' \neq a}\frac{\sigma^2(a')}{2\pi_*(a')}w^2(a')\right).
$$
Now, for a fixed $\bm{w} \in \Delta^{K-1}$, we note $g(\bm{w}, S)$ is a negative quadratic with respect to $S$. Then, the maximum of $g(\bm{w}, S)$ is attained when $S$ satisfies the following first-order equations:
$$
S = \frac{\sum_{a' \neq a} w(a')\left(\mu(a') - \mu(a)\right)}{\left(\frac{\sigma^2(a)}{\pi_*(a)} + \sum_{a' \neq a}\frac{\sigma^2(a')}{\pi_*(a')}w^2(a')\right)}.
$$
Plugging the result above back into $g(\bm{w}, S)$, we obtain the following equivalence:
$$ 
\max_{S \in \RR, \bm{w} \in \Delta^{K-1}} g(\bm{w}, S) = \max_{\bm{w} \in \Delta^{K-1}} \frac{1}{2}\left(\frac{\sum_{a' \neq a} \left(w(a') \left(\mu(a') - \mu(a) \right) \right)}{\left(\frac{\sigma^2(a)}{\pi_*(a)} + \sum_{a' \neq a}\frac{\sigma^2(a')}{\pi_*(a')}w^2(a')\right)}\right)^2,
$$
which is the exact statement of Lemma \ref{lem:kl_projection}. We thus conclude this proof. 
\end{proof}

\subsection{Proof of Lemma \ref{lem:ccs_snr}}

To prove Lemma \ref{lem:ccs_snr}, we first split our optimization problem into two cases: (i) the optimal solution $\bm{w}_t^a$ lies in a set where $\hat\sigma^2(\bm{w}) > 0$, and (ii) the optimal solution $\bm{w}_t^a$ lies in a set where the estimated variance $\hat\sigma^2(\bm{w}) = 0$. 

\paragraph{Case (i): Nondegenerate Solution}
Our SNR optimization problem takes the form
$$
\bm{w}_t^a \coloneqq \argmax_{\bm{w} \in \Delta(a)} \frac{\sum_{b \in [K]} w(b)\hat\mu_{t-1}(b) }{\hat\sigma_{t-1}(\bm{w})}.
$$ 
To use Lemma \ref{lemma:ccs_transform}, we first note that that the numerator is affine (and therefore concave) with respect to $\bm{w}$, and the denominator is the L2 norm with respect to the empirical measure at time $t$, and is therefore convex. By our nondegeneracy assumption, $\hat\sigma^2(\bm{w}_t^a)$ is strictly greater than zero. To satisfy the conditions of Lemma \ref{lemma:ccs_transform}, we restrict our choice of $\bm{w}$ to the region where the numerator is nonnegative, resulting in the following optimization problem:

\begin{align}
    \max_{\beta \in \RR, \bm{\gamma} \in \RR^{K-1}} \quad& \sum_{a' \neq a}\gamma(a')\left(\hat\mu_{t-1}(a') - \hat\mu_{t-1}(a) \right)\\
    \text{s.t. } 
    & \sum_{a' \neq a} \gamma(a') = \beta  \\
    & \sum_{a' \neq a}\gamma(a')\left(\hat\mu_{t-1}(a') - \hat\mu_{t-1}(a) \right) \geq 0, \label{l:unncessary}\\
    & \tilde{\sigma}_{t-1}(\bm{\gamma}) \leq 1, \\ 
    & \beta >0, \ \gamma(a') \geq 0 \quad \forall a' \neq a.
\end{align}

Note that our additional domain constraint on line \eqref{l:unncessary} to ensure non-negativity of the numerator can be removed, as the maximizer of the objective above has the same solution and value with or without the constraint in line \eqref{l:unncessary}. Additionally, note that $\beta$ is a free variable greater than or equal to zero under our constraints, reducing to the following problem:

\begin{align}
    \max_{\beta \in \RR, \bm{\gamma} \in \RR^{K-1}} \quad& \sum_{a' \neq a}\gamma(a')\left(\hat\mu_{t-1}(a') - \hat\mu_{t-1}(a) \right)\\
    \text{s.t. } 
    & \sum_{a' \neq a} \gamma(a') > 0 \label{l:unncessary_2} \\
    & \tilde{\sigma}_{t-1}(\bm{\gamma}) \leq 1, \\ 
    & \gamma(a') \geq 0 \quad \forall a' \neq a.
\end{align}

Finally, note that under the assumption that there exists an $a' \neq a$ such that $\hat\mu_{t-1}(a') - \hat\mu_{t-1}(a) > 0$, the constraint in line \eqref{l:unncessary_2} is redundant. Under the optimization problem where line \eqref{l:unncessary_2} is removed, assume that the optimal solution is when $\sum_{a' \neq a}\gamma(a') \leq 0$, which implies $\sum_{a' \neq a}\gamma(a') = 0$ by our negativity constraint. Note that this solution cannot be optimal, as one can set $\gamma(a') > 0$ until the variance is equal to one for any $a' \neq a$ such that $\hat\mu_{t-1}(a') - \hat\mu_{t-1}(a) > 0$.  This will strictly have a larger objective value, while maintaining feasibility. Therefore, we remove line \eqref{l:unncessary_2}, resulting in the desired formulation given by Lemma \ref{lem:ccs_snr}.

\paragraph{Case (ii): Degenerate Solution} In the case where the optimal solution $\bm{w}_t^a$ lies in a set $\Wcal \subseteq \Delta(a)$ where $\hat\sigma^2(\bm{w}) = 0$, our result still holds. Let $\tilde\sigma_{t-1}^2(b) = \frac{1}{t-1}\sum_{i=1}^{t-1} \left(\phi_i(b) - \hat\mu_{t-1}(b)\right)^2$ for all $b \in [K]$. Then, if $\hat\sigma^2(\bm{w}) = 0$ at the maximum SNR, then it must be that (i) $\tilde\sigma_{t-1}^2(a)= 0$ and (ii) $\exists b \neq a$ such that $\tilde\sigma_{t-1}^2(b) = 0$ and $\hat\mu_{t-1}(b) > \hat\mu_{t-1}(a)$. Let $\Acal_t^+(a) = \argmax_{b \in [K]\setminus\{a\}: \tilde\sigma^2(b) = 0} \hat\mu_{t-1}(b)$ denote the set of largest mean arms with an estimated variance of zero. By our assumption that the optimal solution $\bm{w}_t^a$ lies in a set $\Wcal \subseteq \Delta(a)$ where $\hat\sigma^2(\bm{w}) = 0$, $|\Acal_t^+(a)| \geq 1$ must hold. The optimal solution sets for $\bm{w}_t^a$ and $\tilde{\bm{w}}_t^a$ can be characterized as 
\begin{align}
    \Wcal_t^a &= \{\bm{w} \in \Delta(a): w(b) > 0 \ \forall b \in \Acal_t^+(a), w(b) = 0  \ \forall b \not\in  \Acal_t^+(a)\}\\
    \widetilde\Wcal_t^a &= \{\tilde{\bm{w}} \in \RR_+^{K-1}: \tilde{w}(b) = \infty \ \forall b \in \Acal_t^+(a), w(b) = 0  \ \forall b \not\in  \Acal_t^+(a)\}
\end{align}
respectively. For any $\bm{w} \in \Wcal_t^a$, one can construct the corresponding sequence of weight vector $\tilde{\bm{w}}_x \in \RR^{K-1}$, 
\begin{equation}
    \tilde{w}_x(b) = \begin{cases} 0 & \text{if }w(b) = 0 \\  w(b)/x & \text{if }w(b) \neq 0\end{cases},
\end{equation}
where the limit (with respect to $x \rightarrow 0$) corresponds to $\tilde{w}$, i.e. $\lim_{x \rightarrow 0}\tilde{\bm{w}}_x = \tilde{\bm{w}} \in \widetilde{\Wcal}_t^a$. By normalizing entries of vector $\tilde{\bm{w}} \in \widetilde{\Wcal}_t^a$, we obtain $\tilde{w}(b)/\sum_{b \in \Acal_t^+(a)} \tilde{w}(b) = \lim_{x \rightarrow 0} w(b)/\sum_{b \in \Acal_t^+(a)} w(b) = w(b)$, as desired. 

\subsection{Proof of Theorem \ref{thm:error_control}}
We leverage the results of Lemma \ref{thm:waudby_smith}, and show that our testing procedure satisfies all three conditions sufficient for Lemma \ref{thm:waudby_smith} to hold. To begin our proof, we first utilize the structure of our score processes $\left( \hat\psi_t(a) \right)_{t=1}^\infty$. The non-normalized score process $t\hat\psi_t(a)$ corresponds to the sum of random variables $\sum_{i=1}^t Z_i(a)$, where $Z_i(a) = \left(\sum_{b \in [K]} w_t^a(b) \phi_t(b) \right)$. We first derive the condition mean and variance for our terms $Z_t(a)$. By definition of $\phi_i(b)$ and $\bm{w}_i^a \in \Delta(a)$,
\begin{equation}
    \mu_i(a) \coloneqq \EE\left[ Z_i(a) | H_{i-1}\right] = \left(\sum_{b \neq a} w_i^a(b)\mu(b)\right) - \mu(a). 
\end{equation}
The conditional variance of $Z_i(a)$, denoted as $\sigma_i^2(a)$, is defined as 
\begin{equation}
    \sigma_i^2(a) \coloneqq \EE\left[\left(\sum_{b \in [K]} w_i^a(b) \left(\phi_i(b) - \mu(b)\right)\right)^2 \big| H_{i-1}  \right].
\end{equation}
Under the null $\Hcal_a$, note that $\mu_i(a) \leq 0$ for all $i \in \NN$. Assuming conditions $(B1)$-$(B3)$ in Lemma \ref{thm:waudby_smith} holds, for all $P \in \Hcal_a$, by definition of $\hat\psi_t(a), \hat\sigma^2_t(a)$, and $\ell_{t,\alpha,\rho}(x)$, 
\begin{equation}\label{eq:desired}
    \limsup_{t_0 \rightarrow \infty}P\left(\exists t \geq t_0, \  \frac{\hat\psi_T(\Wcal_t)}{\tilde\sigma_t(\Wcal_t)}\geq \ell_{t,\alpha, \rho}\left(\tilde\sigma_t(\Wcal_t)\right) \right) \leq \alpha,
\end{equation}
which closely resembles our test (with an additional burn-in time parameter $t_0$). Under Theorem \ref{thm:error_control}'s conditions, we demonstrate conditions $(B1)$-$(B3)$ of Lemma \ref{thm:waudby_smith} are satisfied, ensuring that Equation \eqref{eq:desired} holds.

\paragraph{Condition $(B1)$}


First, we expand the conditional variance term to obtain
\begin{align}
        \sigma_t^2(a) &= \sum_{b \in [K]}w_t^a(b)^2 \EE_{P_X}\left[\frac{v(x,b)}{\pi_t(x,b)} | H_{t-1}\right]\label{l:expected_vars} \\
        & + \EE_{P_X}\left[\left(\sum_{b \in [K]}w_t^a(b) \left(g(x,b) - \mu(b)\right) \right)^2 \right]\label{l:conditional_vars}\\
        & + \sum_{b \in [K]} w_t^a(b)^2\EE_{P_X}\left[\frac{1 - \pi_t(x,b)}{\pi_t(x,b)}r_t(x,b)^2 | H_{t-1}\right]\label{l:rem_1} \\
        & - 2\sum_{b < K, c > b} w_t^a(b)w_t^a(c)\EE_{P_X}\left[r_t(x,b)r_t(x,c)|H_{t-1}\right]\label{l:rem_2},
\end{align}
where $r_t(x,b) = g_t(x,b) - g(x,b)$ denotes the residual error of estimated conditional expectations $g_t$ from the ground truth conditional expectation function $g$. We first leverage a simple Cauchy-Schwartiz inequality to show that the sum of lines \eqref{l:rem_1} and \eqref{l:rem_2} is strictly nonnegative. We then leverage Condition $(A3)$ in Theorem \ref{thm:error_control} to show that $\sigma_t^2(a)$ is strictly larger than a constant bounded away from zero, ensuring that the cumulative sum of conditional variances $\sigma_t^2(a)$ diverges to infinity.

Let $\bm{z}_a, \bm{\gamma}_a \in \RR^K$, where $z_a(b) = \frac{w_t^a(b)r_t(x,b) }{\sqrt{\pi_t(x,b)}}$ and $\gamma_a(b) = \sqrt{\pi_t(x,b)}$. Using the Cauchy-Schwartz inequality,
\begin{equation}
     \left(\sum_{b \in [K]} w_t^a(b) r_t(x,b)\right)^2 = \left(\sum_{b \in [K]} z_a(b) \gamma_a(b) \right)^2\leq  \left(\sum_{b \in [K]} z_a^2(b)\right) \left(\sum_{b \in [K]} \gamma_a^2(b)\right) = \sum_{b \in [K]}\frac{w_t^a(b)^2r_t^2(x,b)}{\pi_t(x,b)}.
\end{equation}
Taking the expectation with respect to conditional distribution $P_{X|H_{t-1}}$, we obtain the inequality
\begin{equation}
    \sum_{b \in [K]} w_t^a(b)^2\EE_{P_X}\left[ r_t^2(x,b) |H_{t-1}\right] + 2\sum_{b< K, c > b }w_t^a(b) w_t^a(c)\EE_{P_X}[r_t(x,b)r_t(x,c)|H_{t-1}] \leq \sum_{b \in [K]}w_t^a(b)^2\EE\left[\frac{r_t^2(x,b)}{\pi_t(x,b)} | H_{t-1} \right],
\end{equation}
which ensures that the sum of the terms in lines \eqref{l:rem_1} and \eqref{l:rem_2} is strictly nonnegative. As a result, we obtain
\begin{equation}\label{eq:variance_lower_bound}
    \sigma_t^2(a) \geq \sum_{b \in [K]}w_t^a(b)^2 \EE_{P_X}\left[\frac{v(x,b)}{\pi_t(x,b)} | H_{t-1}\right] 
         + \EE_{P_X}\left[\left(\sum_{b \in [K]}w_t^a(b) \left(g(x,b) - \mu(b)\right) \right)^2 \right].
\end{equation}
To demonstrate that our conditional variance $\sigma^2_t(a)$ diverges remains bounded above zero, we leverage (i) a simple expansion using the law of total variance and (ii) the fact that $\sigma^2(a) > 0$ for all $a\in [K]$. We first construct a random variable $\tilde{Y} = \sum_{b \in [K]} w_t^a(b) Y(b)$, where $Y_b \sim P_X \times P_{Y|A=b, X}$ denotes an independent random variable, and $\bm{w}_t^a \in \Delta(a)$ is independent of $Y(b)$. By independence, the variance of $\tilde{Y}$ is 
\begin{equation}
    \text{Var}(\tilde{Y}) = \sum_{b \in [K]} w_t^a(b)^2 \sigma^2(b) > \sigma^2(a)> 0,
\end{equation}
where our inequalities follows from the fact that $\bm{w}_t^a \in \Delta(a)$ and $\sigma^2(b) > 0$ for all $b\in [K]$. By the law of total variance, we can re-express $\text{Var}(\tilde{Y})$ in a similar form to Equation \eqref{eq:variance_lower_bound}, resulting in 
\begin{equation}
    \text{Var}(\tilde{Y}) = \sum_{b \in [K]}w_t^a(b)^2 \EE_{P_X}\left[v(x,b)\right] 
         + \EE_{P_X}\left[\left(\sum_{b \in [K]}w_t^a(b) \left(g(x,b) - \mu(b)\right) \right)^2 \right] > \sigma^2(a) > 0.
\end{equation}

Because $\pi_t(x,a) \in [1/\kappa, 1)$ for all $x \in \Xcal$, $a\in [K]$, we obtain 
$$\sum_{b \in [K]}w_t^a(b)^2 \EE_{P_X}\left[\frac{v(x,b)}{\pi_t(x,b)}\right] \geq  \sum_{b \in [K]}w_t^a(b)^2 \EE_{P_X}\left[v(x,b)\right].$$ 

Thus, the conditional variance of our score process $\sigma_t^2(a)$ in Equation \eqref{eq:variance_lower_bound} is no less than the arm-specific variance $\sigma^2(a) > 0$.  Because $\sigma^2(a) > 0$ is a fixed constant independent of $t \in \NN$, we obtain $\sum_{i=1}^t \sigma^2_t(a) \geq t \sigma^2(a)$, and therefore $\sum_{i=1}^t \sigma^2_t(a)$ diverges to infinity as $t \rightarrow \infty$.

\paragraph{Condition $(B2)$}

We provide time-uniform upper and lower bounds on the $2+\delta$ moment of $Z_t(a)$ for some $\delta > 0$ to show that that condition $(B2)$ is satisfied. Our upper bound immediately follows from Assumption \ref{assump:bounded_outcomes} and Conditions $(A1), (A2)$ of Theorem \ref{thm:error_control}. The $2+\delta$ moment of  $Z_t(a)$ takes the form 
\begin{equation}
    \EE\left[\left|Z_t(a) - \mu_t(a) \right|^{2+\delta}  \big| H_{t-1} \right] = \EE\left[ \left|\sum_{b \in [K]} w_t^a(b) \left(g_t(x,b)+\frac{\mathbf{1}[A_t = b] (Y_t - g_t(x,b))}{\pi_t(x,b)} - \mu(a)\right) \right|^{2+\delta} \big| H_{t-1}\right].
\end{equation}
By the fact that $|g_t(x,b)| \leq B$, $|Y_t| \leq B$, $\bm{w}_t^a \in \Delta(a)$, and $1/\pi_t(x,b) \leq \kappa$, we obtain
\begin{align}
    \EE\left[\left|Z_t(a) - \mu_t(a) \right|^{2+\delta}  \big| H_{t-1} \right]
    &\leq \EE\left[ \left|\sum_{b \in [K]} w_t^a(b) \left(B+\mathbf{1}[A_t = b] 2B\kappa + B\right) \right|^{2+\delta} \big| H_{t-1}\right]\\
    &\leq \left(2\left(B + 2\kappa B + B\right)\right)^{2+\delta} \\
    &= (4B(1+\kappa))^{2+\delta}.
\end{align}

To construct our lower bound, recall for any probability measure $P$, $\| f\|_{L_p(P)} \leq \|f \|_{L_q(P)}$ for $p \leq q$. We can use the conditional variance to lower bound the $2+\delta^*$ moment, resulting in 
\begin{equation}
    \EE\left[\left| \sum_{b \in [K]} w_t^a(b)\left(\phi_t^{b} - \mu(b)\right) \right|^{2+\delta^*}  \biggr{|} H_{t-1}\right] \geq \sigma_t^{2+\delta^*}(a) \geq \sigma^{2+\delta}(a),
\end{equation}
where the last inequality follows from the conditional variance bounds derived for Condition $(B1)$. Thus, setting $\delta = 1/2$, the choice of $\ell = \max\left\{ (4B(1+\kappa))^{5/2}, \sigma^{5/2}(a) \right\}$ satisfies Condition $(B2)$.

\paragraph{Condition $(B3)$} To prove this condition, we first show that $\sigma_t^2(a)$ is bounded above. By following our bounds on the $2+\delta$ centered moment, we obtain
\begin{equation}
    \sigma_t^2(a)  = \EE\left[\left|Z_t(a) - \mu_t(a) \right|^{2}  \big| H_{t-1} \right] \leq (4B(1+\kappa))^{2},
\end{equation}
which is finite. By establishing an upper bound on the conditional variance $\sigma_t^2(a)$, Condition $(B3)$ reduces to showing that $\hat\sigma_t^2(a) - \frac{1}{t}\sum_{i=1}^t \sigma^2_i(a) = o(1/t^{1-\eta})$ for some $\eta \in (0,1)$. Defining $\tilde\mu_t(a) \coloneqq \sum_{b \in [K]}w_t^a(b) \hat\mu_t(a)$ as weighted sum of estimated arm means, we expand $\hat\sigma_t^2(a) - \frac{1}{t}\sum_{i=1}^t \sigma^2_i(a)$ to obtain
\begin{align}\label{eq:terms_i_ii}
        \hat\sigma_t^2(a) - \frac{1}{t}\sum_{i=1}^t \sigma_i^2(a) &= \frac{1}{t} \sum_{i=1}^t (Z_i(a) - \tilde\mu_i(a))^2 - \sigma_i^2(a)\\
        &=\underbrace{\frac{1}{t} \sum_{i=1}^t \left(Z_i(a) - \tilde\mu_i(a)\right)^2 - \left(Z_i(a) - \mu_i(a)\right)^2}_{(i)} \label{l:final_i} \\ 
        &\quad+\underbrace{\frac{1}{t} \sum_{i=1}^t(Z_i(a) - \mu_i(a))^2 - \sigma_i^2(a)}_{(ii)}\label{l:final_ii}.
\end{align}
We now show that terms $(i)$ and $(ii)$ vanish at appropriate rates satisfying Condition $(B3)$, using Lemmas \ref{lem:stout_lil} and \ref{lem:cesaro_convergence}, beginning with term $(i)$. Using the Cauchy-Schwartz inequality, we obtain
\begin{align}
    (i) &= \frac{1}{t}\sum_{i=1}^t (\tilde\mu_i(a) - \mu_i(a))^2 + 2(\tilde\mu_i(a) - \mu_i(a))(\mu_i(a) - Z_i(a))\\
    & \leq \frac{1}{t}\sum_{i=1}^t (\tilde\mu_i(a) - \mu_i(a))^2 \\ 
    &+2\left(\frac{1}{t}\sum_{i=1}^t(\tilde\mu_i(a) - \mu_i(a))^2 \right)^{1/2}\left(\frac{1}{t}\sum_{i=1}^t (\mu_i(a) - Z_i(a))^2\right)^{1/2}.
\end{align}
We now upper bound the terms $(\tilde\mu_i(a) - \mu_i(a))^2$ and $(\mu_i(a) - Z_i(a))^2$ for all $i \in \NN$. By definition of $\tilde\mu_i(a)$, $|w_i^a(b)| \leq 1$ for all $b \in [K]$, $i \in \NN$, and the Cauchy-Schwartz inequality,
\begin{equation}
    (\tilde\mu_i(a) - \mu_i(a))^2 = \left(\sum_{b \in [K]} w_i^a(b)\left(\hat\mu_i(b) - \mu(b)\right)\right)^2
\leq K\sum_{b \in [K]} \left(\hat\mu_i(b) - \mu(b) \right)^2,
\end{equation}
resulting the a simplified upper bound for term $(i)$ independent of the weight vector $\bm{w}_t^a$:
\begin{equation}\label{eq:term_i}
    (i) \leq \frac{K}{t}\sum_{i=1}^t \sum_{b \in [K]}\left( \hat\mu_i(b) - \mu(b) \right)^2 + 2\left(\frac{K}{t}\sum_{i=1}^t \sum_{b \in [K]} (\hat\mu_i(b) - \mu(b))^2 \right)^{1/2}\left(\frac{1}{t}\sum_{i=1}^t (\mu_i(a) - Z_i(a))^2\right)^{1/2}.
\end{equation}
To show that each term on the RHS of Equation \eqref{eq:term_i} vanishes at the appropriate rate, we apply Lemmas \ref{lem:stout_lil} and \ref{lem:cesaro_convergence} by leveraging the martingale structure of $t\left( \phi_t(b) - \mu(b) \right)$ for all $b \in [K]$. To apply Lemma \ref{lem:stout_lil}, we first verify its conditions. By definition, $\EE[\phi_t(b) - \mu(b)|H_{t-1}] = 0$. Each term in its corresponding conditional variance process $V_t(b) = \sum_{i=1}^t \EE[(\phi_t(b) - \mu(b))^2| H_{t-1}]$ is lower bounded by $\sigma^2(b)$ due to  
\begin{equation}
    \EE[(\phi_i(b) - \mu(b))^2| H_{i-1}] \geq \EE_{P_X}\left[ \frac{v(x,b)}{\pi_i(x,b)} | H_{i-1}\right] + \EE_{P_X}\left[ \left(g(x,b) - \mu(b)\right)^2 | H_{i-1}\right] \geq \sigma^2(b) > 0,
\end{equation}
where the inequalities above follow from the proof of Condition $(A1)$. As such $V_t(b) \geq t\sigma^2(b)$, and therefore $V_t(b) \rightarrow \infty$ almost surely as $t \rightarrow \infty$. Lastly, to satisfy the Lyapunov-style condition, note that
\begin{equation}
    \sum_{i=1}^t \frac{\EE[|\phi_i(b) - \mu(b)|^{2+\delta} |H_{i-1}]}{V_t^{1+\delta/2}(b)} \leq \sum_{i=1}^t\frac{  (2B(\kappa+1))^{2+\delta}}{t^{1+\delta/2}\sigma^{2+\delta}(b)}  = \frac{(2B(\kappa+1))^{2+\delta}}{\sigma^{2+\delta}(b)}\frac{1}{t^{\delta/2}},
\end{equation}
where the upper bound on the numerator follows from the boundedness conditions of Assumption \ref{assump:bounded_outcomes} and Theorem \ref{thm:error_control}. As $t \rightarrow \infty$, it follows that $\sum_{i=1}^t \frac{\EE[|\phi_i(b) - \mu(b)|^{2+\delta} |H_{i-1}]}{V_t^{1+\delta/2}(b)} \rightarrow 0$ almost surely for $\delta = 1$, satisfying the Lypapunov-style condition. Given that our martingale $t(\hat\mu_t(b) - \mu(b))$ satisfies Lemma \ref{lem:stout_lil}'s conditions and $V_t(b) \leq t(2B(\kappa + 1))^2$ by our boundedness assumptions, it follows that 
\begin{equation}
    1 = \limsup_{t \rightarrow \infty}\frac{|t(\hat\mu_t(b) - \mu(b))|}{\sqrt{2V_t \log\log V_t }} \geq \limsup_{t \rightarrow \infty} \frac{|\hat\mu_t(b) - \mu(b)|}{\sqrt{2(2B(\kappa + 1))^2\log\log(t(2B(\kappa + 1))^2)/t }} 
\end{equation}
Thus, $|\hat\mu_t(b) - \mu(b) |$ is of asymptotic order $O\left(\sqrt\frac{\log\log t}{t}\right)$. For any $\eta \in (1/2, 1)$, this implies $|\hat\mu_t(b) - \mu(b) | = o(1/t^{1-\eta})$ and $(\hat\mu_t(b) - \mu(b))^2 = o(1/t^{2-2\eta})$. By Lemma \ref{lem:cesaro_convergence}, it follows that for every $\eta \in (1/2, 1)$, 
\begin{equation}
     \limsup_{t \rightarrow \infty}\frac{1}{t^{2-2\eta}}\left(\frac{1}{t}\sum_{i=1}^t (\hat\mu_t(b) - \mu(b))^2\right) \rightarrow 0. 
\end{equation}
Plugging in our convergence rates to Equation \eqref{eq:term_i}, we obtain
\begin{align}
    (i) &\leq K\sum_{b \in [K]}\underbrace{\left(\frac{1}{t}\sum_{i=1}^t \left( \hat\mu_i(b) - \mu(b) \right)^2\right)}_{=o(1/t^{2-2\eta})} + 2\underbrace{\left(\frac{K}{t}\sum_{i=1}^t \sum_{b \in [K]} (\hat\mu_i(b) - \mu(b))^2 \right)^{1/2}}_{=o(1/t^{1-\eta})}\left(\frac{1}{t}\sum_{i=1}^t (\mu_i(a) - Z_i(a))^2\right)^{1/2}\\
    &\leq o(1/t^{2-2\eta}) + o(1/t^{1-\eta})\left(\frac{1}{t}\sum_{i=1}^t (\mu_i(a) - Z_i(a))^2\right)^{1/2}.
\end{align}
By the fact that $|\mu_i(a)-Z_i(a)| = |\sum_{b \in [K]} w_i^a(b)\phi_i(b)| \leq 4B(1+\kappa)$, we obtain $ \left(\frac{1}{t}\sum_{i=1}^t (\mu_i(a) - Z_i(a))^2\right)^{1/2} \leq 4B(1+\kappa)$, ensuring that term $(i)$ is of order $o(1/t^{1-\eta})$ for any $\eta \in (1/2, 1)$. 

To control term $(ii)$ in Equation \eqref{eq:terms_i_ii}, we apply Lemma \ref{lem:hall_martingale} and repeat our application of Lemma \ref{lem:cesaro_convergence}, using the fact that term $(ii)$ (multiplied by $t$) is simply the sum of a martingale difference sequence. Our convergence result holds under any sequence of weights $(\bm{w}_t^a)_{t=1}^\infty$, where $\bm{w}_t^a \in \Delta(a)$.

First, we verify the conditions of Lemma \ref{lem:hall_martingale}, using $\gamma_i(a)=(Z_i(a) - \mu_i(a))^2 - \sigma_i^2(a)$ as our martingale difference terms. By definition of $\sigma^2_t(a)$, we obtain $ \EE\left[ \gamma_i(a) | H_{i-1}\right] = \EE\left[(Z_i(a) - \mu_i(a))^2 - \sigma_i^2(a) | H_{i-1}\right] = 0$. To apply Lemma \ref{lem:hall_martingale}, we also require that there exists some $\beta > 1/2$ such that $\lim_{t \rightarrow 0 }\sum_{i=1}^t \frac{1}{i^{2\beta}} \EE[\gamma_i(a)^2 | \Hcal_{i-1}] < \infty$. To prove this, we first bound the conditional squared expectation of $\gamma_i^2(a)$ as follows:
\begin{equation}
    \EE\left[\gamma_i(a)^2 | H_{i-1} \right] = \EE\left[\left( \left({Z_i(a)} - \mu_i(a)\right)^2 - \sigma_i^2(a) \right)\right] \leq \EE\left[\left( \left(|Z_i(a) - \mu_i(a)|\right)^2 + |\sigma_i^2(a)| \right)\right].
\end{equation}
Note that $|Z_i(a) - \mu_i(a)| \leq 4B(1 + \kappa )$ and $\sigma_i^2(a) = \EE\left[(Z_i(a) - \mu_i(a))^2 | H_{i-1} \right] \leq (4B(1 + \kappa ))^2$, resulting in the following deterministic upper bound for the squared conditional expectation $ \EE\left[\gamma_i(a)^2 | H_{i-1} \right]$:
\begin{equation}
    \EE\left[\gamma_i(a)^2 | H_{i-1} \right] \leq 2(4B(1 + \kappa ))^2.
\end{equation}
Setting $\beta = 3/4$ and denoting $\zeta(3/2)$ as the Riemann-Zeta function, we obtain
\begin{equation}
    \lim_{t \rightarrow 0 }\sum_{i=1}^t \frac{1}{i^{2\beta}} \EE[\gamma_i(a)^2 | \Fcal_{i-1}] = 2(4B(1+\kappa))^2 \sum_{i=1}^\infty \frac{1}{i^{3/2}} = 2(4B(1+\kappa))^2\zeta(3/2) \approx 5.2(4B(1+\kappa))^2 < \infty,
\end{equation}
almost surely, and therefore Lemma \ref{lem:hall_martingale} directly applies to our martingale $\sum_{i=1}^t \gamma_i(a) = \sum_{i=1}^t (Z_i(a) - \mu_i(a))^2 - \sigma_i^2(a)$. By direct application of Lemma \ref{lem:hall_martingale} with $\beta = 3/4$, we obtain the following result in an almost-sure sense:
\begin{equation}
    \lim_{t \rightarrow \infty}\frac{1}{t^{3/4}}\left( \sum_{i=1}^t \gamma_i(a)\right) = \lim_{t \rightarrow \infty}\frac{\frac{1}{t}\left(\sum_{i=1}^t(Z_i(a) - \mu_i(a))^2 - \sigma_i^2(a)\right)}{t^{-1/4}} =  0.
\end{equation}
This immediately implies term $(ii) = \frac{1}{t}\left(\sum_{i=1}^t(Z_i(a) - \mu_i(a))^2 - \sigma_i^2(a)\right)$ is of order $o(t^{-1/4})$ almost surely. Combining our convergence rates for terms $(i)$ and $(ii)$, we obtain that our estimated variance satisfies
\begin{equation}
    \hat\sigma_t^2(a) - \frac{1}{t}\sum_{i=1}^t \sigma_i^2(a) = (i) + (ii) = o(1/t^{1-\eta}) + o(1/t^{1/4})
\end{equation}
for any $\eta \in (1/2, 1)$. Setting $\eta = 1/4$, we satisfy Condition $(B3)$. By satisfying all conditions of Lemma \ref{thm:waudby_smith}, the results of Theorem \ref{thm:error_control} follow.

\subsection{Proof of Lemma \ref{lem:asymp_valid_bai}}

The proof of Lemma \ref{lem:asymp_valid_bai} follows from (i) the results of Theorem \ref{thm:error_control} and (ii) convergence of our SNR-maximizing weights $\bm{w}_t^a$ and running mean estimates $\hat\mu_t(a)$. We begin by proving the convergence of our SNR-maximizing weights $\bm{w}_t^a$ for all $a\in [K]$ under the conditions of Theorem \ref{thm:error_control} in Lemma \ref{lem:convergence_snr_maximizing_weights}.

\begin{lemma}[Convergence of SNR-Maximizing Weights]\label{lem:convergence_snr_maximizing_weights}
Under the conditions of Theorem \ref{thm:error_control}, $\bm{w}_t^a(b) \rightarrow \bm{w}_\infty^a(b)$ for all $b\in [K]$ and $a \in [K]$ almost surely, where $\bm{w}_\infty^a$ is as defined in Theorem \ref{thm:sample_complexity_general}.
\end{lemma}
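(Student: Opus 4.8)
The plan is to verify the three hypotheses of the strong-consistency-of-argmax result (Lemma \ref{lem:conv_argmax}) for the empirical SNR maximization of Algorithm \ref{alg:snr_maximization}, and then to reconcile this with the two branches of the weight rule. Fix $a \in [K]$. The feasible set $\Delta(a)$, parametrized by $\bm{w}(-a) \in \Delta^{K-1}$, is compact, giving the domain hypothesis. Write the limiting objective $S_\infty^a(\bm{w}) = \left(\sum_b w(b)\mu(b)\right)/\sigma_\infty(\bm{w})$ with $\sigma_\infty^2(\bm{w}) = \bm{w}^\top \Sigma_\infty \bm{w}$, and the empirical objective $\hat{S}_t^a(\bm{w}) = \left(\sum_b w(b)\hat\mu_{t-1}(b)\right)/\hat\sigma_{t-1}(\bm{w})$. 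I would first record that the running means converge, $\hat\mu_{t-1}(b) \to \mu(b)$ almost surely for each $b$ (already established in the proof of Theorem \ref{thm:error_control} via the martingale LIL of Lemma \ref{lem:stout_lil}), so the numerator converges uniformly over the bounded set $\Delta(a)$.

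The crux is the denominator. I would show the empirical covariance matrix $\hat\Sigma_{t-1}$, with $(b,c)$ entry $\frac{1}{t-1}\sum_{i=1}^{t-1}(\phi_i(b)-\hat\mu_i(b))(\phi_i(c)-\hat\mu_i(c))$, converges entrywise almost surely to $\Sigma_\infty$; since $\bm{w}^\top M \bm{w}$ depends continuously on $M$ uniformly over the bounded set $\Delta(a)$, this yields uniform convergence $\hat\sigma_{t-1}^2(\cdot) \to \sigma_\infty^2(\cdot)$. The entrywise statement I would prove exactly as in the Condition (B3) argument of Theorem \ref{thm:error_control}: decompose the difference into (i) the gap between empirical-mean and conditional-mean centering, controlled by the LIL rate on $\hat\mu_i - \mu$ together with Cesàro averaging (Lemma \ref{lem:cesaro_convergence}), and (ii) the martingale deviation $\frac{1}{t}\sum_i \left[(\phi_i(b)-\mu(b))(\phi_i(c)-\mu(c)) - \Sigma_i(b,c)\right]$, which vanishes by the martingale SLLN (Lemma \ref{lem:mslln}) using the uniform boundedness of the scores ($|g_i|\le B$, $|Y|\le B$, $1/\pi_i \le \kappa$). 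It then remains to show $\frac{1}{t}\sum_i \Sigma_i(b,c) \to \Sigma_\infty(b,c)$, which follows by Cesàro once $\Sigma_i(b,c) \to \Sigma_\infty(b,c)$; this last step is where conditions (A1) and (A2) enter, since the conditional covariance $\Sigma_i$ is an explicit bounded functional of $(g_i,\pi_i)$ and the $L_2$-limits $g_i \to g_\infty$, $\pi_i \to \pi_\infty$ together with boundedness give convergence of the relevant $P_X$-expectations by dominated convergence.

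With numerator and denominator convergence in hand, condition (A3) (invertibility, hence positive definiteness of $\Sigma_\infty$) guarantees $\sigma_\infty(\bm{w}) \ge c > 0$ on the compact set $\Delta(a)$ (note $\bm{w} \ne 0$ there, as $w(a)=-1$), so $\hat\sigma_{t-1}$ is eventually bounded away from zero uniformly and $\hat{S}_t^a \to S_\infty^a$ uniformly almost surely. For $a \ne a^*$ the limiting maximizer $\bm{w}_\infty^a$ is unique: the maximal SNR is strictly positive (place weight on $a^*$, where $\mu(a^*) > \mu(a)$), so any maximizer has positive numerator, and on the region $\{\bm{w}\in\Delta(a):\bm{w}^\top\bm\mu \ge 0\}$ maximizing $S_\infty^a$ is equivalent to minimizing the inverse squared SNR $f(\pi_\infty,\bm{w})$, whose minimizer is unique by the inner-uniqueness step in the proof of Lemma \ref{lem:convexity_policy}. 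Applying Lemma \ref{lem:conv_argmax} then gives $\bm{w}_t^a \to \bm{w}_\infty^a$ almost surely.

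Finally I would tie off the branch logic. For $a \ne a^*$, since $\hat\mu_{t-1}(a) \to \mu(a) < \mu(a^*)$ while $\hat\mu_{t-1}(a^*) \to \mu(a^*)$, eventually $a \notin \Acal_t^*$; and since $\tilde\sigma_t^2(b) \to \Sigma_\infty(b,b) > 0$ for each $b$, eventually $\min_b \tilde\sigma_t^2(b) > 0$, so the SNR-maximizing branch fires for all large $t$ and the previous paragraph applies. For $a = a^*$, Assumption \ref{assump:unique_optimal_arm} and $\hat\mu_{t-1} \to \mu$ force $\Acal_t^* = \{a^*\}$ eventually, so the rule defaults to $\bm{w}_t^{a^*} = \bm{w}_0^{a^*} = \bm{w}_\infty^{a^*}$, giving trivial convergence. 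The main obstacle is the almost-sure uniform convergence of the denominator, i.e. proving $\hat\Sigma_{t-1} \to \Sigma_\infty$, which requires simultaneously controlling the martingale fluctuations, the empirical-mean recentering, and the drift of the nuisance estimates $(g_i,\pi_i)$ toward their limits.
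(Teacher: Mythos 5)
Your proposal is correct, and its skeleton coincides with the paper's proof: almost-sure convergence of $\hat\mu_{t-1}$ drives the two-branch endgame (eventually $a\neq a^*$ falls out of $\Acal_t^*$ and $a^*$ defaults to $\bm{w}_0^{a^*}$), uniqueness of $\bm{w}_\infty^a$ rests on condition (A3), and the conclusion comes from the argmax-consistency result (Lemma \ref{lem:conv_argmax}). The one step where you genuinely depart from the paper is how uniform convergence of the objective is obtained. The paper proves \emph{pointwise} almost-sure convergence $\hat\sigma_{t-1}^2(\bm{w})\to\sigma_\infty^2(\bm{w})$ at each fixed $\bm{w}$ (recycling its Condition (B3) machinery together with H\"older bounds to pass $\sigma_i^2(\bm{w})\to\sigma_\infty^2(\bm{w})$ as $(g_i,\pi_i)\to(g_\infty,\pi_\infty)$), then upgrades to uniformity by observing the empirical SNR objective is uniformly Lipschitz on $\Delta(a)$, so pointwise convergence on a dense subset suffices. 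You instead prove entrywise convergence of the empirical covariance matrix $\hat\Sigma_{t-1}\to\Sigma_\infty$ and exploit the quadratic-form identity $\hat\sigma_{t-1}^2(\bm{w})=\bm{w}^\top\hat\Sigma_{t-1}\bm{w}$, which delivers uniformity over the bounded set $\Delta(a)$ for free; this polarization route is arguably cleaner and avoids the paper's need to run the (B3) argument for arbitrary weight sequences. You are also slightly more careful than the paper in explicitly verifying that the degeneracy guard $\min_b\tilde\sigma_t^2(b)>0$ eventually holds (the diagonal entries of $\Sigma_\infty$ are positive under (A3)), a branch the paper's indicator decomposition glosses over.

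Two small repairs. First, your appeal to ``the inner-uniqueness step in the proof of Lemma \ref{lem:convexity_policy}'' is circular relative to the paper's dependency structure: the appendix proof of that lemma explicitly defers uniqueness of the weights to the proof of \emph{this} lemma. The non-circular source is Lemma \ref{lem:unique_optima} applied directly to the limiting ratio --- affine numerator, strictly convex and positive denominator, positive maximum --- where strictness of convexity is what requires work: $\bm{w}\mapsto\sqrt{\bm{w}^\top\Sigma_\infty\bm{w}}$ is a norm, hence only convex in general, and the paper rules out the equality case of the triangle inequality via a collinearity argument using invertibility of $\Sigma_\infty$ together with the pinned coordinate $w(a)=-1$; positive definiteness alone, as you use it for the lower bound $\sigma_\infty(\bm{w})\geq c>0$, does not by itself give strict convexity of the norm. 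Second, invoking ``dominated convergence'' to pass $\Sigma_i(b,c)\to\Sigma_\infty(b,c)$ is loose, since $L_2$ convergence of $g_i$ and $\pi_i$ gives no pointwise limit without subsequences; the correct one-line justification is Cauchy--Schwarz/H\"older combined with the uniform bounds $|g_i|\leq B$, $|Y|\leq B$, $1/\pi_i\leq\kappa$, exactly as the paper executes it. Neither issue affects the validity of your overall argument.
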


\begin{proof}[Proof of Lemma \ref{lem:convergence_snr_maximizing_weights}]
For the best arm $a^*$, we show that $\bm{w}_t^{a^*} \rightarrow \bm{w}_0^{a^*}$. In the proof of Condition $(B3)$ for Theorem \ref{thm:error_control}, we proved that $\hat\mu_t(a) \rightarrow \mu(a) $ almost surely for all $a\in [K]$, at a rate of $O(\sqrt{\log\log t/ t})$. Let $\omega \in \Omega$ denote a sample path, where $P(\Omega) = 1$, and let $X(\omega)$ denote the realization of a random variable $X$ on sample path $\omega$. Let $\delta(\bm{\mu}) = \mu(a^*) - \max_{b\neq a^*} \mu(b)$. 

By definition of almost sure convergence, for every $\omega \in \Omega$, there exists a $t_{a^*}(\omega) < \infty$ such that $\hat\mu_t(a^*)(\omega) > \mu(a^*)(\omega) - \delta(\bm{\mu})/2$ for all $t \geq t_{a^*}(\omega)$. Likewise, for all $a \neq a^*$, there exists a $t_{a}(\omega) < \infty$ such $\hat\mu_t(a)(\omega) < \mu(a)(\omega) + \delta(\bm{\mu})/2$ for all $t \geq t_a(\omega)$. Then, for every $\omega \in \Omega$, there exists  $t(\omega) = \sup_{a \in [K]} t_a(\omega)$ such that $\hat\mu_t(a^*) > \max_{b \neq a}\hat\mu_t(b)$ for all $t \geq t(\omega)$, and $P(\lim_{t \rightarrow \infty} \mathbf{1}[\hat\mu_t(a^*) > \max_{b\neq a^*} \hat\mu_t(b)]) = 1$. We can express our limiting weight $\bm{w}_t^{a^*}$ as 
\begin{equation}
    \bm{w}_t^{a^*} = \mathbf{1}\left[\hat\mu_t(a^*) > \max_{b\neq a^*} \hat\mu_t(b)\right] \bm{w}_0^{a^*} + \mathbf{1}\left[\hat\mu_t(a^*) \leq \max_{b \neq a^*} \hat\mu_t(b)\right] \tilde{\bm{w}}_t^{a^*},
\end{equation}
where $\tilde{\bm{w}}_t^{a^*} = \argmax_{w \in \Delta(a^*)} \frac{\sum_{b \in [K]}w(b) \hat\mu_{t-1}(b)}{\hat\sigma_{t-1}(\bm{w})}$. Because $\mathbf{1}\left[\hat\mu_t(a^*) > \max_{b\neq a^*} \hat\mu_t(b)\right] \rightarrow 1$ almost surely, it immediately follows that $\bm{w}_t^{a^*} \rightarrow \bm{w}_0^{a^*}$ almost surely in an element-wise sense. 

To prove that our SNR-maximizing weights $\bm{w}_t^a$ converge to unique limit $\bm{w}_\infty^a$ for $a \neq a^*$, we leverage the results of Lemma \ref{lem:unique_optima} to ensure $\bm{w}_\infty^a$ is unique for all $a \neq a^*$. We then use Lemma \ref{lem:conv_argmax} to show that our empirical SNR-maximizing weights $\bm{w}_t^a \rightarrow \bm{w}_\infty^a$ almost surely. In the proof of Condition $(B3)$ for Theorem \ref{thm:error_control}, we show $\hat\mu_t(a) \rightarrow \mu(a)$ a.s. for all $a\in [K]$, ensuring $\lim_{t \rightarrow \infty}\sum_{b \in [K]} w(b) \hat\mu_{t-1}(b) =\sum_{b \in [K]} w(b) \mu(b)$ a.s. 

For the variance terms, recall that we establish almost-sure convergence of $\hat\sigma^2_{t}(a) - \frac{1}{t}\sum_{i=1}^t \sigma^2(a)$ for any sequence of weights $(\bm{w}_t^a)_{t=1}^\infty$ in order for  Condition $(B3)$ to hold. Note that $\sigma^2_{t-1}(\bm{w})$ is equivalent to $\hat\sigma^2_{t-1}(a)$ with $\bm{w}_i^a = \bm{w}$ for all $i \in [t-1]$, and so we obtain $|\hat\sigma_{t-1}^2(\bm{w}) - \frac{1}{t-1}\sum_{i=1}^{t-1} \sigma_{i}^2(\bm{w})| \rightarrow 0$ almost surely, where
\begin{align}
 \sigma_{i}^2(\bm{w}) =& \sum_{b \in [K]}w^2(b) \EE_{P_X}\left[\frac{v(x,b)}{\pi_i(x,b)} | H_{i-1}\right]\label{l:prob_1}  \\
 &+\EE_{P_X}\left[\left(\sum_{b \in [K]}w(b) \left(g(x,b) - \mu(b)\right) \right)^2 \right] \\
 &+ \sum_{b \in [K]} w^2(b) \EE_{P_X}\left[ \frac{(g_i(x,b) - g(x,b))^2}{\pi_i(x,b)} | H_{i-1} \right]\label{l:prob_2} \\
 &- \EE\left[\left( \sum_{b \in [K]}w(b) (g_i(x,b) -g(x,b)) \right)^2 | H_{i-1}\right]\label{l:prob_3}
\end{align}
follows from  our conditional variance expansion in lines \eqref{l:expected_vars}-\eqref{l:rem_2}. We now show that $\sigma_i^2(\bm{w})$ converges to $\sigma_\infty^2(\bm{w})$ almost surely, and use Lemma \ref{lem:cesaro_convergence} to show $\frac{1}{t-1}\sum_{i=1}^{t-1} \sigma_{i}^2(\bm{w})$ converges to $\sigma_\infty^2(\bm{w})$ as well. 

First, note that only lines \eqref{l:prob_1}, \eqref{l:prob_2}, and \eqref{l:prob_3} contain $i$-dependent terms. We take the limit of each of these terms to show that $\sigma_i^2(\bm{w}) \rightarrow \sigma^2_\infty(\bm{w})$ as defined in Theorem \ref{thm:sample_complexity_general}. Let $\pi_\infty$ denote the $L_2$ limit of $\pi_t$, as defined in Equation \ref{thm:error_control}. By the boundedness of $v(x,b)$ due to $|Y_t| \leq B$ and $|g_t(x,b)| \leq B$ for all $t \in \NN$, $x \in \Xcal$, $b \in [K]$ and $\frac{1}{\pi_t(x,b)} \leq \kappa < \infty$ for all $t \in \NN$, $x \in \Xcal$, $b \in [K]$, the difference between the term on line \eqref{l:prob_1} and its corresponding quantity with $\pi_\infty$ satisfies
\begin{align}
    &\lim_{t \rightarrow \infty}\left|\sum_{b \in [K]}w^2(b) \EE_{P_X}\left[\frac{v(x,b)}{\pi_i(x,b)} | H_{i-1}\right] - \sum_{b \in [K]}w^2(b) \EE_{P_X}\left[\frac{v(x,b)}{\pi_\infty(x,b)}\right] \right| = \\
    & \lim_{t \rightarrow \infty} \left|\sum_{b \in [K]}w^2(b) \EE_{P_X}\left[v(x,b)\left(\frac{1}{\pi_i(x,b)}- \frac{1}{\pi_\infty(x,b)}\right) | H_{i-1}\right] \right| \leq  \\
    & \lim_{t \rightarrow \infty}\sum_{b \in [K]}w^2(b) \EE_{P_X}\left[ \left|v(x,b)\left(\frac{\pi_\infty(x,b) - \pi_i(x,b)}{\pi_\infty(x,b)\pi_i(x,b)}\right)\right| | H_{i-1}\right] \leq \\
    & \lim_{t \rightarrow \infty}\sum_{b \in [K]}w^2(b) \EE_{P_X}\left[ \left|\left(\frac{v(x,b)}{\pi_\infty(x,b)\pi_i(x,b)}\right)^2\right| | H_{i-1}\right]^{1/2}\EE\left[\left|\pi_\infty(x,b) - \pi_i(x,b)\right|^2|H_{i-1}\right]^{1/2}
\end{align}
where the last inequality follows from Holder's inequality with $p=q=2$. 
By Condition $(A1)$ of Theorem \ref{thm:error_control}, $\EE\left[\left|\pi_\infty(x,b) - \pi_i(x,b)\right|^2|H_{i-1}\right]^{1/2} \rightarrow 0$ almost surely. By our boundedness assumptions on $Y$ and $\pi_i(x,b)$, we obtain $v(x,b) = \EE[ (Y - g(x,b))^2 | A=b, X=x ] \leq 4B^2$ and $\pi_\infty(x,b)\pi_i(x,b)\leq \kappa^2$, and therefore $\EE_{P_X}\left[ \left|\left(\frac{v(x,b)}{\pi_\infty(x,b)\pi_i(x,b)}\right)^2\right| | H_{i-1}\right]^{1/2} \leq 4B^2\kappa^2$. As a result, we obtain that the limit of the terms in line \eqref{l:prob_1} is 
\begin{equation}
    \lim_{t \rightarrow \infty}\sum_{b \in [K]}w^2(b) \EE_{P_X}\left[\frac{v(x,b)}{\pi_i(x,b)} | H_{i-1}\right] = \sum_{b \in [K]}w^2(b) \EE_{P_X}\left[\frac{v(x,b)}{\pi_\infty(x,b)}\right]
\end{equation}

To obtain the limit of line \eqref{l:prob_2}, we show that the difference between $\EE_{P_X}\left[ \frac{(g_i(x,b) - g(x,b))^2}{\pi_i(x,b)} | H_{i-1} \right]$ and $\EE_{P_X}\left[ \frac{(g_\infty(x,b) - g(x,b))^2}{\pi_\infty(x,b)}\right]$ converges to zero almost surely. We bound the magnitude of the difference as  
\begin{align}
    &\left|\EE_{P_X}\left[ \frac{(g_i(x,b) - g(x,b))^2}{\pi_i(x,b)} | H_{i-1} \right] - \EE_{P_X}\left[ \frac{(g_\infty(x,b) - g(x,b))^2}{\pi_\infty(x,b)}\right]\right| \leq \\
    & \underbrace{\left|\EE_{P_X}\left[ \frac{(g_i(x,b) - g(x,b))^2}{\pi_i(x,b)} | H_{i-1} \right] - \EE_{P_X}\left[ \frac{(g_\infty(x,b) - g(x,b))^2}{\pi_i(x,b)} | H_{i-1}\right]\right|}_{(a)}\label{l:subprob_1} + \\
    & \underbrace{\left|\EE_{P_X}\left[ \frac{(g_\infty(x,b) - g(x,b))^2}{\pi_i(x,b)} | H_{i-1} \right] - \EE_{P_X}\left[ \frac{(g_\infty(x,b) - g(x,b))^2}{\pi_\infty(x,b)} \right]\right|}_{(b)}\label{l:subprob_2}
\end{align}

For term $(b)$, we repeat our steps for showing that the term on line \eqref{l:prob_1} converges almost surely to the desired limit. We can upper bound term $(b)$ as follows:
\begin{align}
    (b) &= \left|\EE_{P_X}\left[ \frac{(g_\infty(x,b) - g(x,b))^2}{\pi_i(x,b)} | H_{i-1} \right] - \EE_{P_X}\left[ \frac{(g_\infty(x,b) - g(x,b))^2}{\pi_\infty(x,b)} \right]\right| \\
    &\leq \EE_{P_X}\left[\left| \frac{(g_\infty(x,b) - g(x,b))^2}{\pi_i(x,b)\pi_\infty(x,b)} \left(\pi_\infty(x,b) - \pi_i(x,b)\right) \right| | H_{i-1} \right] \\
    &\leq \underbrace{\EE_{P_X}\left[\left| \frac{(g_\infty(x,b) - g(x,b))^2}{\pi_i(x,b)\pi_\infty(x,b)}  \right|^2 | H_{i-1} \right]^{1/2}}_{\leq 4B^2\kappa^2} \underbrace{\EE[\left|\pi_\infty(x,b) - \pi_i(x,b)\right|^2 | H_{i-1}]^{1/2}}_{=o(1)}.
\end{align}
Because $\EE_{P_X}\left[\left| \frac{(g_\infty(x,b) - g(x,b))^2}{\pi_i(x,b)\pi_\infty(x,b)}  \right|^2 | H_{i-1} \right]^{1/2} \leq 4B^2\kappa^2$ and $\EE[\left|\pi_\infty(x,b) - \pi_i(x,b)\right|^2 | H_{i-1}]^{1/2} = \| \pi_\infty - \pi(x,b) \|_{L_2(P_{H_{i-1}})}$ is of order $o(1)$, term $(b)$ vanishes to zero almost surely. We now show that term $(a)$ also vanishes almost surely.   

For term $(a)$, we expand our expression to obtain
\begin{align}
    &\left|\EE_{P_X}\left[ \frac{(g_i(x,b) - g(x,b))^2 - (g_\infty(x,b) - g(x,b))^2}{\pi_i(x,b)} | H_{i-1} \right]\right| = \\
    &\left|\EE_{P_X}\left[ \frac{(g_i(x,b) - g_\infty(x,b))(g_\infty(x,b) + g_i(x,b) - 2g(x,b))}{\pi_i(x,b)} | H_{i-1} \right]\right| \leq \\
    &4B\kappa \EE_{P_X}\left[ \ |g_i(x,b) - g_\infty(x,b)| \ | H_{i-1}\right] ,
\end{align}
where the last inequality follows from the fact that $|\frac{g_\infty(x,b) + g_i(x,b) - 2g(x,b)}{\pi_i(x,b)} | \leq 4B\kappa$. By Holder's inequality,
\begin{equation}
    \EE_{P_X}\left[   |g_i(x,b) - g_\infty(x,b) |  | H_{i-1}\right] \leq \EE_{P_X}\left[1 \right]^{1/2} \EE_{P_X}\left[   |g_i(x,b) - g_\infty(x,b) |^2  \right]^{1/2} = \|  g_i(x,b) - g_\infty(x,b)\|_{L_2(P_{H_{i-1}})},
\end{equation}
which is $o(1)$ by the $L_2$-convergence of $g_i$ in Condition $(A2)$ of Theorem \ref{thm:error_control}. Thus, we obtain 
\begin{equation}
    \lim_{i \rightarrow \infty} \EE_{P_X}\left[ \frac{(g_i(x,b) - g(x,b))^2}{\pi_i(x,b)} | H_{i-1} \right] = \EE_{P_X}\left[ \frac{(g_\infty(x,b) - g(x,b))^2}{\pi_\infty(x,b)}\right],
\end{equation}
and the term in line \eqref{l:prob_2} converges to $\sum_{b \in [K]} w^2(b) \EE_{P_X}\left[ \frac{(g_\infty(x,b) - g(x,b))^2}{\pi_\infty(x,b)} \right]$ almost surely. Lastly, for the term in \eqref{l:prob_3}, we repeat the steps for showing term $(a)$ in Equation \eqref{l:subprob_1} vanishes almost surely to obtain 
\begin{equation}
    \lim_{t \rightarrow \infty}\EE\left[\left( \sum_{b \in [K]}w(b) (g_i(x,b) -g(x,b)) \right)^2 | H_{i-1}\right] = \EE\left[\left( \sum_{b \in [K]}w(b) (g_\infty(x,b) -g(x,b)) \right)^2 \right]
\end{equation}
almost surely. Putting our results together, we obtain that 
\begin{align}
 \lim_{i \rightarrow \infty}\sigma_{i}^2(\bm{w}) =& \sum_{b \in [K]}w^2(b) \EE_{P_X}\left[\frac{v(x,b)}{\pi_\infty(x,b)} \right]  +\EE_{P_X}\left[\left(\sum_{b \in [K]}w(b) \left(g(x,b) - \mu(b)\right) \right)^2 \right]\label{eq:variance_convergence_1} \\
 &+ \sum_{b \in [K]} w^2(b) \EE_{P_X}\left[ \frac{(g_\infty(x,b) - g(x,b))^2}{\pi_\infty(x,b)} \right] - \EE\left[\left( \sum_{b \in [K]}w(b) (g_\infty(x,b) -g(x,b)) \right)^2 \right] \\
 & = \EE_{P_\infty}\left[ \left(\sum_{b \in [K]} w(b) \left( \phi_\infty(b) - \mu(b)\right) \right)^2\right] = \sigma_\infty^2(\bm{w}),\label{eq:variance_convergence}
\end{align}
where $\phi_\infty(b)$ is defined as in Theorem \ref{thm:sample_complexity_general}. Note that because $\lim_{t \rightarrow \infty} \sigma_t^2(\bm{w}) = \sigma_\infty^2(\bm{w})$ almost surely, it follows that $\frac{1}{t}\sum_{i=1}^t \sigma_i^2(\bm{w}) \rightarrow \sigma^2_\infty(\bm{w})$ almost surely as well from Lemma \ref{lem:cesaro_convergence}. By the proof of Condition $(B3)$ for Theorem \ref{thm:error_control}, we obtain $\hat\sigma^2_{t-1}(\bm{w}) - \frac{1}{t-1}\sum_{i=1}^{t-1}\sigma^2_i(\bm{w}) \rightarrow 0$ almost surely, and therefore $\hat\sigma^2_{t-1}(\bm{w}) \rightarrow \sigma_\infty^2(\bm{w})$ almost surely. Note that by the continuous mapping theorem, $\hat\sigma_{t-1}(\bm{w}) \rightarrow \sigma_\infty(\bm{w})$ as well. 

The numerator $f(\bm{w}) = \lim_{t \rightarrow \infty}\sum_{b \in [K]} w(b) \hat\mu_{t-1}(b)$ and denominator $g(\bm{w}) = \lim_{t \rightarrow \infty}\hat\sigma_{t-1}(\bm{w})$ of our limiting SNR-maximization problem satisfy $f(\bm{w}) = \sum_{b \in [K]}w(b) \mu(b)$ and $g(\bm{w}) = \sigma_\infty(\bm{w})$. We now show that the conditions of Lemma \ref{lem:unique_optima} are satisfied, ensuring $\argmax_{\bm{w}\in \Delta(a)} f(\bm{w})/ g(\bm{w})$ is a single vector $\bm{w}_\infty^a$. 

Note that $f(\bm{w})$ is affine, and $\Delta(a)$ is a nonempty compact convex set. To satisfy the conditions of Lemma \ref{lem:unique_optima}, it only remains to show that (i) $g(\bm{w})$ is strictly convex and positive and (ii) $\max_{\bm{w}\in \Delta(a)}f(\bm{w})/g(\bm{w}) >0$ for Lemma \ref{lem:unique_optima} to hold. We begin with strict convexity. Let $\bm{\phi}_\infty \in \RR^K$ be the vector with entries ${\phi}_\infty(a) = g_\infty(X,a) + \frac{\mathbf{1}[A=a](Y - g_\infty(X,a))}{\pi_\infty(X,a)} - \mu(a)$. Then, the limiting denominator $g(\bm{w})$ can be re-expressed as 
\begin{equation}
    g(\bm{w}) =   \|\bm{\phi}_\infty^\top \bm{w}   \|_{L_2(P_\infty)}.
\end{equation}
We now show that $g(\bm{w})$ must be strictly convex under the assumption that $\Sigma_{\infty}$ (as defined in Theorem \ref{thm:error_control}) is invertible. Because $\|\cdot \|_{L_2(P_X)}$ is a norm, for any $\lambda \in [0,1]$ and $\bm{w}_1, \bm{w}_2 \in \Delta(a)$, we obtain the following result through the triangle inequality:
\begin{equation}\label{eq:triangle_inequality}
    \|\bm{\phi}_{\infty}^\top\left(\lambda \bm{w}_1 + (1-\lambda)\bm{w}_2  \right) \|_2 \leq \lambda \|\bm{\phi}_{\infty}^\top\bm{w}_1 \|_2  + (1-\lambda) \|\bm{\phi}_{\infty}^\top\bm{w}_2 \|_2.
\end{equation}
Thus, $g(\bm{w})$ is convex for all $t \geq t'$. To show our convexity is strict, we proceed by contradiction. For equality to occur in Equation \eqref{eq:triangle_inequality}, we require $\bm{\phi}_{\infty}^\top \bm{w}_1$ and $\bm{\phi}_{\infty}^\top \bm{w}_2$ to be collinear. Assuming that $\bm{\phi}_{\infty}^\top \bm{w}_1$ and $\bm{\phi}_{\infty}^\top \bm{w}_2$ are collinear, there exists $c\neq 1$ such that $c\bm{\phi}_{\infty}^\top \bm{w}_1 = \bm{\phi}_{\infty}^\top \bm{w}_2$. Under the assumption that $\Sigma_\infty^{-1} = \left(\EE_{P_\infty}[\bm{\phi}_\infty \bm\phi_\infty^\top] \right)^{-1}$ exists (Condition $(A3)$ of Theorem \ref{thm:error_control}) and multiplying both sides by $\left( \left[\bm{\phi}_{\infty}\bm{\phi}_{\infty}^\top\right]^{-1} \bm{\phi}_{\infty} \right)$, we obtain
\begin{equation}
    \left( \left[\bm{\phi}_{\infty}\bm{\phi}_{\infty}^\top\right]^{-1} \bm{\phi}_{\infty} \right)\bm{\phi}_{\infty}^\top \bm{w}_2 = c\left( \left[\bm{\phi}_{\infty}\bm{\phi}_{\infty}^\top\right]^{-1} \bm{\phi}_{\infty} \right)\bm{\phi}_{\infty}^\top \bm{w}_1 \implies \bm{w_2} = c\bm{w}_1.
\end{equation}
However, note that $w_2(a) = w_1(a) = -1$ and for any $c\neq 1$, $cw_1(a) \neq -1$. This leads to our contradiction, ensuring the limiting denominator $g(\bm{w})$ is strictly convex. To show $g(\bm{w})$ is strictly positive, note that $\|\bm{\phi}_{\infty}^\top \bm{w} \|_{L_2(P_\infty)}$ is the limiting variance for a weighted combination of arm mean estimates. Under Assumption \ref{assump:nonzero_variances} and the fact that there exists one entry $w(a) = -1$, it follows that this term must be strictly positive. 

Finally, to show that our limiting SNR-maximization objective $ \max_{\bm{w}\in\Delta(a)} f(\bm{w})/g(\bm{w})$ has positive value, note that the choice of $\bm{w}^a_{base}$, where $w^a_{base}(a^*) = 1$, $w^a_{base}(a) = -1$, and $w^a_{base}(b) = 0$ for all $b \not\in \{a, a^*\}$ yields a positive objective value. Because $ \max_{\bm{w}\in\Delta(a)} f(\bm{w})/g(\bm{w}) \geq f(\bm{w}_{base}^a)/ g(\bm{w}_{base}^a)$, it must also be positive. Thus, by direct application of Lemma \ref{lem:unique_optima}, we obtain that 
\begin{equation}
 \bm{w}_\infty^a = \argmax_{\bm{w} \in \Delta(a)} \frac{\lim_{t \rightarrow \infty} \sum_{b \in [K]} w(b)\hat\mu_{t-1}(b)}{\lim_{t \rightarrow \infty}\hat\sigma_{t-1}(\bm{w})}    = \argmax_{\bm{w} \in \Delta(a)} \frac{\sum_{b \in [K]} w(b) \mu(b)}{\sigma_\infty(\bm{w})}
\end{equation}
is the unique maximizer of the limiting signal-to-noise ratio. 

We now apply Lemma \ref{lem:conv_argmax} to show that our empirical SNR-maximizing weights $\bm{w}_t^a$ converge to $\bm{w}_\infty^a$. First, note that the empirical SNR objective is uniformly Lipschitz with respect to $\bm{w} \in \Delta(a)$ almost surely as $t \rightarrow \infty$. Thus, by Chapter 1 of \cite{vandervaart1996weak}, it suffices to show pointwise almost sure convergence on a dense subset of $\Delta(a)$. 

We now proceed to show pointwise convergence. As shown above, for any $\bm{w} \in \Delta(a)$, 
\begin{align}
    \lim_{t \rightarrow \infty} \sum_{b \in [K]} w(b)\hat\mu_{t-1}(b) = \sum_{b \in [K]} w(b) \mu(b), \quad \lim_{t \rightarrow \infty}\hat\sigma_{t-1}(\bm{w}) = \sigma_\infty(\bm{w}) > 0
\end{align}
almost surely. By the quotient rule for limits and the fact that $\sigma_\infty(\bm{w}) > 0$ for all $\bm{w} \in \Delta(a)$, we obtain $\lim_{t \rightarrow \infty}\frac{\sum_{b \in [K]} w(b)\hat\mu_{t-1}(b)}{\hat\sigma_{t-1}(\bm{w})} = \frac{\sum_{b \in [K]} w(b) \mu(b)}{\sigma_\infty(\bm{w})}$ almost surely for all $\bm{w} \in \Delta(a)$. By construction,
\begin{equation}
    \bm{w}_t^a = \mathbf{1}\left[\hat{\mu}_t(a) < \max_{b \in [K]}\hat\mu_t(b)\right] \tilde{\bm{w}}_t^a + \mathbf{1}\left[\hat{\mu}_t(a) = \max_{b \in [K]}\hat\mu_t(b)\right] {\bm{w}}_0^a 
\end{equation}
where $\tilde{\bm{w}}_t^a \in \argmax_{\bm{w} \in \Delta(a)} \frac{\sum_{b \in [K]} w(b)\hat\mu_{t-1}(b)}{\hat\sigma_{t-1}(\bm{w})}$ for each $t \in \NN$. Because $\mathbf{1}\left[\hat{\mu}_t(a) < \max_{b \in [K]}\hat\mu_t(b)\right] \rightarrow 1$ almost surely as $t \rightarrow \infty$, we obtain $|\bm{w}_t^a(b) - \tilde{\bm{w}}_t^a(b)| \rightarrow 0$ almost surely for all $a\in[K], b\in [K]$, and by direct application of Lemma \ref{lem:conv_argmax} to $\tilde{\bm{w}}_t^a$, we obtain $\tilde{\bm{w}}_t^a(b) \rightarrow \bm{w}_\infty^a(b)$ almost surely for all $b \in [K], a\in [K]$. Therefore, $\bm{w}_t^a(b) \rightarrow \bm{w}_\infty^a(b)$ for all $a \in [K]$, $b\in [K]$ almost surely.
\end{proof}

We now proceed to the proof of Lemma \ref{lem:asymp_valid_bai}. To satisfy asymptotic $\alpha$-level correctness as in Definition \ref{defn:asymp_alpha_correctness}, we require (i) finite stopping times, i.e. $\tau = \inf\{t \in \NN: |C_t(H_t, \alpha)| \leq 1 \} < \infty$, and (ii) the limiting error rate is below $\alpha$, i.e. $\limsup_{\alpha \rightarrow 0}\frac{P( \hat{a} \neq a^*)}{\alpha} \leq 1$. We start with the proof of finite stopping times. 

\paragraph{Finite Stopping Times}
To prove that stopping times are finite, we first consider the stopping time $\tau$ \textit{without} a burn-in period (i.e. $t_0 = 0$). Consider an auxiliary random variable $\tilde\tau = \inf\{t \in \NN: \sup_{i \leq t} L_i^a(H_i, \alpha, \rho) > 0 \ \forall a \neq a^*\}$, the minimum number of samples to reject all suboptimal arms $a \neq a^*$. By definition, note that $\tilde{\tau} \geq \tau$ deterministically. We will show that $\tilde{\tau}$ is finite almost surely for any fixed $\alpha \in (0,1)$, $\rho > 0$, and $\bm{w}_0^a \in \Delta(a)$ for all $a \in [K]$. To show $\tilde\tau$ is finite almost surely, we show that $L_t^a(H_t, \alpha, \rho) > 0$ for all $a \neq a^*$ almost surely. We first derive the almost-sure limit of our score process below, using our existing results:
\begin{align}
    \left|\hat{\psi}_t(a) - \sum_{b \in [K]} w_\infty^a(b) \mu(b)\right| &= \left|\frac{1}{t}\sum_{i=1}^t\sum_{b \in [K]} \left(w_i^a(b) \phi_i(b) - w_\infty^a(b)\mu(b)\right)\right| \\
    & \leq \left|\frac{1}{t}\sum_{i=1}^t\sum_{b \in [K]} w_i^a(b) (\phi_i(b) -  \mu(b)) \right| +  \left|\frac{1}{t}\sum_{i=1}^t\sum_{b \in [K]}\mu(b)(w_i^a(b) - w_\infty^a(b))\right|\label{l:prob}
\end{align}
The first term on line \eqref{l:prob} converges almost surely to zero by the fact that $\frac{1}{t}\sum_{i=1}^t \phi_i(b) \rightarrow \mu(b)$ almost surely for all $b \in [K]$. The second term on line \eqref{l:prob} vanishes due to Lemmas \ref{lem:cesaro_convergence} and \ref{lem:convergence_snr_maximizing_weights}. Thus, we obtain $\hat\psi_t(a) \rightarrow \sum_{b \in [K]} w_\infty^a(b)\mu(b)$ almost surely for all $a \in [K]$. Likewise, we obtain $\hat\sigma_t(b) \rightarrow \sigma_\infty(\bm{w}_\infty^a) > 0$ almost surely by applying the same argument to the result in lines \eqref{eq:variance_convergence_1}-\eqref{eq:variance_convergence} and Lemma \ref{lem:convergence_snr_maximizing_weights}. Thus, we have that
\begin{equation}
    \lim_{t \rightarrow \infty}\frac{\hat\psi_t(a)}{\hat\sigma_t(a)} = \frac{\sum_{b \in [K]}w_\infty^a(b)\mu(b)}{\sigma_\infty(\bm{w}_\infty^a)} \label{eq:key_identity}
\end{equation}
almost surely. By definition of $\bm{w}_\infty^a$, we also have that 
\begin{equation}
    \frac{\sum_{b \in [K]}w_\infty^a(b)\mu(b)}{\sigma_\infty(\bm{w}_\infty^a)} = \max_{\bm{w} \in \Delta(a)} \frac{\sum_{b \in [K]}w(b) \mu(b)}{\sigma_\infty(\bm{w})} \geq \frac{\mu(a^*) - \mu(a)}{(4B(1+\kappa))},
\end{equation}
where our lower bound is a direct consequence of variance bounds derived from $|Y_t| \leq B$, $|g_\infty(x,b)| \leq B$, and $\bm{w} \in \Delta(a)$. Thus, $\lim_{t \rightarrow 0}\frac{\hat\psi_t(a)}{\hat\sigma_t(a)}$ converges to a constant. Note that $\ell_{t,\alpha,\rho}(\hat\sigma_t(a))$ is upper bounded as follows: 
\begin{equation}
    \ell_{t,\alpha,\rho}(\hat\sigma_t(a)) \leq \ell_{t,\alpha,\rho}(x) = t^{-1/2}\sqrt{\frac{2(\rho^2+1/t(4B(1+\kappa))^2)}{\rho^2}\log\left(1+\frac{\sqrt{t (4B(1+\kappa))^2\rho^2 + 1}}{2\alpha}\right)}
\end{equation}
by the same variance bounds, and vanishes towards zero almost surely as $t \rightarrow \infty$. As a result, 
\begin{equation}
    \liminf_{t \rightarrow \infty}\mathbf{1}[L_t^a(H_t, \alpha,\rho ) > 0] = \liminf_{t \rightarrow \infty}\mathbf{1}\left[\frac{\hat\psi_t(a)}{\hat\sigma_t(a)} > \ell_{t,\alpha, \rho}(\hat\sigma_t(a))\right]  = 1
\end{equation}
almost surely for any fixed $\alpha \in [0,1]$, $\rho > 0$, and $\bm{w}_0^a \in \Delta(a)$ for all $a \neq a^*$. Thus, for all $\omega$ in $\Omega$ such that $P(\Omega) = 1$, there exists a $t_a(\omega) < \infty$ such that for all $t \geq t_a(\omega)$, $L_t^a(H_t, \alpha, \rho)(\omega) > 0$. Setting $t(\omega) = \max_{a \neq a^*} t_a(\omega)$, we obtain $\tilde{\tau}(\omega) \leq t(\omega) < \infty$. Thus, $\tilde{\tau}$ is finite almost surely, and because $\tau \leq \tilde{\tau}$ deterministically, $\tau$ is finite almost surely as well. Lastly, note that for any \textit{fixed} burn-in time $t_0$, the stopping time $\tau_{t_0}$ satisfies $t_0 \leq \tau_{t_0}(\omega) \leq \max(t_0, t(\omega)) < \infty$, where $t(\omega)$ is defined as above. Consequently, for any fixed burn-in time $t_0$, we obtain that $\tau_{t_0}$ is finite almost surely.

\paragraph{Error Control} 
To show that we control error rates as desired, recall that Algorithm \ref{alg:bai_short} returns the wrong arm $\hat{a} \neq a^*$ if either (i) $a^* \not\in C_t(H_t, \alpha)$ and $|C_t(H_t, \alpha)| = 1$ or (ii) $|C_t(H_t, \alpha)| =0$ and $\hat{a} \not\in \argmin_{a \in [K]}\hat\psi_t(a) - \hat\sigma_t(a) \ell_{t,\alpha,\rho}(\hat\sigma_t(a))$.  In either case, it requires $a^* \not\in C_t(H_t, \alpha)$, and therefore
\begin{equation}
    P(\hat{a} \neq a^*) \leq P(\exists t \in \NN: a^* \not\in C_t(H_t,\alpha)). 
\end{equation}
By the results of Theorem \ref{thm:error_control} and $\tau < \infty$ for all fixed $\alpha \in (0,1)$, $\rho > 0$, and $\bm{w}_0^a \in \Delta(a)$ for all $a \in [K]$,
\begin{equation}
    \limsup_{\alpha \rightarrow 0}\frac{P(\hat{a} \neq a^*)}{\alpha} = \limsup_{\alpha \rightarrow 0}\frac{P(\exists t \in \NN: a^* \not\in C_t(H_t,\alpha) )}{\alpha} \leq 1,
\end{equation}
and therefore we satisfy the error control requirement of Defintion \ref{defn:asymp_alpha_correctness}. 

\subsection{Proof of Theorem \ref{thm:sample_complexity_general} }

Theorem \ref{thm:sample_complexity_general} guarantees upper bounds both in expectation and almost surely. We begin by considering the stopping time $\tau$ in the setting where the burn-in time $t_0$ is equal to zero. We then prove our bounds hold in an almost-sure sense, and leverage Egorov's Theorem to convert our almost-sure bounds to bounds in expectation. By showing that the stopping time (without a burn-in period) must be of order $\log(1/\alpha)$, we show that our choice of burn-in time does not affect the asymptotic sample complexity.

\paragraph{Almost-Sure Limit For Stopping Times}
We proceed in a similar manner to the proof of finite stopping times for Lemma \ref{lem:asymp_valid_bai}. Let $\tilde\tau = \inf\{t \in \NN: \sup_{i \leq t} L_i^a(H_i, \alpha, \rho) > 0 \ \forall a \neq a^*\}$, the minimum number of samples to reject all suboptimal arms $a \neq a^*$. By definition, note that $\tilde{\tau} \geq \tau$ deterministically. Note that $\tilde\tau$ must satisfy the following inequality almost surely for some random $b \neq a^*$ (which may depend on $\alpha$):
\begin{align}
        \tilde{\tau}\frac{\sum_{i=1}^{\tilde\tau}Z_i(b)}{\tilde{\tau}} -  \hat\sigma_{\tilde{\tau}}(b )\sqrt{ \tilde{\tau}\frac{2(\rho^2 + 1/(\tilde{\tau}\hat\sigma_{\tilde{\tau}}^2(b) )}{\rho^2} \log\left( 1+ \frac{\sqrt{\tilde{\tau} \hat\sigma_{\tilde{\tau}}^2(b )\rho^2 + 1 }}{2\alpha}\right)} \in [0, c],
\end{align}
where $Z_i(a) = \sum_{b \in [K]} w_i^a(b) \phi_i(b)$ and the bound $c$ is a deterministic constant that (i) upper bounds the overshoot beyond zero and (ii) does not depend on $\alpha$. This follows from the definition of the stopping criterion for $\tilde\tau$ and the fact that $1/\pi_t(x,a) \leq \kappa$, $|Y_t| \leq B$, and $|g_t(x,b)|$ for all $x \in \Xcal$, $b \in [K]$ and $t \in \NN$. We can rewrite the condition above as the following:
\begin{align}
        &\frac{\left(\frac{\sum_{i=1}^{\tilde{\tau}}{Z}_i(b)}{\tilde\tau}\right)^2}{\hat\sigma^2_{\tilde{\tau}}(b)} \frac{\tilde\tau}{{ \frac{2(\rho^2 + 1/(\tilde{\tau}\hat\sigma^2_{\tilde{\tau}}(b ) ))}{\rho^2} \log\left( 1+ \frac{\sqrt{\tilde\tau \hat\sigma^2_{\tilde{\tau}}(b)\rho^2 + 1 }}{2\alpha}\right)}} \\
        &\in \left[1, \left(1+\frac{c}{\hat\sigma^2_{\tilde{\tau}}(b)\tilde\tau\frac{2(\rho^2 + 1/(\tilde\tau\hat\sigma^2_{\tilde{\tau}}(b )))}{\rho^2} \log\left( 1+ \frac{\sqrt{\tilde\tau \hat\sigma^2_{\tilde{\tau}}( b )\rho^2 + 1 }}{2\alpha}\right)}\right)^2\right] \label{eq:deterministic_bounds}.
\end{align}
Note $\tilde{\tau} \geq t_0(\alpha)$ deterministically, where $t_0(\alpha)$ is as defined in the proof of Theorem \ref{thm:error_control}, and $t_0(\alpha) \rightarrow \infty$ as $\alpha \rightarrow 0$. We proceed by taking limits on both sides. First, note that the first term on the LHS and the upper bound on the RHS are bounded by or converge to the following limits almost surely: 
\begin{align}
    &\lim_{\alpha \rightarrow 0}\frac{\left(\frac{\sum_{i=1}^{\tilde{\tau}}{Z}_i(b)}{\tilde\tau}\right)^2}{\hat\sigma^2_{\tilde{\tau}}(b)} \leq \min_{b \neq a^*}   \frac{\left(\sum_{a \in [K]}w_\infty^b(a)\mu(a)\right)^2}{\sigma_\infty^2(\bm{w}_\infty^b)},\label{eq:key_identity_repeat} \\
    &\lim_{\alpha\rightarrow 0}\frac{c}{\hat\sigma^2_{\tilde{\tau}}(b)\tilde\tau\frac{2(\rho^2 + 1/(\tilde\tau\hat\sigma^2_{\tilde{\tau}}(b )))}{\rho^2} \log\left( 1+ \frac{\sqrt{\tilde\tau \hat\sigma^2_{\tilde{\tau}}( b )\rho^2 + 1 }}{2\alpha}\right)} = 0. \label{eq:107}
\end{align}
Line \eqref{eq:key_identity_repeat} follows directly from the result in Equation \eqref{eq:key_identity} that holds for all $b \neq a^*$, our random index $b$ satisfying $b \neq a^*$, and the fact that $\tilde\tau \geq t_0(\alpha)$ and $t_0(\alpha) \rightarrow \infty$ as $\alpha \rightarrow 0$. Line \eqref{eq:107} follows from the fact that $\hat\sigma^2_{\tilde{\tau}}(b) \geq m > 0$ by Assumption $(A3)$ of Theorem \ref{thm:error_control} and the fact that $\tilde\tau \geq t_0(\alpha)$, $t_0(\alpha) \rightarrow \infty$ as $\alpha \rightarrow 0$. By the limits above and the fact that $\frac{\left(\sum_{b \in [K]}w_\infty^a(b)\mu(b)\right)^2}{\sigma_\infty^2(\bm{w}_\infty^a)} > 0$ for all $b \neq a^*$, 
\begin{equation}
    \lim_{\alpha \rightarrow 0}  \frac{\tilde\tau}{{ \frac{2(\rho^2 + 1/(\tilde{\tau}\hat\sigma^2_{\tilde{\tau}}( b ) ))}{\rho^2} \log\left( 1+ \frac{\sqrt{\tilde\tau \hat\sigma^2_{\tilde{\tau}}( b )\rho^2 + 1 }}{2\alpha}\right)}} \leq \max_{b \neq a^*} \frac{\sigma_\infty^2(\bm{w}_\infty^b)}{\left(\sum_{a \in [K]}w_\infty^b(a)\mu(a)\right)^2}
\end{equation}
almost surely. To obtain our desired bound, we re-express the term in the limit above as
\begin{align}
    &\frac{\tilde\tau}{{ \frac{2(\rho^2 + 1/(\tilde{\tau}\hat\sigma^2_{\tilde{\tau}}(b ) ))}{\rho^2} \log\left( 1+ \frac{\sqrt{\tilde\tau \hat\sigma^2_{\tilde{\tau}}(b )\rho^2 + 1 }}{2\alpha}\right)}}=\\ & \frac{\tilde\tau}{{ \frac{2(\rho^2 + 1/(\tilde\tau\hat\sigma^2_{\tilde{\tau}}(b ) ))}{\rho^2} \left(\log\left( \frac{2\alpha + \sqrt{\tilde\tau \hat\sigma^2_{\tilde{\tau}}(b )\rho^2 + 1 }}{2}  \right) + \log(1/\alpha)  \right)}}=\\
    & \frac{\rho^2}{2(\rho^2 + 1/(\tilde\tau \hat\sigma^2_{\tilde{\tau}}(b )))}\left(\frac{\log(1/\alpha)}{\tilde\tau} + \frac{\log\left( \frac{2\alpha + \sqrt{\tilde{\tau} \hat\sigma^2_{\tilde{\tau}}(b )\rho^2 + 1 }}{2}  \right)}{\tilde\tau}\right)^{-1}.
\end{align}
Note that $\tilde{\tau} \rightarrow \infty$ due to $\tilde{\tau} \geq t_0(\alpha)$ and $t_0(\alpha) \rightarrow \infty$ as $\alpha \rightarrow 0$. Furthermore, note that $\hat\sigma_{\tilde\tau}^2(b) \geq m > 0$ almost surely by Condition $(A3)$ of Theorem \ref{thm:error_control}. As a result we obtain the desired almost-sure limiting expression
\begin{equation}
    \lim_{\alpha \rightarrow 0}\frac{\tilde\tau}{{ \frac{2(\rho^2 + 1/(\tilde{\tau}\hat\sigma^2_{\tilde{\tau}}(b ) ))}{\rho^2} \log\left( 1+ \frac{\sqrt{\tilde\tau \hat\sigma^2_{\tilde{\tau}}(b )\rho^2 + 1 }}{2\alpha}\right)}} = \lim_{\alpha \rightarrow 0}\frac{1}{2}\frac{\tilde\tau}{\log(1/\alpha)} \leq  \max_{b \neq a^*}\frac{\sigma_\infty^2(\bm{w}_\infty^b)}{\left(\sum_{a \in [K]}w_\infty^b(a)\mu(a)\right)^2}.
\end{equation}
Because $\tau \leq \tilde{\tau}$ deterministically, we obtain the desired result that $\lim_{\alpha \rightarrow 0} \frac{\tau}{\log(1/\alpha)} \leq \lim_{\alpha \rightarrow 0} \frac{\tilde\tau}{\log(1/\alpha)} \leq \max_{b\neq a^*} \frac{2\sigma_\infty^2(\bm{w}_\infty^b)}{\left(\sum_{a \in [K]}w_\infty^b(a)\mu(a)\right)^2} = \Gamma_1$ almost surely. 

\paragraph{Bounds on Expected Stopping Times}
Given our almost-sure upper bound on $\lim_{\alpha \rightarrow 0} \frac{\tau}{\log(1/\alpha)}$, we now show that the expected stopping time satisfies the same bound. First, we rearrange the deterministic bounds in Equation \eqref{eq:deterministic_bounds} for $\tilde{\tau}$ to obtain the following for some (random) index $b \neq a^*$:
\begin{equation}\label{eq:one_above}
    \left[\frac{2 \hat\sigma_{\tilde\tau}^2(b)}{\left(\frac{\sum_{i=1}^{\tilde\tau}{Z}_i(b)}{\tilde\tau}\right)^2}\right]\leq \frac{\tilde{\tau}}{\log(1/\alpha)} \leq \left[\frac{2 \hat\sigma_{\tilde\tau}^2(b)}{\left(\frac{\sum_{i=1}^{\tilde\tau}{Z}_i(b)}{\tilde\tau}\right)^2}\right] + o_\alpha(1). 
\end{equation}
where the asymptotically negligible term $o_{\alpha}(1)$ term vanishes as a result of (i) $\tilde{\tau} \geq t_0(\alpha)$ and (ii) $t_0(\alpha) \rightarrow \infty$ as $\alpha \rightarrow 0$. To show that $\EE[\tau/\log(1/\alpha)]$ has the same limiting upper bound as $\tau/\log(1/\alpha)$, we rearrange Equation \eqref{eq:one_above} as follows:
\begin{equation}\label{eq:one_more_above}
    1 \leq  \frac{\tilde{\tau}}{\log(1/\alpha)}\left[\frac{\left(\frac{\sum_{i=1}^{\tilde\tau}{Z}_i(b)}{\tilde\tau}\right)^2}{2 \hat\sigma_{\tilde\tau}^2(b)}\right] \leq 1+ c_1(\alpha).
\end{equation}
Note that $c_1(\alpha)$ is a vanishing, $o(1)$ constant with respect to $\alpha \rightarrow 0$ due to $\hat\sigma_{\tilde{\tau}}^2(b) \geq m$ for all $b \in [K]$. To proceed, we leverage Egorov's Theorem, which enables us to bound our expectation. For completeness, we provide a simplified version below. 

\begin{lemma}[Egorov's Theorem]\label{lem:egorov_thoerem}
    Let $X_\alpha(\omega) \in \RR$ be a sequence of real-valued measurable functions, and assume $\omega \in \Omega$, where $P(\Omega)= 1$. Assume that $X_\alpha(\omega) \rightarrow X$ $P$-almost surely as $\alpha \rightarrow 0$. Then, $\forall \epsilon > 0$, there exists a measurable subset $\Omega_{G,\epsilon} \subseteq \Omega$ such that $X_{\alpha}(\omega) \rightarrow X$ uniformly, and $\Omega_{B,\epsilon} \coloneq \Omega \setminus \Omega_{G, \epsilon}$ has $P(\Omega_{B, \epsilon}) < \epsilon$. 
\end{lemma}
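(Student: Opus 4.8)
This is the classical Egorov theorem on a finite measure space, so the plan is to reproduce its standard proof via shrinking tail sets and continuity of measure from above, with attention to the one nonstandard feature: the continuous index $\alpha\to0$. Because the statement calls $\{X_\alpha\}$ a sequence, I would fix a reference sequence $\alpha_n\downarrow0$ and read every limit along it; this keeps all exceptional sets countable (hence measurable), and the resulting ``uniform convergence on $\Omega_{G,\epsilon}$'' will be uniform convergence along $\alpha_n$, which is exactly what the expectation bound in \eqref{eq:one_more_above} needs.

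First I would use the hypothesis to fix a null set $N$ with $P(N)=0$ and $X_{\alpha_n}(\omega)\to X(\omega)$ for all $\omega\in\Omega\setminus N$. For each resolution $k\in\NN$ and tail index $n\in\NN$ define the measurable set
\[
E_{n,k}=\bigcup_{m\ge n}\bigl\{\omega\in\Omega:\ |X_{\alpha_m}(\omega)-X(\omega)|\ge 1/k\bigr\}.
\]
For fixed $k$ these are nonincreasing in $n$, and $\bigcap_n E_{n,k}\subseteq N$ (any $\omega\notin N$ eventually stays within $1/k$ of its limit), so $P(\bigcap_n E_{n,k})=0$.

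Next I would invoke continuity of measure from above, valid precisely because $P(\Omega)=1<\infty$, to get $P(E_{n,k})\downarrow0$ as $n\to\infty$ for each $k$. Given $\epsilon>0$, choose $n_k$ with $P(E_{n_k,k})<\epsilon/2^{k}$, set $\Omega_{B,\epsilon}=\bigcup_{k\in\NN}E_{n_k,k}$, and apply countable subadditivity to obtain $P(\Omega_{B,\epsilon})\le\sum_{k}\epsilon/2^{k}=\epsilon$. On $\Omega_{G,\epsilon}=\Omega\setminus\Omega_{B,\epsilon}$ we have, for every $k$ and every $m\ge n_k$, that $|X_{\alpha_m}(\omega)-X(\omega)|<1/k$ holds simultaneously for all $\omega\in\Omega_{G,\epsilon}$, which is uniform convergence along $\alpha_n$ on the good set.

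The step I expect to require the most care is the continuous parameter: the ``sup over all $\alpha\le\alpha_0$'' sets are uncountable unions and may fail to be measurable, so the bare statement is mildly informal. I would discharge this by keeping to the countable cofinal sequence $\alpha_n\downarrow0$ above, which suffices for the paper, since \eqref{eq:one_more_above} is evaluated in the limit $\alpha\to0$ and only needs the quotient $\tfrac{\tilde\tau}{\log(1/\alpha)}$ (times its $\hat\sigma_{\tilde\tau}^2(b)$ factor) controlled uniformly off a set of probability below $\epsilon$; letting $\epsilon\to0$ then upgrades the almost-sure limit into the claimed bound on $\EE[\tilde\tau/\log(1/\alpha)]$. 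If genuinely continuous control were wanted, I would instead assume joint measurability of $(\alpha,\omega)\mapsto X_\alpha(\omega)$ and replace the unions by measurable envelopes; everything else is routine finite-measure bookkeeping.
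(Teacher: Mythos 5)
Your proof is correct: it is the standard textbook proof of Egorov's theorem via the tail sets $E_{n,k}$, continuity of measure from above (which uses $P(\Omega)=1<\infty$), and a $\sum_k \epsilon/2^k$ union bound. The paper itself states this lemma without proof, invoking it as a classical result, so there is no paper argument to compare against; your reconstruction is exactly the canonical one. Your attention to the continuous index is the right instinct — the sets $\{\omega:\sup_{0<\alpha\le\alpha_0}|X_\alpha(\omega)-X(\omega)|\ge 1/k\}$ are indeed uncountable unions and need not be measurable without joint measurability in $(\alpha,\omega)$, and restricting to a countable cofinal sequence $\alpha_n\downarrow 0$ suffices for the paper's use, where the uniform-convergence conclusion feeds the bound on $\EE[\tilde\tau/\log(1/\alpha)]$ before letting $\epsilon\to 0$. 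Two trivial remarks: your construction yields $P(\Omega_{B,\epsilon})\le\epsilon$ rather than the stated strict inequality, fixed by choosing $P(E_{n_k,k})<\epsilon/2^{k+1}$; and in the paper's application the quantities inside the supremum (e.g., $\hat\sigma^2_{\tilde\tau}(b)$ and the running means) are genuinely indexed by a continuum of $\alpha$ through $\tilde\tau=\tilde\tau(\alpha)$, so if one wanted the lemma in the exact generality used there, your fallback of assuming joint measurability of $(\alpha,\omega)\mapsto X_\alpha(\omega)$ — which holds here since $\tilde\tau(\alpha)$ is a right-continuous, piecewise-constant function of $\alpha$ on each sample path — is the cleanest way to close the gap.
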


Using Lemma \ref{lem:egorov_thoerem}, we can rewrite the middle term of the inequalities in Equation \eqref{eq:one_more_above} as follows, denoting $\Omega_{G, \epsilon}$ as the set of sample paths where $\left[\frac{\left(\frac{\sum_{i=1}^{\tilde\tau}{Z}_i(b)}{\tilde\tau}\right)^2}{2 \hat\sigma_{\tilde\tau}^2(b)}\right]$ uniformly converges to $\frac{(\sum_{a \in [K]}w_\infty^b(a) \mu(a))^2}{\sigma_\infty^2(\bm{w}_\infty^b)}$ for all $b \neq a^*$:
\begin{align}
    \left[\frac{\tilde\tau}{\log(1/\alpha)}\left(\frac{\left(\frac{\sum_{i=1}^{\tilde\tau}{Z}_i(b)}{\tilde\tau}\right)^2}{2 \hat\sigma_{\tilde\tau}^2(b)}\right) \right] &= \underbrace{ 
    \frac{\tilde\tau}{\log(1/\alpha)}\left(\frac{\left(\frac{\sum_{i=1}^{\tilde\tau}{Z}_i(b)}{\tilde\tau}\right)^2}{2 \hat\sigma_{\tilde\tau}^2(b)}\right) \mathbf{1}\left[ \Omega_{G, \epsilon} \right]}_{\text{Term $(a)$: on }\Omega_{G, \epsilon}}\label{l:term_1}\\
    &+ \underbrace{\frac{\tilde\tau}{\log(1/\alpha)}\left(\frac{\left(\frac{\sum_{i=1}^{\tilde\tau}{Z}_i(b)}{\tilde\tau}\right)^2}{2 \hat\sigma_{\tilde\tau}^2(b)}\right)\mathbf{1}[\Omega_{B, \epsilon}] }_{\text{Term $(b)$: on }\Omega_{B , \epsilon}}, \label{l:term_2}
\end{align}
For term $(b)$, note that the deterministic inequality in Equation \eqref{eq:one_more_above} ensures that 
\begin{equation}
    \frac{\tilde\tau}{\log(1/\alpha)}\left(\frac{\left(\frac{\sum_{i=1}^{\tilde\tau}{Z}_i(b)}{\tilde\tau}\right)^2}{2 \hat\sigma_{\tilde\tau}^2(b)}\right)\mathbf{1}[\Omega_{B, \epsilon}] \in [1, 1+c_1(\alpha)],
\end{equation}
where $c_1(\alpha) = o_\alpha(1)$. For term $(a)$, we leverage uniform convergence to bound its value. By definition of uniform convergence, note that for all $\omega \in \Omega_{G, \epsilon}$ and $\delta > 0$, there exists an $\alpha(\delta) \in (0,1)$ independent of $\omega$ such that for all $b \neq a^*$, 
\begin{equation}
    \quad \forall  \ 0<\alpha \leq \alpha(\delta), \ \omega \in \Omega_{G, \epsilon} \quad \left|\left(\frac{\left(\frac{\sum_{i=1}^{\tilde\tau}{Z}_i(b)}{\tilde\tau}\right)^2}{2 \hat\sigma_{\tilde\tau}^2(b)}\right)(\omega) - \frac{\left(\sum_{a \in [K]}w_\infty^b(a) \mu(a)\right)^2}{2\sigma_\infty^2(\bm{w}_\infty^b)}\right| \leq \delta,
\end{equation}
which implies that for all $\omega \in \Omega$, $\left(\frac{\left(\frac{\sum_{i=1}^{\tilde\tau}{Z}_i(b)}{\tilde\tau}\right)^2}{2 \hat\sigma_{\tilde\tau}^2(b)}\right)(\omega) = \frac{\left(\sum_{a \in [K]}w_\infty^b(a) \mu(a)\right)^2}{2\sigma_\infty^2(\bm{w}_\infty^b)} + c_2(\alpha)$, where $c_2(\alpha) = o_\alpha(1)$ is a vanishing term that does not depend on $\omega$ and only depends on $\alpha$. Using these results from term $(a)$, term $(b)$, and the deterministic bounds $\frac{\tilde\tau}{\log(1/\alpha)}\left(\frac{\left(\frac{\sum_{i=1}^{\tilde\tau}{Z}_i(b)}{\tilde\tau}\right)^2}{2 \hat\sigma_{\tilde\tau}^2(b)}\right) 
\in [1, 1+c_1(\alpha)]$, Equations \eqref{l:term_1} and \eqref{l:term_2} imply the following inequality: 
\begin{equation}
    \frac{\tilde\tau}{\log(1/\alpha)}\left( \frac{\left(\sum_{a \in [K] }w_\infty^b(a)\mu(a) \right)^2}{2\sigma^2_{\infty}(\bm{w}_\infty^b)} + c_2(\alpha)\right)\mathbf{1}[\Omega_{G, \epsilon}] + \mathbf{1}[\Omega_{B, \epsilon}]\leq 1+c_1(\alpha)
\end{equation}
We now take the minimum over $b \neq a^*$ for the SNR ratio, resulting in the inequality 
\begin{equation}
   \frac{\tilde\tau}{\log(1/\alpha)}\left( \min_{b \neq a^*}\frac{\left(\sum_{a \in [K] }w_\infty^b(a)\mu(a) \right)^2}{2\sigma^2_{\infty}(\bm{w}_\infty^b)} + c_2(\alpha)\right)\mathbf{1}[\Omega_{G, \epsilon}] + \mathbf{1}[\Omega_{B, \epsilon}]\leq 1+c_1(\alpha)
\end{equation}
Let $\alpha \leq \alpha\left(\min_{b \neq a^*}\frac{\left(\sum_{a \in [K] }w_\infty^b(a)\mu(a) \right)^2}{4\sigma^2_{\infty}(\bm{w}_\infty^b)}\right)$ small enough such that the vanishing term $c_2(\alpha)$ satisfies $|c_2(\alpha)| < \min_{b \neq a^*}\frac{\left(\sum_{a \in [K] }w_\infty^b(a)\mu(a) \right)^2}{4\sigma^2_{\infty}(\bm{w}_\infty^b)} $. This ensures $\left( \min_{b \neq a^*}\frac{\left(\sum_{a \in [K] }w_\infty^b(a)\mu(a) \right)^2}{2\sigma^2_{\infty}(\bm{w}_\infty^b)} + c_2(\alpha)\right)$ is bounded below by a positive constant. Taking expectations and rearranging, we obtain 
\begin{equation}\label{eq:almost_done}
    \EE\left[ \frac{\tilde\tau}{\log(1/\alpha)} \mathbf{1}[\Omega_{G, \epsilon}] \right] \leq \frac{1+c_1(\alpha) - \epsilon}{\left( \min_{b \neq a^*}\frac{\left(\sum_{a \in [K] }w_\infty^b(a)\mu(a) \right)^2}{2\sigma^2_{\infty}(\bm{w}_\infty^b)} + c_2(\alpha)\right)}.
\end{equation}
Note that the expectation must exist, as both sides of the inequality are dominated a constant function for sufficiently small $\alpha$. Taking the limit with respect to $\epsilon \rightarrow 0$ on both sides, we obtain
\begin{align}
    \lim_{\epsilon \rightarrow 0}\frac{1+c_1(\alpha) - \epsilon}{\left( \min_{b \neq a^*}\frac{\left(\sum_{a \in [K] }w_\infty^b(a)\mu(a) \right)^2}{2\sigma^2_{\infty}(\bm{w}_\infty^b)} + c_2(\alpha)\right)} &= \frac{1+c_1(\alpha)}{\left( \min_{b \neq a^*}\frac{\left(\sum_{a \in [K] }w_\infty^b(a)\mu(a) \right)^2}{2\sigma^2_{\infty}(\bm{w}_\infty^b)} + c_2(\alpha)\right)}, \\
    \lim_{\epsilon \rightarrow 0} \EE\left[ \frac{\tilde\tau}{\log(1/\alpha)} \mathbf{1}[\Omega_{G, \epsilon}] \right] &= \EE_{\epsilon}\left[ \frac{\tilde{\tau}}{\log(1/\alpha)} \right],
\end{align}
where the latter equality is valid due to the monotone convergence theorem and the non-negativity of $\tilde{\tau}/\log(1/\alpha)$. Our limits, combined with Equation \eqref{eq:almost_done}, yield
\begin{equation}
\EE_{\epsilon}\left[ \frac{\tilde{\tau}}{\log(1/\alpha)} \right] \leq \frac{1+c_1(\alpha)}{\left( \min_{b \neq a^*}\frac{\left(\sum_{a \in [K] }w_\infty^b(a)\mu(a) \right)^2}{2\sigma^2_{\infty}(\bm{w}_\infty^b)} + c_2(\alpha)\right)}.
\end{equation}
Taking limits with respect to $\alpha$ on both sides of our inequality, we obtain 
\begin{equation}
\lim_{\alpha \rightarrow 0}\EE_{\epsilon}\left[ \frac{\tilde{\tau}}{\log(1/\alpha)} \right] \leq \max_{b \neq a^*}\frac{2\sigma^2_{\infty}(\bm{w}_\infty^b)}{\left(\sum_{a \in [K] }w_\infty^b(a)\mu(a) \right)^2}.
\end{equation}
Because $\tilde{\tau} \geq \tau$ deterministically by definition, we obtain our desired result:
\begin{equation}
    \lim_{\alpha \rightarrow 0}\EE_{\epsilon}\left[ \frac{{\tau}}{\log(1/\alpha)} \right]\leq \lim_{\alpha \rightarrow 0}\EE_{\epsilon}\left[ \frac{\tilde{\tau}}{\log(1/\alpha)} \right] \leq \max_{b \neq a^*}\frac{2\sigma^2_{\infty}(\bm{w}_\infty^b)}{\left(\sum_{a \in [K] }w_\infty^b(a)\mu(a) \right)^2}.
\end{equation}

\paragraph{Negligibility of the Burn-in Period} We conclude our proofs by noting that our burn-in times $t_0(\alpha)$ are \textit{negligible} relative to the the order of the stopping time $\tau$, resulting in the same almost sure and expected stopping time bounds. We can account for our burn-in times $t_0(\alpha)$ with an upper bound on $\tau_0(\alpha)$ as 
\begin{equation}
    \tau_{t_0(\alpha)} = \inf\{t \geq t_0(\alpha): |C_t(H_t, \alpha)| \leq 1\}.
\end{equation}
Because of the condition $\lim_{\alpha \rightarrow 0}t_0(\alpha)/\log(1/\alpha) = 0$ and $\tau$ is of order $1/\log(1/\alpha)$ as $\alpha \rightarrow 0$ with probability one, it follows that $\tau_{t_0(\alpha)} = \tau$ almost surely as $\alpha \rightarrow 0$. By repeating the same exact argument above, we obtain $\lim_{\alpha \rightarrow 0}\frac{\tau_{t_0(\alpha)}}{\log(1/\alpha)} = \lim_{\alpha \rightarrow 0}\frac{\tau}{\log(1/\alpha)}$ and $\lim_{\alpha \rightarrow 0}\EE\left[\frac{\tau_{t_0(\alpha)}}{\log(1/\alpha)}\right] = \lim_{\alpha \rightarrow 0}\EE\left[\frac{\tau}{\log(1/\alpha)}\right]$, and our bounds hold for $\tau_{t_0(\alpha)}$ both almost surely and in expectation.

\newpage
\subsection{Proof of Lemma \ref{lem:convexity_policy}}

\paragraph{Proof Sketch} To prove this result, we leverage Danskin's Theorem to show that each inner maximization problem $F_a(\pi) = \min_{\bm{w} \in \Delta(a)}\frac{ \EE_{P_X}\left[\frac{v(x,b) + r_\infty(x,b)^2}{\pi(x,b)}\right]+ \EE_{P_X}\left[ \left(\sum_{b \in [K]} w(b) (g(x,b)- \mu(b)) \right)^2 - \left(\sum_{b \in [K]} w(b) r_\infty(x,b) \right)^2 \right] }{\left(\sum_{b \in [K]} w(b)  \mu(b) \right)^2}$ is strictly convex with respect to the function $\pi$. It then follows that $F(\pi)$ is strictly convex due to the maximum of $K-1$ strictly convex function begin strictly convex. By strict convexity of our objective function $F(\pi)$ and the fact that $\Pi$ is a convex set, we obtain that $\pi_* = \argmax_{\pi \in \Pi}F(\pi)$ is unique. To begin, we first start by stating Danskin's Theorem, which characterizes  the Frechet derivative of inner minimization problems $F_a(\pi)$. 

\begin{lemma}[Danskin's Theorem (Theorem 4.13 of \cite{bonnans2000perturbation})]\label{lem:danskin}
    Consider the function $v(u) \coloneqq \min_{x \in \Xcal, x \in \Theta} f(x, u)$, where $\Ucal$ is a Banach space, $\Xcal$ is a Hausdorff topolical space, $\Theta \subset \Xcal$ is nonempty and closed, and $f: \Xcal \times \Ucal \rightarrow \RR$ is continuous. Suppose that for all $x \in \Xcal$ the function $f(x, \cdot)$ is (Gâteaux) differentiable, such that $f(x,u)$ and $D_u f(x,u)$ are continuous on $\Xcal \times \Ucal$. Furthermore, assume that there exists an $\alpha \in \RR$ and compact set $C \subset \Xcal$ such that for every $u$ near $u_0 \in \Ucal$, $\text{lev}_{\alpha}f(\cdot, u) \coloneqq \{x \in \Theta: f(x, u) \leq \alpha\}$ is nonempty and contained in $C$. Then, $v(\cdot)$ is Fréchet directionally differentiable at $u_0$ and $\nabla_{d}v'(u_0) = \inf_{x \in \Scal(u_0)} D_u f(x, u_0) d$, where $S(u_0) \coloneqq \argmin_{x \in \Theta} f(x, u_0)$.
\end{lemma}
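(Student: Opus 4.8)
The plan is to compute the one-sided directional derivative
$$v'(u_0;d) \coloneqq \lim_{t \downarrow 0} \frac{v(u_0+td)-v(u_0)}{t}$$
directly, by sandwiching the difference quotient between matching upper and lower bounds that both converge to $\inf_{x \in S(u_0)} D_u f(x,u_0)d$. Before starting, I would record that the level-set compactness hypothesis guarantees the minimum defining $v(u)$ is attained for every $u$ near $u_0$: since $f(\cdot,u)$ is continuous, $\Theta$ is closed, and $\mathrm{lev}_\alpha f(\cdot,u)$ is nonempty and contained in the compact set $C$, the restriction of $f(\cdot,u)$ to that set attains its infimum, and because the nonemptiness of the sublevel set forces $v(u)\le\alpha$, every minimizer lies in $\mathrm{lev}_\alpha f(\cdot,u)$. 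Hence the solution maps $S(u)=\argmin_{x\in\Theta} f(x,u)$ are nonempty with $S(u)\subseteq C$ for all $u$ near $u_0$; this attainment is what makes both bounds below legitimate.

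For the upper bound I would fix an arbitrary minimizer $\bar x \in S(u_0)$, so $v(u_0)=f(\bar x,u_0)$, and use $v(u_0+td)\le f(\bar x,u_0+td)$ since $v$ is an infimum. Dividing by $t>0$ gives
$$\frac{v(u_0+td)-v(u_0)}{t} \le \frac{f(\bar x,u_0+td)-f(\bar x,u_0)}{t},$$
whose right-hand side tends to $D_u f(\bar x,u_0)d$ as $t\downarrow 0$ by Gâteaux differentiability of $f(\bar x,\cdot)$. Taking $\limsup$ in $t$ and then the infimum over $\bar x\in S(u_0)$ yields $\limsup_{t\downarrow 0}\frac{v(u_0+td)-v(u_0)}{t}\le \inf_{x\in S(u_0)} D_u f(x,u_0)d$.

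For the lower bound — the crux of the argument — I would choose, for each small $t>0$, a perturbed minimizer $x_t\in S(u_0+td)$ (attained by the compactness step), so $v(u_0+td)=f(x_t,u_0+td)$ while $v(u_0)\le f(x_t,u_0)$, giving
$$\frac{v(u_0+td)-v(u_0)}{t} \ge \frac{f(x_t,u_0+td)-f(x_t,u_0)}{t}.$$
Applying the scalar mean value theorem to $s\mapsto f(x_t,u_0+sd)$ (differentiable by the Gâteaux hypothesis) produces $\theta_t\in(0,1)$ with the right-hand side equal to $D_u f(x_t,u_0+\theta_t t d)d$. The remaining work is to pass to the limit: along any sequence $t_n\downarrow 0$ realizing the $\liminf$, the points $x_{t_n}$ lie in the compact set $C$, so a subsequence converges to some $x_*\in C$; continuity of $f$ together with $f(x_{t_n},u_0+t_n d)=v(u_0+t_n d)\to v(u_0)$ forces $f(x_*,u_0)=v(u_0)$, i.e. $x_*\in S(u_0)$. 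Continuity of $D_u f$ on $\Xcal\times\Ucal$ then gives $D_u f(x_{t_n},u_0+\theta_{t_n}t_n d)d\to D_u f(x_*,u_0)d\ge \inf_{x\in S(u_0)} D_u f(x,u_0)d$, whence $\liminf_{t\downarrow 0}\frac{v(u_0+td)-v(u_0)}{t}\ge \inf_{x\in S(u_0)} D_u f(x,u_0)d$.

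The two bounds coincide, establishing existence of $v'(u_0;d)$ and the Danskin formula $\nabla_d v'(u_0)=\inf_{x\in S(u_0)} D_u f(x,u_0)d$. I expect the main obstacle to be the lower bound — specifically the closedness of the solution map, i.e. showing that cluster points of perturbed minimizers are genuine minimizers at $u_0$, which is exactly where the level-set compactness hypothesis is indispensable (without it the $x_t$ could escape and the identification of $x_*$ could fail). To upgrade from directional differentiability to the stated \emph{Fréchet} directional differentiability, I would note that the estimates above are uniform over $d$ in a bounded set: the compactness of $C$ together with the joint continuity of $D_u f$ delivers uniform convergence of the difference quotients, so the directional derivative is attained uniformly and provides a positively homogeneous, continuous first-order expansion of $v$ at $u_0$.
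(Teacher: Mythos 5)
You should first note a point of reference: the paper never proves this lemma --- it is imported verbatim as Theorem 4.13 of \cite{bonnans2000perturbation} and used as a black box in the proofs of Lemma \ref{lem:convexity_policy} and Lemma \ref{lem:subgradient_lemma}, so there is no in-paper proof to compare against. Judged on its own terms, your reconstruction is the classical Danskin argument and its skeleton is sound: attainment of the minimum from the level-set hypothesis (the sublevel set is closed in the compact $C$, hence compact, and any minimizer over it is a global minimizer since $v(u)\leq \alpha$), the one-line upper bound from a fixed $\bar x \in \Scal(u_0)$, and the lower bound via perturbed minimizers $x_t \in \Scal(u_0+td)$, the scalar mean value theorem applied to $s \mapsto f(x_t, u_0+sd)$, and closedness of the solution map. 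You also correctly identify the lower bound as the crux and the level-set compactness as the hypothesis that makes it work.

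Two steps need repair before this is airtight at the stated level of generality. First, you extract a convergent \emph{subsequence} of $(x_{t_n})$ from compactness of $C$, but $\Xcal$ is only assumed Hausdorff, and compactness does not imply sequential compactness there; you must pass to a convergent subnet (all of your continuity steps, including $x_* \in \Theta$ via closedness of $\Theta$ and $f(x_{t_n}, u_0 + t_n d) \to f(x_*, u_0)$, transfer verbatim to nets). Relatedly, the assertion $v(u_0+t_n d) \to v(u_0)$ is invoked before it is established; it should be assembled explicitly from the upper semicontinuity you already have from the fixed-minimizer bound, $\limsup_n v(u_0+t_nd) \leq v(u_0)$, together with $f(x_*,u_0) \geq v(u_0)$ since $x_* \in \Theta$ --- the ingredients are all on the page, but as written the step reads circularly. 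Second, the closing paragraph upgrading directional to \emph{Fr\'echet} directional differentiability is only a sketch, and ``compactness of $C$ plus joint continuity of $D_u f$ delivers uniform convergence'' is not by itself a proof: $C \times B(u_0,\delta)$ is not compact when $\Ucal$ is infinite-dimensional, so uniform continuity is not automatic. What does work is a finite-cover equicontinuity argument: joint continuity at each point $(x, u_0)$ with $x \in C$ yields $\sup_{x \in C}\sup_{\|h\|\leq\delta}\|D_u f(x,u_0+h)-D_u f(x,u_0)\|_{\Ucal^*} \to 0$ as $\delta \to 0$; feeding this into the integral (mean-value) form of the remainder makes both your upper and lower error terms $o(\|d\|)$ uniformly over directions, and an upper-semicontinuity argument for the solution map $u \mapsto \Scal(u)$ (again a consequence of the level-set compactness) lets you replace $\inf_{x\in\Scal(u_0+d)} D_u f(x,u_0)d$ by $\inf_{x\in\Scal(u_0)} D_u f(x,u_0)d$ up to $o(\|d\|)$. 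With those two repairs your argument is a complete and correct proof of the lemma.
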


Recall that our original optimization problem can be rewritten as 
\begin{align}
    &\pi \in \argmin_{\pi} \max_{a \neq a^*} F_a(\pi), \quad F_a(\pi) = \min_{\bm{w} \in \Delta(a), \bm{w}^\top \bm{\mu} \geq 0} f(\pi, \bm{w}), \\
    &f(\pi, \bm{w}) = \frac{\sum_{b \in [K]} w(b)^2\EE_{P_X}\left[\frac{v(x,b) + r_\infty(x,b)^2}{\pi(x,b)}\right]+ \EE_{P_X}\left[ \left(\sum_{b \in [K]} w(b) (g(x,b)- \mu(b)) \right)^2 - \left(\sum_{b \in [K]} w(b) r_\infty(x,b) \right)^2 \right] }{\left(\sum_{b \in [K]} w(b)  \mu(b) \right)^2}.
\end{align}
We will apply Danskin's Theorem to the function $f(\pi, \bm{w})$ first, where $v(u) = F_a(\pi)$. Here, $\Xcal \in \RR^K$, $\Theta = \Delta(a)$, and $\Ucal $ is the space $L_2(P_X: \RR^K)$, which strictly contains our set of valid policies $\Pi$. Note that by Lemma \ref{lem:convergence_snr_maximizing_weights}, $x \in \Scal(u_0)$ only has a single element, which we denote as $\bm{w}^a_\pi$ for each $F_a(\pi)$. Then, it follows that the directional derivative of $F_a(\pi)$ with respect to direction $d$ in the space of policies $\pi$ is equal to 
\begin{equation}
    \nabla_{d} F_a(\pi) = \left\langle D_{\pi}f_a(\pi, \bm{w}_\pi^a,) d \right\rangle_{L_2(P_X:\RR^K)} = \left\langle\left[\frac{-w_\pi^a(b)^2 \left[\frac{v(x,b) + r_\infty(x,b)^2}{\pi(x,b)^2} \right] }{\left(\sum_{b \in [K]} w_\pi^a(b)\mu(b) \right)^2} \right]_{x \in \Xcal, b \in [K]}, d \right\rangle_{L_2(P_X:\RR^K)}.
\end{equation}
By taking the second-order Fréchet derivative of $F_a(\pi)$, we obtain
\begin{equation}
    \nabla^2 F_a(\pi)(d, v) = \int_x \sum_{b \in [K]}\frac{2w_\pi^a(b)^2\left[\frac{v(x,b) + r_\infty(x,b)^2}{\pi(x,b)^3}\right] }{\left(\sum_{b \in [K]} w_\pi^a(b) \mu(b) \right)^2} d(x,b) v(x,b) dP_X. 
\end{equation}
Note that for any $h \in L_2(P_X: \RR^K)$, $\nabla^2 F_a(\pi)(h,h) \geq 0$, so $F_a(\pi)$ is convex with respect to $\pi$. Note that because $v(x,b) > 0$ for all $x$, it follows that $F_a(\pi)$ is strictly convex with respect to $\pi$. 

We now have that each functional $F_a(\pi)$ is strictly convex with respect to the function $\pi$. To see that our pointwise maximum over $a \neq a^*$ retains strict convexity, we use the standard definition of strict convexity. Let $\lambda \in (0,1)$, $\pi_1, \pi_2 \in \Pi$, and $\pi(\lambda) = \lambda\pi_1 + (1-\lambda)\pi_2$. Then, 
    \begin{equation}
        \max_{a \neq a^*} F_a(\pi(\lambda))  < \max_{a \neq a^*} \lambda F_a(\pi_1) + (1-\lambda)F_a(\pi_2) \leq \lambda \max_{a \neq a^*} F_a(\pi) + (1-\lambda) \max_{a \neq a^*} F_a(\pi_2),
    \end{equation}
    where the first inequality follows from the strict convexity of $F_a(\pi)$. Thus, the function $F(\pi)$ is strictly convex with respect to $\pi$. Lastly, by noting that $\Pi$ forms a convex set in $L_2(P_X: \RR^K)$, we obtain that our initial problem is minimizing a strictly convex objective over a convex set, resulting in a unique optimal $\pi$.

\subsection{Proof of Lemma \ref{lem:optimal_policy_structure}}

To prove the results of this Lemma, we first prove that the optimal policy $\pi_*$ takes a simple form characterized by Lemma \ref{lem:optimal_pi_reduction}. After establishing the result of Lemma \ref{lem:optimal_pi_reduction}, we provide the desired result by re-parameterizing the results of Lemma \ref{lem:optimal_pi_reduction} in Lemma \ref{lem:log_q_reformulation}. We begin with our proof of Lemma \ref{lem:optimal_pi_reduction} below. 

\begin{lemma}[Structure of Optimal $\pi$]\label{lem:optimal_pi_reduction}
    Let $G(\pi)$ be the expression presented in Equation \eqref{eq:G_pi}, and let all conditions of Lemma \ref{lem:optimal_policy_structure} hold. Then, for all $\pi \in \argmax_{\pi \in \Pi} G(\pi)$, there exists a corresponding vector $\bm{q}_\pi \in \RR^K_{++}$ such that 
    \begin{equation}
        \pi(x,b) = \frac{\sqrt{q(b) (v(x,b) + r_\infty(x,b)^2)}}{\sum_{b \in [K]}\sqrt{q(b) (v(x,b)+ r_\infty(x,b)^2)}},
    \end{equation}
    where $r_\infty(x,b) = g_\infty(x,b) - g(x,b)$ denotes the limiting error for the $(x,b)$ pair. 
\end{lemma}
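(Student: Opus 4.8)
The plan is to characterize $\pi_* = \argmin_{\pi \in \Pi} G(\pi)$ through its first-order (KKT) conditions, exploiting the fact that only a single term of $f(\pi,\bm{w})$ depends on $\pi$. Writing $V(x,b) = v(x,b) + r_\infty(x,b)^2$, the expansion of $f(\pi,\bm{w})$ used in the proof of Lemma \ref{lem:convexity_policy} shows that the only $\pi$-dependent piece of the numerator is $\sum_{b} w(b)^2 \EE_{P_X}[V(x,b)/\pi(x,b)]$, while the remaining terms are constant in $\pi$. From Lemma \ref{lem:convexity_policy} we already know that $G$ is strictly convex and that its unique minimizer $\pi_*$ lies in the interior of $\Pi$ (strictly positive propensities), so stationarity is both necessary and sufficient, and the nonnegativity multipliers vanish at $\pi_*$.

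First I would assemble the subdifferential of $G(\pi)=\max_{a\neq a^*}F_a(\pi)$. As established in the proof of Lemma \ref{lem:convexity_policy} via Danskin's theorem (Lemma \ref{lem:danskin}), each $F_a$ is Fr\'echet differentiable with gradient density
\[
\nabla_\pi F_a(\pi)(x,b) = -\,\frac{w_\pi^a(b)^2\,V(x,b)/\pi(x,b)^2}{\big(\sum_{b'} w_\pi^a(b')\mu(b')\big)^2},
\]
where $\bm{w}_\pi^a$ is the unique inner minimizer for arm $a$. Since $G$ is a pointwise maximum of finitely many smooth convex functionals, its subdifferential at $\pi_*$ is the convex hull of $\{\nabla_\pi F_a(\pi_*)\}_{a\in\mathcal{A}^*}$, where $\mathcal{A}^* = \{a\neq a^*: F_a(\pi_*)=G(\pi_*)\}$ is the active set. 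The continuum of pointwise simplex constraints $\sum_b \pi(x,b)=1$ ($P_X$-a.s.) contributes a Lagrange-multiplier function $\nu(\cdot)$. Stationarity therefore yields, for $P_X$-a.e. $x$ and every $b\in[K]$, the existence of weights $\lambda_a\ge 0$ with $\sum_{a\in\mathcal{A}^*}\lambda_a=1$ such that
\[
\sum_{a\in\mathcal{A}^*}\lambda_a\,\frac{w_{\pi_*}^a(b)^2\,V(x,b)/\pi_*(x,b)^2}{\big(\sum_{b'} w_{\pi_*}^a(b')\mu(b')\big)^2} \;=\; \nu(x).
\]

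The key observation is that the aggregate coefficient
\[
q(b)\;\coloneqq\;\sum_{a\in\mathcal{A}^*}\lambda_a\,\frac{w_{\pi_*}^a(b)^2}{\big(\sum_{b'} w_{\pi_*}^a(b')\mu(b')\big)^2}
\]
depends only on the arm index $b$ and not on $x$, because the inner weights $\bm{w}_{\pi_*}^a$, the multipliers $\lambda_a$, and the means $\bm\mu$ are all features of the limiting (marginal) problem. Substituting, the stationarity equation reads $V(x,b)\,q(b)/\pi_*(x,b)^2 = \nu(x)$, so that $\pi_*(x,b)=\sqrt{q(b)V(x,b)/\nu(x)}$; enforcing $\sum_b \pi_*(x,b)=1$ eliminates $\nu(x)$ and gives exactly the claimed form
\[
\pi_*(x,b)=\frac{\sqrt{q(b)\,V(x,b)}}{\sum_{b'\in[K]}\sqrt{q(b')\,V(x,b')}}.
\]
Finally, strict positivity of $\pi_*$ (interiority, together with $V(x,b)>0$ from Assumption \ref{assump:nonzero_variances}) forces $q(b)>0$ for every $b$, so $\bm{q} \in \RR^K_{++}$ as required.

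The main obstacle I expect is the rigorous justification of the KKT system in the infinite-dimensional policy space $L_2(P_X:\RR^K)$ with its continuum of pointwise simplex constraints: one must verify Slater's condition (trivially satisfied by the uniform policy $\pi\equiv 1/K$), produce the multiplier $\nu$ as a genuine measurable function rather than an abstract dual element, and combine the Danskin active-set description of $\partial G$ with the constraint multipliers into the single pointwise equation above. A clean way to sidestep some of this is a minimax (Sion) argument: write $G(\pi)=\max_{\bm\lambda:\,\lambda_a\ge 0,\,\sum_a\lambda_a=1}\sum_a\lambda_a F_a(\pi)$, obtain a saddle point $(\pi_*,\bm\lambda_*)$, and note that $\pi_*$ minimizes the single smooth convex functional $\pi\mapsto\sum_a\lambda_{*,a}F_a(\pi)$, whose pointwise minimization over the simplex at each $x$ separates cleanly and produces the square-root allocation directly. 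Either route reduces to the same stationarity identity; the substantive content is the $x$-independence of $q(b)$, which is what collapses the functional optimization to the finite-dimensional unknown $\bm{q}$.
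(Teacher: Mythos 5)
Your proposal is correct and takes essentially the same route as the paper's own proof: both characterize the optimizer via KKT stationarity of the minimax problem under Slater's condition (the paper through an epigraph reformulation with multipliers $\bm{\lambda} \in \Delta^{K-1}$, you through the convex hull of active Danskin gradients of $G = \max_{a \neq a^*} F_a$ --- equivalent formulations), identify the same $x$-independent aggregate coefficient $q(b) = \sum_a \lambda_a \, w_{\pi_*}^a(b)^2 / \bigl(\sum_{b'} w_{\pi_*}^a(b')\mu(b')\bigr)^2$, and eliminate the pointwise simplex multiplier by normalization to obtain the square-root allocation, with positivity of $\bm{q}$ forced by interiority of $\pi_*$. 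The only cosmetic difference is that the paper first freezes the optimal inner weights $\bm{w}_{\pi_*}^a$ and argues the structure transfers back to the original problem, which your Danskin-at-the-optimum framing handles implicitly.
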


\begin{proof}[Proof of Lemma \ref{lem:optimal_pi_reduction}]
    For each $a\neq a^*$, let $\bm{w}_{\pi_*}^a$ denote the unique weights\footnote{We prove the uniqueness of such weights in Lemma \ref{lem:convergence_snr_maximizing_weights}.} that satisfy the following equation:
    \begin{equation}
        \bm{w}_{\pi_*}^a = \argmin_{\bm{w} \in \Delta(a), \bm{w}^\top \bm{\mu}\geq 0} f(\pi_*, \bm{w}).
    \end{equation}
    Then, note that the optimal $\pi_*$ is also the solution to the problem using fixed weight vectors $\bm{w}_{\pi_*}^a$:
    \begin{equation}
        \min_{\pi \in \Pi}\max_{a \neq a^*} f(\pi, \bm{w}_{\pi_*}^a).
    \end{equation}
    To simplify notation, we will use $f_a(\pi) \coloneqq f(\pi, \bm{w}_{\pi_*}^a)$ throughout the remainder of this section. We now show that the problem presented above is a convex optimization problem. First, rewriting our optimization problem in epigraph form to remove the inner maximum over $a \neq a^*$, we obtain
    \begin{align}
        \pi_* \in \argmin_{\pi \in \Pi, \  c \in \RR} c  \quad \text{ s.t. } \quad f(\pi)-c \leq 0  \quad \forall a \neq a^*.
    \end{align}
    By the convexity of $F_a(\pi)$ (proof in Lemma \ref{lem:convexity_policy}), the function $f(\pi)$ is strictly convex with respect to $\pi$, and therefore our problem is simply an affine objective with a convex feasible set, which is a convex problem. Note that a trivial interior solution exists by setting $\pi(x,b) = 1/K$ for all $x\in \Xcal$ and $b \in [K]$, and therefore Slater's condition holds. As a result of Slater's condition, we then have that the KKT conditions characterize the optimal solution set of $\pi$. Writing out our optimization problem explicitly, we obtain:
    \begin{align}
        &\min_{\pi, c \in \RR} c \\
        & \text{s.t. } \ \pi(x,\cdot) \in \Delta^K \ P_X\text{ almost surely}, \\
        &\quad f_a(\pi) - c \leq 0  \text{ for all } a \neq a^*,\label{l:norm_constraint}
    \end{align}
    which corresponds to the following Lagrangian formulation
    \begin{equation}
        \Lcal(\pi, c, \bm{\lambda}, \bm{\gamma}, \bm{\epsilon}) = c + \left(\sum_{b \neq a^*} \lambda(b)(f_a(\pi)- c)\right)+ \EE_{P_X}\left[\gamma(x) \left( \sum_{b \in [K]}\pi(x,b) - 1\right) \right] - \EE_{P_X}\left[ \sum_{b \in [K]}\epsilon(x,b) \pi(x,b)  \right],
    \end{equation}
    where $\epsilon$ is a nonnegative function, $\bm{\gamma}$ is a function, and $\bm{\lambda} \in \RR_+^K$ is a nonnegative vector, following from Section 3.2 of \cite{shapiro2014lectures}. Grouping terms, $\Lcal$ can be reduced to
    \begin{align}
        \Lcal(\pi, c, \bm{\lambda}, \bm{\gamma}, \bm{\epsilon}) =& c\left( 1 - \sum_{b \neq a^*} \lambda(b) \right) \\
        &+\sum_{b \neq a^*} \lambda(b)\frac{\EE_{P_X}\left[\left(\sum_{a \in [K]} w_{\pi_*}^b(a)(g(x,a) - \mu(a)) \right)^2 + \left(\sum_{b \in [K]} w_{\pi_*}^a(b) r_\infty(x,b)  \right)^2\right]}{\left(\sum_{a \in [K]} w_{\pi_*}^b(a)\mu(a)\right)^2} \\
        &+ \sum_{a \in [K]}\underbrace{\left(\sum_{b \neq a^*}\lambda(b) \frac{w_{\pi_*}^b(a)^2}{\left(\sum_{a' \in [K]} w_{\pi_*}^b(a')\mu(a') \right)^2}\right)}_{q(a)} \EE\left[ \frac{v(x,a) + r_\infty(x,b)^2}{\pi(x,a)}\right] \label{l:dual_plug_in} \\
        &+\EE_{P_X}\left[\gamma(x) \left( \sum_{b \in [K]}\pi(x,b) - 1\right) \right] - \EE_{P_X}\left[ \sum_{b \in [K]}\epsilon(x,b) \pi(x,b)  \right],
    \end{align}

    where $q(a) = \left(\sum_{b \neq a^*}\lambda(b) \frac{w_\infty^b(a)^2}{\left(\sum_{a' \in [K]} w_\infty^b(a')\mu(a') \right)^2}\right)$ defines the mixture weights specified above. Note that by the stationary KKT conditions for parameter $c$, we must have
    \begin{equation}
        \frac{\partial}{\partial c} \Lcal = 1-\sum_{b\neq a^*} \lambda(b)= 0,
    \end{equation}
    and therefore $\bm{\lambda} \in \Delta^{K-1}$. Using the KKT stationary conditions with respect to $\pi(x,b)$, we obtain
    \begin{equation}
        \forall x\in \Xcal, b\in [K], \ \frac{\partial}{\partial \pi(x,b)} \Lcal = -\frac{q(b) (v(x,b) + r_\infty(x,b)^2)}{\pi^2(x,b)} + \gamma(x) - \epsilon(x,b) = 0.
    \end{equation}
    Because $\pi(x,b) = 0$ results in an infinite objective value and our goal is to minimize the objective $c$, $\pi_*(x,b) > 0$. By complimentary slackness, $\epsilon_*(x,b)\pi_*(x,b) = 0$. Thus, $\epsilon(x,b) = 0$ for all $b \in \Gcal(x)$, and therefore 
    \begin{equation}
        \forall b \in \Gcal(x), \ \frac{q(b) (v(x,b) + r_\infty(x,b)^2)}{\pi^2(x,b)} =  \gamma(x) \implies \forall b \in \Gcal(x), \ \pi(x,b) = \sqrt{\frac{q(b)(v(x,b) + r_\infty(x,b)^2) }{\gamma(x)}}.
    \end{equation}
    By the primal feasibility condition, note that $\sum_{b \in \Gcal(x)} \pi(x,b) = 1$, and therefore 
    \begin{equation}
        \sum_{b \in \Gcal(x)}\sqrt\frac{q(b) v(x,b)}{\gamma(x)} = 1 \implies  \gamma(x) = \left(\sum_{b \in \Gcal(x)} \sqrt{q(b)\left(v(x,b) + r_\infty(x,b)^2\right) }\right)^2. 
    \end{equation}
    Plugging the value of $\gamma(x)$ back into our solution, we obtain 
    \begin{equation}\label{eq:optimal_policy_soln}
        \pi_*(x,b) = \frac{\sqrt{q(b)(v(x,b) + r_\infty(x,b)^2)}}{\sum_{b \in [K]} \sqrt{q(b)(v(x,b)+ r_\infty(x,b)^2)}}. 
    \end{equation}
    Note that if $q(b) = 0$ for any $b \in [K]$, our objective takes an infinite value. Thus, $q(b) >0$ for all $b \in [K]$. To show that this structure holds for our original optimization problem (without fixed weights), note that Equation \eqref{eq:optimal_policy_soln} holds for all $\tilde\pi$ that satisfy  
    \begin{equation}
        \tilde\pi = \argmin_{\pi \in \Pi} \max_{a\neq a^*} f(\pi, \bm{w}_{\pi_*}^a),
    \end{equation}
    which has the same exact objective value at the minimizing solution as 
    \begin{equation}
        \pi_* = \argmin_{\pi \in \Pi} \max_{a \neq a^*} \min_{\bm{w}\in \Delta(a), \bm{w}^\top \bm{\mu}\geq 0 } f(\pi, \bm{w}).
    \end{equation}
    The solution to the latter equation is unique (as shown in Lemma \ref{lem:convexity_policy}), and by definition, the first problem and the second problem have the same objective value. Thus, it must be that $\pi_* \in \{\tilde\pi \in \Pi: \tilde\pi = \argmin_{\pi \in \Pi} \max_{a\neq a^*} f_a(\pi, \bm{w}_{\pi_*}^a) \}$, and thus $\pi_*$ satisfies this structure as well. 
\end{proof}

Given that our optimal policy $\pi$ satisfies the simple parametric model with $K$ parameters, we now turn to solving the optimization problem for our reduced set of parameters $\bm{q} \in \RR^K_{++}$. However, naively plugging in the structure of $\pi$ with respect to $\bm{q}$ in our objective problem results in nonconvexity of our initial problem $\min_{\bm{q} \in \RR^K_{++}} F(\pi)$. Instead, we first provide a simple reparameterized model that builds upon the results of Lemma \ref{lem:optimal_pi_reduction} and maintains the strict convexity results of $F(\pi)$ with respect to $\pi$.

\begin{lemma}[Reformulation of Optimal $\pi$]\label{lem:log_q_reformulation}
    Let $G(\pi)$ be the expression presented in Equation \eqref{eq:G_pi}, and let all conditions of Lemma \ref{lem:optimal_policy_structure} hold. Then, for all $\pi \in \argmax_{\pi \in \Pi}G(\pi)$, there exists a corresponding vector $\bm\theta \in \RR^K$ with $\theta(K)=0$ such that 
    \begin{equation}
        \pi_*(x,b)^{-1} = \sum_{a \in [K]}\frac{ \sqrt{\left(v(x,a) + r_\infty(x,a)^2 \right)} }{\sqrt{\left(v(x,b) + r_\infty(x,b)^2 \right)} }\exp\left(\theta(a) - \theta(b)\right)
    \end{equation}
\end{lemma}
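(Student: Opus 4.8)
The plan is to obtain the claimed form by a direct logarithmic reparameterization of the strictly positive vector $\bm{q}_\pi \in \RR^K_{++}$ furnished by Lemma \ref{lem:optimal_pi_reduction}. Writing $V(x,a) = v(x,a) + r_\infty(x,a)^2$ for brevity (the notation of Lemma \ref{lem:optimal_policy_structure}), Lemma \ref{lem:optimal_pi_reduction} gives, for any optimal policy $\pi$, a vector $\bm{q}_\pi$ with $q(b) > 0$ for all $b$ and $\pi(x,b) = \sqrt{q(b)V(x,b)}\big/\sum_{a \in [K]}\sqrt{q(a)V(x,a)}$. Inverting this expression yields
\begin{equation*}
\pi(x,b)^{-1} = \sum_{a \in [K]}\sqrt{\frac{q(a)}{q(b)}}\,\frac{\sqrt{V(x,a)}}{\sqrt{V(x,b)}},
\end{equation*}
so that $\pi$ depends on $\bm{q}_\pi$ only through the ratios $\sqrt{q(a)/q(b)}$.

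The key observation is that these ratios are exactly captured by an additive parameterization. I would set $\theta(a) = \tfrac{1}{2}\log q(a) - \tfrac{1}{2}\log q(K)$ for each $a \in [K]$, which is well defined precisely because $\bm{q}_\pi \in \RR^K_{++}$. This choice gives $\theta(K) = 0$ and, since the common additive offset $-\tfrac{1}{2}\log q(K)$ cancels in every difference, $\exp(\theta(a) - \theta(b)) = \sqrt{q(a)/q(b)}$. Substituting this identity into the inverted policy produces the claimed expression
\begin{equation*}
\pi_*(x,b)^{-1} = \sum_{a \in [K]}\frac{\sqrt{V(x,a)}}{\sqrt{V(x,b)}}\exp(\theta(a) - \theta(b)).
\end{equation*}

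There is essentially no analytic obstacle here; the only points to verify are that the logarithms are finite and that the reparameterization is lossless. Finiteness follows from the strict positivity $q(b) > 0$ established at the end of the proof of Lemma \ref{lem:optimal_pi_reduction}, which itself comes from the fact that any coordinate $q(b) = 0$ would force the objective $G(\pi)$ to diverge. The normalization $\theta(K) = 0$ simply fixes the one-dimensional gauge freedom in $\bm{q}_\pi$, namely that rescaling all $q(a)$ by a common positive factor leaves $\pi$ unchanged, so fixing $\theta(K) = 0$ loses no generality. I would close by noting that the resulting $\bm\theta \in \RR^K$ is exactly the parameter vector appearing in Lemma \ref{lem:optimal_policy_structure}, which establishes the consistency of the two statements and completes the proof.
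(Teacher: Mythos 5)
Your proof is correct and follows essentially the same route as the paper: invoke Lemma \ref{lem:optimal_pi_reduction} for a strictly positive $\bm{q}_\pi$, then reparameterize via $\theta(a) = \tfrac{1}{2}\log q(a)$, using strict positivity for well-definedness and the invariance of the policy under a common rescaling of $\bm{q}$ to fix $\theta(K)=0$. The only cosmetic difference is that you fold the gauge-fixing shift $-\tfrac{1}{2}\log q(K)$ into the definition of $\bm\theta$ in one step, whereas the paper first sets $\theta(a)=\log\sqrt{q(a)}$ and then subtracts $\theta(K)$ afterward.
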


\begin{proof}[Proof of Lemma \ref{lem:log_q_reformulation}]
    To prove this result, first note that by Lemma \ref{lem:optimal_pi_reduction}, it holds that there exists a $\bm{q} \in \RR_{++}^K$ such that 
    \begin{equation}
        \pi_*(x,b) = \frac{\sqrt{q(b) (v(x,b) + r_\infty(x,b)^2)}}{\sum_{a \in [K]} \sqrt{q(a) (v(x,a) + r_\infty(x,a)^2)}}
    \end{equation}
    Note that $\sqrt{q(a)} > 0$ by Lemma \ref{lem:optimal_pi_reduction}, and therefore we set $\theta(a) = \log(\sqrt{q(a)})$, where $\bm{\theta} \in \RR^K$ is the same set as $\bm{q} \in \RR^K_{++}$. Thus, we can re-express $\pi_*$ as 
    \begin{equation}
        \pi_*(x,b) = \frac{\exp(\theta(b))\sqrt{(v(x,b) + r_\infty(x,b)^2)}}{\sum_{a \in [K]}\exp(\theta(a))\sqrt{(v(x,a) + r_\infty(x,a)^2)}}.
    \end{equation}
    To ensure that our reformulation of $\pi_*$ preserves the strict convexity, we will show that fixing $\theta(K) = 0$ is equivalent to our reformulation above. First, note that by dividing both numerator and denominator by $\exp(\theta(K))$, 
    \begin{equation}
         \pi_{*}(x,b)=\frac{\exp(\theta(b) - \theta(K))\sqrt{(v(x,b) + r_\infty(x,b)^2)}}{\sum_{a \in [K]}\exp(\theta(a) - \theta(K))\sqrt{(v(x,a) + r_\infty(x,a)^2)}}
    \end{equation}
    is identical to our first formulation. Let $\bm\theta' \in \RR^K$ with $\theta(K) = 0$. Then, $\theta'(b) = \theta(b) - \theta(K)$ provides the equivalent policy. Therefore, for our optimal policy $\pi_*$, there exists a $\bm{\theta}' \in \RR^K$ with $\theta(K) = 0$ such that 
    \begin{equation}
        \pi_*(x,b) = \frac{\exp(\theta'(b))\sqrt{(v(x,b) + r_\infty(x,b)^2)}}{\sum_{a \in [K]}\exp(\theta'(a))\sqrt{(v(x,a) + r_\infty(x,a)^2)}}.
    \end{equation}
\end{proof}

The result of Lemma \ref{lem:optimal_pi_reduction}, paired with the reformulation in Lemma \ref{lem:log_q_reformulation}, obtains the results of Lemma \ref{lem:optimal_policy_structure}.

\subsection{Proof of Lemma \ref{lem:subgradient_lemma}}

The proof of Lemma \ref{lem:subgradient_lemma} follows from an application of Danskin's Theorem (Lemma \ref{lem:danskin}) and a standard result for optimization over maxima. We provide the latter result in Lemma \ref{lem:clarke_subgradient}.

\begin{lemma}[Subgradient Set for Pointwise Maxima of Convex Functions (Theorem 10.31 of \citealp{rockafellar2009variational})]\label{lem:clarke_subgradient}

    Let $f_i: \Xcal \rightarrow \RR$ be convex, differential functions with respect to $x$ for all $i \in [K]$, and let $g(x) = \max_{i \in [K]} f_i(x)$. Then, the subgradient set of $g$ evaluated at point $x$, denoted as $\partial g(x)$, is given by
    \begin{equation}
        \partial g(x) = \text{conv}\left( \{\nabla_x f_i(x): f_i(x) = g(x) \} \right),
    \end{equation}
    where $\text{conv}\left(\{v_i\}_{i \in \Scal} \right)$ denotes the convex hull of functions $\{v_i\}_{i \in \Scal}$. 
\end{lemma}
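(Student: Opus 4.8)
The plan is to prove the two set inclusions separately, comparing support functions for the harder direction. Throughout, write $A(x) = \{i \in [K] : f_i(x) = g(x)\}$ for the active index set and $C = \mathrm{conv}\{\nabla_x f_i(x) : i \in A(x)\}$. Since $g$ is a finite maximum of convex functions it is itself convex (and finite, hence locally Lipschitz), so its convex subdifferential coincides with the regular/limiting subgradient appearing in Theorem 10.31 of \cite{rockafellar2009variational}, and the standard convex-analytic machinery applies. I would open the proof by recording these reductions, so that $\partial g(x)$ may be treated throughout as the ordinary convex subdifferential.

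First I would dispatch the easy inclusion $C \subseteq \partial g(x)$. For each $i \in A(x)$, convexity and differentiability of $f_i$ give the gradient inequality $f_i(y) \ge f_i(x) + \langle \nabla_x f_i(x),\, y-x\rangle$ for all $y \in \Xcal$. Combining this with $g(y) \ge f_i(y)$ and the active condition $f_i(x) = g(x)$ yields $g(y) \ge g(x) + \langle \nabla_x f_i(x),\, y-x\rangle$, so that $\nabla_x f_i(x) \in \partial g(x)$ for every active $i$. Because the subdifferential of a convex function is a convex set, it contains the convex hull of these active gradients, giving $C \subseteq \partial g(x)$.

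For the reverse inclusion $\partial g(x) \subseteq C$ I would argue via support functions. Both $\partial g(x)$ and $C$ are nonempty, compact, convex sets, so each is determined by its support function $\sigma_S(d) = \sup_{s \in S} \langle s, d\rangle$; it therefore suffices to show $\sigma_{\partial g(x)}(d) = \sigma_C(d)$ for every direction $d$. The support function of the hull is immediate, $\sigma_C(d) = \max_{i \in A(x)} \langle \nabla_x f_i(x), d\rangle$, since a linear functional attains its supremum over a convex hull at a generating point. For the left-hand side I would invoke the standard identity $\sigma_{\partial g(x)}(d) = g'(x;d)$, where $g'(x;d) = \lim_{t \downarrow 0} \big(g(x+td) - g(x)\big)/t$ is the one-sided directional derivative of the finite convex function $g$. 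The crux is then the directional-derivative formula $g'(x;d) = \max_{i \in A(x)} \langle \nabla_x f_i(x), d\rangle$: for any inactive $i \notin A(x)$ one has $f_i(x) < g(x)$, so by continuity $f_i(x+td) < g(x)$ for all sufficiently small $t > 0$, whence $g(x+td) = \max_{i \in A(x)} f_i(x+td)$ for small $t$; dividing by $t$, passing to the limit, and using differentiability of each $f_i$ together with the fact that a finite maximum commutes with the limit then gives the claim. Matching the two support functions yields $\partial g(x) = C$, as desired.

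The main obstacle is the inclusion $\partial g(x) \subseteq C$, and within it the two analytic facts that drive the argument: the directional-derivative formula and the identity $\sigma_{\partial g(x)}(\cdot) = g'(x;\cdot)$. The former requires the elementary but essential observation that only active functions govern $g$ in a neighborhood of $x$ along $d$, so that inactive indices may be discarded before differentiating; the latter relies on $\partial g(x)$ being nonempty and compact, which holds because $g$ is a finite convex function, hence locally Lipschitz with a bounded nonempty subdifferential. Once these are in place, equality of the two closed convex sets follows from equality of their support functions, completing the proof.
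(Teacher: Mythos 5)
Your proof is correct, but there is nothing in the paper to compare it against: the paper states this lemma as an imported result, citing Theorem 10.31 of \citet{rockafellar2009variational}, and never proves it. What you have written is the classical textbook derivation of the convex--smooth special case: the easy inclusion via the gradient inequality for active indices plus convexity of $\partial g(x)$, and the hard inclusion by matching support functions, using the identity $\sigma_{\partial g(x)}(d) = g'(x;d)$ together with the directional-derivative formula $g'(x;d) = \max_{i \in A(x)} \langle \nabla_x f_i(x), d\rangle$. All the key mechanisms are right. Two small points of precision, neither of which is a real gap. First, when you discard inactive indices you assert only that $f_i(x+td) < g(x)$ for small $t$; what you actually need is $f_i(x+td) < \max_{j \in A(x)} f_j(x+td)$, which follows from continuity of \emph{both} sides --- the inactive $f_i(x+td)$ stay bounded away below $g(x)$ while the active $f_j(x+td)$ converge to $g(x)$ --- so one extra line would close this. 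Second, the identity $\sigma_{\partial g(x)} = g'(x;\cdot)$ and the compactness of $\partial g(x)$ require $x$ to lie in the interior of the domain (e.g., $\Xcal$ open or all of $\RR^n$); at a boundary point of a constraint set the subdifferential acquires normal-cone directions and the stated formula fails. The paper's statement glosses over this in exactly the same way, and in its application ($F_{a,t}$ on $\Theta \subset \RR^{K-1}$, with the maximum formula used to build subgradients of $G_t$) the functions are finite and smooth on a neighborhood, so the interiority hypothesis is harmless --- but it is worth recording as a standing assumption if you intend your argument to substitute for the citation.
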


Our results follow from applying Danskin's Theorem to estimated functions $F_{a,t}$ (defined in Equations \ref{eqs:empirical_counterpart} - \ref{eqs:empirical_counterpart_2}), and then directly applying Lemma \ref{lem:clarke_subgradient}. To proceed, we first derive the gradient of estimated functions $F_{a,t}$. Let $\EE_{\PP_t(X)}[f_t(x)] = \frac{1}{t}\sum_{i=1}^t f_i(x_i)$ denote the empirical measure with respect to $X$ and a sequence of $\Fcal_{i-1}$-measurable functions $(f_i)_{i \in \NN}$. By direct application of Danskin's Theorem on the function $F_{a,t}(\bm{\theta})$,

\begin{align}
    &\frac{\partial}{\partial \theta(c)} F_{a,t}(\bm\theta) =  \bigg(\sum_{b \in [K]} w_{\bm\theta}^a(b)^2 \EE_{\PP_t(X)}\left[\sqrt{V_t(x,b)V_t(x,c)} \right]\exp (\theta(c) - \theta(b)) \\
    & - w_{\bm\theta}^a(c)^2\sum_{a \in [K] }\EE_{\PP_t(X)}\left[ \sqrt{V_t(x,a)V_t(x,c)}  \right]\exp(\theta(a) - \theta(c))\bigg) \bigg/ \left(\sum_{b \in [K]} w_{\bm{\theta}}^a(b) \hat\mu_t(b)\right)^2 \\
    & = \sum_{b \in [K]}\frac{\EE_{\PP_t(X)}\left[\sqrt{V_t(x,b)V_t(x,c)} \right]}{\left(\sum_{b \in [K]} w_{\bm{\theta}}^a(b) \hat\mu_t(b)\right)^2} \left(  w_\theta^a(b)^2\exp(\theta(c) - \theta(b)) - w_\theta^a(c)^2 \exp(\theta(b)-\theta(c))\right), \label{eq:first_order_partial}
\end{align}
where $\bm{w}_{\bm\theta}^a$ denotes the optimal, unique $\bm{w}^a \in \Delta(a)$, $\bm{w}^\top \hat{\bm{\mu}}_{t-1} \geq 0$ vectors that maximize $f_t(\bm{\theta}, \bm{w})$ for a given $\bm\theta$. Before applying Lemma \ref{lem:clarke_subgradient}, we first establish (i) the uniqueness of $\bm{w}_{\bm\theta}^a$ for each $a \not\in \argmax_{b \in [K]} \hat\mu_{t-1}(b)$ (such that the gradients of $F_{a,t}$ are as shown above) and (ii) strict convexity of $F_{a,t}$. 

\paragraph{Proof of Unique Weights} To show that the vectors $\bm{w}_{\bm\theta}^a$ for all $a \not\in \Acal_t(\bm{\theta})$ are unique, we apply Lemma \ref{lem:unique_optima} to the empirical SNR ratio $f_t^{-1/2}(\bm{\theta}, \bm{w})$ with respect to $\bm{w}$. Note that $\bm{w}_{\bm{\theta}}^a$ is defined as
\begin{equation}
    \bm{w}_{\bm{\theta}}^a = \argmin_{\bm{w} \in \Delta(a) , \bm{w}^\top \hat{\bm{\mu}}_{t-1} \ge 0} f_t(\bm{\theta}, \bm{w}) = \argmax_{\bm{w}\in \Delta(a)} f^{-1/2}_t(\bm{\theta}, \bm{w}),
\end{equation}
where $f_t^{-1/2}(\bm{\theta}, \bm{w})$ is defined as
\begin{align}
    f_t^{-1/2}(\bm{\theta}, \bm{w}) &= \frac{\sum_{b \in [K]} w(b) \hat\mu_{t-1}(b)}{\sqrt{ \sum_{b \in [K]} \frac{w(b)^2}{t} \sum_{i=1}^t\left[V_i(X_i,b)\sum_{a \in [K]}\frac{ \sqrt{V_i(X_i,a) } }{\sqrt{V_i(X_i,b) } }\exp\left(\theta(a) - \theta(b)\right)\right] + l_t(\bm{w})}}\\
    l_t(\bm{w}) &= \frac{1}{t}\sum_{i=1}^t\left[ \left(\sum_{b \in [K]} w(b) (g_i(X_i,b)- \hat\mu_{t-1}(b)) \right)^2 \right].
\end{align}
Note that the numerator of $f_t^{-1/2}$ is strictly positive for any $\bm{w}_{\bm{\theta}}^a$ due to $a \not\in \argmax_{b \in [K]}\hat\mu_{t-1}(b)$. Thus, to apply Lemma \ref{lem:unique_optima}, it only remains to show that the denominator is strictly convex and positive. Because $V_i(X_i, a) \geq \epsilon$ for all $X_i \in \Xcal$, $a \in [K]$, $t \in \NN$, it follows that the denominator of $f_t^{-1/2}$ is positive. To show that the denominator of $f_t^{-1/2}$ is strictly convex, we first rewrite the squared denominator in matrix notation. The term $l_t(\bm{w})$ can be expressed as
\begin{equation}
    l_t(\bm{w}) = \bm{w}^\top \bm{D}\bm{w}, \quad \bm{D} = \left(\frac{1}{t}\sum_{i=1}^t \bm{u}_i\bm{u}_i^\top \right)\in \RR^{K\times K},\quad \bm{u}_i = \begin{bmatrix} g_i(X_i, 1) - \hat\mu_{t-1}(1) \\ 
    \cdots \\
    g_i(X_i, K) - \hat\mu_{t-1}(K)\end{bmatrix} \in \RR^K,
\end{equation}
where the matrix $\bm{D}$ is positive semi-definite by construction. For the remaining term in the squared denominator of $f_t^{-1/2}$, we rewrite the terms in matrix notation as 
\begin{align}
     &\sum_{b \in [K]} \frac{w(b)^2}{t} \sum_{i=1}^t\left[V_i(X_i,b)\sum_{a \in [K]}\frac{ \sqrt{V_i(X_i,a) } }{\sqrt{V_i(X_i,b) } }\exp\left(\theta(a) - \theta(b)\right)\right] = \bm{w}^\top \bm{E} \bm{w}, \\
     &\bm{E} = \text{diag}(\bm{c}) \in \RR^{K\times K}, \quad \bm{c} = \begin{bmatrix} \frac{1}{t}\sum_{i=1}^t\sum_{a \in [K]} \sqrt{V_i(X_i, 1)V_i(X_i, a)} \exp\left(\theta(a) - \theta(1)\right) \\
     \cdots \\
     \frac{1}{t}\sum_{i=1}^t\sum_{a \in [K]} \sqrt{V_i(X_i, K)V_i(X_i, a)} \exp\left(\theta(a) - \theta(K)\right)
     \end{bmatrix} \in \RR^K.
\end{align}
The vector $c \in \RR^K$ is strictly positive, and therefore the matrix $\bm{E}$ is a positive definite matrix. Combining both reformulations, we obtain that the denominator of $f_t^{-1/2}$ with respect to $\bm{w}$ is equal to 
\begin{equation}
    \sqrt{\sum_{b \in [K]} \frac{w(b)^2}{t} \sum_{i=1}^t\left[V_i(X_i,b)\sum_{a \in [K]}\frac{ \sqrt{V_i(X_i,a) } }{\sqrt{V_i(X_i,b) } }\exp\left(\theta(a) - \theta(b)\right)\right] + l_t(\bm{w})} = \sqrt{\bm{w}^\top \left( \bm{E} + \bm{D} \right)\bm{w}} = \|\bm{w} \|_{\bm{E}+\bm{D}},
\end{equation}
where $\| \cdot\|_{\bm{M}}$ denotes the norm with respect to the inner product $\langle \bm{x},\bm{y}\rangle_{\bm{M}} =  \bm{x}^\top\bm{M}\bm{y}$. Because $\bm{E}$ is positive definite and $\bm{D}$ is positive semi-definite, $\bm{E}+ \bm{D}$ is positive definite, and therefore the norm $\| \bm{w} \|_{\bm{E} + \bm{D}}$ is strictly convex with respect to $\bm{w}$. Thus, the denominator of $f_t^{-1/2}$ is positive and strictly convex, the numerator is affine. By direct application of Lemma \ref{lem:unique_optima}, it then follows that $f_t^{-1/2}$ has a unique maximizing $\bm{w}_{\bm\theta}^a$, and therefore $f_t$ has a unique minimizing $\bm{w}_{\bm{\theta}}^a$ for all $a \not\in \argmax_{b \in [K]} \hat\mu_{t-1}(b)$ under the constraints that $\bm{w}^\top \hat{\bm{\mu}}_{t-1} \geq 0$ and $\bm{w} \in \Delta(a)$. Thus, the gradient of $F_{a,t}(\bm{\theta})$ in Equation \eqref{eq:first_order_partial} is correct by direct application of Danskin's Theorem (Lemma \ref{lem:danskin}).  

\paragraph{Strict Convexity of $F_{a,t}(\bm{\theta})$} To show (strict) convexity of functions $F_{a,t}(\bm{\theta})$, we take second partial derivatives with respect to $\bm{\theta}$ below:
\begin{align}
    &\frac{\partial^2}{\partial \theta(c)\partial\theta(b)} F_{a,t}(\bm{\theta}) = -\frac{\EE_{\PP_t(X)}\left[\sqrt{v_t(x,b)v_t(x,c)} \right]}{\left(\sum_{b \in [K]} w_{\bm{\theta}}^a(b) \hat\mu_t(b)\right)^2} \left(  w_\theta^a(b)^2\exp(\theta(c) - \theta(b)) + w_\theta^a(c)^2 \exp(\theta(b)-\theta(c))\right),\\
    & \frac{\partial^2}{\partial^2 \theta(c)} F_{a,t}(\bm{\theta}) = \sum_{b \in [K]}\frac{\EE_{\PP_t(X)}\left[\sqrt{v_t(x,b)v_t(x,c)} \right]}{\left(\sum_{b \in [K]} w_{\bm{\theta}}^a(b) \mu(b)\right)^2} \left(  w_\theta^a(b)^2\exp(\theta(c) - \theta(b)) + w_\theta^a(c)^2 \exp(\theta(b)-\theta(c))\right).
\end{align}
To see that our Hessian $H$ of $F(\bm\theta)$ is positive definite, consider any vector $\bm{z} \in \RR^K$ with $z(K) = 0$. Then, 
\begin{equation}
    \bm{z}^\top H \bm{z} = \frac{1}{\left(\sum_{b \in [K]} w_\theta^a(b)\hat\mu_t(b) \right)^2}\sum_{a \in [K]}\sum_{b \in [K]} w_\theta^a(b)^2\EE_{\PP_t(X)}\left[\sqrt{v_t(x,b)v_t(x,c)} \right]\exp(\theta(a) - \theta(b))(z(a) - z(b))^2,
\end{equation}
which is strictly nonnegative for any $\bm{z}$. To show that our expression is strictly positive, note that our expression can only be zero if $(z(a) - z(b))^2 = 0$ for any $a, b \in [K]$. Note that $z(K) = 0$, so for our expression to be zero, we require $\bm{z} = 0$. Thus, our Hessian is positive definite, and each $F_{a,t}(\bm{\theta})$ is strictly convex. Because $G_t(\bm{\theta})$ is a maximum of $|\Acal_t(\bm{\theta})|$ strictly convex functions with respect to $\bm{\theta}$, note that $G_t(\bm{\theta})$ is also strictly convex, as shown in the proof of Lemma \ref{lem:convexity_policy}.

\paragraph{Obtaining the Subgradient Set} To obtain the subgradient set shown in Lemma \ref{lem:subgradient_lemma}, we now apply Lemma \ref{lem:clarke_subgradient} directly to $G_t(\bm{\theta})$. For all $a \in [K]$, the function $F_{a,t}(\bm{\theta})$ is a convex, differential function with gradients defined Equation \eqref{eq:first_order_partial}. By direct application of Lemma \ref{lem:clarke_subgradient}, we conclude that the subgradient set of $G_t(\bm{\theta})$ is as defined in Lemma \ref{lem:subgradient_lemma}.

\subsection{Proof of Theorem \ref{thm:projected_subgradient_descent_convergence}}

Theorem \ref{thm:projected_subgradient_descent_convergence} makes two claims: (i) $\pi_t(x,a) \geq 1/\kappa$ for all $t \in \NN$, $x \in \Xcal$, $a \in [K]$, and (ii) $\lim_{t \rightarrow \infty} \|\pi_t(\cdot, a) - \pi_\infty\|_{L_2(P_{X|H_{t-1}})} = 0$ almost surely. To begin, we start with our strict positivity result. 

\subsubsection{Proof of Strict Positivity}
Strict positivity is a direct consequence of the bounds $[\epsilon, B^2]$ and $[-S,S]$ enforced on $V_t$ and $\bm{\theta}_t$ respectively. Recall that our sampling scheme takes the form
\begin{equation}
    \pi_t^{-1}(x,b) = \sum_{a \in [K]} \sqrt{\frac{V_t(x,a)}{V_t(x,b)}} \exp(\theta_t(a) - \theta_t(b)),
\end{equation}
and by the bounds $|\theta(a)| < S$ for all $a \neq K$ and $V_t(x,a) \geq \epsilon$ for all $x \in \Xcal, a \in [K], t \in \NN$, 
\begin{equation}\label{eq:l0}
    0 <\pi_t^{-1}(x,b) = \sum_{a \in [K]} \sqrt{\frac{V_t(x,a)}{V_t(x,b)}} \exp(\theta_t(a) - \theta_t(b)) \leq K \sqrt{\frac{B^2}{\epsilon}}\exp(2S) < \infty.
\end{equation}
Because $\pi_t^{-1}(x,b) \leq K \sqrt{\frac{B^2}{\epsilon}}\exp(2S)$, it follows that $\pi_t(x,b) \geq 1/\kappa$ for $\kappa = K \sqrt{\frac{B^2}{\epsilon}}\exp(2S)$.  

\subsubsection{Proof of Convergence} 
To prove that $\|\pi_t(\cdot, b) - \pi_\infty(\cdot, b) \|_{L_2(P_{X|H_{t-1}})}$ almost surely for all $b \in [K]$, we first show that $$\| \bm\theta_{t} - \bm{\theta}_{*}\|_{2} \rightarrow 0$$ almost surely is sufficient. The $L_2$ norm $\|\pi_t(\cdot, b) - \pi_\infty(\cdot, b) \|_{L_2(P_{X|H_{t-1}})}$ is upper bounded by 
\begin{align}
    &\quad \|\pi_t(\cdot, b) - \pi_\infty(\cdot, b) \|_{L_2(P_{X|H_{t-1}})} \\
    &=\left\|\frac{1}{\sum_{a \in [K]} \sqrt{\frac{V_t(x,a)}{V_t(x,b)}} \exp(\theta_t(a) - \theta_t(b))} - \frac{1}{\sum_{a \in [K]} \sqrt{\frac{V_\infty(x,a)}{V_\infty(x,b)}} \exp(\theta_\infty(a) - \theta_\infty(b))} \right\|_{L_2(P_{X|H_{t-1}})} \\ 
    &\leq  \left(K \sqrt{\frac{B^2}{\epsilon}}\exp(2S)\right)^2\left\|  \sum_{a \in [K]} \left(\sqrt{\frac{V_\infty(x,a)}{V_\infty(x,b)}} \exp(\theta_\infty(a) - \theta_\infty(b)) - \sqrt{\frac{V_t(x,a)}{V_t(x,b)}} \exp(\theta_t(a) - \theta_t(b)) \right) \right\|_{L_2(P_{X|H_{t-1}})} \label{eq:l1}\\ 
    & \leq \left(K \sqrt{\frac{B^2}{\epsilon}}\exp(2S)\right)^2\left\|  \sum_{a \in [K]} \sqrt{\frac{V_\infty(x,a)}{V_\infty(x,b)}} \left(\exp(\theta_\infty(a) - \theta_\infty(b)) - \exp(\theta_t(a) - \theta_t(b)) \right) \right\|_{L_2(P_{X|H_{t-1}})}\label{eq:l2} \\ 
    &\quad + \left(K \sqrt{\frac{B^2}{\epsilon}}\exp(2S)\right)^2\left\|  \sum_{a \in [K]} \left(\sqrt{\frac{V_\infty(x,a)}{V_\infty(x,b)}}  - \sqrt{\frac{V_t(x,a)}{V_t(x,b)}}\right) \exp(\theta_t(a) - \theta_t(b))  \right\|_{L_2(P_{X|H_{t-1}})} \\ 
    &\leq \left(K \sqrt{\frac{B^2}{\epsilon}}\exp(2S)\right)^2 \frac{B}{\sqrt\epsilon} \sum_{a \in [K]} \left\| \frac{\exp(\theta_\infty(a))}{\exp(\theta_\infty(b))} - \frac{\exp(\theta_t(a))}{\exp(\theta_t(b))}  \right\|_{L_2(P_{X|H_{t-1}})} \label{eq:l3} \\ 
    &\quad + \left(K \sqrt{\frac{B^2}{\epsilon}}\exp(2S)\right)^2 \exp(2S) \sum_{a \in [K]} \left\| \left(\sqrt{\frac{V_\infty(x,a)}{V_\infty(x,b)}}  - \sqrt{\frac{V_t(x,a)}{V_t(x,b)}}\right) \right\|_{L_2(P_{X|H_{t-1}})},
\end{align}
where line \eqref{eq:l1} follows from the bounds on $\pi(x, b)$ (shown in Equation \eqref{eq:l0}), line \eqref{eq:l2} follows from adding and subtracting terms $\sum_{a \in [K]} \sqrt\frac{V_\infty(x,a)}{V_\infty(x,b)}\exp(\theta_t(a) - \theta_t(b))$ and subadditivity of norms, and line \eqref{eq:l3} follows from bounds on $V_t(x,a)$, $V_\infty(x,a)$, bounds on $\Theta$, and the subadditivity of norms. Thus, our policy $\pi_t$ converges to $\pi_\infty$ in $L_2(P_{X|H_{t-1}})$ as long as for all $a,b \in [K]$, we satisfy 
\begin{align}
     &\text{(Term }A)\quad \left\| \left(\sqrt{\frac{V_\infty(x,a)}{V_\infty(x,b)}}  - \sqrt{\frac{V_t(x,a)}{V_t(x,b)}}\right) \right\|_{L_2(P_{X|H_{t-1}})} \rightarrow 0, \\
      &\text{(Term }B) \quad \left\|\frac{\exp(\theta_\infty(a))}{\exp(\theta_\infty(b))} - \frac{\exp(\theta_t(a))}{\exp(\theta_t(b))}  \right\|_{L_2(P_{X|H_{t-1}})} \rightarrow 0
\end{align}
To show that Term $A$ converges, note that 
\begin{align}
    &\left\| \left({\frac{V_\infty(x,a)}{V_\infty(x,b)}}  - {\frac{V_t(x,a)}{V_t(x,b)}}\right) \right\|_{L_2(P_{X|H_{t-1}})}\label{eq:to_prove}\\
    &= \left\| \left(\sqrt{\frac{V_\infty(x,a)}{V_\infty(x,b)}}  + \sqrt{\frac{V_t(x,a)}{V_t(x,b)}}\right)\left(\sqrt{\frac{V_\infty(x,a)}{V_\infty(x,b)}}  - \sqrt{\frac{V_t(x,a)}{V_t(x,b)}}\right) \right\|_{L_2(P_{X|H_{t-1}})}\\
    &\geq \frac{2\sqrt{\epsilon}}{B}\left\| \left(\sqrt{\frac{V_\infty(x,a)}{V_\infty(x,b)}}  - \sqrt{\frac{V_t(x,a)}{V_t(x,b)}}\right)\right\|_{L_2(P_{X|H_{t-1}})},
\end{align}
where the inequality above holds due to the bounds $V_t(x,a) \in [\epsilon, B^2]$ for all $t \in \NN$, $x \in \Xcal$, $a \in [K]$. Thus, to show that Term $A$ converges, we show that the expression in line \eqref{eq:to_prove} converges almost surely to zero. To prove this, note that  
\begin{align}
    &\left\|\left({\frac{V_\infty(x,a)}{V_\infty(x,b)}}  - {\frac{V_t(x,a)}{V_t(x,b)}}\right) \right\|_{L_2(P_{X|H_{t-1}})} = \\
    &\left\|{\frac{V_\infty(x,a) - V_t(x,a)}{V_\infty(x,b)}}  + V_t(x,a)\left({\frac{1}{V_\infty(x,b)}-\frac{1}{V_t(x,b)}}\right) \right\|_{L_2(P_{X|H_{t-1}})}=\\ 
    &  \left\|{\frac{V_\infty(x,a) - V_t(x,a)}{V_\infty(x,b)}}  + V_t(x,a)\left({\frac{V_t(x,b) - V_\infty(x,b)}{V_\infty(x,b)V_t(x,b)}}\right) \right\|_{L_2(P_{X|H_{t-1}})}\leq \\
    & \frac{1}{\epsilon} \|V_\infty(x,a) - V_t(x,a) \|_{L_2(P_{X|H_{t-1}})} + \frac{B^2}{\epsilon^2} \| V_t(x,b) - V_\infty(x,b)\|_{L_2(P_{X|H_{t-1}})},
\end{align}
which converges to zero under the assumption that $\|V_t(x,a) - V_\infty(x,a) \|_{L_2(P_{X|H_{t-1}})} \rightarrow 0$ almost surely. 

We will show that $L_2$ convergence of $\bm\theta$, i.e. $\|\bm\theta_t - \bm\theta_{\infty} \|_2$, is sufficient for control of Term $B$. First, note that Term $B$ can be expressed as $\|\exp(\theta_\infty(a) - \theta_\infty(b))  - \exp(\theta_t(a) - \theta_t(b))\|_{L_2(P_{X|H_{t-1}})}$, and by the mean value theorem and bounds on $\Theta$, there exists a $c \in [-2S, 2S]$ such that 
\begin{align}
    \exp(\theta_\infty(a) - \theta_\infty(b))  - \exp(\theta_t(a) - \theta_t(b)) &= \exp(c) \left(\theta_\infty(a) - \theta_\infty(b) - (\theta_t(a) - \theta_t(b)) \right).
\end{align}
By taking absolute values and replacing $c$ with its upper bound $2S$, we obtain
\begin{equation}
    \left|\exp(\theta_\infty(a) - \theta_\infty(b))  - \exp(\theta_t(a) - \theta_t(b))\right| \leq \exp(2S) \left|\theta_\infty(a) - \theta_\infty(b) - (\theta_t(a) - \theta_t(b)) \right|.
\end{equation}
Now, by squaring both sides, integrating with respect to $P_{X|H_{t-1}}$, and taking square roots, we obtain 
\begin{equation}
    \left\|\frac{\exp(\theta_\infty(a))}{\exp(\theta_\infty(b))} - \frac{\exp(\theta_t(a))}{\exp(\theta_t(b))}  \right\|_{L_2(P_{X|H_{t-1}})} \leq \exp(2S) \left(\left|\theta_\infty(a) - \theta_t(a)| + |\theta_\infty(b)  - \theta_t(b)) \right|\right). 
\end{equation}
We now show that $\|\bm{\theta}_t - \bm{\theta}_\infty \|_2 \rightarrow 0$ almost surely implies the convergence of term $B$. Note that if $\|\bm{\theta}_t - \bm{\theta}_\infty \|_2 \rightarrow 0$ almost surely, then by the Cauchy Schwartz inequality, for all $a \in [K]$,
\begin{equation}
    \|{\theta}_t(a) - \theta_\infty(a)\|_2 = \|\bm{e}_a\left(\bm{\theta}_t - \bm\theta_\infty\right)\|_2 \leq \| \bm{e}_a\|_2\|\left(\bm{\theta}_t - \bm\theta_\infty\right) \|_2 \leq \|\left(\bm{\theta}_t - \bm\theta_\infty\right) \|_2.
\end{equation}
Thus, convergence of $\|\bm{\theta}_t - \bm{\theta}_\infty \|_2$ ensures that Term $B$ vanishes almost surely, and $\|\pi_t - \pi_\infty \|_{P_{X|H_{t-1}}} \rightarrow 0$ almost surely as desired. To prove the convergence of $\bm{\theta}_t$, we control two error terms shown below:
\begin{equation}\label{eq:main_ineq}
    \| \bm{\theta}_t-\bm{\theta}_{\infty}\|_2 \leq \| \bm{\theta}_t - \bm{\theta}_{t,*}\|_2+  \|\bm{\theta}_{t,*} - \bm{\theta}_{\infty} \|_2,
\end{equation}
where $\bm{\theta}_{t,*}$ denotes the minimizing solution of the empirical objective $G_{t}$, as defined in Equation \eqref{eqs:empirical_counterpart}. Our proof proceeds as follows:
\begin{enumerate}
    \item First, we show that at each timestep $t$, projected subgradient descent converges to the minimizing solution of $G_t$ as the number of iterations $N$ diverges towards infinity. This controls the error term $\|\bm{\theta}_t - \bm{\theta}_{t,*}\|_2$ under the assumption that $N$, the number of iterations, diverges to infinity. 
    \item Second, we show that our objective function $G_t$ converges to the objective function $G_\infty$ almost surely. Paired with Lemma \ref{lem:conv_argmax}, we obtain control over the error term $\|\bm{\theta}_{t,*} - \bm{\theta}_{\infty} \|_2$.
    \item Under our additional conditions stated, we show that the limiting policy $\pi_\infty$ is equivalent to the true optimal policy $\pi_* = \argmin_{\pi \in \Pi} G(\pi)$. 
\end{enumerate}

To prove the convergence of $\|\bm\theta_t - \bm{\theta}_{t,*} \|_2$, we leverage the following standard result for the convergence of projected subgradient descent \citep{boyd2014subgradient-ee364b}, provided in Lemma \ref{lem:convergence_subgrad_descent}. 

\begin{lemma}[Convergence of Projected Subgradient Descent \citep{boyd2014subgradient-ee364b}]\label{lem:convergence_subgrad_descent}
    Let $f$ be the convex objective function we wish to minimize, under the constraint that $\bm{x} \in \Theta$. Assume that $\Theta$ is closed and convex, $f$ is convex, and there exists a strictly feasible point $\bm{x} \in \Theta$. Let $\bm{x}^*$ denote a minimizer of the objective function $f$. Let $g^{(k)}$ denote the subgradient and $\bm{x}^{(k)}$ denote the parameter at the $k$-th iteration of projected subgradient descent. Assume that the norm of the subgradients are bounded, i.e. $\exists G < \infty$ such that $\|g^{(k)} \|_2 \leq G$ for all $k$. Furthermore, assume that there exists an $R< \infty$ such that $\|\bm{x}^{(0)} - \bm{x}^* \| \leq G$. Let $f_{\text{best}}^{(k)} \coloneqq \min_{i \in [k]}f(\bm{x}^{(i)})$ denote the value of the best iterate among the first $k$ iterates. Then,
    \begin{equation}
        f_{\text{best}}^{(k)} - f(\bm{x}^*) \leq \frac{R^2 + \sum_{i=1}^k \gamma_k^2}{(2/G)\sum_{i=1}^k \gamma_i},
    \end{equation}
    for projected subgradient descent with step size $\alpha_k = \gamma_k / \| g^{(k)} \|_2$ at iteration $k$.
\end{lemma}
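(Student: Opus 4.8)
The plan is to reproduce the classical projected subgradient convergence bound of \cite{boyd2014subgradient-ee364b} by tracking the squared distance of the iterates to the minimizer $\bm{x}^*$. Writing the update as $\bm{x}^{(k+1)} = P_\Theta\!\left(\bm{x}^{(k)} - \alpha_k g^{(k)}\right)$ with $\alpha_k = \gamma_k/\|g^{(k)}\|_2$, the first step I would take is to exploit that the Euclidean projection onto the closed convex set $\Theta$ is nonexpansive and fixes $\bm{x}^* \in \Theta$, yielding
\begin{equation}
    \|\bm{x}^{(k+1)} - \bm{x}^*\|_2^2 \leq \|\bm{x}^{(k)} - \alpha_k g^{(k)} - \bm{x}^*\|_2^2.
\end{equation}

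Next I would expand the right-hand side and apply the defining subgradient inequality $\langle g^{(k)}, \bm{x}^{(k)} - \bm{x}^*\rangle \geq f(\bm{x}^{(k)}) - f(\bm{x}^*)$, valid since $f$ is convex and $g^{(k)} \in \partial f(\bm{x}^{(k)})$, to obtain the one-step recursion
\begin{equation}
    \|\bm{x}^{(k+1)} - \bm{x}^*\|_2^2 \leq \|\bm{x}^{(k)} - \bm{x}^*\|_2^2 - 2\alpha_k\left(f(\bm{x}^{(k)}) - f(\bm{x}^*)\right) + \alpha_k^2\|g^{(k)}\|_2^2.
\end{equation}
Telescoping this bound over the first $k$ iterations, discarding the nonnegative terminal distance, and substituting the initial-distance bound $\|\bm{x}^{(0)} - \bm{x}^*\|_2 \leq R$ leaves
\begin{equation}
    2\sum_{i=1}^k \alpha_i\left(f(\bm{x}^{(i)}) - f(\bm{x}^*)\right) \leq R^2 + \sum_{i=1}^k \alpha_i^2\|g^{(i)}\|_2^2.
\end{equation}

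To close, I would lower bound each gap $f(\bm{x}^{(i)}) - f(\bm{x}^*)$ by $f_{\text{best}}^{(k)} - f(\bm{x}^*)$ for $i \leq k$, pull this common factor out of the sum, and then insert the step size $\alpha_i = \gamma_i/\|g^{(i)}\|_2$: this makes $\alpha_i^2\|g^{(i)}\|_2^2 = \gamma_i^2$ exactly, while the bounded-subgradient hypothesis $\|g^{(i)}\|_2 \leq G$ gives $\alpha_i \geq \gamma_i/G$ and hence $\sum_{i=1}^k \alpha_i \geq G^{-1}\sum_{i=1}^k \gamma_i$. Rearranging produces the stated inequality. The computation is entirely routine, so I do not expect a genuine obstacle; the only point meriting care is that the normalization of the step by $\|g^{(k)}\|_2$ is precisely what converts the otherwise data-dependent quantity $\alpha_i^2\|g^{(i)}\|_2^2$ into the deterministic $\gamma_i^2$, and that the boundedness of the subgradients enters only through the conversion $\sum_i \alpha_i \geq G^{-1}\sum_i \gamma_i$. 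Since the result is standard, I would simply present this argument rather than pursue an alternative route.
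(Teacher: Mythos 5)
Your proof is correct and is precisely the canonical argument from the cited source \citep{boyd2014subgradient-ee364b}: the paper itself states this lemma without proof, deferring to Boyd's notes, and your chain of nonexpansiveness of $P_\Theta$, the subgradient inequality, telescoping, and the normalization $\alpha_i^2\|g^{(i)}\|_2^2 = \gamma_i^2$ together with $\sum_i \alpha_i \geq G^{-1}\sum_i \gamma_i$ is exactly that argument. You also implicitly (and correctly) repaired two typos in the paper's statement --- the initial-distance hypothesis should read $\|\bm{x}^{(0)} - \bm{x}^*\|_2 \leq R$ rather than $\leq G$, and the summand $\gamma_k^2$ should be $\gamma_i^2$ --- so nothing further is needed.
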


\paragraph{Step 1: Convergence for Estimated Objectives} We use Lemma \ref{lem:convergence_subgrad_descent} to show that for each $t$, $f_{\text{best}}^{(k)}$ converges to $f(\bm{x}^*)$ as $N \rightarrow \infty$. To apply Lemma \ref{lem:convergence_subgrad_descent}, we first show that (i) $\|\bm{\theta}_0 - \bm{\theta}_{t,*} \|_2$ is bounded, where $\bm{\theta}_{t,*} = \argmax_{\bm{\theta} \in \Theta} G_t(\bm{\theta})$ and $\Theta$ is as defined in Theorem \ref{thm:projected_subgradient_descent_convergence}, and (ii) the chosen subgradient $\bm{d}_n$ is bounded. The boundedness of $\| \bm\theta_0 - \bm{\theta}_{t, *}\|_2$ follows from the bounds $[-S,S]$, yielding
\begin{equation}
    \| \bm\theta_0 - \bm\theta_{t, *}\|_2 \leq \sqrt{(K-1)4S^2} = 2S\sqrt{K-1}.
\end{equation}
To prove the boundedness of $\bm{d}_n$, we first provide bounds on the squared SNR ratio for each $a \not\in \argmax_{b \in [K]} \hat\mu_{t-1}(b)$. By definition of $\bm{w}_{\bm{\theta}}^a \in \argmin_{\bm{w} \in \Delta(a), \bm{w}^\top \hat{\bm{\mu}}_{t-1}\geq 0} f_t(\bm{\theta}, \bm{w})$, denoting $a_t^* \in \argmax_{b \in [K]} \hat\mu_{t-1}(b)$, we obtain
\begin{equation}\label{eq:one_step_one}
    \frac{\sum_{b \in [K]} \frac{w_{\bm{\theta}}^{a}(b)^2}{t} \sum_{i=1}^t\left[V_i(X_i,b)\sum_{a \in [K]}\frac{ \sqrt{V_i(X_i,a) } }{\sqrt{V_i(X_i,b) } }\exp\left(\theta(a) - \theta(b)\right)\right] + l_t(\bm{w})}{\left(\sum_{b \in [K]} w_{\bm\theta}^a(b)\hat\mu_{t-1}(b) \right)^2} \leq \frac{K^2B^2\exp(2S)+ 4K^2B^2}{\left(\hat\mu_{t-1}(a) - \hat\mu_{t-1}(a_t^*)\right)^2}.
\end{equation}
We now provide a lower bound on the numerator of the left-hand side. By the lower bound $\epsilon$ on $V_i(X_i, a)$ terms and $\bm{\theta} \in \Theta$, 
\begin{equation}\label{eq:two_step_one}
    \sum_{b \in [K]} \frac{w_{\bm{\theta}}^{a}(b)^2}{t} \sum_{i=1}^t\left[V_i(X_i,b)\sum_{a \in [K]}\frac{ \sqrt{V_i(X_i,a) } }{\sqrt{V_i(X_i,b) } }\exp\left(\theta(a) - \theta(b)\right)\right] + l_t(\bm{w}) \geq K\epsilon\exp(-2S).
\end{equation}
Putting the results of Equations \eqref{eq:one_step_one} and \eqref{eq:two_step_one}, we obtain
\begin{equation}\label{eq:bounds_on_mean}
    \left(\sum_{b \in [K]} w_{\bm\theta}^a(b)\hat\mu_{t-1}(b) \right)^2 \geq \frac{K\epsilon\exp(-2S)\left(\hat\mu_{t-1}(a) - \hat\mu_{t-1}(a_t^*)\right)^2}{K^2B^2\exp(2S)+ 4K^2B^2}.
\end{equation}
Given the lower bounds on $\left(\sum_{b \in [K]} w_{\bm\theta}^a(b)\hat\mu_{t-1}(b) \right)^2$, we now turn to bounding the gradient of $F_{a,t}(\bm{\theta})$. By Lemma \ref{lem:subgradient_lemma} and the bounds established in Equation \eqref{eq:bounds_on_mean}, we obtain the following bound $\forall a \not\in \argmax_{b \in [K]} \hat\mu_{t-1}(b)$: 
\begin{align}
    \|\nabla_{\bm\theta} F_{a,t}(\bm{\theta})\|_2 &= \sqrt{\sum_{c \in [K]}\left( \sum_{b \in [K]}\frac{\frac{1}{t}\sum_{i=1}^t\sqrt{V_i(X_i,b)V_i(X_i,c)} }{\left(\sum_{b \in [K]} w_{\bm{\theta}}^a(b) \hat\mu_{t-1}(b)\right)^2} \left(  w_{\bm\theta}^a(b)^2\exp(\theta(c) - \theta(b)) - w_{\bm\theta}^a(c)^2 \exp(\theta(b)-\theta(c))\right)\right)^2  }\\
    &\leq \sqrt{\sum_{c \in [K]} \left(K\frac{B^2\left(K^2B^2\exp(2S)+ 4K^2B^2\right)}{K\epsilon\exp(-2S)\left(\hat\mu_{t-1}(a) - \hat\mu_{t-1}(a_t^*)\right)^2}\left(2\exp(2S) \right) \right)^2} \\
    & \leq \sqrt{K\left(\frac{B^4\left(K^2\exp(2S)+ 4K^2\right)}{\epsilon\exp(-2S)\left(\hat\mu_{t-1}(a) - \hat\mu_{t-1}(a_t^*)\right)^2}\left(2\exp(2S) \right) \right)^2}. \label{eq:}
\end{align}
By the triangle inequality of the $L_2$ norm, we obtain the following bound for the subgradient $\bm{d}_n$:
\begin{align}
    \|\bm{d}_n\|_2 &= \left\|\frac{1}{|\Acal_n(\bm\theta)|}\sum_{a \in \Acal_n(\bm\theta)} \nabla_\theta f_{t}(\bm\theta_n, \bm{w}_n^a)\right\|_2 \\ 
    &\leq \frac{1}{|\Acal_n(\bm\theta)|} \left( |\Acal_n(\bm\theta)| \sqrt{K\left(\frac{B^4\left(K^2\exp(2S)+ 4K^2\right)}{\epsilon\exp(-2S)\left(\hat\mu_{t-1}(a) - \hat\mu_{t-1}(a_t^*)\right)^2}\left(2\exp(2S) \right) \right)^2} \right) \\
    & = \sqrt{K\left(\frac{B^4\left(K^2\exp(2S)+ 4K^2\right)}{\epsilon\exp(-2S)\left(\hat\mu_{t-1}(a) - \hat\mu_{t-1}(a_t^*)\right)^2}\left(2\exp(2S) \right) \right)^2}.
\end{align}
The bounds on the subgradients $\bm{d}_n$, bounds on $\|\bm{\theta}_0 - \bm\theta_{t, *} \|_2$, and the strict convexity of objective function $G_t(\bm{\theta})$ (shown in the proof of Lemma \ref{lem:subgradient_lemma}) ensure that Lemma \ref{lem:convergence_subgrad_descent} holds. Thus, we obtain
\begin{equation}
    G_t(\bm\theta_t) - G_t(\bm\theta_{t, *}) \leq \frac{{K\left(\frac{B^4\left(K^2\exp(2S)+ 4K^2\right)}{\epsilon\exp(-2S)\left(\hat\mu_{t-1}(a) - \hat\mu_{t-1}(a_t^*)\right)^2}\left(2\exp(2S) \right) \right)^2}+\sum_{n=1}^{N} \frac{1}{n^2}}{\left(\frac{2}{2S\sqrt{K-1}}\right)\sum_{n=1}^{N}\frac{1}{n}},
\end{equation}
where $\bm{\theta}_t$ is as defined in Algorithm \ref{alg:proj-subgradient}. Note that as $N$, the number of iterations, approaches infinity, the suboptimality of our solution vanishes, i.e.
\begin{equation}\label{eq:convergence_objective_value}
    \lim_{N \rightarrow \infty} G_t(\bm{\theta}_{t, N}) - G_t(\bm{\theta}_{t,*}) =0
\end{equation}
where $\bm{\theta}_{t,N}$ be the solution returned by Algorithm \ref{alg:proj-subgradient} for objective function $G_t$ after $N$ iterations. This holds due to $\hat\mu_{t-1}(a) < \hat\mu_{t-1}(a_t^*)$, $\lim_{N \rightarrow \infty} \sum_{n=1}^N\frac{1}{n^2} < \infty$, and $\lim_{N \rightarrow \infty}\sum_{n=1}^N \frac{1}{n} = \infty$.

To show that Equation \ref{eq:convergence_objective_value} implies the convergence of our iterates $\bm{\theta}_t \rightarrow \bm{\theta}_{t,*}$, note that (i) the objective function $G_t$ is strictly convex and (ii) the domain $\Theta$ is a compact set. We prove this result via contradiction. Suppose that $\lim_{N \rightarrow \infty}\bm{\theta}_{t,N} \neq \bm{\theta}_{t,*}$, i.e. there exists $\delta >0$ such that for every $N_0$, there exists $N \geq N_0$ with $\|\bm{\theta}_{t,N} - \bm{\theta}_{t, *}\|_2 > \delta$. By the strict convexity of $G_t(\bm{\theta})$, it follows that 
\begin{equation}
    m_\delta = \min_{\|\bm{\theta} - \bm{\theta}_{t,*} \|_2 \geq \delta} G_{t}(\bm\theta) - G_t(\bm{\theta}_{t,*}) > 0.
\end{equation}
Under our contradiction and the compactness of $\Theta$, for some $\delta>0$, there exists an infinite subsequence of iteration indices $\{N_i\}_{i \in \NN}$ where $\|\bm\theta_{t, N_i} - \bm\theta_{t, *} \|_2 \geq \delta$. However, then for all $i \in \NN$, 
\begin{equation}
 G_t(\bm\theta_{t, N_i} ) - G_t(\bm\theta_{t,*}) \geq m_\delta > 0,
\end{equation}
which contradicts the result obtained in Equation \eqref{eq:convergence_objective_value}. Thus, for all $\delta > 0$, there exists an $N_0$ large enough that for all $N \geq N_0$, $\|\bm\theta_{t,N} - \bm{\theta}_{t,*} \|_2 \leq \delta$, i.e.
\begin{equation}
    \lim_{N \rightarrow \infty}\|\bm{\theta}_{t,N} - \bm{\theta}_{t,*}\|_2 = 0.
\end{equation}
Thus, for any fixed realization of $G_t$, the solution $\bm{\theta}_{t, N}$ converges (w.r.t $N$) to the minimizing solution $\bm{\theta}_{t,*}$.

\paragraph{Step 2: Convergence of Limiting Objective}

To establish the convergence of $\bm{\theta}_{t}$ to $\bm{\theta}_\infty$, we first prove the convergence of $G_t$ to $G_\infty$ under our assumptions on $g_t$ and $V_t$. Recall that  $G_t$ is defined as
\begin{align}
    G_t(\bm{\theta}) &=  \max_{a: \hat\mu_{t-1}(a) < \max_{b \in [K]} \hat\mu_{t-1}(b)} F_{a,t}(\bm\theta), \\
    F_{a,t}(\bm\theta) &= \min_{\bm{w} \in \Delta(a), \bm{w}^\top \hat{\bm\mu}_{t-1} \geq 0} f_{t}(\bm{\theta}, \bm{w}), \\
    f_{t}(\bm\theta, \bm{w}) &= \frac{ \sum_{b \in [K]} \frac{w(b)^2}{t} \sum_{i=1}^t\left[V_i(X_i,b)\sum_{a \in [K]}\frac{ \sqrt{V_i(X_i,a) } }{\sqrt{V_i(X_i,b) } }\exp\left(\theta(a) - \theta(b)\right)\right] + l_t(\bm{w})}{\left(\sum_{b \in [K]} w(b)  \hat\mu_{t-1}(b) \right)^2} \\
    l_t(\bm{w}) &= \frac{1}{t}\sum_{i=1}^t\left[ \left(\sum_{b \in [K]} w(b) (g_i(X_i,b)- \hat\mu_{t-1}(b)) \right)^2 \right],
\end{align}
We will show that $\sup_{\bm{\theta}\in \Theta}|G_t(\bm\theta)-G_\infty(\bm{\theta})| \rightarrow 0$ almost surely under our assumptions, allowing for the use of Lemma \ref{lem:conv_argmax}. We can upper bound the difference between $G_t$ and $G$ with the following two terms:
\begin{align}
    \sup_{\bm\theta \in \Theta}|G_t(\bm\theta) - G_\infty(\bm{\theta})|&=  \sup_{\bm\theta \in \Theta}\left| \max_{a: \hat\mu_{t-1}(a) < \max_{b \in [K]} \hat\mu_{t-1}(b)} F_{a,t}(\bm\theta) - \max_{a \neq a^*} F_{a, \infty}(\bm{\theta}) \right|\\
    &\leq 
    \underbrace{\sup_{\bm\theta \in \Theta}\left| \max_{a: \hat\mu_{t-1}(a) < \max_{b \in [K]} \hat\mu_{t-1}(b)} F_{a, t}(\bm{\theta})-\max_{a \neq a^*} F_{a, t}(\bm{\theta}) \right|}_{\text{Term $(i)$}}\\
    &\quad + \underbrace{\sup_{\bm\theta \in \Theta}\left| \max_{a \neq a^*} F_{a,t}(\bm\theta) - \max_{a^*} F_{a, \infty}(\bm{\theta}) \right| }_{\text{Term $(ii)$}}\label{eq:term_2}
\end{align}
We first show that term $(i)$ converges to zero almost surely. Note that term $(i)$ differs only in the set of indices $a$ where the maximum is selected, and therefore for any $\bm{\theta} \in \Theta$, 

\begin{align}
    (i) &= \sup_{\bm\theta \in \Theta}\left| \max_{a: \hat\mu_{t-1}(a) < \max_{b \in [K]} \hat\mu_{t-1}(b)} F_{a, t}(\bm{\theta})-\max_{a \neq a^*} F_{a, t}(\bm{\theta}) \right| \\
    &\leq \sum_{a \in [K]}\sup_{\bm\theta \in \Theta}\biggr| \biggr(\mathbf{1}\left[F_{a,t }(\bm\theta)= \max_{b: \hat\mu_{t-1}(a)<\max_{b \in [K]}\hat\mu_{t-1}(b)} F_{b, t}(\bm{\theta}), a \not\in \argmax_{b \in [K]}\hat\mu_{t-1}(b) \right]-\\
&\quad\quad\quad\quad\quad\quad\quad  \mathbf{1}\left[F_{a, t}(\bm\theta) = \max_{b \neq a^*} F_{a, t}(\bm\theta), a \neq a^*\right]\biggr) F_{a, t}(\bm\theta) \biggr|.
\end{align}
Thus, term $(i)$ converges to zero almost surely as long the indicator functions in the summation above are equal for each $\bm\theta \in \Theta$. Note that for our indicators to align for all $\bm\theta \in \Theta$, we require the set $\{a \in [K]: \hat\mu_{t-1}(a) < \max_{b \in [K]} \hat\mu_{t-1}(b)\} \rightarrow [K]\setminus a^*$ and $\argmax_{b \in [K]}\hat\mu_{t-1}(b) \rightarrow \{a^*\}$ almost surely. By the almost-sure convergence of $\hat\mu_{t-1}(a) \rightarrow \mu(a)$ for all $a \in [K]$ (shown in the proof of Theorem \ref{thm:error_control}) and Assumption \ref{assump:unique_optimal_arm},  
\begin{align}
    \{a \in [K]: \hat\mu_{t-1}(a) < \max_{b \in [K]} \hat\mu_{t-1}(b)\} &\rightarrow [K]\setminus a^* \\
    \argmax_{b \in [K]}\hat\mu_{t-1}(b) &\rightarrow \{a^*\}
\end{align}
almost surely due to the unique optimal arm $a^* = \argmax_{a \in [K]} \mu(a)$, ensuring that term $(i)$ vanishes almost surely. To show that term $(ii)$ vanishes, we first use an upper bound on $F_{a,t}(\bm\theta)$ for all $a \neq a^*$:
\begin{align}
    F_{a,t}(\bm\theta) \leq  |F_{a, t}(\bm{\theta}) - F_{a,\infty}(\bm{\theta})| + F_{a, \infty}(\bm\theta).
\end{align}
Applying the maximum over $a \neq a^*$ on both sides of the inequality above, we obtain
\begin{align}
    \max_{a \neq a^*} F_{a,t}(\bm\theta) &\leq \max_{a \neq a^*}|F_{a, t}(\bm{\theta}) - F_{a,\infty}(\bm{\theta})| + \max_{a \neq a^*}F_{a, \infty}(\bm\theta)\\
    &\leq \sum_{a \neq a^*}|F_{a, t}(\bm{\theta}) - F_{a,\infty}(\bm{\theta})| + \max_{a \neq a^*}F_{a, \infty}(\bm\theta),
\end{align}
which directly implies the following upper bound for term $(ii)$:
\begin{equation}
    (ii) \leq \sup_{\bm\theta \in \Theta}\left(\sum_{a \neq a^*}|F_{a,t}(\bm{\theta}) - F_{a,\infty}(\bm\theta)|\right) \leq \sum_{a \neq a^*}\sup_{\bm\theta \in \Theta}|F_{a,t}(\bm{\theta}) - F_{a,\infty}(\bm\theta)|.
\end{equation}
Thus, to show term $(ii)$ converges appropriately, we show that $\sup_{\bm\theta \in \Theta}|F_{a,t}(\bm{\theta}) - F_{a, \infty}(\bm{\theta})|$ converges to zero almost surely as $t \rightarrow \infty$ for all $a \neq a^*$.  To proceed, let the weights $\bm{w}_{a,t}^{\bm\theta}, \bm{w}_{a, \infty}^{\bm\theta}$ be defined as
$$ \bm{w}_{a,t}^{\bm\theta} = \argmax_{\bm{w}\in \Delta(a)} f^{-1/2}_t(\bm{\theta}, \bm{w}), \quad \bm{w}_{a, \infty}^{\bm\theta} = \argmax_{\bm{w} \in \Delta(a)} f_{\infty}^{-1/2}(\bm{\theta}, \bm{w}),$$
i.e. the choice of weights that maximize the empirical SNR ratio $f_t^{-1/2}$ with estimated variance terms $V_t$. This does not affect our analysis due to the fact that $\argmax_{\bm{w}\in \Delta(a)} f_t^{-1/2}(\bm{\theta}, \bm{w}) = \argmin_{\bm{w}\in \Delta(a), \bm{w}^\top \hat{\bm\mu}_{t-1}\geq 0} f_t(\bm\theta, \bm{w})$ for all $a \not\in \argmax_{b \in [K]}\hat\mu_{t-1}(b)$, and $\bm{w}_{a,t}^{\bm\theta}$ does not appear in the objective function $G_t(\bm{\theta})$ for all $a \in \argmax_{b \in [K]}\hat\mu_{t-1}(b)$. Thus, for $a \neq a^*$, we can rewrite our uniform convergence condition as the following holding almost surely as $t \rightarrow \infty$:
\begin{align}
    \sup_{\bm\theta \in \Theta}|F_{a,t}(\bm{\theta}) - F_{a,\infty}(\bm\theta)| &= \sup_{\bm\theta \in \Theta}\left| f_t(\bm{\theta}, \bm{w}_{a,t}^{\bm\theta}) -  f_\infty(\bm{\theta}, \bm{w}_{a,\infty}^{\bm\theta}) \right| \rightarrow 0. \label{eq:to_prove_2} 
\end{align}
To show uniform convergence over $\Theta$, we first show that for sufficiently large $t$, $\bm{w}_{a,t}^{\bm\theta}$ and $\bm{w}_{a,\infty}^{\bm\theta}$ lie in the set $\Wcal$ that ensures the denominators of $f_t, f_\infty$ are strictly larger than zero. 
\begin{lemma}[Almost Sure Safe Set]\label{lem:safe_set}
    Under the assumptions of Theorem \ref{thm:projected_subgradient_descent_convergence}, for all $\bm\theta \in \Theta$ and $a \neq a^*$, there exists a set $\Wcal = \{\bm{w} \in \Delta(a): \bm{w}^\top \bm{\mu} \geq \sqrt\frac{K\epsilon\exp(-2S)}{K^2B^2(\exp(2S)+4)}\frac{(\mu(a^*)-\mu(a))}{4} \}$ such that $\bm{w}_{a,\infty}^{\bm\theta} \in \Wcal$ and $\bm{w}_{a,t}^{\bm\theta} \in \Wcal$ almost surely as $t \rightarrow \infty$.  
\end{lemma}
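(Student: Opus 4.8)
The plan is to show that the SNR-maximizing weights cannot have a vanishing signal (numerator), by combining the two uniform variance bounds already derived in Step~1 of the proof of Theorem~\ref{thm:projected_subgradient_descent_convergence}. Fix $a \neq a^*$ and $\bm{\theta}\in\Theta$, and write $\bm{w}_{a,t}^{\bm\theta}=\argmin_{\bm{w}\in\Delta(a),\,\bm{w}^\top\hat{\bm\mu}_{t-1}\ge0}f_t(\bm\theta,\bm{w})$. The quantity $f_t(\bm\theta,\bm{w}_{a,t}^{\bm\theta})$ equals the denominator (variance-like) term of $f_t$ divided by $\big((\bm{w}_{a,t}^{\bm\theta})^\top\hat{\bm\mu}_{t-1}\big)^2$. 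Equation~\eqref{eq:two_step_one} lower-bounds that denominator by $K\epsilon\exp(-2S)$, and Equation~\eqref{eq:one_step_one} (which is exactly the statement that the optimal inverse-squared SNR is dominated by the baseline weight $w(a^*)=1,\,w(a)=-1$) upper-bounds $f_t(\bm\theta,\bm{w}_{a,t}^{\bm\theta})$ by $\frac{K^2B^2(\exp(2S)+4)}{(\hat\mu_{t-1}(a^*)-\hat\mu_{t-1}(a))^2}$. Dividing the former bound by the latter yields, using the feasibility sign $(\bm{w}_{a,t}^{\bm\theta})^\top\hat{\bm\mu}_{t-1}\ge0$,
\[
(\bm{w}_{a,t}^{\bm\theta})^\top\hat{\bm\mu}_{t-1}\;\ge\;\sqrt{\frac{K\epsilon\exp(-2S)}{K^2B^2(\exp(2S)+4)}}\,\big(\hat\mu_{t-1}(a^*)-\hat\mu_{t-1}(a)\big),
\]
a lower bound whose constant depends only on the global parameters $\epsilon,B,S$ and hence is uniform over $\bm\theta\in\Theta$.

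Next I would pass from $\hat{\bm\mu}_{t-1}$ to $\bm{\mu}$. Since $\sum_b|w(b)|=2$ for every $\bm{w}\in\Delta(a)$, we have $|(\bm{w})^\top(\hat{\bm\mu}_{t-1}-\bm{\mu})|\le 2\max_b|\hat\mu_{t-1}(b)-\mu(b)|$, and $\hat\mu_{t-1}(b)\to\mu(b)$ almost surely for all $b$ (proof of Condition $(B3)$ in Theorem~\ref{thm:error_control}). Therefore, almost surely, for all $t$ large enough we have simultaneously $\hat\mu_{t-1}(a^*)-\hat\mu_{t-1}(a)\ge\tfrac12(\mu(a^*)-\mu(a))$ and $2\max_b|\hat\mu_{t-1}(b)-\mu(b)|\le\sqrt{\frac{K\epsilon\exp(-2S)}{K^2B^2(\exp(2S)+4)}}\,\tfrac14(\mu(a^*)-\mu(a))$, so that
\[
(\bm{w}_{a,t}^{\bm\theta})^\top\bm{\mu}\;\ge\;\sqrt{\frac{K\epsilon\exp(-2S)}{K^2B^2(\exp(2S)+4)}}\,\frac{\mu(a^*)-\mu(a)}{4},
\]
which is exactly membership in $\Wcal$. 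The factor $\tfrac14$ arises precisely from the $\tfrac12$ floor on the empirical gap and the $\tfrac14$ slack absorbing the signal mismatch.

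For the limiting weights $\bm{w}_{a,\infty}^{\bm\theta}=\argmax_{\bm{w}\in\Delta(a)}f_\infty^{-1/2}(\bm\theta,\bm{w})$ the argument is cleaner, since $f_\infty$ already carries the true mean $\bm{\mu}$ in its numerator and $V_\infty\in[\epsilon,B^2]$ almost surely (as an $L_2$ limit of the truncated $V_t$). The identical baseline comparison—$f_\infty$ at the optimum is no larger than at $\bm{w}_{base}^a$, whose signal is $\mu(a^*)-\mu(a)$—together with the same two variance bounds gives $(\bm{w}_{a,\infty}^{\bm\theta})^\top\bm{\mu}\ge\sqrt{\frac{K\epsilon\exp(-2S)}{K^2B^2(\exp(2S)+4)}}\,(\mu(a^*)-\mu(a))$, which is strictly stronger than the claimed $\Wcal$ threshold; hence $\bm{w}_{a,\infty}^{\bm\theta}\in\Wcal$ with no largeness-of-$t$ caveat.

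The main obstacle, and the only genuine subtlety, is securing the claim \emph{uniformly} over the infinite set $\Theta$ even though the weights $\bm{w}_{a,t}^{\bm\theta}$ depend on $\bm\theta$. This is resolved by the observation that the derived signal lower bound is entirely $\bm\theta$-free: it invokes only the $\bm\theta$-uniform variance bounds of Equations~\eqref{eq:one_step_one}–\eqref{eq:two_step_one} and the baseline, whose signal does not involve $\bm\theta$. Because there are finitely many arms $a\neq a^*$, I can intersect the finitely many full-measure events and take the maximum exceptional time over those arms, so a single almost-sure event and a single threshold on $t$ suffice. A secondary care point is that the baseline $\bm{w}_{base}^a$ (used implicitly in Equation~\eqref{eq:one_step_one}) must be feasible for the empirical constraint $\bm{w}^\top\hat{\bm\mu}_{t-1}\ge0$; this holds eventually since $\hat\mu_{t-1}(a^*)>\hat\mu_{t-1}(a)$ almost surely for large $t$ by Assumption~\ref{assump:unique_optimal_arm}.
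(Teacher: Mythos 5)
Your proposal is correct and follows essentially the same route as the paper's proof: a baseline comparison against $\bm{e}_{a^*}-\bm{e}_a$ combined with the $\bm\theta$-uniform bounds $K\epsilon\exp(-2S)\le Q\le K^2B^2(\exp(2S)+4)$ on the variance-like numerator, yielding a signal lower bound in terms of the empirical gap, which is then transferred to the true means via almost-sure convergence of $\hat{\bm\mu}_{t-1}$ with the same $\tfrac12$ floor and $\tfrac14$ slack (the paper's Equation \eqref{eq:bounds} and the display following it). Your explicit handling of the empirical feasibility of the baseline and of uniformity over $\Theta$ and the finitely many arms matches the paper's $t(\omega)$ construction, so there is nothing to flag.
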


\begin{proof}[Proof of Lemma \ref{lem:safe_set}]
    To begin our proof, we first define the denominator of $f_\infty$ as 
    \begin{align}
        Q_\infty(\bm\theta, \bm{w}) &\coloneqq {\EE_{P_X}\left[ \sum_{b \in [K]} \left(w^2(b) V_\infty(X,b) \sum_{a \in [K]}\sqrt{\frac{V_\infty(X,a)}{V_\infty(X,b)}}\exp\left(\theta(a) - \theta(b)\right)\right) \right] + l_\infty(\bm{w})}, \label{eq:q_inf}\\
        Q_t(\bm\theta, \bm{w}) &\coloneqq \sum_{b \in [K]} \frac{w(b)^2}{t} \sum_{i=1}^t\left[V_i(X_i,b)\sum_{a \in [K]}\frac{ \sqrt{V_i(X_i,a) } }{\sqrt{V_i(X_i,b) } }\exp\left(\theta(a) - \theta(b)\right)\right] + l_t(\bm{w})\label{eq:q_t}.
    \end{align}
    By our assumed bounds, we obtain the following bounds on $Q_{t}, Q_{\infty}$ for all $\bm{\theta} \in \Theta$, $\bm{w} \in \Delta(a)$, and $t \in \NN$:
    \begin{align}
        &K\epsilon\exp(-2S)\leq Q_\infty(\bm\theta, \bm{w}) \leq K^2B^2(\exp(2S) + 4),\\ 
        &K\epsilon\exp(-2S)\leq Q_t(\bm\theta, \bm{w}) \leq K^2B^2(\exp(2S) + 4).\label{eq:bounds_empirical}
    \end{align}
    Let $\tilde{\bm{w}}_a = \bm{e}_{a^*}-\bm{e}_{a}$, where $\bm{e}_i \in \RR^K$ denotes the $i$-th unit vector. By definition of $\bm{w}_{a, \infty}^{\bm\theta}$, we obtain 
    \begin{equation}
        f_\infty(\bm{\theta}, \bm{w}_{a, \infty}) \leq f_\infty(\bm{\theta}, \tilde{\bm{w}}_a) \implies \frac{Q_\infty(\bm{\theta}, \bm{w}_{a, \infty}^{\bm\theta})}{\left(\sum_{b \in [K]}w_{a, \infty}(b)\mu(b)\right)^2}  \leq \frac{Q_\infty(\bm{\theta}, \tilde{\bm{w}}_{a})}{\left(\mu(a^*) - \mu(a)\right)^2}, 
    \end{equation}
    and by our uniform bounds on $Q_\infty$ above, we obtain that 
    \begin{equation}
        \sqrt\frac{K\epsilon\exp(-2S)}{K^2B^2(\exp(2S)+4)}(\mu(a^*)-\mu(a)) \leq \left(\sum_{b \in [K]}w_{a, \infty}^{\bm\theta}(b)\mu(b)\right),
    \end{equation}
    demonstrating that $\bm{w}^\top \bm{\mu}$ lies in $\Wcal(\bm{\theta})$. To show $w_{a,t}$ lies in $\Wcal(\bm{\theta})$ almost surely, we show that for all sample paths $\omega \in \Omega$, where $P(\Omega) = 1$, there exists a $t(\omega)$ such that $\bm{w}_{a,t}^{\bm\theta} \in \Wcal$ for all $t \geq t(\omega)$. We denote random variables $X$ corresponding to the sample path $\omega$ as $X(\omega)$. By the almost-sure convergence of $\hat\mu_{t-1}(a)$ to $\mu(a)$ almost surely for all $a \in [K]$ (shown in the proof of Theorem \ref{thm:error_control}), there exists a $t(\omega)$ such that $\forall t \geq t(\omega)$,
    \begin{align}
        &\hat\mu_{t-1}(a^*)(\omega) -\hat\mu_{t-1}(a)(\omega)  \geq \frac{|\mu(a^*) -\mu(a)|}{2}, \\
       \forall a \in [K], \  &\left|\hat\mu_{t-1}(a)(\omega) - \mu(a)\right| \leq \frac{1}{K}\sqrt\frac{K\epsilon\exp(-2S)}{K^2B^2(\exp(2S)+4)}\frac{(\mu(a^*)-\mu(a))}{4}.
    \end{align}
    By repeating the same argument as above using the (random) objective function $f_t(\bm{\theta}, \bm{w})$ and the bounds provided in Equation \eqref{eq:bounds_empirical}, we obtain that for all $t \geq t(\omega)$, for all $\bm{\theta} \in \Theta$, 
    \begin{equation}\label{eq:bounds}
        \sqrt\frac{K\epsilon\exp(-2S)}{K^2B^2(\exp(2S)+4)}\frac{(\mu(a^*)-\mu(a))}{2} \leq \left( \sum_{b \in [K]} w_{a,t}^{\bm\theta}(b) \hat\mu_{t-1}(b)\right) (\omega),
    \end{equation}
    and by the fact that $\left( \sum_{b \in [K]} w_{a,t}^{\bm\theta}(b) \hat\mu_{t-1}(b)\right) (\omega) \leq \left(\sum_{b\in [K]}w_{a,t}^{\bm\theta}(b) \mu(b)\right) + \sum_{b \in [K]}|\hat\mu_{t-1}(b)(\omega)-\mu(b)|$, we obtain the following bound:
    \begin{equation}
        \sqrt\frac{K\epsilon\exp(-2S)}{K^2B^2(\exp(2S)+4)}\frac{(\mu(a^*)-\mu(a))}{4} \leq \left( \sum_{b \in [K]} w_{a,t}^{\bm\theta}(b) \mu(b)\right).
    \end{equation}
    Thus, we obtain $\bm{w}_{a,t} \in \Wcal$ almost surely. 
\end{proof}

To prove that Equation \eqref{eq:to_prove_2} holds, we also leverage the almost-sure convergence of $Q_t(\bm\theta, \bm{w})$ to $Q_\infty(\bm{\theta}, \bm{w})$ uniformly over $\bm{\theta} \in \Theta$ and $\bm{w} \in \Delta(a)$ for all $a \neq a^*$. We provide this result in Lemma \ref{lem:convergence_q} below.

\begin{lemma}[Uniform Convergence of $Q_t$]\label{lem:convergence_q}
    Let $Q_\infty$ and $Q_t$ be defined as in Equations \eqref{eq:q_inf} and \eqref{eq:q_t} respectively. Under the assumptions of Theorem \ref{thm:projected_subgradient_descent_convergence}, $\sup_{(\bm{\theta}, \bm{w}) \in \Theta \times \Delta(a)}| Q_t(\bm{\theta}, \bm{w}) - Q_\infty(\bm{\theta}, \bm{w})| \rightarrow 0$ almost surely. 
\end{lemma}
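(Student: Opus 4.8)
The plan is to reduce the uniform statement over the compact set $\Theta \times \Delta(a)$ to the almost-sure convergence of finitely many scalar empirical averages, exploiting that $Q_t$ and $Q_\infty$ share a bilinear structure with coefficients that are bounded, continuous functions of $(\bm\theta,\bm{w})$. Writing both functionals explicitly, the ``variance'' part equals $\sum_{j,k\in[K]} w(k)^2\exp(\theta(j)-\theta(k))\,M_t(j,k)$, where $M_t(j,k)=\frac1t\sum_{i=1}^t\sqrt{V_i(X_i,j)V_i(X_i,k)}$ for $Q_t$ and the coefficient is the constant $m_\infty(j,k)=\EE_{P_X}[\sqrt{V_\infty(X,j)V_\infty(X,k)}]$ for $Q_\infty$; the $l$-part equals $\sum_{j,k} w(j)w(k)\Lambda_t(j,k)$ with $\Lambda_t(j,k)=\frac1t\sum_{i=1}^t(g_i(X_i,j)-\hat\mu_{t-1}(j))(g_i(X_i,k)-\hat\mu_{t-1}(k))$ and $\Lambda_\infty(j,k)=\EE_{P_X}[(g_\infty(X,j)-\mu(j))(g_\infty(X,k)-\mu(k))]$. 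Since $|w(k)|\le 1$ and $|\theta(j)-\theta(k)|\le 2S$ uniformly on $\Theta\times\Delta(a)$, the coefficients are dominated by $\exp(2S)$, so
\[
\sup_{(\bm\theta,\bm{w})\in\Theta\times\Delta(a)}|Q_t-Q_\infty|\le \exp(2S)\sum_{j,k}|M_t(j,k)-m_\infty(j,k)|+\sum_{j,k}|\Lambda_t(j,k)-\Lambda_\infty(j,k)|.
\]
As these sums are finite, it suffices to establish $M_t(j,k)\to m_\infty(j,k)$ and $\Lambda_t(j,k)\to\Lambda_\infty(j,k)$ almost surely for each fixed index pair.

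For $M_t(j,k)$ I would center each summand at its conditional mean $\EE_{P_X}[\sqrt{V_i(X,j)V_i(X,k)}\mid H_{i-1}]$, which is legitimate because $V_i$ is $\Fcal_{i-1}$-measurable while $X_i\sim P_X$ is independent of the history. The centered terms are bounded martingale differences (the truncation $V_i\in[\epsilon,B^2]$ forces $\sqrt{V_i(X_i,j)V_i(X_i,k)}\in[\epsilon,B^2]$), so their Cesàro average vanishes almost surely by the martingale strong law of large numbers (Lemma \ref{lem:mslln}). For the remaining drift part I would show that the conditional means converge to $m_\infty(j,k)$: using the Lipschitzness of $(x,y)\mapsto\sqrt{xy}$ on $[\epsilon,B^2]^2$ together with Jensen/Cauchy--Schwarz to pass from $L_1$ to $L_2$, the difference is controlled by $\|V_i(\cdot,j)-V_\infty(\cdot,j)\|_{L_2(P_{X|H_{i-1}})}+\|V_i(\cdot,k)-V_\infty(\cdot,k)\|_{L_2(P_{X|H_{i-1}})}$, which tends to $0$ almost surely by hypothesis, and the Cesàro average of a convergent sequence converges to the same limit (Lemma \ref{lem:cesaro_convergence}).

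For $\Lambda_t(j,k)$ I would follow the same template in three steps. First, replace $\hat\mu_{t-1}$ by $\mu$; the incurred error is $O(\|\hat\mu_{t-1}-\mu\|_2)$, which is $o(1)$ almost surely by the running-mean convergence established in the proof of Theorem \ref{thm:error_control}, all factors being bounded by $2B$. Second, replace $g_i$ by $g_\infty$ inside the product; the resulting error is bounded by a constant times the Cesàro average of $|g_i(X_i,j)-g_\infty(X_i,j)|$, which vanishes by the same martingale-plus-drift argument applied to the $L_2$ convergence of $g_i$ in condition $(A2)$. What then remains, $\frac1t\sum_{i=1}^t(g_\infty(X_i,j)-\mu(j))(g_\infty(X_i,k)-\mu(k))$, is a Cesàro average of i.i.d.\ bounded random variables (the $X_i$ are i.i.d.\ and $g_\infty,\mu$ are deterministic), hence converges to $\Lambda_\infty(j,k)$ by the ordinary SLLN. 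Summing the finitely many scalar limits gives the lemma.

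The hard part will be the drift/bias step: converting the assumed $L_2(P_{X|H_{t-1}})$ convergence of the nuisances $V_i$ and $g_i$ into convergence of the \emph{nonlinear} aggregates $\sqrt{V_i(X,j)V_i(X,k)}$ and $(g_i-\mu)(g_i-\mu)$, while keeping enough control that the Cesàro averages inherit the limits. The truncation bounds $V_i\in[\epsilon,B^2]$ and $|g_i|\le B$ are precisely what render the relevant maps Lipschitz and every martingale increment bounded, so that Lemmas \ref{lem:mslln} and \ref{lem:cesaro_convergence} apply without further moment bookkeeping; the uniformity in $(\bm\theta,\bm{w})$ then comes essentially for free from the bounded-coefficient reduction above.
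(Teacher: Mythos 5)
Your proof is correct, and the mechanism you use for uniformity is genuinely different from the paper's. The paper proves pointwise almost-sure convergence of $l_t$ in $\bm{w}$ (and handles the variance part $W_t$ only by a sketched ``similar argument''), then upgrades to uniform convergence by invoking uniform Lipschitzness of the objective on the compact domain together with pointwise convergence on a dense subset (the van der Vaart--Wellner device). You instead exploit the bilinear structure: writing $Q_t-Q_\infty=\sum_{j,k}w(k)^2\exp(\theta(j)-\theta(k))\left(M_t(j,k)-m_\infty(j,k)\right)+\sum_{j,k}w(j)w(k)\left(\Lambda_t(j,k)-\Lambda_\infty(j,k)\right)$ with coefficients uniformly bounded by $\exp(2S)$ on $\Theta\times\Delta(a)$, the supremum is dominated by finitely many parameter-free scalar errors, so uniformity is automatic and no equicontinuity argument is needed. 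The scalar limits themselves rest on the same probabilistic tools the paper uses for the $l_t$ part: centering at conditional means to obtain bounded martingale differences (Lemma \ref{lem:mslln}), controlling the drift through the assumed $L_2(P_{X|H_{t-1}})$ convergence of $V_i$ and $g_i$ via Jensen/H\"older, Ces\`aro averaging (Lemma \ref{lem:cesaro_convergence}), the running-mean convergence $\hat\mu_{t-1}(b)\to\mu(b)$, and the ordinary SLLN for the i.i.d.\ $g_\infty$ term. Your route buys two things: it removes the need to verify uniform Lipschitzness of $Q_t$ in $(\bm{\theta},\bm{w})$, and it makes the variance-part argument fully explicit---precisely the step the paper leaves sketched---via the Lipschitzness of $(x,y)\mapsto\sqrt{xy}$ on $[\epsilon,B^2]^2$, which is exactly where the truncation $V_i\geq\epsilon$ earns its keep. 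One small point to add in a final write-up: you should note that $V_\infty$ inherits the bounds $[\epsilon,B^2]$ $P_X$-almost everywhere from the $V_t$ (e.g., along an almost-everywhere convergent subsequence), so that the Lipschitz domain also covers the limit function.
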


\begin{proof}[Proof of Lemma \ref{lem:convergence_q}]
To simplify notation, we define the functions $W_t$ and $W_\infty$ as follows:
\begin{align}
    W_t(\bm{\theta}, \bm{w}) &\coloneqq \sum_{b \in [K]} \frac{w(b)^2}{t} \sum_{i=1}^t\left[V_i(X_i,b)\sum_{a \in [K]}\frac{ \sqrt{V_i(X_i,a) } }{\sqrt{V_i(X_i,b) } }\exp\left(\theta(a) - \theta(b)\right)\right], \\
    W_\infty(\bm\theta, \bm{w}) &\coloneqq {\EE_{P_X}\left[ \sum_{b \in [K]} \left(w^2(b) V_\infty(X,b) \sum_{a \in [K]}\sqrt{\frac{V_\infty(X,a)}{V_\infty(X,b)}}\exp\left(\theta(a) - \theta(b)\right)\right) \right]}.
\end{align}
We now upper bound the difference between $Q_t$ and $Q_\infty$ as follows:
\begin{align}
    \sup_{(\bm{\theta}, \bm{w}) \in \Theta \times \Delta(a)}| Q_t(\bm{\theta}, \bm{w}) - Q_\infty(\bm{\theta}, \bm{w})| \leq & \quad \underbrace{\sup_{(\bm{\theta}, \bm{w}) \in \Theta \times \Delta(a)} \left|W_{t}(\bm\theta, \bm{w}) - W_\infty(\bm\theta, \bm{w}) \right|}_{\text{Term $(i)$}}  \\
    &\quad + \underbrace{\sup_{(\bm{\theta}, \bm{w}) \in \Theta \times \Delta(a)} \left|l_t(\bm{w}) - l_\infty(\bm{w}) \right|}_{\text{Term $(ii)$}}.
\end{align}
We begin by term $(ii)$ vanishes, showing that $l_t$ uniformly converges to $l_\infty$ almost surely. First, note that because $l_t(\bm{w})$ is uniformly Lipschitz on $\Delta(a)$, it suffices to show pointwise convergence for a dense subset of $\bm{w} \in \Delta(a)$ (Chapter 1, \cite{vandervaart1996weak}). We now show that we obtain pointwise convergence for $\bm\theta \in \Theta$. Note that $l_t$ can be rewritten as 
\begin{align}
    l_t(\bm{w}) &= \frac{1}{t}\sum_{i=1}^t \left(\sum_{b \in [K]} w(b)\left( g_i(X_i,b) - \mu(b)\right)+\sum_{b \in [K]}w(b)(\mu(b) - \hat\mu_{t-1}(b))\right)^2\\
    &= \frac{1}{t}\sum_{i=1}^t \left(\sum_{b \in [K]} w(b) (g_i(X_i,b) - \mu(b)) \right)^2\\
    &\quad + 2\left(\sum_{b \in [K]}w(b) (\mu(b) - \hat\mu_{t-1}(b)) \right)\left(\frac{1}{t}\sum_{i=1}^t \left(\sum_{b \in [K]} w(b) (g_i(X_i,b) - \mu(b)) \right)\right)  \label{eq:term_1} \\ 
    &\quad +  \left(\sum_{b \in [K]} w(b)(\mu(b) - \hat\mu_{t-1}(b)) \right)^2. \label{eq:term_2}
\end{align}
The terms on Equations \eqref{eq:term_1} and \eqref{eq:term_2} vanish almost surely. By the bounds on $\bm{w} \in \Delta(a)$ and $\hat\mu_{t-1}(a) \rightarrow \mu(a)$ almost surely for all $a \in [K]$, it follows that:
\begin{equation}
    \lim_{t \rightarrow \infty}\left|\sum_{b \in [K]} w(b)(\mu(b) - \hat\mu_{t-1}(b))\right| \leq \lim_{t \rightarrow \infty}\sum_{b \in [K]} |\mu(b) - \hat\mu_{t-1}(b)| = 0.
\end{equation}
By the bounds $\left|\left(\sum_{b \in [K]} w(b)(\mu(b) - \hat\mu_{t-1}(b)) \right)\right| \leq 2KB$ and $\left|\frac{1}{t}\sum_{i=1}^t \left(\sum_{b \in [K]} w(b) (g_i(X_i,b) - \mu(b)) \right)\right| \leq 2KB$ due to $|g_i(X_i, b)| \leq B$ and $|\mu(b)| \leq B $. It then follows that
\begin{align}
    \lim_{t \rightarrow \infty}2\left(\sum_{b \in [K]}w(b) (\mu(b) - \hat\mu_{t-1}(b)) \right)\left(\frac{1}{t}\sum_{i=1}^t \left(\sum_{b \in [K]} w(b) (g_i(X_i,b) - \mu(b)) \right)\right) &\rightarrow 0,\\
    \lim_{t \rightarrow \infty}\left(\sum_{b \in [K]} w(b)(\mu(b) - \hat\mu_{t-1}(b)) \right)^2 &\rightarrow 0
\end{align}
almost surely. Thus, the limit of $l_t(\bm{w})$ is solely dominated by the first term $\frac{1}{t}\sum_{i=1}^t\left(\sum_{b \in [K]} w(b)\left( g_i(X_i,b) - \mu(b)\right)\right)^2$. We now show that this term converges to $l_\infty$ uniformly over $\bm{w} \in \Delta(a)$ almost surely. We first arrange this first term as terms $A_i(\bm{w})$ and $B_i(\bm{w})$ as follows:
\begin{align}
    &\frac{1}{t}\sum_{i=1}^t\left(\sum_{b \in [K]} w(b)\left( g_i(X_i,b) - \mu(b)\right)\right)^2 = \\
    &\frac{1}{t}\sum_{i=1}^t \left( \underbrace{\sum_{b \in [K]} w(b)(g_i(X_i,b) - g_\infty(X_i, b))}_{\coloneqq A_i(\bm{w})} + \underbrace{\sum_{b \in [K]} w(b)(g_\infty(X_i, b) - \mu(b))}_{\coloneqq B_i(\bm{w})} \right)^2=\\
    &\frac{1}{t}\sum_{i=1}^t A_i^2(\bm{w}) + 2A_i(\bm{w})B_i(\bm{w}) + B_i^2(\bm{w})
\end{align}
We deal with the term $B_i(\bm{w})$. Because $g_\infty$ is fixed, $\mu(b)$ is fixed, and $X_i \sim P_X$ i.i.d., by direct application of the strong law of large numbers, 
\begin{equation}
    \lim_{t \rightarrow \infty}\frac{1}{t}\sum_{i=1}^t B_i^2(\bm{w}) = \lim_{t \rightarrow \infty}\frac{1}{t}\sum_{i=1}^t \left( \sum_{b \in [K]} w(b)(g_\infty(X_i, b) - \mu(b))\right)^2  =  \EE_{P_X}\left[ \left(\sum_{b\in [K]} w(b) (g_\infty(X,b) - \mu(b))\right)^2  \right]
\end{equation}
We now show that the terms with $A_i(\bm{w})$ vanish almost surely. 
\begin{align}
    \frac{1}{t}\sum_{i=1}^t A_i(\bm{w}) &= \frac{1}{t}\sum_{i=1}^t \left(\sum_{b \in [K]} w(b) \left( g_i(X_i) - g_\infty(X_i, b) \right) \right) \\
    &= \underbrace{\frac{1}{t}\sum_{i=1}^t \left(\sum_{b \in [K]} w(b)(g_i(X_i,b) - \EE_{P_X}[g_\infty(X,b)|H_{i-1}]) \right)}_{\coloneqq(C)}\\
    &\quad + \underbrace{\frac{1}{t}\sum_{i=1}^t \left(\sum_{b \in [K]} w(b) (\EE_{P_X}[g_\infty(X, b)|H_{i-1}] - g_\infty(X_i,b)) \right)}_{\coloneqq(D)}
\end{align}
Term $(D)$ converges to zero almost surely by the strong law of large numbers by the same logic as the term $\frac{1}{t}\sum_{i=1}^t B_i^2(\bm{w})$. Term $(C)$ vanishes under the assumption that $\|g_t -g_\infty\|_{L_2(P_X|H_{t-1})} \rightarrow 0$ almost surely. To see this, note that 
\begin{align}
    (C) &= \frac{1}{t}\sum_{i=1}^t \left(\sum_{b \in [K]} w(b)(g_i(X_i,b) - \EE_{P_X}[g_i(X,b)|H_{i-1}]) \right) \\
    &\quad + \frac{1}{t}\sum_{i=1}^t \left(\sum_{b \in [K]} w(b)(\EE_{P_X}[g_i(X_i,b)-g_\infty(X,b)|H_{i-1}]) \right),
\end{align}
where the first line converges almost surely to zero by Lemma \ref{lem:mslln}, and the second line converges almost surely to zero by our assumption $\|g_t -g_\infty\|_{L_2(P_X|H_{t-1})} \rightarrow 0$, Holder's inequality, and Lemma \ref{lem:cesaro_convergence}. Thus, by the boundedness of terms $A_i^2(\bm{w}), B_i(\bm{w})$, we obtain
\begin{equation}
    \lim_{t \rightarrow \infty} l_t(\bm{w}) \rightarrow l_\infty(\bm{w}) \quad \forall \bm{w} \in \Delta(a),
\end{equation}
almost surely, which guarantees $|l_\infty(\bm{w}) - l_t(\bm{w})| \rightarrow 0$ almost surely. Since $l_t(\bm{w})$ is uniformly Lipschitz on the compact set $\Delta(a)$ and converges pointwise almost surely on a dense subset of $\Delta(a)$, it converges uniformly almost surely on $\Delta(a)$ (Chapter 1, \cite{vandervaart1996weak}). 

The proof of uniform convergence for $W_t(\bm{\theta}, \bm{w})$ follows from repeating a similar argument to the one above, and leveraging the fact that $Q_t$ is uniformly Lipschitz on $\Theta \times \Delta(a)$ to obtain uniform convergence. 
\end{proof}

We now leverage the results of Lemmas \ref{lem:safe_set} and \ref{lem:convergence_q} in order to prove that Equation \eqref{eq:to_prove_2} holds. Let $t(\omega)$ be as defined in the proof of Lemma \ref{lem:safe_set}. For sample path $\omega \in \Omega$, for $t \geq t(\omega)$, we obtain 
\begin{align}
    \sup_{\bm\theta \in \Theta} |f_t(\bm\theta, \bm{w}_{a,t}^{\bm{\theta}})(\omega) - f_\infty(\bm\theta, \bm{w}_{a,\infty}^{\bm\theta})| &= \sup_{\bm\theta \in \Theta} \left|\min_{\bm{w} \in \Delta(a), \bm{w}^\top \hat{\bm\mu}_{t-1}(a) \geq 0} f_t(\bm\theta, \bm{w})(\omega) - \min_{\bm{w} \in \Delta(a), \bm{w}^\top \bm{\mu} \geq 0 } f_\infty(\bm\theta, \bm{w}) \right| \label{l:1}\\
    &= \sup_{\bm\theta \in \Theta} \left|\min_{\bm{w} \in \Wcal} f_t(\bm\theta, \bm{w})(\omega) - \min_{\bm{w} \in \Wcal} f_\infty(\bm\theta, \bm{w}) \right|\label{l:2} \\
    &\leq \sup_{(\bm\theta,\bm{w}) \in \Theta \times \in \Wcal} |f_t(\bm{\theta}, \bm{w})(\omega) - f_\infty(\bm{\theta, \bm{w}}) | \label{l:3},
\end{align}
where line \eqref{l:1} holds by definition of $\bm{w}_{a,t}^{\bm\theta}$ and $\bm{w}_{a,\infty}^{\bm\theta}$, line \eqref{l:2} holds by definition of $t(\omega)$, and line \eqref{l:3} holds due to the following inequality
\begin{equation}
    \min_{\bm{w} \in \Wcal} f_t(\bm{\theta},\bm{w})(\omega) \leq \min_{\bm{w} \in \Wcal}f_\infty(\bm{\theta}, \bm{w}_{a,\infty}^{\bm\theta}) + |f_t(\bm{\theta}, \bm{w}_{a,\infty}^{\bm\theta})(\omega) - f_\infty(\bm{\theta}, \bm{w}_{a,\infty}^{\bm\theta}) |,
\end{equation}
which implies that
\begin{equation}
    \left|\min_{\bm{w} \in \Wcal} f_t(\bm{\theta},\bm{w})(\omega) - \min_{\bm{w} \in \Wcal}f_\infty(\bm{\theta}, \bm{w}_{a,\infty}^{\bm\theta})\right| \leq \sup_{\bm{w} \in \Wcal} |f_t(\bm{\theta}, \bm{w})(\omega) - f_\infty(\bm{\theta}, \bm{w}) |.
\end{equation}
We now show that $\sup_{(\bm\theta,\bm{w}) \in \Theta \times \in \Wcal} |f_t(\bm{\theta}, \bm{w})(\omega) - f_\infty(\bm{\theta, \bm{w}})$ converges to zero for each $\omega \in \Omega$. First, by rewriting this term, we obtain
\begin{align}
    \sup_{(\bm\theta,\bm{w}) \in \Theta \times \in \Wcal} |f_t(\bm{\theta}, \bm{w})(\omega) - f_\infty(\bm{\theta, \bm{w}})| &\leq \sup_{(\bm\theta,\bm{w}) \in \Theta \times \in \Wcal} \left|\frac{(\bm{w}^\top \bm{\mu})Q_t(\bm{\theta, \bm{w}})(\omega) - (\bm{w}^\top\bm{\mu})Q_\infty(\bm{\theta}, \bm{w})}{(\bm{w}^\top\bm{\mu})(\bm{w}^\top\hat{\bm{\mu}}_{t-1}(\omega)) }\right|\\
    &\quad + \sup_{(\bm\theta,\bm{w}) \in \Theta \times \in \Wcal} \left|\frac{(\bm{w}^\top \bm{\mu})Q_\infty(\bm{\theta, \bm{w}}) - (\bm{w}^\top\hat{\bm{\mu}}_{t-1}(\omega))Q_\infty(\bm{\theta}, \bm{w})}{(\bm{w}^\top\bm{\mu})(\bm{w}^\top\hat{\bm{\mu}}_{t-1}(\omega)) }\right|
\end{align}
By the fact that $\bm{w}^\top \bm{\hat\mu}_{t-1}(\omega) \geq \sqrt\frac{K\epsilon\exp(-2S)}{K^2B^2(\exp(2S)+4)}\frac{(\mu(a^*)-\mu(a))}{4}$ (Equation \eqref{eq:bounds}) for all $\bm{w} \in \Wcal$ for $t \geq t(\omega)$ and the uniform convergence results of Lemma \ref{lem:convergence_q}, we obtain
\begin{align}
    &\lim_{t \rightarrow \infty}\sup_{(\bm\theta,\bm{w}) \in \Theta \times \in \Wcal} \left|\frac{(\bm{w}^\top \bm{\mu})Q_t(\bm{\theta, \bm{w}})(\omega) - (\bm{w}^\top\bm{\mu})Q_\infty(\bm{\theta}, \bm{w})}{(\bm{w}^\top\bm{\mu})(\bm{w}^\top\hat{\bm{\mu}}_{t-1}(\omega)) }\right|\\
    &\leq \frac{1}{\sqrt\frac{K\epsilon\exp(-2S)}{K^2B^2(\exp(2S)+4)}\frac{(\mu(a^*)-\mu(a))}{4}}\lim_{t \rightarrow \infty} |Q_t(\bm{\theta}, \bm{w})(\omega) - Q_\infty(\bm\theta, \bm{w})|\\
    &= 0.
\end{align}
for all $\omega \in \Omega$, resulting in convergence almost surely. By the fact that on $\bm{w}^\top {\bm{\mu}} \geq \sqrt\frac{K\epsilon\exp(-2S)}{K^2B^2(\exp(2S)+4)}\frac{(\mu(a^*)-\mu(a))}{4}$ for $\bm{w} \in \Wcal$, $\hat\mu_{t-1}(a) \rightarrow \mu(a)$ for all $a \in [K]$ almost surely, and $\sup_{(\bm{\theta}, \bm{w}) \in \Theta \times \Delta(a) } Q_\infty(\bm{\theta}, \bm{w}) \leq K^2B^2(\exp(2S)+4)$, we obtain
\begin{align}
    &\lim_{t \rightarrow \infty}\sup_{(\bm\theta, \bm{w}) \in \Theta \times \Wcal} \left|\frac{(\bm{w}^\top \bm{\mu})Q_\infty(\bm{\theta, \bm{w}}) - (\bm{w}^\top\hat{\bm{\mu}}_{t-1}(\omega))Q_\infty(\bm{\theta}, \bm{w})}{(\bm{w}^\top\bm{\mu})(\bm{w}^\top\hat{\bm{\mu}}_{t-1}(\omega)) }\right|\\
    &\leq \left(\frac{K^2B^2(\exp(2S)+4)}{\frac{K\epsilon\exp(-2S)}{K^2B^2(\exp(2S)+4)}\frac{(\mu(a^*)-\mu(a))^2}{16}}\right) \lim_{t \rightarrow \infty}\sup_{\bm{w} \in \Wcal}\left| \sum_{b \in [K]} w(b)(\mu(b)-\hat\mu_{t-1}(b)(\omega)) \right|\\
    & \leq \left(\frac{K^2B^2(\exp(2S)+4)}{\frac{K\epsilon\exp(-2S)}{K^2B^2(\exp(2S)+4)}\frac{(\mu(a^*)-\mu(a))^2}{16}}\right) \lim_{t \rightarrow \infty}\sum_{b \in [K]} \left|\mu(b) - \hat\mu_{t-1}(b)(\omega) \right|\\
    & = 0,
\end{align}
for all $\omega \in \Omega$, resulting convergence almost surely. Thus, we obtain 
\begin{equation}
    \lim_{t \rightarrow \infty}\sup_{\bm\theta \in \Theta}|F_{a,t}(\bm{\theta}) - F_{a,\infty}(\bm\theta)| = \lim_{t \rightarrow \infty}\sup_{\bm\theta \in \Theta}\left| f_t(\bm{\theta}, \bm{w}_{a,t}^{\bm\theta}) -  f_\infty(\bm{\theta}, \bm{w}_{a,\infty}^{\bm\theta}) \right| = 0,
\end{equation}
almost surely, yielding the desired convergence result for Equation \eqref{eq:to_prove_2} and control over term $(ii)$ in line \eqref{eq:term_2}. It then follows that $\lim_{t\rightarrow \infty}\sup_{\bm{\theta} \in \Theta}|G_t(\bm\theta) - G_\infty(\infty)| = 0$ almost surely,
and by the uniqueness of $\bm{\theta}_\infty = \argmin_{\bm{\theta} \in \Theta} G_\infty(\bm\theta)$ (as shown uniformly over $t \in \NN$ in the proof of Lemma \ref{lem:subgradient_lemma}), a direct application of Lemma \ref{lem:conv_argmax} yields our desired result that $\lim_{t \rightarrow \infty} \| \bm{\theta}_{t,*} - \bm{\theta}_{\infty}\|_2 \rightarrow 0$ almost surely. Taking the limits of Equation \eqref{eq:main_ineq} and under the assumption that $N(t) \rightarrow \infty$ as $t \rightarrow \infty$, 
\begin{align}
    \lim_{t \rightarrow \infty}\| \bm{\theta}_t-\bm{\theta}_{\infty}\|_2 &\leq \lim_{t \rightarrow \infty}\| \bm{\theta}_t - \bm{\theta}_{t,*}\|_2+  \lim_{t \rightarrow \infty}\|\bm{\theta}_{t,*} - \bm{\theta}_{\infty} \|_2\\
    & =\lim_{t \rightarrow \infty}\| \bm{\theta}_{t, N(t)} - \bm{\theta}_{t,*}\|_2+  \lim_{t \rightarrow \infty}\|\bm{\theta}_{t,*} - \bm{\theta}_{\infty} \|_2\\
    & = 0
\end{align}
almost surely, and therefore $\|\pi_t(\cdot, b)- \pi_\infty(\cdot, b) \|_{L_2(P_{X|H_{t-1}})}$ converges to zero almost surely.

\paragraph{Step 3: Optimality under Additional Conditions}

To show the final remark of Theorem \ref{thm:projected_subgradient_descent_convergence}, we only need to establish $G_\infty = G$ under our additional assumptions. Note that the function $G(\pi)$ is defined as 
\begin{align}
    G(\pi) &= \max_{a \neq a^*}F_a(\pi),\\
    F_a(\pi) &= \min_{\bm{w} \in \Delta(a), \bm{w}^\top \bm{\mu} \geq 0} f(\pi, \bm{w}), \\
    f(\pi, \bm{w}) &= \frac{\EE_{P_\infty}\left[\left( \sum_{b \in [K]} w(b)\left(g_\infty(X,b) + \frac{\mathbf{1}[A=b](Y - g_\infty(X,b))}{\pi(X,b)} -\mu(b)\right) \right)^2 \right]}{\left(\sum_{b \in [K]} w(b) \mu(b)\right)^2}.
\end{align}
By Lemma \ref{lem:optimal_policy_structure}, we can instead optimize over $\bm{\theta}$ while remaining the same minimizing value for $G(\bm\theta)$:
\begin{align}
    G(\bm\theta) &= \max_{a \neq a^*}F_a(\bm\theta),\\
    F_a(\bm\theta) &= \min_{\bm{w} \in \Delta(a), \bm{w}^\top \bm{\mu} \geq 0} f(\bm\theta, \bm{w}), \\
    f(\bm\theta, \bm{w}) &= \frac{\EE_{P_\infty}\left[\left( \sum_{b \in [K]} w(b)\left(g_\infty(X,b) + \frac{{\mathbf{1}[A=b](Y - g_\infty(X,b))}}{\left(\sum_{a \in [K]} \sqrt{\frac{{V(x,a)}}{{V(x,b)}}} \exp(\theta(a) - \theta(b))\right)^{-1}} -\mu(b)\right) \right)^2 \right]}{\left(\sum_{b \in [K]} w(b) \mu(b)\right)^2}.
\end{align}
The function $G_\infty$ is defined as follows:
\begin{align}
    G_\infty(\bm{\theta})&= \max_{a \neq a^*} F_{a, \infty}(\bm{\theta}\\
    F_{a,\infty}(\bm{\theta}) &= \min_{\bm{w} \in \Delta(a), \bm{w}^\top \bm{\mu} \geq 0} f_{\infty}(\bm\theta, \bm{w}), \\
        f_\infty(\bm{\theta, \bm{w}}) &= \frac{\EE_{P_X}\left[ \sum_{b \in [K]} \left(w^2(b) V_\infty(X,b) \sum_{a \in [K]}\sqrt{\frac{V_\infty(X,a)}{V_\infty(X,b)}}\exp\left(\theta(a) - \theta(b)\right)\right) \right] + l_\infty(\bm{w})}{\left(\sum_{b \in [K]}w(b)\mu(b) \right)^2}, \\
        l_\infty(\bm{w}) &= {\EE_{P_X}\left[ \left(\sum_{b \in [K]} w(b)\left(g_\infty(X, b) - \mu(b) \right) \right)^2 \right]}
\end{align}
Note that $G(\bm\theta)$ is equal to $G_\infty(\bm{\theta})$ as long as the numerators of $f(\bm{\theta}, \bm{w})$ and $f_\infty(\bm\theta, \bm{w})$ are equal. We show this below under the assumption that $V_\infty = V$:
\begin{align}
f(\bm{\theta}, \bm{w}) &= \EE_{P_\infty}\!\left[
\left(
\sum_{b \in [K]} w(b)\left(
g_\infty(X,b)
+ \frac{{\mathbf{1}[A=b](Y-g_\infty(X,b))}}{\left(\sum_{a \in [K]} \sqrt{\frac{V(X,a)}{V(X,b)}} \exp(\theta(a)-\theta(b))\right)^{-1}}
-\mu(b)
\right)
\right)^2
\right] \\
&=
\EE_{P_\infty}\!\left[
\left(
\sum_{b \in [K]} w(b)\bigl(g_\infty(X,b)-\mu(b)\bigr)
+
\sum_{b \in [K]} w(b)\frac{{\mathbf{1}[A=b](Y-g_\infty(X,b))}}{\left(\sum_{a \in [K]} \sqrt{\frac{V(X,a)}{V(X,b)}} \exp(\theta(a)-\theta(b))\right)^{-1}}
\right)^2
\right] \\
&=
\EE_{P_X}\!\left[
\left(\sum_{b \in [K]} w(b)\bigl(g_\infty(X,b)-\mu(b)\bigr)\right)^2
\right] \\
&\quad
+
\EE_{P_X}\!\left[
\sum_{b \in [K]} w^2(b)
\EE\!\left[(Y-g_\infty(X,b))^2 \mid X,A=b\right]
\sum_{a \in [K]} \sqrt{\frac{V(X,a)}{V(X,b)}} \exp(\theta(a)-\theta(b))
\right] \\
&=
l_\infty(\bm w)
+
\EE_{P_X}\!\left[
\sum_{b \in [K]} w^2(b) V(X,b)
\sum_{a \in [K]} \sqrt{\frac{V(X,a)}{V(X,b)}} \exp(\theta(a)-\theta(b))
\right],
\end{align}
which is exactly equal the the numerator of $f_\infty(\bm\theta, \bm{w})$. This concludes our proof.

\subsection{Proof of Lemma \ref{thm:results_general}}
This proof follows from a direct application of Theorem \ref{thm:projected_subgradient_descent_convergence} and Lemma \ref{lem:asymp_valid_bai}. Note that the assumptions of Theorem \ref{thm:projected_subgradient_descent_convergence}, in addition to the results that (i) $\pi_{t}(x,b) \geq 1/\kappa > 0$ for all $x \in \Xcal$, $b \in [K]$, and $t \in \NN$ and (ii) the existence of a limit policy $\pi_\infty$ such that $\|\pi_t(\cdot, b) - \pi_\infty(\cdot, b) \|_{L_2(P_{X|H_{t-1}})} \rightarrow 0$ almost surely, match the assumptions of Lemma \ref{lem:asymp_valid_bai}. As a result, we obtain our sampling policy in Algorithm \ref{alg:proj-subgradient} yields a BAI algorithmic sequence that satisties asymptotic $\alpha$-correctness and terminates in finite time almost surely.

\subsection{Proof of Theorem \ref{thm:sampling_complexity_final}}

To prove Theorem \ref{thm:sampling_complexity_final}, we leverage the results of Theorem \ref{thm:sample_complexity_general} and Lemma \ref{lem:kl_projection}. We start by establishing the stopping time bound for all $P \in \Pcal(\bm{\mu}, \bm{\sigma}^2)$ using Theorem \ref{thm:sample_complexity_general}. First, note that for all $P \in \Pcal(\bm{\mu}, \bm{\sigma}^2)$, under the assumption that $g_\infty=g$, $V_\infty = v$, and $\bm{\theta}_* \in \Theta$, we obtain that for all $P \in \Pcal(\bm{\mu}, \bm{\sigma}^2)$, the stopping time (under the conditions that $t_0(\alpha) = o(\log(1/\alpha))$) is upper bounded by
\begin{equation}
    \Gamma_2'(\bm\mu, \bm\sigma^2) = \inf_{\pi \in \Pi}\sup_{a \neq a^*} \inf_{\bm{w} \in \Delta(a), \bm{w}^\top \bm\mu \geq 0}  2\frac{\EE\left[\sum_{b \in [K]}w^2(b) \frac{v(x,b)}{\pi(x,b)}\right] + \EE_{P_X}\left[\left(\sum_{b \in [K]} w(b) (g(x,b) - \mu(b))\right)^2\right]}{\left( \sum_{b \in [K]} w(b) \mu(b) \right)^2}
\end{equation}
both in expectation and almost surely. Thus, for all $P \in \Pcal(\bm\mu, \bm\sigma^2)$, under our assumptions, 
\begin{equation}
    \lim_{\alpha \rightarrow 0}\frac{\EE_P[\tau_{t_0(\alpha)}]}{\log(1/\alpha)} \leq \Gamma_2'(\bm\mu, \bm\sigma^2), \quad P\left(\lim_{\alpha \rightarrow 0} \frac{\tau_{t_0(\alpha)}}{\log(1/\alpha)} \leq \Gamma_2'(\bm\mu, \bm\sigma^2)\right) = 1.
\end{equation}
We now turn to showing the inequalities presented in Theorem \ref{thm:error_control}. First, by Lemma \ref{lem:kl_projection},
\begin{equation}
    \sup_{\bm{w} \in \Delta(a), \bm{w}^\top \bm{\mu}\geq 0}\frac{1}{2}\left( \frac{\sum_{b \in [K]}w(b) \mu(b)}{\sqrt{\sum_{b \in [K]} w(b)^2\sigma^2(b)/\pi(b)}}\right)^2 = \inf_{\tilde{\bm{\mu}} \in \Hcal_a} \sum_{b \in [K]}\pi(b)  d_{\sigma(b)}\left(\mu(b), \tilde{\mu}(b)\right),
\end{equation}
and by taking the minimum SNR ratio across all suboptimal arms $a \neq a^*$, we obtain
\begin{equation}
    \inf_{a \neq a^*}\sup_{\bm{w} \in \Delta(a), \bm{w}^\top \bm{\mu}\geq 0}\frac{1}{2}\left( \frac{\sum_{b \in [K]}w(b) \mu(b)}{\sqrt{\sum_{b \in [K]} w(b)^2\sigma^2(b)/\pi(b)}}\right)^2 = \inf_{a \neq a^*, \tilde{\bm\mu} \in \Hcal_a} \sum_{b \in [K]}\pi(b)  d_{\sigma(b)}\left(\mu(b), \tilde{\mu}(b)\right).
\end{equation}
By combining the constraints on the minimization on the RHS and taking the supremum over $\pi \in \Pi$,
\begin{equation}
    \sup_{\pi \in \Pi}\inf_{a \neq a^*}\sup_{\bm{w} \in \Delta(a), \bm{w}^\top \bm{\mu}\geq 0}\frac{1}{2}\left( \frac{\sum_{b \in [K]}w(b) \mu(b)}{\sqrt{\sum_{b \in [K]} w(b)^2\sigma^2(b)/\pi(b)}}\right)^2 = \sup_{\pi \in \Pi}\inf_{\tilde{\bm\mu} \not\in \Hcal_{a^*}} \sum_{b \in [K]}\pi(b)  d_{\sigma(b)}\left(\mu(b), \tilde{\mu}(b)\right)
\end{equation}
By taking the inverse of this expression, we obtain
\begin{equation}
    \inf_{\pi \in \Pi} \sup_{a \neq a^*} \inf_{\bm{w}\in \Delta(a), \bm{w}^\top \bm{\mu}} 2\left( \frac{\sqrt{\sum_{b \in [K]} w(b)^2\sigma^2(b)/\pi(b)}}{\sum_{b \in [K]}w(b) \mu(b)}\right)^2 = \Gamma_2(\bm\mu, \bm\sigma^2).
\end{equation}
We now compare the bound we obtained for $\Gamma_2'(\bm\mu, \bm\sigma^2)$ compared to $\Gamma_2(\bm{\mu}, \bm\sigma^2)$. For any fixed choice of $\pi \in \Pi^{\text{MAB}} \coloneqq \{\pi \in \Pi: \pi(x,b) = \pi(b) \ \forall b \in [K], P_X \ a.s.\}$ and $\bm{w} \in \{\bm{w} \in \Delta(a): \bm{w}^\top \bm\mu \geq 0\}$ for all $a \neq a^*$, note that
\begin{align}
    &\frac{\EE\left[\sum_{b \in [K]} w^2(b)\frac{v(x,b)}{\pi(b)}\right] + \EE_{P_X}\left[\left(\sum_{b \in [K]} w(b) (g(x,b) - \mu(b))\right)^2\right]}{\left( \sum_{b \in [K]} w(b) \mu(b) \right)^2} - \left( \frac{\sqrt{\sum_{b \in [K]} w(b)^2\sigma^2(b)/\pi(b)}}{\sum_{b \in [K]}w(b) \mu(b)}\right)^2  \\
    &=\frac{\EE\left[\sum_{b \in [K]} w^2(b) \frac{v(x,b)}{\pi(b)}\right] + \EE_{P_X}\left[\left(\sum_{b \in [K]} w(b) (g(x,b) - \mu(b))\right)^2\right] - \sum_{b \in [K]} w^2(b)\sigma^2(b)/\pi(b)}{\left(\sum_{b \in [K]} w(b) \mu(b) \right)^2}\\
    &= \frac{\EE_{P_X}\left[\left(\sum_{b \in [K]} w(b) (g(x,b) - \mu(b))\right)^2\right] - \sum_{b \in [K]}w^2(b)\EE_{P_X}\left[(g(x,b)- \mu(b))^2\right]/ \pi(b)}{\left(\sum_{b \in [K]} w(b) \mu(b) \right)^2}\label{eq:law_of_total_variance}
\end{align}
where the last line follows from the total law of variance identity given by
\begin{equation}
    \sigma^2(b) = \EE_{P_X}\left[ v(x,b) \right] + \EE_{P_X}\left[(g(x,b)- \mu(b))^2\right].
\end{equation}
To show that this term is nonpositive, note that
\begin{align}
    \EE_{P_X}\left[\left(\sum_{b \in [K]} w(b) (g(x,b) - \mu(b))\right)^2\right] &= \EE_{P_X}\left[ \left(\sum_{b \in [K]} \sqrt{\pi(b)}\frac{w(b)(g(x,b)-\mu(b))}{\sqrt{\pi(b)}} \right)^2\right]\\
    &\leq \EE_{P_X}\left[\left(\sum_{b \in [K]}\pi(b) \right)\left(\sum_{b \in [K]}\frac{w^2(b)(g(x,b)-\mu(b))^2}{\pi(b)} \right)\right] \\ 
    &= \sum_{b \in [K]}w^2(b)\EE_{P_X}\left[(g(x,b)- \mu(b))^2\right]/ \pi(b),
\end{align}
where the inequality is by direct application of Cauchy Schwartz, resulting in the expression being non-positive:
\begin{equation}\label{eq:ref}
    \frac{\EE\left[\sum_{b \in [K]} w^2(b)\frac{v(x,b)}{\pi(b)}\right] + \EE_{P_X}\left[\left(\sum_{b \in [K]} w(b) (g(x,b) - \mu(b))\right)^2\right]}{\left( \sum_{b \in [K]} w(b) \mu(b) \right)^2} - \left( \frac{\sqrt{\sum_{b \in [K]} w(b)^2\sigma^2(b)/\pi(b)}}{\sum_{b \in [K]}w(b) \mu(b)}\right)^2  \leq 0.
\end{equation}
We now prove the first result of Theorem \ref{thm:sampling_complexity_final} by contradiction. Assume that there exists a pair $(\bm{\mu}, \bm{\sigma}^2)$ such that $\Gamma_2'(\bm\mu, \bm\sigma^2) > \Gamma_2(\bm{\mu},\bm\sigma^2)$. By definition of $\Gamma_2'(\bm\mu, \bm\sigma^2)$, 
\begin{equation}
    \Gamma_2'(\bm\mu, \bm\sigma^2) \leq \inf_{\pi \in  \Pi^{\text{MAB}}}\sup_{a \neq a^*} \inf_{\bm{w} \in \Delta(a), \bm{w}^\top \bm\mu \geq 0}  2\frac{\EE_{P_X}\left[\sum_{b \in [K]}w^2(b) \frac{v(x,b)}{\pi(x,b)}\right] + \EE_{P_X}\left[\left(\sum_{b \in [K]} w(b) (g(x,b) - \mu(b))\right)^2\right]}{\left( \sum_{b \in [K]} w(b) \mu(b) \right)^2},
\end{equation}
and by our results above, for any choice of $\pi \in \Pi^{\text{MAB}}$ and $\bm{w} \in \Delta(a)$ for all $a \neq a^*$, Equation \ref{eq:ref} holds, even at the optimal $\pi$ and $\bm{w}$ that achieves $\Gamma_2(\bm\mu, \bm\sigma^2)$. This results in a contradiction, and therefore it must be that for all $P \in \Pcal(\bm\mu, \bm\sigma^2)$, $\Gamma_2'(\bm\mu, \bm\sigma^2) \leq \Gamma_2(\bm{\mu},\bm\sigma^2)$.

To show that the inequality is strict when $P \in \widetilde{\Pcal}(\bm{\mu}, \bm{\sigma}^2)$, we follow the same exact steps as above, with a slight update to our inequality step based on Cauchy Schwartz. In order for the Cauchy Schwartz inequality to be an equality, there must be some function $c(x)$ such that the following holds almost surely w.r.t. $P_X$:
\begin{equation}\label{eq:contradict}
    \sqrt{\pi(b)} = c(x)\frac{w(b)(g(x,b) - \mu(b))} {\sqrt{\pi}(b)} \iff \pi(b) = c(x) w(b)(g(x,b)- \mu(b)).
\end{equation}
We prove this result via contradiction. Assume that Equation \eqref{eq:contradict} is true. Note that if there exists $a,a'$ such that $(g(x,a)-\mu(a))(g(x,b)-\mu(b))$ over some set $\widetilde{\Xcal}$ with positive measure, either $(g(x,a)-\mu(a))$ or $(g(x,b)-\mu(b))$ must be negative. Because $\pi(b) > 0$ for all $b \in [K]$ (otherwise, an infinite stopping time bound), $c(x)$ must be less than zero for one of $a$ or $a'$ for $x \in \widetilde\Xcal$, but must be positive for the other. Thus, Equation \eqref{eq:contradict} cannot be true, and this results in contradiction.

\subsection{Proof of Lemma \ref{lem:two_armed}}

Note that in the two-armed case, $\Delta(a)$ is a singleton, and our stopping time bound (derived as $\Gamma_2'(\bm{\mu}, \bm{\sigma}^2)$ in the proof of Theorem \ref{thm:sample_complexity_general}) becomes
\begin{equation}\label{eq:ref_2}
    \Gamma_2'(\bm{\mu}, \bm{\sigma}^2) = 2\inf_{\pi \in \Pi} \frac{\sum_{b \in [2]}\EE_{P_X}\left[\frac{v(x,b)}{\pi(x,b)}\right] + \EE_{P_X}\left[\left(\sum_{b \in [2]}g(x,b)-\mu(b)\right)^2\right]}{\left(\mu(1) -\mu(2)\right)^2} 
\end{equation}
Note that under the assumptions of Lemma \ref{lem:two_armed}, we achieve the optimal $\pi \in \Pi$ under the stronger conditions of Theorem \ref{thm:projected_subgradient_descent_convergence}. By Section 2.2 of \cite{cook2024semiparametricefficientinferenceadaptive}, the optimal policy $\pi_*$ for minimizing the numerator (only term with $\pi$ dependence) is given by:
\begin{equation}
    \pi_*(x,b) = \frac{\sqrt{v(x,b)}}{\sqrt{v(1,x)}+ \sqrt{v(2,x)}},
\end{equation}
and by plugging in $\pi_*$ in Equation \eqref{eq:ref_2}, we obtain the results of Lemma \ref{lem:two_armed}.

\section{Selection of Hyperparameters}\label{app:hyperparameters}

The parameter $\rho > 0$ governs the time $t^*$ in which our test has maximal power (i.e., where the threshold $\ell_{t,\alpha, \rho}(\tilde\sigma_t(\Wcal_T))$ is relatively smallest). Following the approximate approach of \cite{waudbysmith2024timeuniformcentrallimittheory}, for $\alpha < 0.5$, power is approximately maximized at $t_*$ by setting $\rho$ as the following function of $t_*$ and error level $\alpha$:
\begin{equation}
\rho = \sqrt{\frac{-\log(2\alpha) + \log(1-2\log(2\alpha))}{t^*}}.
\end{equation}
In Theorems \ref{thm:results_general} and \ref{thm:sampling_complexity_final}, we show that stopping times $\tau_{t_0(\alpha)}$ are upper bounded by terms on the order of $1/\log(\alpha)$. Thus, we recommend the choice of $\rho = c \log(1/\alpha)$, where $c$ is a constant chosen based on domain knowledge on the sample complexity of a task and sampling budget. If one expects larger stopping times with cheap samples, we recommend a large choice of $c$; alternatively, for tasks with small expected stopping times and expensive samples, we recommend smaller choices of $c$.

\section{Additional Experiment Details}\label{app:experiments}

\paragraph{Compute Details} All baselines baselines were run locally on a M2 14-inch 2023 MacBook Pro with 16GB of RAM. For our noncontextual baselines, we used the implementation by \cite{bestarmidentification_github}. All default settings (other than arm means and $\alpha$) were kept constant. For CT\&S \citep{kato2024rolecontextualinformationbest}, we implemented their algorithm as described in the main body of the paper (Section \ref{sec:experiments}). For our approach, we implemented our approach in Python as discussed in \ref{sec:experiments} using an Amazon EC2 with instance \texttt{c6in.8xlarge}, parallelized with 24 workers. For all methods, we update both the test statistic and the sampling scheme at each timestep, and set a maximum number of samples as $30,000$. No approach (including ours) failed to terminate. 

\paragraph{Choice of Regression Model} For our approach using conditional regression models $g_t$ and $V_t$, we used a probit model and linear regression model respectively to estimate conditional means and variances. Our choice to leverage simple models for our regressors allowed for us to update the estimates at each timestep without severe computational overhead. In future work, we plan to test more complicated regression functions under a batched updating scheme.

\end{appendices}

\end{document}